\newcommand{\restatehack}[1]{}   
\newcolumntype{R}{>{$}r<{$}} 
\newcolumntype{C}{>{$}c<{$}}
\definecolor{darkblue}{rgb}{0.0,0.0,0.2}
\definecolor{darkgreen}{rgb}{0.0,0.3,0.0}
\tikzstyle{cell}=[dashed,thick]
\tikzstyle{simplex}=[thick]
\newcommand{\tikzfigscale}{2.5}
\newcommand{\Comments}{1}
\newcommand{\mynote}[2]{\ifnum\Comments=1\textcolor{#1}{#2}\fi}
\newcommand{\mytodo}[2]{\ifnum\Comments=1
  \todo[linecolor=#1!80!black,backgroundcolor=#1,bordercolor=#1!80!black]{#2}\fi}
\newcommand{\reals}{\mathbb{R}}
\newcommand{\dom}{\mathrm{dom}}
\newcommand{\prop}[1]{\mathrm{prop}[#1]}
\newcommand{\affhull}{\mathrm{affhull}}
\newcommand{\cell}{\mathrm{cell}}
\newcommand{\BEP}{L_{\text{\tiny BEP}}}
\newcommand{\LWW}{L_{\text{\tiny WW}}}
\newcommand{\ellabstain}{\ell_{\text{abs}}}
\newcommand{\ellOP}{\ell_{\text{\tiny OP}}}
\newcommand{\mode}{\mathrm{mode}}
\newcommand{\simplex}{\Delta_\Y}
\newcommand{\Li}[1]{L^{(#1)}}
\newcommand{\D}{\mathcal{D}}
\newcommand{\E}{\mathbb{E}}
\newcommand{\F}{\mathcal{F}}
\renewcommand{\H}{\mathcal{H}}
\renewcommand{\P}{\mathcal{P}}
\newcommand{\R}{\mathcal{R}}
\newcommand{\Sc}{\mathcal{S}}
\newcommand{\T}{\mathcal{T}}
\newcommand{\U}{\mathcal{U}}
\newcommand{\V}{\mathcal{V}}
\newcommand{\X}{\mathcal{X}}
\newcommand{\Y}{\mathcal{Y}}
\newcommand{\risk}[1]{\underline{#1}}
\newcommand{\inprod}[2]{\langle #1, #2 \rangle}
\newcommand{\relint}{\mathrm{relint}}
\newcommand{\inter}{\mathrm{inter}}
\newcommand{\toto}{\rightrightarrows}
\newcommand{\trim}{\mathrm{trim}}
\newcommand{\red}{\mathrm{red}}
\newcommand{\trimred}{\mathrm{trim}}
\newcommand{\trimcover}{\mathrm{trim}}
\newcommand{\hyp}{\mathrm{hypo}}
\newcommand{\ones}{\mathbbm{1}}
\DeclarePairedDelimiter\ceil{\lceil}{\rceil}
\newcommand{\regret}[3]{R_{#1}(#2,#3)}
\newcommand{\Ind}[1]{\ones\{#1\}}
\newcommand{\hinge}{L_{\mathrm{hinge}}}
\newcommand{\ellzo}{\ell_{\text{0-1}}}
\newcommand{\ellabs}{\ell_{\text{abs}}^f}
\newcommand{\elltopk}{\ell^{\text{top-$k$}}}
\DeclareMathOperator*{\argmax}{arg\,max}
\DeclareMathOperator*{\argmin}{arg\,min}
\DeclareMathOperator*{\sgn}{sgn}
\newtheorem*{theorem*}{Theorem}
\newtheorem{theorem}{Theorem}
\newtheorem{lemma}{Lemma}
\newtheorem{proposition}{Proposition}
\newtheorem{corollary}{Corollary}
\newtheorem{claim}{Claim}
\newtheorem{definition}{Definition}
\newtheorem{construction}{Construction}
\newtheorem{assumption}{Assumption}
\title{An Embedding Framework for the Design and Analysis of Consistent Polyhedral Surrogates}
\author{
 Jessie Finocchiaro \\
 \texttt{jefi8453@colorado.edu}
 \and
 Rafael Frongillo\\
 \texttt{raf@colorado.edu}
 \and
 Bo Waggoner\\
 \texttt{bwag@colorado.edu}
}
\date{
  University of Colorado Boulder
  \\[15pt]
  \today
}
\begin{document}

\maketitle

\begin{abstract}
  We formalize and study the natural approach of designing convex surrogate loss functions via embeddings, for problems such as classification, ranking, or structured prediction. 
  In this approach, one embeds each of the finitely many predictions (e.g.\ rankings) as a point in $\reals^d$, assigns the original loss values to these points, and ``convexifies'' the loss in some way to obtain a surrogate.
  We establish a strong connection between this approach and polyhedral (piecewise-linear convex) surrogate losses:
  every discrete loss is embedded by some polyhedral loss, and every polyhedral loss embeds some discrete loss.
  Moreover, an embedding gives rise to a consistent link function as well as linear surrogate regret bounds.
  Our results are constructive, as we illustrate with several examples.
  In particular, our framework gives succinct proofs of consistency or inconsistency for various polyhedral surrogates in the literature, and for inconsistent surrogates, it further reveals the discrete losses for which these surrogates are consistent.
  We go on to show additional structure of embeddings, such as the equivalence of embedding and matching Bayes risks, and the equivalence of various notions of non-redudancy.
  Using these results, we establish that indirect elicitation, a necessary condition for consistency, is also sufficient when working with polyhedral surrogates.
\end{abstract}

\section{Introduction}\label{sec:intro}

In supervised learning, one tries to learn a hypothesis which fits labeled data as judged by a target loss function.
Minimizing the target loss directly is typically computationally intractable for discrete prediction tasks like classification, ranking, and structured prediction.
Instead, one typically minimizes a surrogate loss which is convex and therefore efficiently minimized.
After learning a surrogate hypothesis, a link function then translates back to the target problem.
To ensure the surrogate and link properly correspond to the target problem, the surrogate must be \emph{consistent}, meaning that minimizing the surrogate loss over enough data will also minimize the target loss.

While a growing body of work seeks to design and analyze consistent convex surrogates for particular target loss functions, to date much of this work has been ad-hoc.
We lack general tools to systematically construct consistent convex surrogates, much less an understanding of the full class of consistent surrogates.
For example, in multiclass and top-$k$ classification, several proposed surrogates were proposed and adopted but later proved to be inconsistent~\cite{yang2018consistency,crammer2001algorithmic,rifkin2004defense}.
This state of affairs is even more dire for structured prediction, where in addition to convexity and consistency, one often requires a low \emph{prediction dimension} (the dimension of the surrogate prediction space) as the label set can grow exponentially large.
Clever constructions like the binary-encoded predictions (BEP) surrogate for multiclass classification with an abstain option~\cite{ramaswamy2018consistent}, which achieves logarithmic prediction dimension, are the exception rather than the rule.
In all of these settings, we lack a unifying framework that moves from a given target problem to a convex consistent surrogate and link function.

To address this shortcoming, we introduce a new framework motivated by a particularly natural approach for finding convex surrogates, wherein one ``embeds'' a discrete loss.
Specifically, we say a convex surrogate $L$ embeds a discrete target loss $\ell$ if there is an injective embedding from the discrete reports (predictions) to $\reals^d$ such that (i) the original loss values are recovered, and (ii) a report is $\ell$-optimal if and only if the embedded report is $L$-optimal.
(See \S~\ref{sec:embedding}.)
Common examples of this general construction include hinge loss as a surrogate for 0-1 loss and the BEP surrogate mentioned above~\citep{ramaswamy2018consistent}.

Using this framework, we give several constructive results to design new consistent surrogates, as well as a suite of tools to analyze existing surrogates.
As a first step, in \S~\ref{sec:poly-loss-embed}, we show that such an embedding scheme is intimately related to the class of \emph{polyhedral} loss functions, i.e., those that are piecewise-linear and convex.
\begin{restatable}{theorem}{embedpolyinformal}\label{thm:embed-poly-main}
  Every discrete loss $\ell$ is embedded by some polyhedral loss $L$, and every polyhedral loss $L$ embeds some discrete loss $\ell$.
\end{restatable}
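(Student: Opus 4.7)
The statement splits into two independent directions, which I would prove as separate lemmas.

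\textbf{Discrete $\Rightarrow$ polyhedral.} Given a discrete loss $\ell$ on $\R = \{r_1,\ldots,r_n\}$, I would embed $r_i \mapsto e_i \in \reals^n$ and define
\[
  L(u, y) \;=\; \sum_{i=1}^n u_i \, \ell(r_i, y) \;+\; M \cdot \rho(u),
\]
where $\rho$ is the $\ell_1$-distance from $u$ to $\Delta^n := \conv(e_1,\ldots,e_n)$---a polyhedral, nonnegative function vanishing exactly on the simplex---and $M$ exceeds the range of $\ell$. Then $L$ is polyhedral with $L(e_i,y) = \ell(r_i,y)$. For every $p \in \simplex$, $u \mapsto \E_p L(u,Y)$ is linear on $\Delta^n$ and strictly penalized off the simplex (a subgradient bound on $\rho$ makes a large-enough $M$ suffice), so its minimum is attained exactly on the face of $\Delta^n$ spanned by $\{e_i : r_i \in \argmin_{r} \E_p \ell(r,Y)\}$. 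In particular $e_i$ is $L$-optimal for $p$ iff $r_i$ is $\ell$-optimal for $p$, which is the embedding condition.

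\textbf{Polyhedral $\Rightarrow$ discrete.} Given polyhedral $L$ on $\reals^d \times \Y$, my plan is to exhibit a finite set $U^* = \{u_1,\ldots,u_n\} \subset \reals^d$ with $U^* \cap \argmin_u \E_p L(u,Y) \neq \emptyset$ for every $p \in \simplex$. Given such a $U^*$, set $\R = \{r_1,\ldots,r_n\}$, $\emb(r_i) = u_i$, and $\ell(r_i,y) := L(u_i,y)$; then $r_i$ is $\ell$-optimal at $p$ iff $u_i$ minimizes $\E_p L(\cdot,Y)$ over $U^*$, which by the defining property of $U^*$ coincides with minimizing over all of $\reals^d$. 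To construct $U^*$, write $L(u,y) = \max_k \langle a_k^y, u\rangle + b_k^y$; then $\E_p L(u,Y)$ is polyhedral in $u$ for each $p$, and piecewise-linear in $p$ on each combinatorial piece. Standard parametric polyhedral theory then produces a finite decomposition of $\simplex$ into cells on which the argmin face has constant combinatorial type; picking one vertex per cell (e.g., lexicographically minimal after a generic tie-breaking perturbation) yields the desired $U^*$.

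\textbf{Main obstacle.} The harder direction is polyhedral $\Rightarrow$ discrete: one must argue rigorously that $p \toto \argmin_u \E_p L(u,Y)$ induces only finitely many distinct minimizer faces as $p$ ranges over $\simplex$, and that a single finite vertex set intersects every argmin. A secondary technicality is handling $p$ at which $\inf_u \E_p L(u,Y) = -\infty$, which I would rule out by assuming $L$ is bounded below on $\simplex$ (equivalently, its Bayes risk is finite on the simplex). The discrete-to-polyhedral direction is nearly routine once the penalty construction is in place, with the only care being to bound $M$ relative to the range of $\ell$ so that no $L$-minimizer escapes the simplex.
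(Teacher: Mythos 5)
Your discrete-to-polyhedral construction is correct and takes a genuinely different route from the paper. The paper defines $L(u) = (-\risk\ell)^*(u)\ones - u$ via convex conjugate duality and verifies the embedding indirectly by matching Bayes risks (Proposition~\ref{prop:embed-bayes-risks}), yielding prediction dimension $d = |\Y|$. Your penalized simplex construction is more elementary and establishes directly that $\prop{L}(p)$ is the face $\conv\{e_i : r_i \in \prop{\ell}(p)\}$ of $\Delta^n$, at the cost of dimension $d = |\R|$ rather than $|\Y|$. Two small things to tighten: you must also verify $L\ge 0$ (the codomain of a loss is $\reals^\Y_+$); writing $v^*$ for a closest point of $\Delta^n$ to $u$, one has $L(u,y) \ge \sum_i v^*_i \ell(r_i,y) - \rho(u)\max_i\ell(r_i,y) + M\rho(u) \ge 0$ once $M \ge \max_{i,y}\ell(r_i,y)$, and for the argmin to be confined to $\Delta^n$ you want the strict bound $M > \max_{i,y}\ell(r_i,y)$ so that the slack term $\rho(u)(M - \max_i \inprod{p}{\ell(r_i)})$ is strictly positive off the simplex.

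The polyhedral-to-discrete direction is where the real work lies, and you have correctly reduced it to exhibiting a finite representative set $U^*$, which is exactly the route via Proposition~\ref{prop:representative-embeds-restriction}. However, your argument leaves the crucial step as a black box: the claim that $p \mapsto \argmin_u \inprod{p}{L(u)}$ takes only finitely many distinct values as $p$ ranges over $\simplex$. This is precisely Lemma~\ref{lem:polyhedral-range-gamma}, and it is the technical core of this direction. The paper proves it through the theory of power diagrams, showing (Lemma~\ref{lem:polyhedral-pd-same}) that for all $p$ with a fixed support the power diagram induced by projecting the epigraph of $u \mapsto \inprod{p}{L(u)}$ down to $\reals^d$ is identical, that $\prop{L}(p)$ is always a face of this projected complex, and that finiteness follows from there being finitely many supports and finitely many faces. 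Appealing to ``standard parametric polyhedral theory'' without a precise statement or argument is not a proof. You flag this as the ``main obstacle,'' which shows good judgment, but the gap remains. Two further notes: your worry about $\inf_u\inprod{p}{L(u)} = -\infty$ is moot, since losses are nonnegative by definition and attainment follows from \citet[Corollary~19.3.1]{rockafellar1997convex}, which the paper cites for minimizability of polyhedral losses; and your tie-breaking rule ``lexicographically minimal vertex'' can fail when an argmin face is an unbounded polyhedron with no vertices, so the safe move is to select an arbitrary point from each argmin set, as the paper does.
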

\noindent
Crucially, we go on in \S~\ref{sec:calibration} to show that an embedding gives rise to a calibrated link function, and is therefore consistent with respect to the target loss.
\begin{restatable}{theorem}{linkinformal}\label{thm:link-main}
  Given any polyhedral loss $L$, let $\ell$ be a discrete loss it embeds. There exists a link function $\psi$ such that $(L,\psi)$ is calibrated with respect to $\ell$.
\end{restatable}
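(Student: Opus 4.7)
My plan is to construct $\psi$ from the polyhedral cell structure that $L$ induces on the simplex $\Delta_\Y$, and then verify calibration via a piecewise-linear regret-comparison argument that invokes the embedding property.

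First, I would exploit polyhedrality to decompose $\Delta_\Y$. Since each $L(\cdot, y)$ is piecewise linear and convex, $\bar L(u, p) := \E_{Y \sim p} L(u, Y)$ is polyhedral in $u$ for each fixed $p$, and the Bayes risk $\risk{L}(p) := \inf_u \bar L(u, p)$ is concave piecewise linear in $p$. Its domains of linearity yield a finite polyhedral cell complex $\{D_\alpha\}$ on $\Delta_\Y$ with the property that the surrogate-optimal set $\Gamma_L(p) := \argmin_u \bar L(u, p)$ is constant on each $\relint(D_\alpha)$, equal to a common polyhedron $U_\alpha \subseteq \reals^d$.

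Second, I would transfer this structure to the target via the embedding $\varphi : \mathcal R \to \reals^d$. Embedding property (ii) yields $\{r : \varphi(r) \in \Gamma_L(p)\} = \argmin_r \E_{Y \sim p} \ell(r, Y)$ for every $p$, so on each $\relint(D_\alpha)$ the set of $\ell$-optimal reports is a common nonempty subset $\gamma_\alpha \subseteq \mathcal R$; fix any $r_\alpha \in \gamma_\alpha$. I would then define $\psi$ on $\reals^d$ using the dual cells $\mathcal P(u) := \{p : u \in \Gamma_L(p)\}$: when $\mathcal P(u)$ is nonempty it decomposes into cells $D_\alpha$, and I set $\psi(u) = r_\alpha$ for any contained $\alpha$; for $u$ with $\mathcal P(u) = \emptyset$ I extend $\psi$ by linking $u$ to the nearest surrogate-optimal region.

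Finally, I would establish calibration through a linear regret bound of the form $\E_{Y \sim p}\ell(\psi(u), Y) - \risk{\ell}(p) \leq C\bigl(\bar L(u, p) - \risk{L}(p)\bigr)$. Since $\bar L(\cdot, p)$ is polyhedral in $u$, its regret is piecewise linear and grows at least linearly in the distance from $u$ to $\Gamma_L(p)$; by construction, $u \in \Gamma_L(p)$ forces $\psi(u)$ to be $\ell$-optimal on $p$, so target-suboptimality of $\psi(u)$ forces $u \notin \Gamma_L(p)$ and hence strictly positive surrogate regret. The finiteness of the cell complex then yields a uniform constant $C$. The main obstacle I expect is handling boundary cases consistently: cell interfaces in $\reals^d$ and predictions $u$ with $\mathcal P(u) = \emptyset$ require choices of $r_\alpha$ at shared faces that preserve a single uniform calibration constant across all of $\reals^d \times \Delta_\Y$, which is precisely where the finiteness of the polyhedral decomposition is essential.
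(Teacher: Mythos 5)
Your plan follows the same route as the paper (polyhedral cell decomposition, Hoffman-constant regret comparison, finiteness giving uniform constants), but it omits the two nontrivial steps that make that route actually work, so as written it does not constitute a proof.

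\textbf{Gap 1: consistency at cell intersections.}
You set $\psi(u) = r_\alpha$ for ``any contained $\alpha$.'' But when $\mathcal{P}(u)$ contains two cells $D_\alpha$ and $D_\beta$ with $r_\alpha \notin \gamma_\beta$, this choice makes $\psi(u)$ target-suboptimal for $p \in D_\beta$ even though $u$ is surrogate-optimal there; this violates calibration (and even indirect elicitation). You would need a \emph{single} $r$ that is $\ell$-optimal on all of $\mathcal{P}(u)$, i.e.\ with $\varphi(r) \in \bigcap_{D_\alpha \subseteq \mathcal{P}(u)} U_\alpha$. It is not obvious that an embedding point lies in that intersection; the paper proves exactly this (eq.~\eqref{eq:intersection-property-embedding}) by showing $\Gamma_u$ is itself a level set contained in some $D_\alpha$, and then invoking representativity. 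You flag this in your closing paragraph as ``the main obstacle,'' but you do not resolve it, and ``finiteness of the polyhedral decomposition'' alone does not resolve it: it is the embedding hypothesis, not polyhedrality, that forces a common legal report to exist.

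\textbf{Gap 2: calibration requires separation, not just correctness at optimizers.}
Your last step argues: $u \in \Gamma_L(p)$ forces $\psi(u) \in \prop{\ell}(p)$, so $\psi(u) \notin \prop{\ell}(p)$ forces $u \notin \Gamma_L(p)$, ``hence strictly positive surrogate regret.'' Strict positivity is not enough for calibration: Definition~\ref{def:calibrated} asks for a strict inequality between two infima. Your extension ``link $u$ to the nearest surrogate-optimal region'' must be shown to make $\psi$ correct on a fixed $\epsilon$-thickening of each $\Gamma_L(p)$, otherwise the infimum over $\{u : \psi(u)\notin\prop{\ell}(p)\}$ can equal $\risk L(p)$ by approaching $\Gamma_L(p)$ from outside. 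Proving such a uniform $\epsilon$ exists is nontrivial: where two optimal sets $U_\alpha, U_\beta$ are adjacent, their $\epsilon$-thickenings overlap and you need a report in $R_{U_\alpha}\cap R_{U_\beta}$ there (Gap~1 again), and where they are disjoint you need a positive lower bound on their pairwise distances. The paper handles both with a geometric lemma (Lemma~\ref{lem:thick-intersect}) showing that for small enough $\epsilon$ the $\epsilon$-thickenings of a subfamily of $\U$ intersect only if the sets themselves do, and then runs the Hoffman bound you sketch. Until you establish this thickening property and the common-report property, the ``uniform constant $C$'' you posit has not been exhibited.

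A minor note: embedding condition (ii) gives $\{r \in \Sc : \varphi(r)\in\Gamma_L(p)\} = \prop{\ell}(p)\cap\Sc$, not the full $\argmin_r \E_{Y\sim p}\ell(r,Y)$; the two coincide only up to the representative set $\Sc$. This does not affect your plan materially but should be stated carefully.
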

\noindent
Beyond consistency, we show in \S~\ref{subsec:regret-bounds} that any polyhedral surrogate achieves a linear surrogate regret bound, which allows one to translate generalization bounds from the surrogate to the target.
Our results are constructive: given a discrete target loss, we show how to define a surrogate and construct a calibrated link, and given a polyhedral surrogate, we show how to find a discrete loss that it embeds.

We demonstrate the constructiveness of our framework in \S~\ref{sec:applications} with several applications, many of which are subsequent to our work.
In addition to constructing new surrogates, we illustrate the power of our framework to analyze previously proposed polyhedral surrogates.
For example, while we know that the inconsistent top-$k$ polyhedral surrogates mentioned above do not solve top-$k$ classification, our framework illuminates the problem they \emph{do} solve, and what restrictions on the underlying distribution would render them top-$k$ consistent (\S~\ref{sec:top-k}).

Underpinning our results are several observations, outlined in \S~\ref{sec:min-rep-sets}, which formalize the idea that polyhedral losses ``behave like'' discrete losses.
In particular, any polyhedral loss $L$ has a finite \emph{representative set} $\Sc$ of reports, such that for all distributions, some report in $\Sc$ is $L$-optimal.
We show that $L$ embeds the discrete loss $\ell = L|_\Sc$ given by restricting $L$ to just the reports in $\Sc$.
To go from a discrete loss to a polyhedral surrogate, we prove that the conditions of an embedding are equivalent to matching Bayes risks (Proposition~\ref{prop:embed-bayes-risks}), and use the fact that discrete losses and polyhedral losses both have polyhedral Bayes risks.

Finally, we also provide several observations beyond what is needed to prove our main results, which we view as conceptual contributions (\S~\ref{sec:min-rep-sets},~\ref{sec:poly-ie-consistency}).
Using tools from property elicitation, we show an equivalence between minimum representative sets (those of minimum cardinality) and ``non-redundancy'', wherein no report is dominated by another.
We further show that, while a minimum representative set is not always unique, the loss values associated with it are unique, giving rise to a natural ``trim'' operation on losses.
The paper concludes with an interesting observation (Theorem~\ref{thm:poly-ie-implies-consistent}): while indirect property elicitation is generally a strictly weaker condition than consistency, the two are equivalent when restricting to the class of polyhedral surrogates.

Taken together, we view our contributions as both conceptual and practical.
We uncover the remarkable structure of polyhedral surrogates, deepening our understanding of the relationship between surrogate and discrete target losses.
This structure leads to a powerful new framework to design and analyze surrogate losses.
As we illustrate with several examples, this framework has already been applied to solve open questions by designing new surrogates, to uncover the behavior of existing surrogates, and to construct link functions in complex structured problems.
We conclude with several directions for future work.

\section{Setting}
\label{sec:setting}

For discrete prediction problems like classification, the given (discrete) loss is often computationally intractable to optimize directly.
Therefore, many machine learning algorithms instead minimize a surrogate loss function with better optimization qualities, such as convexity.
To ensure that this surrogate loss successfully addresses the original target problem, one needs to establish statistical consistency, a minimal requirement that is a prerequisite for generalization bounds.
Consistency roughly says that, in the limit as one obtains more and more data, the learned hypothesis approaches the best possible.
Consistency depends crucially on the choice of link function that maps surrogate reports (predictions) to target reports.

In this section, we introduce notation and concepts related to consistency that we use throughout.
Consistency is often a difficult condition to work with directly, but in finite prediction settings, it is equivalent the simpler notion of \emph{calibration} (Definition~\ref{def:calibrated}) which depends solely on the conditional distribution over the labels~\citep{bartlett2006convexity,tewari2007consistency,ramaswamy2016convex}.
Even simpler than calibration indirect elicitation, a weaker condition only requiring that optimal surrogate reports link to optimal target reports.
Finally, we introduce a new precise notion of embedding, a special case of indirect elicitation, which forms the backbone of our approach.

\subsection{Notation and Losses}
\label{sec:notation-losses}

Let $\Y$ be a finite label space, and throughout let $n=|\Y|$.
Define $\reals^\Y_+$ to be the nonnegative orthant in $\reals^\Y$, i.e., $\reals^\Y_+ = \{x \in \reals^\Y : \forall y\in\Y\; x_y \geq 0 \}$.
Let $\simplex = \{p\in\reals^{\Y}_+ : \|p\|_1 = 1\}$ be the set of probability distributions on $\Y$, represented as vectors.
We will primarily focus on conditional distributions $p\in\simplex$ over labels, abstracting away the feature space $\X$; see \S~\ref{subsec:calibration-links} for a discussion of the joint distribution over $\X\times\Y$.

A generic loss function, denoted $L:\R\to\reals^\Y_+$, maps a report (prediction) $r$ from a set $\R$ to the vector of loss values $L(r) = (L(r)_y)_{y\in\Y}$ for each possible outcome $y\in\Y$.
We write the corresponding expected loss when $Y \sim p$ as $\inprod{p}{L(r)}$.
The \emph{Bayes risk} of a loss $L:\R\to\reals^\Y_+$ is the function $\risk{L}:\simplex\to\reals_+$ given by $\risk{L}(p) := \inf_{r\in\R} \inprod{p}{L(r)}$.
When restricting the domain of a loss $L$ from $\R$ to $\R' \subseteq \R$, we write $L|_{\R'}$.

We assume that the target prediction problem is given in the form of a \emph{target loss} $\ell:\R\to\reals^\Y_+$ for some report set $\R$.
A \emph{discrete (target) loss} is such an $\ell$ where $\R$ is a finite set.
Surrogate losses will take $\R = \reals^d$ and be written $L:\reals^d\to\reals^\Y_+$, typically with reports written $u\in\reals^d$.

For example, 0-1 loss is a discrete loss with $\R = \Y = \{-1,1\}$
given by $\ellzo(r)_y = \Ind{r \neq y}$, with Bayes risk $\risk{\ellzo}(p) = 1-\max_{y\in\Y} p_y$.
Two widely-used surrogates for $\ellzo$ are hinge loss $\hinge(u)_y = (1-yu)_+$, where $(x)_+ = \max(x,0)$, and logistic loss $L(u)_y = \log(1+\exp(-yu))$ for $u\in\reals$.
See Figure~\ref{fig:bayes-risks-01} for a visualization of the Bayes risks of 0-1, hinge, and logistic losses, respectively.

Many of the surrogate losses we consider will be \emph{polyhedral}, meaning piecewise linear and convex; we briefly recall the relevant definitions.
In $\reals^d$, a \emph{polyhedral set} or \emph{polyhedron} is the intersection of a finite number of closed halfspaces.
A \emph{polytope} is a bounded polyhedral set.
A convex function $f:\reals^d\to\reals$ is \emph{polyhedral} if its epigraph is polyhedral, or equivalently, if it can be written as a pointwise maximum of a finite set of affine functions~\citep{rockafellar1997convex}.
\begin{definition}[Polyhedral loss]
  A loss $L: \reals^d \to \reals^{\Y}_+$ is \emph{polyhedral} if $L(u)_y$ is a polyhedral function of $u$ for each $y\in\Y$.
\end{definition}
\noindent
In the example above, hinge loss is polyhedral, whereas logistic loss is not.

\subsection{Property Elicitation}
\label{sec:property-elicitation}

We will frequently appeal to concepts and results from property elicitation~\citep{savage1971elicitation,osband1985information-eliciting,lambert2008eliciting,gneiting2011making,steinwart2014elicitation,frongillo2015vector-valued,lambert2018elicitation}.
Here one studies \emph{properties}, maps from label distributions to reports, and asks when a property characterizes the reports that exactly minimize a loss.
In our case, this map will at times be set-valued, meaning a single distribution could yield multiple optimal reports.
For example, when $p=(1/2,1/2)$, both $r=1$ and $r=-1$ optimize 0-1 loss.
We will use double arrow notation to denote a (non-empty) set-valued map, so that $\Gamma: \simplex \toto \R$ is shorthand for $\Gamma: \simplex \to 2^{\R} \setminus \{\emptyset\}$, where $2^\R$ denotes the power set of $\R$.

\begin{definition}[Property, level set]\label{def:property}
  A \emph{property} is a function $\Gamma:\simplex\toto\R$.
  The \emph{level set} of $\Gamma$ for report $r$ is the set $\Gamma_r := \{p \in \simplex \mid r \in \Gamma(p)\}$.
  If $\R$ is finite, we call $\Gamma$ a \emph{finite property}.
\end{definition}

Intuitively, $\Gamma(p)$ is the set of reports which should optimize expected loss for a given distribution $p$, and $\Gamma_r$ is the set of distributions for which the report $r$ should be optimal.
For example, the \emph{mode} is the 
property $\mode(p) = \argmax_{y\in\Y} p_y$, and captures the set of optimal reports for 0-1 loss: for each distribution over the labels, one should report the most likely label.
In this case we say 0-1 loss (directly) \emph{elicits} the mode, as we formalize below.

\begin{definition}[Directly Elicits]
  \label{def:elicits}
  A loss $L:\R\to\reals^\Y_+$, \emph{(directly) elicits} a property $\Gamma:\simplex \toto \R$ if
  \begin{equation}
    \forall p\in\simplex,\;\;\;\Gamma(p) = \argmin_{r \in \R} \inprod{p}{L(r)}~.
  \end{equation}
  If $L$ elicits a property, that property is unique and we denote it $\prop{L}$.
\end{definition}
Since we have defined a property $\Gamma$ to be nonempty, if the infimum of expected loss $\inprod{p}{L(\cdot)}$ is not attained for some $p \in \simplex$, then $L$ does not elicit a property.
We say that a loss $L$ is \emph{minimizable} if the infimum of $\inprod{p}{L(\cdot)}$ is attained for all $p \in \simplex$.
For example, hinge loss is minimizable, whereas logistic loss is not (take $p=(0,1)$ or $(1,0)$).

We will typically denote general properties and losses with $\Gamma$ and $L$, respectively.
For surrogate losses and properties, recall that we typically consider the report set $\reals^d$.
For discrete target losses and properties, we will take $\R$ to be a finite set, and use lowercase notation $\gamma$ and $\ell$, respectively.

\subsection{Calibration and Links}
\label{subsec:calibration-links}

To assess whether a surrogate and link function align with the original loss, we turn to the common condition of \emph{calibration}.
Roughly, a surrogate and link are calibrated if the best possible expected loss achieved by linking to an incorrect report is strictly suboptimal.

\begin{definition}
  \label{def:calibrated}
  Let discrete loss $\ell:\R\to\reals^\Y_+$, proposed surrogate $L:\reals^d\to\reals^\Y_+$, and link function $\psi:\reals^d\to\R$ be given.
  We say $(L,\psi)$ is \emph{calibrated} with respect to $\ell$ if
for all $p \in \simplex$,
  \begin{equation}
    \label{eq:calibrated}
  \inf_{u \in \reals^d : \psi(u) \not\in \prop{\ell}(p)} \inprod{p}{L(u)} > \inf_{u \in \reals^d} \inprod{p}{L(u)}~.
  \end{equation}
  If $(L, \psi)$ is calibrated with respect to $\ell$, we call $\psi$ a \emph{calibrated link.}
\end{definition}

It is well-known in finite-outcome settings that calibration is equivalent to \emph{consistency}, in the following sense (cf.~\citep{bartlett2006convexity,zhang2004statistical,agarwal2015consistent}).
Suppose we have the feature space $\X$ and label space $\Y$.
We say a surrogate and link pair $(L,\psi)$ is consistent with respect to $\ell$ if, for all data distributions $D \in \Delta(\X \times \Y)$, and all sequences of surrogate hypotheses $H_1,H_2,\ldots$ whose $L$-loss limits to the optimal surrogate loss $L^*$ (in expectation over $D$), the $\ell$-loss of the sequence $\psi\circ H_1,\psi \circ H_2, \ldots$ limits to the optimal target loss $\ell^*$.
As Definition~\ref{def:calibrated} does not involve the feature space $\X$, we will drop it for the remainder of the paper.

Like the use of a surrogate and link pair in the calibration definition, one can also extend the earlier definition of property elicitation to \emph{indirect (property) elicitation}, in which one applies a link to an elicited property to obtain a related property of interest.
\begin{definition}\label{def:indirect-elic}
	Let minimizable loss $L : \reals^d \to \reals^\Y_+$ and link $\psi : \reals^d \to \R$ be given.
  The pair $(L, \psi)$ indirectly elicits a property $\gamma : \simplex \toto \R$ if for all $u \in \reals^d$, we have $\Gamma_u \subseteq \gamma_{\psi(u)}$, where $\Gamma = \prop{L}$.
	Moreover, we say $L$ indirectly elicits $\gamma$ if such a $\psi$ exists, i.e., if for all $u\in\reals^d$ there exists $r\in\R$ such that $\Gamma_u \subseteq \gamma_r$.
  \footnote{The elicitation literature often refers to this latter condition as one property ``refining'' another~\citep{frongillo2015elicitation}.}
\end{definition}

Indirect elicitation is a weaker condition than calibration~\citep{steinwart2008support,agarwal2015consistent,finocchiaro2021unifying}; we briefly sketch the argument.
Suppose $L$ is minimizable and $(L,\psi)$ is calibrated with respect to $\ell$, and set $\Gamma=\prop{L}$ and $\gamma=\prop{\ell}$.
Let $u\in\reals^d$ and $p\in\Gamma_u$.
From eq.~\eqref{eq:calibrated}, if $\psi(u) \notin \gamma(p)$, then we would have $u\notin\Gamma(p)$, a contradiction to $p\in\Gamma_u$.
Thus, $\psi(u) \in \gamma(p)$, so $p\in\gamma_{\psi(u)}$.
In fact, indirect elicitation is strictly weaker; take hinge loss with the link $\psi(u) = -1$ for $u < 1$ and $\psi(u) = 1$ for $u\geq 1$.
\citet{agarwal2015consistent} were the first to formally connect property elicitation to calibration, though their results generally do not apply to discrete prediction tasks.

\subsection{Embedding}
\label{sec:embedding}

We now formalize the sense in which a convex surrogate can \emph{embed} a target loss $\ell$.
Here one maps each report (prediction) of $\ell$ to a point in $\reals^d$, then constructs a convex loss on $\reals^d$ that agrees with $\ell$ at these points.
This approach captures several consistent surrogates in the literature (e.g.,~\citep{ramaswamy2015hierarchical,ramaswamy2016convex,lapin2015top,wang2020weston}; see \S~\ref{sec:applications}).

An important subtlety is that it is not always necessary to map \emph{all} target reports to $\reals^d$.
It is often convenient to allow $\ell$ to have reports that are ``redundant'' in some sense. (We explore redundancy further in \S~\ref{sec:min-rep-sets}; see also \citet{wang2020weston}.)
Because of this redundancy, we will only require an embedding map to be defined on a \emph{representative set}: a set of reports $\Sc$ such that, for all label distributions, at least one report $r\in\Sc$ minimizes expected loss.
\begin{definition}[Representative set]
  Let $\Gamma:\simplex\toto\R$.
  We say $\Sc \subseteq \R$ is \emph{representative for $\Gamma$} if we have $\Gamma(p) \cap \Sc \neq \emptyset$ for all $p\in \simplex$.
  We further say $\Sc$ is a \emph{minimum representative set} if it has the smallest cardinality among all representative sets.
  Given a minimizable loss $L:\R\to\reals^\Y_+$, we say $\Sc$ is a (minimum) representative set for $L$ if it is a (minimum) representative set for $\prop L$.
\end{definition}

\noindent
\citet{wang2020weston} first study the notion of minimum representative sets under the name \emph{embedding cardinality}.

We now define an embedding, which is a special case of indirect property elicitation.
(This fact is non-trivial; see Lemma~\ref{lem:embedding-implies-indirect-elic}.)
In addition to matching loss values, as described above, we require the original reports to be $\ell$-optimal exactly when the corresponding embedded points are $L$-optimal.
As we discuss following Proposition~\ref{prop:representative-embeds-restriction}, this latter condition can be more simply stated: the representative set for the target must embed into a representative set for the surrogate.
\begin{definition}[Embedding]\label{def:loss-embed}
  A minimizable loss $L:\reals^d\to\reals^\Y_+$ \emph{embeds} a loss $\ell:\R\to\reals^\Y_+$ if there exists a representative set $\Sc$ for $\ell$ and an injective embedding $\varphi:\Sc\to\reals^d$ such that
  (i) for all $r\in\Sc$ we have $L(\varphi(r)) = \ell(r)$, and (ii) for all $p\in\simplex,r\in\Sc$ we have
  \begin{equation}\label{eq:embed-loss}
    r \in \prop{\ell}(p) \iff \varphi(r) \in \prop{L}(p)~.
  \end{equation}
  If $\Sc$ is a minimal representative set, we say $L$ \emph{tightly embeds} $\ell$.
\end{definition}

To illustrate the idea of embedding, let us closely examine hinge loss as a surrogate for 0-1 loss in binary classification.
Recall that we have $\R = \Y = \{-1, +1\}$, with $\hinge(u)_y = (1 - uy)_+$ and $\ellzo(r)_y := \Ind{r\neq y}$, typically with link function $\psi(u) = \sgn(u)$.
We will see that hinge loss embeds (2 times) 0-1 loss, via the identity embedding $\varphi(r) = r$.
For condition (i), it is straightforward to check that $\hinge(\varphi(r))_y = \hinge(r)_y = 2 \Ind{r \neq y} = 2\ellzo(r)_y$ for all $r,y\in\{-1,1\}$.
For condition (ii), let us compute the property each loss elicits, i.e., the set of optimal reports for each $p\in\simplex$:
\[
\prop{\ellzo}(p) = \begin{cases}
1 & p_1 > 1/2 \\
\{-1,1\} & p_1 = 1/2\\
-1 & p_1 < 1/2
\end{cases}
\qquad
\prop{L_{hinge}}(p) = \begin{cases}
[1,\infty) & p_1 = 1\\
1 & p_1 \in (1/2,1) \\
[-1,1] & p_1 = 1/2\\
-1& p_1 \in (0, 1/2)\\
(-\infty, -1]& p_1 = 0
\end{cases}~.
\]
In particular, we see that $-1 \in \prop{\ellzo}(p) \iff p_1 \in [0, 1/2] \iff -1 \in \prop{\hinge}(p)$, and $1 \in \prop{\ellzo}(p) \iff p_1 \in [1/2,1] \iff 1 \in \prop{\hinge}(p)$.
With both conditions of Definition~\ref{def:loss-embed} satisfied, we can conclude that $\hinge$ embeds $2\ellzo$ with the representative set $\Sc = \{-1,1\}$.
By results in \S~\ref{subsec:match-BR}, one could also show that $\hinge$ embeds $2\ellzo$ by the fact that their Bayes risks match (Figure~\ref{fig:bayes-risks-01}).

\begin{figure}
	\begin{minipage}{0.3\linewidth}
	\centering
	\includegraphics[width=0.95\linewidth]{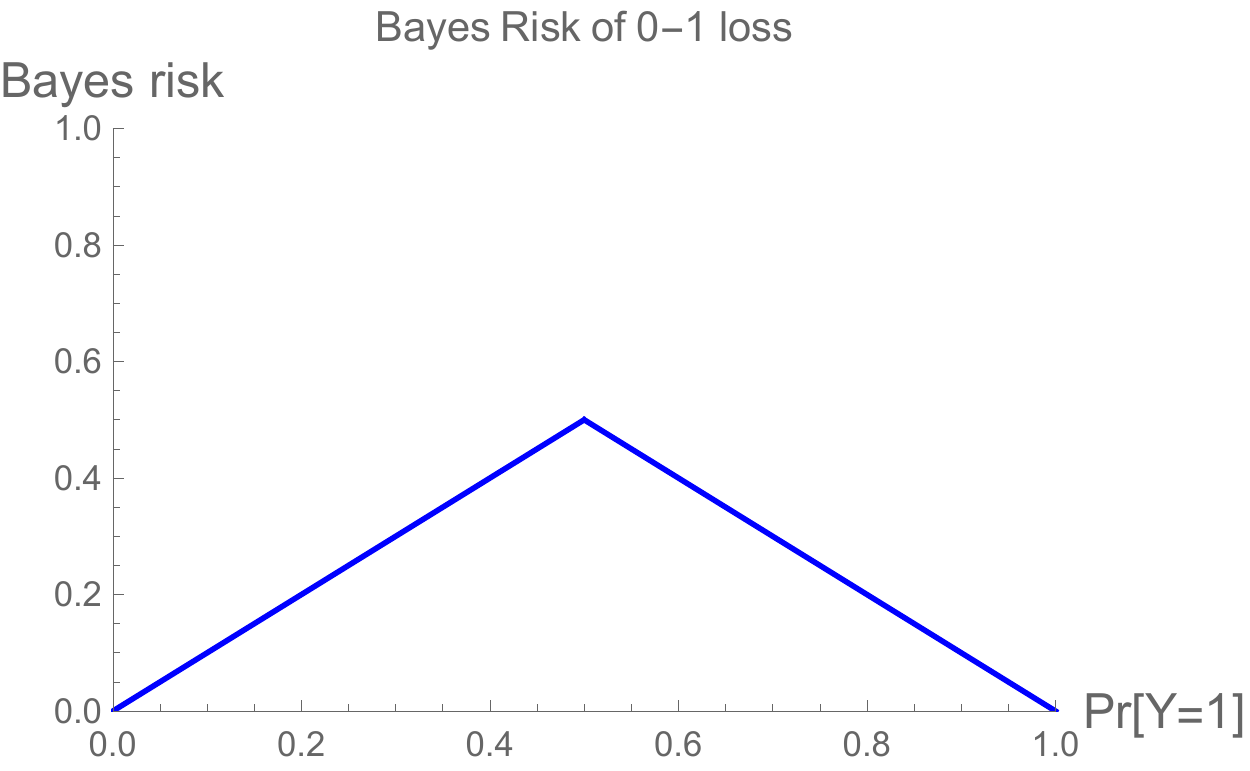}
	\end{minipage}
	\hfill
	\begin{minipage}{0.3\linewidth}
	\centering		\includegraphics[width=0.95\linewidth]{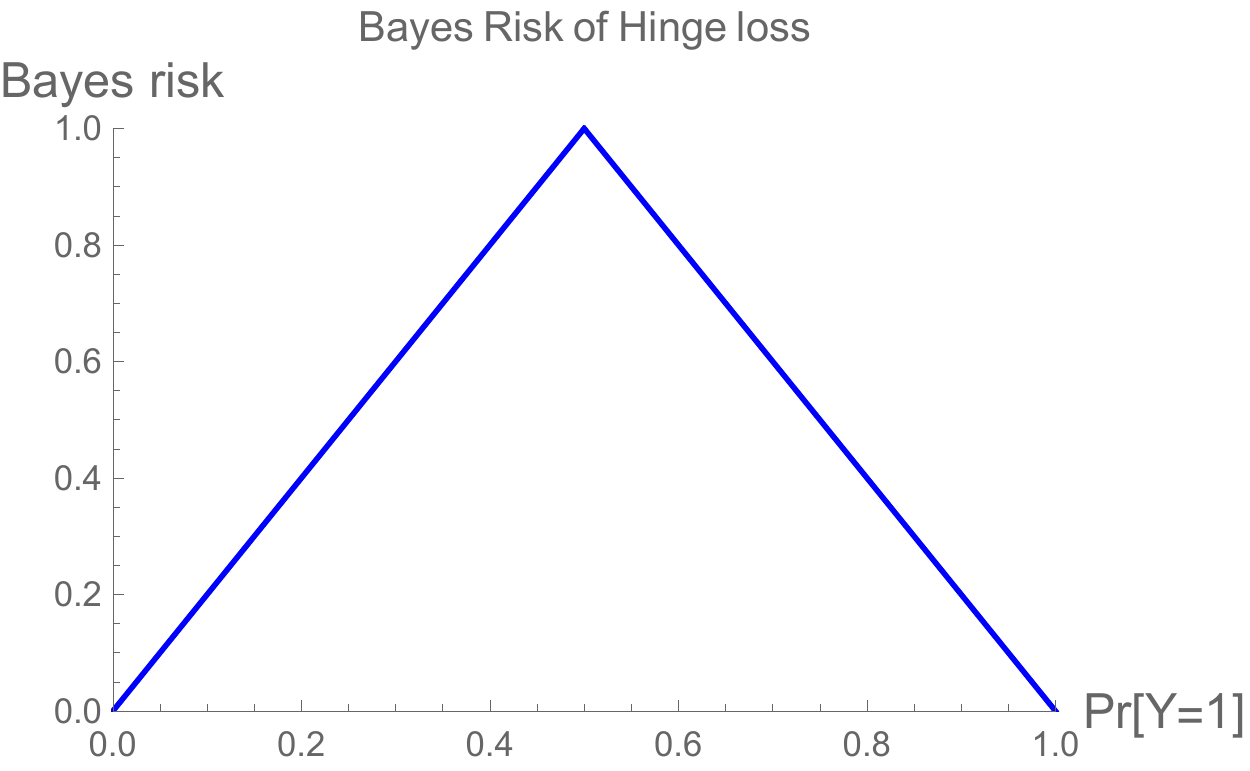}
	\end{minipage}
	\hfill
	\begin{minipage}{0.3\linewidth}
	\centering
	\includegraphics[width=0.95\linewidth]{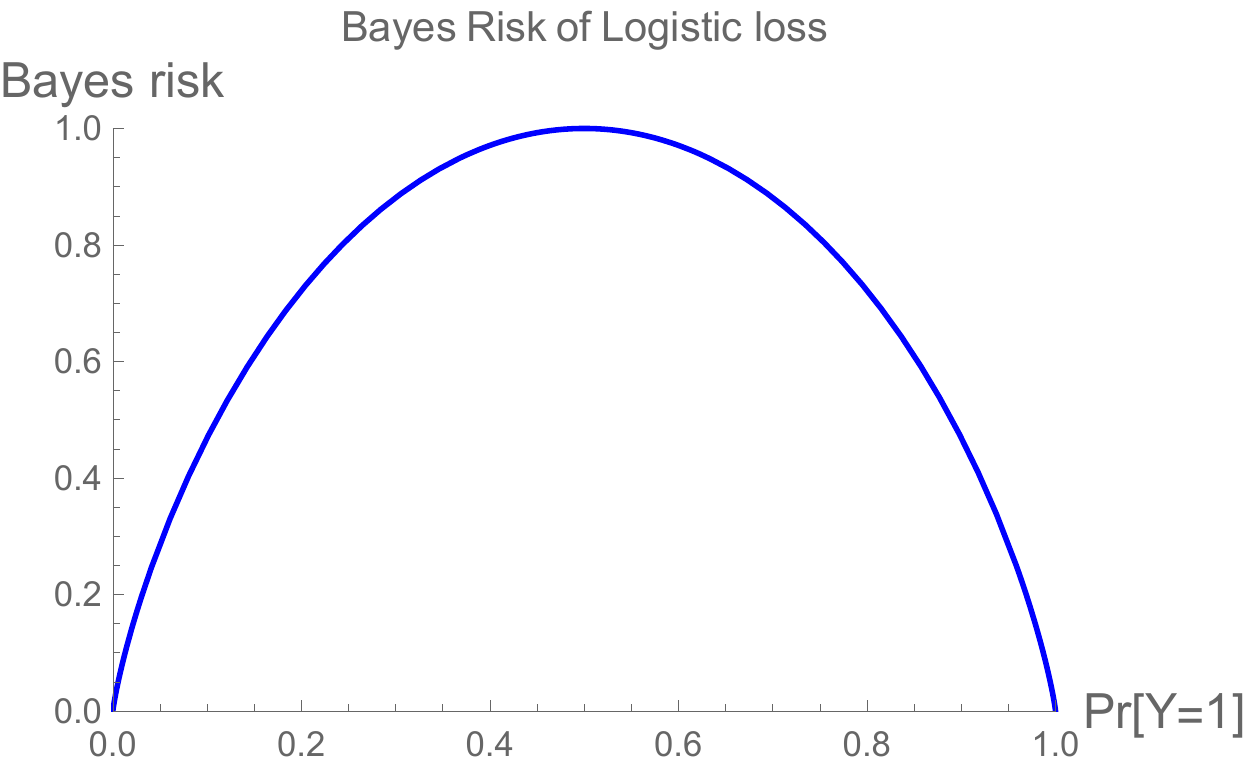}
\end{minipage}
\caption{Bayes risks $\risk L : p \mapsto \inf_u \inprod{p}{L(u)}$ of 0-1, hinge, and logistic losses, respectively, plotted as a function of $p_1 = \Pr[Y=1]$.
	Observe that the Bayes risks of 0-1 and hinge loss are both piecewise linear and concave, while the Bayes risk of logistic loss is not piecewise linear.  Proposition~\ref{prop:embed-bayes-risks} states that embedding is equivalent to matching Bayes risks, confirming that hinge loss (M) embeds 0-1 loss (L), but logistic loss (R) does not.}
\label{fig:bayes-risks-01}
\end{figure}

In this particular example, it is known $(\hinge,\sgn)$ is calibrated with respect to 0-1 loss.
Beyond this simple case, however, it is not clear whether an embedding will always yield a calibrated link.
Indeed, while it is intuitively clear that embedded points should link back to their original reports, via $\psi(\varphi(r)) = r$, how to map the remaining values is far from obvious.
Using the strong connection between embeddings and polyhedral surrogates in \S~\ref{sec:poly-loss-embed}, we give a construction to map the remaining values in \S~\ref{sec:calibration}, showing that embeddings from polyhedral surrogates always yield calibration.

While our notion of embedding is sufficient for calibration (and therefore consistency), it is worth noting that an embedding is not \emph{necessary} for these conditions.  
For example, while logistic loss does not embed 0-1 loss, logistic loss and the sign link are still consistent for 0-1 loss.
When working with polyhedral surrogates, however, embeddings are necessary for calibration in a strong sense, as we discuss in \S~\ref{sec:poly-ie-consistency}: if a polyhedral surrogate $L$ has a calibrated link to some target $\ell$, then $L$ must embed some discrete target $\hat\ell$ which can then be linked to $\ell$.

\section{Embeddings and Polyhedral Losses}
\label{sec:poly-loss-embed}

In this section, we establish a tight relationship between the technique of embedding and the use of polyhedral (piecewise-linear convex) surrogate losses, culminating in Theorem~\ref{thm:embed-poly-main}.
We defer the question of when such surrogates are consistent to \S~\ref{sec:calibration}. 

A first observation is that if a loss $L$ elicits a property $\Gamma$, then $L$ restricted to some representative set $\Sc$, denoted $L|_\Sc$, elicits $\Gamma$ restricted to $\Sc$.
As a consequence, restricting to representative sets preserves the Bayes risk.
We will use these observations throughout.
\begin{lemma}\label{lem:loss-restrict}
  Let $L:\R\to\reals^\Y_+$ elicit $\Gamma$, and let $\Sc\subseteq\R$ be representative for $L$.
  Then $L|_\Sc$ elicits $\gamma:\simplex\toto\Sc$ defined by $\gamma(p) = \Gamma(p)\cap \Sc$.
  Moreover, $\risk{L}=\risk{L|_\Sc}$.
\end{lemma}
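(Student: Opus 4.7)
The plan is to unpack the definitions of property elicitation and representative set, and to show that restricting $L$ to $\Sc$ neither creates new minimizers nor loses existing ones. I will actually prove the second claim first (that Bayes risks match) and use it to derive the elicitation claim, since the equality of Bayes risks is the conceptual heart of the argument.

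First I would check that $\gamma$ is a well-defined property, i.e., $\gamma(p) \neq \emptyset$ for every $p \in \simplex$. This is immediate from the definition of representativeness: since $\Sc$ is representative for $L$, we have $\Gamma(p) \cap \Sc \neq \emptyset$ for all $p$, which is exactly the nonemptiness of $\gamma(p)$.

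Next I would establish $\risk{L} = \risk{L|_\Sc}$. Fix $p \in \simplex$. By definition, $\risk{L}(p) = \inf_{r \in \R}\inprod{p}{L(r)}$ and $\risk{L|_\Sc}(p) = \inf_{r \in \Sc}\inprod{p}{L(r)}$. Since $\Sc \subseteq \R$, the inequality $\risk{L}(p) \leq \risk{L|_\Sc}(p)$ is automatic. For the reverse direction, by representativeness pick any $r^* \in \Gamma(p) \cap \Sc$. Since $L$ elicits $\Gamma$, this $r^*$ attains the infimum: $\inprod{p}{L(r^*)} = \risk{L}(p)$. As $r^* \in \Sc$, this gives $\risk{L|_\Sc}(p) \leq \inprod{p}{L(r^*)} = \risk{L}(p)$, completing the equality.

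Finally, for the elicitation claim, I would show $\argmin_{r \in \Sc}\inprod{p}{L(r)} = \Gamma(p) \cap \Sc$ for every $p$. The Bayes risk equality just proved means a report $r \in \Sc$ is a minimizer of $\inprod{p}{L(\cdot)}$ over $\Sc$ if and only if $\inprod{p}{L(r)} = \risk{L}(p)$, which by the definition of $\Gamma = \prop{L}$ is equivalent to $r \in \Gamma(p)$. Combined with $r \in \Sc$, this gives $r \in \Gamma(p) \cap \Sc = \gamma(p)$, as desired. Since we have already observed $\gamma(p)$ is nonempty, $L|_\Sc$ elicits $\gamma$ in the sense of Definition~\ref{def:elicits}.

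I do not anticipate a substantive obstacle here; the lemma is essentially a bookkeeping step that records two intuitive consequences of the representative-set condition. The only mild subtlety is being careful that $\gamma$ is nonempty-valued (so it qualifies as a property under Definition~\ref{def:property}), which is precisely what representativeness buys us, and that one cannot lose minimizers by restricting: the infimum over $\R$ is in fact attained inside $\Sc$.
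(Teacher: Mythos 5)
Your proof is correct and uses essentially the same ingredients as the paper's; the only difference is that you establish the Bayes risk equality first and derive the elicitation claim from it, whereas the paper argues the two inclusions of $\gamma(p) = \argmin_{u\in\Sc}\inprod{p}{L(u)}$ directly and then obtains the Bayes risk equality as a corollary. Both orderings rest on the same observation — representativeness supplies a report $r^*\in\Gamma(p)\cap\Sc$ that attains the infimum over $\R$ while also lying in $\Sc$ — so this is a cosmetic reordering rather than a genuinely different route.
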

\begin{proof}
  Let $p\in\simplex$ be fixed throughout.
  First let $r \in \gamma(p) = \Gamma(p) \cap \Sc$.
  Then $r \in \Gamma(p) = \argmin_{u\in\R} \inprod{p}{L(u)}$, so as $r\in\Sc$ we have in particular $r \in \argmin_{u\in\Sc} \inprod{p}{L(u)}$.
  For the other direction, suppose $r \in \argmin_{u\in\Sc} \inprod{p}{L(u)}$.
  As $\Sc$ is representative for $L$, we must have some $s \in \Gamma(p) \cap \Sc$.
  On the one hand, $s\in\Gamma(p) = \argmin_{u\in\R} \inprod{p}{L(u)}$.
  On the other, as $s \in \Sc$, we certainly have $s \in \argmin_{u\in\Sc} \inprod{p}{L(u)}$.
  But now we must have $\inprod{p}{L(r)} = \inprod{p}{L(s)}$, and thus $r \in \argmin_{u\in\R} \inprod{p}{L(u)} = \Gamma(p)$ as well.
  We now see $r \in \Gamma(p) \cap \Sc$.
  Finally, the equality of the Bayes risks $\min_{u\in\R} \inprod{p}{L(u)} = \min_{u\in\Sc} \inprod{p}{L(u)}$ follows immediately by the above, as $\emptyset \neq \Gamma(p)\cap\Sc \subseteq \Gamma(p)$ for all $p\in\simplex$.
\end{proof}

Lemma~\ref{lem:loss-restrict} leads to the following useful tool for finding embeddings: if a surrogate has a finite representative set, it embeds its restriction to the representative set.
\begin{proposition}\label{prop:representative-embeds-restriction}
  Let a minimizable surrogate loss $L:\reals^d \to \reals^\Y_+$ be given.
  If $L$ has a finite representative set $\Sc \subseteq \reals^d$, then $L$ embeds the discrete loss $L|_\Sc$.
\end{proposition}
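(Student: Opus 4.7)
The plan is to take $\varphi : \Sc \to \reals^d$ to be the inclusion map, which is automatically injective, and then verify the two conditions of Definition~\ref{def:loss-embed} for $\ell := L|_\Sc$ by appealing directly to Lemma~\ref{lem:loss-restrict}. Before doing this, I need to confirm that the definition is even applicable: that $L|_\Sc$ is minimizable and that there is a representative set of $L|_\Sc$ on which to define $\varphi$. Both are essentially free. Since $\Sc$ is finite, the infimum $\inf_{r\in\Sc}\inprod{p}{L(r)}$ is attained for every $p \in \simplex$, so $L|_\Sc$ elicits a well-defined property $\prop{L|_\Sc}$. Taking the representative set of $\ell$ to be $\Sc$ itself is then trivial: since the domain of $L|_\Sc$ is $\Sc$, every nonempty value $\prop{L|_\Sc}(p)$ already lies in $\Sc$.

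Next I would verify the two embedding conditions. Condition (i) is immediate from the definition of restriction: for $r\in\Sc$, $L(\varphi(r)) = L(r) = L|_\Sc(r) = \ell(r)$. Condition (ii) is exactly where Lemma~\ref{lem:loss-restrict} does the work. The lemma states that since $\Sc$ is representative for $L$, we have $\prop{L|_\Sc}(p) = \prop{L}(p) \cap \Sc$ for all $p\in\simplex$. Hence for any $r\in\Sc$,
\[
 r \in \prop{\ell}(p) \iff r \in \prop{L}(p)\cap \Sc \iff r \in \prop{L}(p) \iff \varphi(r) \in \prop{L}(p),
\]
where the second equivalence uses $r\in\Sc$ and the third uses $\varphi(r)=r$. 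This is precisely the biconditional in \eqref{eq:embed-loss}.

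I do not anticipate a real obstacle here; the statement is essentially a corollary of Lemma~\ref{lem:loss-restrict} once one notices that the identity inclusion is the natural candidate for $\varphi$. The only subtle point worth stating explicitly in the writeup is that minimizability of $L|_\Sc$ (needed for $\prop{L|_\Sc}$ to be defined and for $L|_\Sc$ to qualify as a loss we can talk about eliciting) follows purely from the finiteness of $\Sc$, rather than from minimizability of $L$ on all of $\reals^d$, so the argument does not secretly rely on any extra structure beyond what the hypothesis supplies.
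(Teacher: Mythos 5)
Your proof is correct and takes essentially the same route as the paper: the identity inclusion as $\varphi$, condition (i) immediate from restriction, and condition (ii) deduced directly from Lemma~\ref{lem:loss-restrict}. Your extra remarks checking that $L|_\Sc$ is minimizable (via finiteness of $\Sc$) and that $\Sc$ itself is representative for $\ell$ are sound points the paper leaves implicit, but they don't change the argument.
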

\begin{proof}
  Let $\Gamma = \prop{L}$ and $\gamma = \prop{L|_\Sc}$.
  Define $\varphi : \Sc \to \Sc$ to be the identity embedding.
  Condition (i) of an embedding is trivially satisfied, as $L|_\Sc(u) = L(u)$ for all $u\in\Sc$.
  Now let $u\in\Sc$.
  From Lemma~\ref{lem:loss-restrict}, for all $p\in\simplex$ we have $u \in \gamma(p) \iff u \in \Gamma(p) \cap \Sc \iff u \in \Gamma(p)$.
  We conclude condition (ii) of an embedding.
\end{proof}

Proposition~\ref{prop:representative-embeds-restriction} reveals an equivalent definition of an embedding which can be more convenient.
Given a representative set $\Sc$ for $\ell$, an injection $\varphi:\Sc\to\reals^d$ is an embedding if: (i) the loss values match as in Definition~\ref{def:loss-embed}(i), and (ii) $\varphi(\Sc)$ is representative for $L$.
This new definition is clearly implied by Definition~\ref{def:loss-embed}; for the converse, Proposition~\ref{prop:representative-embeds-restriction} states that $L$ embeds $L|_{\varphi(\Sc)}$, which by (i) is the same loss as $\ell$ up to relabeling via $\varphi$.

With Proposition~\ref{prop:representative-embeds-restriction} in hand, we now shift our focus to \emph{polyhedral} (piecewise-linear and convex) surrogates.
While polyhedral surrogates do not directly elicit finite properties, as their report sets are infinite, they do elicit properties with a finite range, meaning the set of possible optimal report sets is finite.

\begin{restatable}{lemma}{polyhedralrangegamma}
	\label{lem:polyhedral-range-gamma}
	Let $L:\reals^d\to\reals_+^\Y$ be a polyhedral loss.
  Then $L$ is minimizable and elicits a property $\Gamma := \prop{L}$.
	Moreover, the range of $\Gamma$, given by $\Gamma(\simplex) = \{\Gamma(p) \subseteq \reals^d : p\in\simplex\}$, is a finite set of closed polyhedra.
\end{restatable}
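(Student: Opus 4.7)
The plan is to establish the three conclusions in order. First, for each $p \in \simplex$ the function $f_p(u) := \inprod{p}{L(u)}$ is a nonnegative linear combination of the polyhedral convex functions $u \mapsto L(u)_y$, hence is itself polyhedral convex. Because $L \geq 0$, $f_p$ is bounded below by $0$, and I would invoke the standard fact (e.g., from Rockafellar) that any polyhedral convex function bounded below attains its infimum. This gives minimizability, so $\Gamma := \prop L$ is a well-defined property, and each $\Gamma(p) = \{u : f_p(u) \leq \risk L(p)\}$ is a sublevel set of a polyhedral convex function, hence a closed polyhedron (obtained by slicing the polyhedral epigraph of $f_p$ with the horizontal hyperplane at height $\risk L(p)$).

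The heart of the argument is finiteness of the range. The plan is to fix once and for all a finite polyhedral complex $\P$ subdividing $\reals^d$ into closed cells on which every component $L(\cdot)_y$ is affine. Such a $\P$ is built by writing each $L(\cdot)_y$ as a pointwise maximum over a finite family of affine functions, reading off the induced cell decomposition (the maximal regions where a fixed affine piece attains the max), and then taking a common refinement across $y \in \Y$; the common refinement of finitely many finite polyhedral complexes remains finite. On each cell $C \in \P$ we then have $L(u) = M_C u + v_C$ affine, so $f_p(u) = (p^{\top} M_C)\, u + p^{\top} v_C$ is affine on $C$ for every $p$.

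The core claim is that for every $p \in \simplex$, the set $\Gamma(p)$ is a union of closed cells of $\P$. To see this I would pick any $u \in \Gamma(p) \cap \relint(C)$ and show that $f_p|_C$ is constant on $C$: it is affine on $C$ and attains its global minimum at a relative interior point, so if its gradient had a nonzero component along the affine hull of $C$, a small step from $u$ in the direction of steepest descent would stay in $C$ (by relative interiority) and strictly decrease $f_p$, contradicting $u \in \argmin f_p$. Hence $C \subseteq \Gamma(p)$, and $\Gamma(p)$ equals the union of those closed cells of $\P$ it contains. Since $\P$ has finitely many cells, there are only finitely many such unions, so $\Gamma(\simplex)$ is finite; combined with the first step, each element is a closed polyhedron.

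I expect the main delicate point to be the ``affine on a cell, globally minimized at a relative interior point, hence constant on the cell'' step, which relies critically on the refined complex making $L$ genuinely affine (and not merely convex) on each closed cell, and on being careful about using the full affine hull direction of $C$ rather than all of $\reals^d$ when selecting the descent direction. The remaining ingredients --- existence of a common refinement and counting unions of cells --- are standard polyhedral bookkeeping.
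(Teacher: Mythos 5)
Your strategy of subdividing the domain $\reals^d$ into a $p$-independent finite collection of closed polyhedra on which every $L(\cdot)_y$ is simultaneously affine is a valid and somewhat more elementary alternative to the paper's power-diagram argument. The minimizability step and the observation that each $\Gamma(p)$ is a closed polyhedron are both correct. However, there is a genuine gap in the ``core claim.'' As you construct it, $\P$ consists only of the maximal cells arising from the common refinement, and for that collection the assertion ``$\Gamma(p)$ equals the union of those closed cells of $\P$ it contains'' is false: your argument shows only that if $\Gamma(p)$ meets $\relint(C)$ then $C \subseteq \Gamma(p)$, and it says nothing when $\Gamma(p)$ lies entirely on the boundaries of the maximal cells, in which case no cell of $\P$ is contained in $\Gamma(p)$. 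For a concrete counterexample, take $\Y = \{-1,+1\}$, $d = 1$, and the polyhedral loss $L(u)_y = |u-y|$; your common refinement of $\reals$ is $\{(-\infty,-1],\, [-1,1],\, [1,\infty)\}$, yet for the point distribution $p = \delta_{+1}$ one has $\inprod{p}{L(u)} = |u-1|$ and $\Gamma(p) = \{1\}$, which sits on a shared cell boundary, contains no cell of $\P$, and is in the relative interior of no cell of $\P$.

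The fix is not difficult, but it must be made explicit. Either (a) pass from the maximal cells to the full polyhedral complex that includes all lower-dimensional faces; then every point of $\reals^d$ lies in the relative interior of exactly one face, and your descent-direction argument (carried out within the affine hull of that face) shows $\Gamma(p)$ is a union of faces, of which there are only finitely many. Or (b) argue cell-by-cell: for each maximal cell $C$, $\inprod{p}{L(\cdot)}|_C$ is affine, so $\Gamma(p) \cap C$ is a face of $C$ (possibly empty or all of $C$), and therefore $\Gamma(p) = \bigcup_{C \in \P} (\Gamma(p) \cap C)$ is one of the finitely many possible unions drawn from the finite set of faces of cells of $\P$. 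Either route recovers finiteness of $\Gamma(\simplex)$. By contrast, the paper projects the epigraph of $u \mapsto \inprod{p}{L(u)}$ (a $p$-dependent object) onto $\reals^d$, invokes Aurenhammer's characterization of power diagrams, and proves separately that the induced diagram depends on $p$ only through its support, of which there are finitely many; your $p$-independent subdivision sidesteps that machinery once the face-level issue above is repaired.
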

\begin{proof}[Proof sketch]
	See \S~\ref{app:power-diagrams} for the full proof.
	As $L$ is bounded from below, $L$ is minimizable from~\citet[Corollary 19.3.1]{rockafellar1997convex} .
	With $\Y$ finite, there are only finitely many supporting sets over $\simplex$.
	For $p \in \simplex$, the power diagram induced by projecting the epigraph of expected loss onto $\reals^d$ is the same for any $p$ of the same support (Lemma~\ref{lem:polyhedral-pd-same}).
	Moreover, we have $\Gamma(p)$ being exactly one of the faces of the projected epigraph since the hyperplane $u \mapsto (u, \inprod{p}{L(u)})$ supports the epigraph of the expected loss at exactly the property value; moreover, since the loss is polyhedral the supporting hyperplane must support on a face of the epigraph.
	Since this epigraph has finitely many faces (as it is polyhedral), the range of $\Gamma$ is then (a subset) of elements of a finitely generated (finite supports) set of finite elements (finite faces).
	Moreover, each element of $\Gamma(\simplex)$ is a closed polyhedron since it corresponds exactly to a closed face of a polyhedral set.
\end{proof}

From Lemma~\ref{lem:polyhedral-range-gamma}, one can simply select a point from each of the finitely many optimal sets to obtain a finite representative set.
Plugging this finite representative set into Proposition~\ref{prop:representative-embeds-restriction} then yields an embedding.

\begin{theorem}\label{thm:poly-embeds-discrete}
  Every polyhedral loss $L$ embeds a discrete loss.
\end{theorem}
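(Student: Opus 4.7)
The plan is to chain together the two preceding results directly: Lemma~\ref{lem:polyhedral-range-gamma} guarantees a finite-range property, from which we can extract a finite representative set, and then Proposition~\ref{prop:representative-embeds-restriction} converts that finite representative set into an embedding.

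First I would apply Lemma~\ref{lem:polyhedral-range-gamma} to $L$. This gives two things at once: $L$ is minimizable (so the property $\Gamma := \prop{L}$ is well-defined), and the range $\Gamma(\simplex)$ is a finite collection of nonempty closed polyhedra, say $\{P_1, \ldots, P_k\}$. Next I would choose, for each $i \in \{1, \ldots, k\}$, an arbitrary point $u_i \in P_i$ (a finite choice, requiring no choice axiom), and set $\Sc = \{u_1, \ldots, u_k\} \subseteq \reals^d$. By construction $\Sc$ is finite.

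Then I would verify that $\Sc$ is representative for $L$. Given any $p \in \simplex$, we have $\Gamma(p) = P_i$ for some $i$ (since $\Gamma(p)$ lies in the range), and the corresponding $u_i$ lies in both $\Sc$ and $\Gamma(p)$; so $\Gamma(p) \cap \Sc \neq \emptyset$, which is precisely the definition of representative. With this in hand, Proposition~\ref{prop:representative-embeds-restriction} applies verbatim to conclude that $L$ embeds the discrete loss $L|_\Sc$, which is discrete because $\Sc$ is finite. That completes the proof.

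There is no real obstacle here; the content of the theorem has already been absorbed into Lemma~\ref{lem:polyhedral-range-gamma} (finiteness of the range of $\Gamma$) and Proposition~\ref{prop:representative-embeds-restriction} (finite representative sets yield embeddings of restrictions). The only thing to watch is making sure we invoke Lemma~\ref{lem:polyhedral-range-gamma} to obtain minimizability of $L$ before talking about $\prop{L}$, since both the lemma and proposition require minimizability as a standing hypothesis. Everything else is a one-line selection argument.
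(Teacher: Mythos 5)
Your proof is correct and is essentially the same as the paper's: both invoke Lemma~\ref{lem:polyhedral-range-gamma} to obtain minimizability and finiteness of $\Gamma(\simplex)$, pick one representative point per optimal set to form a finite representative set $\Sc$, and conclude via Proposition~\ref{prop:representative-embeds-restriction} that $L$ embeds $L|_\Sc$.
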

\begin{proof}
  Let $L:\reals^d\to\reals_+^\Y$ be a polyhedral loss, and $\Gamma = \prop{L}$.
  By Lemma~\ref{lem:polyhedral-range-gamma}, $\Gamma(\simplex)$ is finite set. 
  For each $U\in \Gamma(\simplex)$, select $u_U \in U$, and let $\Sc = \{u_U : U \in\Gamma(\simplex)\}$, which is again finite.
  For any $p\in\simplex$ then, let $U = \Gamma(p)$.
  We have $U \in \Gamma(\simplex)$ by definition, and thus some $u_U \in \Sc$; in particular, $u_U \in U = \Gamma(p)$.
  We conclude that $\Sc$ is representative for $L$.
  Proposition~\ref{prop:representative-embeds-restriction} now states that $L$ embeds $L|_\Sc$.
\end{proof}

We now turn to the reverse direction: which discrete losses are embedded by some polyhedral loss?
Perhaps surprisingly, we show in Theorem~\ref{thm:discrete-loss-poly-embeddable} that \emph{every} discrete loss is embeddable by a polyhedral surrogate.
Combining this result with Theorem~\ref{thm:poly-embeds-discrete} establishes Theorem~\ref{thm:embed-poly-main}.
Further combining with Theorem~\ref{thm:link-main}, proved in the following section, this construction gives a calibrated polyhedral surrogate for every discrete target loss.

In the proof, we apply a result we will prove in \S~\ref{sec:min-rep-sets}: a minimizable surrogate embeds a discrete loss if and only if their Bayes risks match (Proposition~\ref{prop:embed-bayes-risks}).

\begin{theorem}\label{thm:discrete-loss-poly-embeddable}
  Every discrete loss $\ell:\R \to \reals^\Y_+$ is embedded by a polyhedral loss.
\end{theorem}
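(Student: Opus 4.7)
My plan is to reduce the problem to matching Bayes risks via Proposition~\ref{prop:embed-bayes-risks} (stated and proved later): a minimizable surrogate $L$ embeds a discrete loss $\ell$ if and only if $\risk{L}=\risk{\ell}$. The target Bayes risk is
$$\risk{\ell}(p) = \min_{r\in\R} \inprod{p}{\ell(r)},$$
a minimum of finitely many linear functions, hence a concave polyhedral function on $\simplex$. It therefore suffices to exhibit a polyhedral loss $L:\reals^d\to\reals^\Y_+$ with $\risk{L}=\risk{\ell}$.

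For the construction I take $d=|\R|$, index coordinates by $r\in\R$, and use the standard basis embedding $\varphi(r)=e_r$. Let $\Delta_\R\subset\reals^{|\R|}$ be the probability simplex in these coordinates, and let
$$\rho(u) = \max\bigl\{0,\; 1-\textstyle\sum_r u_r,\; \sum_r u_r - 1\bigr\} + \sum_r \max\{0,-u_r\}$$
be a polyhedral convex penalty vanishing precisely on $\Delta_\R$. For a sufficiently large constant $M>0$, define
$$L(u)_y \;=\; \max\Bigl\{0,\; \textstyle\sum_{r\in\R} u_r\, \ell(r)_y\Bigr\} + M\,\rho(u).$$
By inspection, each coordinate $L(\cdot)_y$ is a sum of pointwise maxima of affine functions, hence polyhedral and convex, and $L(u)_y\ge 0$ always. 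At $u=\varphi(r)=e_r$ the penalty vanishes and $\sum_{r'}(e_r)_{r'}\ell(r')_y=\ell(r)_y\ge 0$, so $L(\varphi(r))_y=\ell(r)_y$.

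To verify the Bayes risk, I expand using $\sum_y p_y = 1$:
$$\inprod{p}{L(u)} = \sum_y p_y \max\bigl\{0,\textstyle\sum_r u_r\ell(r)_y\bigr\} + M\,\rho(u).$$
On $\Delta_\R$, the penalty vanishes and $u\ge 0$ forces $\sum_r u_r\ell(r)_y\ge 0$, so $\inprod{p}{L(u)} = \sum_r u_r\inprod{p}{\ell(r)} \ge \min_r\inprod{p}{\ell(r)}=\risk{\ell}(p)$, with equality at any $u=e_{r^*}$ for $r^*\in\argmin_r \inprod{p}{\ell(r)}$. For $u\notin\Delta_\R$, the penalty grows at least linearly in the violation, while the linear portion $\sum_r u_r\inprod{p}{\ell(r)}$ has slope bounded uniformly in $p\in\simplex$ by $C:=\max_{r,y}\ell(r)_y$. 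Choosing $M>C$ makes the penalty strictly dominate, so $\inprod{p}{L(u)}\ge \risk{\ell}(p)$ for every $u$; hence $\risk{L}=\risk{\ell}$, and Proposition~\ref{prop:embed-bayes-risks} delivers the embedding.

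The main obstacle is the exact-penalty step: showing a single $M$ forces the infimum into $\Delta_\R$ for \emph{every} $p\in\simplex$. This is manageable here because $\R$ and $\Y$ are finite, so $\ell$ has finite range and the slope of the inner linear term is uniformly bounded in $p\in\simplex$; the usual exact-penalty argument from convex optimization then applies. Everything else is routine once Proposition~\ref{prop:embed-bayes-risks} is in hand.
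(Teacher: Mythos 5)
Your proof is correct, and it takes a genuinely different constructive route from the paper. Both proofs reduce to Proposition~\ref{prop:embed-bayes-risks}, but the paper works in prediction dimension $d=|\Y|$ by taking $C=(-\risk{\ell})^*$ and setting $L(u)=C(u)\ones-u$, whereas you work in $d=|\R|$ with an explicit exact-penalty construction that keeps the affine structure of $\ell$ and uses the $\max\{0,\cdot\}$ and $\rho$ terms to enforce nonnegativity and pin the risk to the simplex $\Delta_\R$. Your exact-penalty step does go through with $M>C:=\max_{r,y}\ell(r)_y$, though the way you phrase it (invoking ``the usual exact-penalty argument'' against a penalty that is not literally a distance function) leaves the reader to fill in a small computation. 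Concretely: writing $c(p)_r=\inprod{p}{\ell(r)}\in[0,C]$, $m=\risk{\ell}(p)=\min_r c(p)_r$, $s=\sum_r u_r$, and $u_r^-=(-u_r)_+$, one has $\sum_r u_r c(p)_r\ge ms+(m-C)\sum_r u_r^-$, so $\inprod{p}{L(u)}\ge ms+(M+m-C)\sum_r u_r^-+M|1-s|\ge ms+M|1-s|\ge m$ whenever $M\ge C$, giving $\risk{L}=\risk{\ell}$ directly. The tradeoff between the two approaches is mainly prediction dimension: the paper achieves $d=|\Y|$, which matters for structured problems where $|\R|\gg|\Y|$, while your construction is arguably more elementary (no conjugate duality) and makes the embedding map $\varphi(r)=e_r$ completely transparent.
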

\begin{proof}
  Let $n = |\Y|$, and let $C:\reals^n \to \reals$ be given by $(-\risk{\ell})^*$, the convex conjugate of $-\risk{\ell}$.
  From standard results in convex analysis, $C$ is polyhedral as $-\risk{\ell}$ is, and $C$ is finite on all of $\reals^\Y$ as the domain of $-\risk{\ell}$ is bounded~\cite[Corollary 13.3.1]{rockafellar1997convex}.
  Note that $-\risk{\ell}$ is a closed convex function, as the infimum of affine functions, and thus $(-\risk{\ell})^{**} = -\risk{\ell}$.
  Define $L:\reals^n\to\reals^\Y$ by $L(u) = C(u)\ones - u$, where $\ones\in\reals^\Y$ is the all-ones vector.
  As $C$ is polyhedral, so is $L$.
  We first show that $L$ embeds $\ell$, and then establish that the range of $L$ is in fact $\reals^\Y_+$, as desired.

  We compute Bayes risks and apply Proposition~\ref{prop:embed-bayes-risks} to see that $L$ embeds $\ell$.
  Observe that $\risk{\ell}$ is polyhedral as $\ell$ is discrete.
  For any $p\in\simplex$, we have
  \begin{align*}
    \risk{L}(p)
    &= \inf_{u\in\reals^n} \inprod{p}{C(u)\ones - u}\\
    &= \inf_{u\in\reals^n} C(u) - \inprod{p}{u}\\
    &= -\sup_{u\in\reals^n} \inprod{p}{u} - C(u)\\
    &= -C^*(p) = - (-\risk{\ell}(p))^{**} = \risk{\ell}(p)~.
  \end{align*}
  It remains to show $L(u)_y \geq 0$ for all $u\in\reals^n$, $y\in\Y$.
  Letting $\delta_y\in\simplex$ be the point distribution on outcome $y\in\Y$, we have for all $u\in\reals^n$, $L(u)_y \geq \inf_{u'\in\reals^n} L(u')_y = \risk{L}(\delta_y) = \risk{\ell}(\delta_y) \geq 0$, where the final inequality follows from the nonnegativity of $\ell$.
\end{proof}

The proof of Theorem~\ref{thm:discrete-loss-poly-embeddable} uses a construction via convex conjugate duality similar to many constructions in the literature. 
For example, the min-max objective in the literature on adversarial prediction~\citep{asif2015adversarial,farnia2016minimax,fathony2016adversarial,fathony2018consistent} is a special case of this construction when one unfolds the definition of the convex conjugate of $-\risk \ell$.
\citet{reid2012convexity} construct a canonical link function for proper losses with differentiable Bayes risks; the link maps a report $p\in\simplex$ to the gradient of the Bayes risk at $p$, which uses the same duality as above.
\citet[Proposition 3]{duchi2018multiclass} give essentially the same construction as ours, but only comment on the calibration of surrogates under such constructions for multiclass classification tasks given by strictly concave losses, excluding polyhedral surrogates.
Finally, a similar construction also appears in the design of prediction markets \cite{abernethy2013efficient} and in connections between proper losses and mechanism design \cite{frongillo2014general}.

\section{Consistency and Linear Regret Transfer via Separated Links}
\label{sec:calibration}

We have seen that every polyhedral loss embeds some discrete loss.
The embedding itself tells us how to link the embedded points back to the discrete reports: link $\varphi(r)$ back to $r$.
But it is not clear how to extend this to yield a full link function $\psi: \reals^d \to \R$, and whether such a $\psi$ can lead to consistency.
In this section, we prove Theorem~\ref{thm:link-main}, restated below, via a construction to generate calibrated links for \emph{any} polyhedral surrogate.
Recalling that calibration is equivalent to consistency for discrete targets, this result implies that an embedding always yields consistency.

The key idea behind our construction is the notion of \emph{separation}, a condition which is equivalent to calibration for discrete prediction problems.
Roughly, given a surrogate $L$ and discrete target $\ell$, a link is $\epsilon$-separated if the distance between any $L$-optimal point in $\reals^d$, and any point that links to an $\ell$-suboptimal report, is at least $\epsilon$.
We also show how this characterization also leads to linear \emph{regret transfer} or \emph{surrogate regret} bounds.

\subsection{Separation}

Recall that for indirect elicitation, any point $u \in \Gamma(p)$ must link to a report $\psi(u) \in \gamma(p)$. 
(In terms of losses, $u$ minimizing expected $L$-loss implies that $\psi(u)$ minimizes expected $\ell$-loss, with respect to $p$.)
The idea of separation is that points in the neighborhood of $u$ must also link to to a report in $\gamma(p)$.
Furthermore, there must be a uniform lower bound $\epsilon$ on the size of any such neighborhood.

\begin{definition}[Separated Link]\label{def:sep-link}
  Let properties $\Gamma:\simplex\toto\reals^d$ and $\gamma:\simplex\toto\R$ be given.
  We say a link $\psi:\reals^d\to\R$
  is \emph{$\epsilon$-separated with respect to $\Gamma$ and $\gamma$} if for all $u\in\reals^d$, $p\in\simplex$ with $\psi(u)\notin\gamma(p)$, we have $d_\infty(u,\Gamma(p)) \geq \epsilon$, where $d_\infty(u,A) \doteq \inf_{a\in A} \|u-a\|_\infty$.
  \footnote{\citet{frongillo2021surrogate} define $\epsilon$-separation with a strict inequality $d_\infty(u,\Gamma(p)) > \epsilon$; we adopt a weak inequality as it is more natural in applications, such as hinge loss for binary classification.}
  Similarly, we say $\psi$ is $\epsilon$-separated with respect to $L$ and $\ell$ if it is $\epsilon$-separated with respect to $\prop{L}$ and $\prop{\ell}$.
\end{definition}

\begin{restatable}{theorem}{calibratedseparated} \label{thm:calibrated-separated}
  Let polyhedral surrogate $L:\reals^d \to \reals^\Y_+$, discrete loss $\ell:\R\to\reals^\Y_+$, and link $\psi:\reals^d\to\R$ be given.
  Then $(L,\psi)$ is calibrated with respect to $\ell$ if and only if
  $\psi$ is $\epsilon$-separated with respect to $L$ and $\ell$ for some
  $\epsilon>0$.
\end{restatable}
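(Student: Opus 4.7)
\textbf{Proof plan for Theorem~\ref{thm:calibrated-separated}.} The plan is to prove both implications by exploiting two structural features of polyhedral convex functions: global Lipschitz continuity, and \emph{sharpness} (linear growth of the function away from the set of minimizers). Combined with the finiteness of the range of $\prop{L}$ (Lemma~\ref{lem:polyhedral-range-gamma}) and of the range of $\prop{\ell}$ (immediate since $\R$ is finite, so $\prop{\ell}$ takes values in $2^\R$), these properties will reduce the equivalence to elementary bookkeeping over finitely many cases.

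For the direction separation $\Rightarrow$ calibration, I will fix $p\in\simplex$ and set $f_p(u) := \inprod{p}{L(u)}$. Being a nonnegative combination of polyhedral functions, $f_p$ is itself polyhedral and attains its infimum $\risk{L}(p)$ on the closed polyhedron $\prop{L}(p)$. A standard polyhedral error bound then furnishes a constant $c_p>0$ for which $f_p(u) - \risk{L}(p) \geq c_p\, d_\infty(u,\prop{L}(p))$ for all $u\in\reals^d$. If $\psi$ is $\epsilon$-separated, then every $u$ with $\psi(u)\notin\prop{\ell}(p)$ satisfies $d_\infty(u,\prop{L}(p)) \geq \epsilon$, so $f_p(u) \geq \risk{L}(p) + c_p\epsilon > \risk{L}(p)$, establishing the strict inequality demanded by Definition~\ref{def:calibrated}.

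For the reverse direction, I would define, for each $p\in\simplex$,
\[
  \epsilon_p := \inf\bigl\{ d_\infty(u,\prop{L}(p)) : u \in \reals^d,\ \psi(u) \notin \prop{\ell}(p) \bigr\},
\]
with the convention $\inf\emptyset = +\infty$. The crucial observation is that $\epsilon_p$ depends on $p$ only through the pair $(\prop{L}(p),\prop{\ell}(p))$. Since $\prop{L}$ has finite range by Lemma~\ref{lem:polyhedral-range-gamma} and $\prop{\ell}$ takes values in the finite set $2^\R$, the map $p\mapsto\epsilon_p$ attains only finitely many values. Thus it suffices to show $\epsilon_p>0$ for every $p$; the desired uniform $\epsilon$ is then the minimum of finitely many positive numbers. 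Positivity follows by contraposition: each coordinate $L(\cdot)_y$ is polyhedral and hence globally Lipschitz, so $u\mapsto\inprod{p}{L(u)}$ has a Lipschitz constant $K$ independent of $p$. If $\epsilon_p=0$, I can select sequences $u_n$ with $\psi(u_n)\notin\prop{\ell}(p)$ and $v_n\in\prop{L}(p)$ with $\|u_n-v_n\|_\infty \to 0$ (using that $\prop{L}(p)$ is closed so the distance is attained), whence $\inprod{p}{L(u_n)} \leq \risk{L}(p) + K\|u_n-v_n\|_\infty \to \risk{L}(p)$, contradicting calibration at $p$.

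The main obstacle I anticipate is the clean invocation of polyhedral sharpness in the forward direction: one must verify that on every linearity cell of $f_p$ disjoint from $\prop{L}(p)$, the governing affine piece has a strictly positive directional slope pointing away from the minimizer set, yielding a nonzero $c_p$. This is a classical error bound for polyhedral convex functions but does require care about unbounded minimizer sets. Modulo this (standard but nontrivial) ingredient and the easier global-Lipschitz fact, both directions reduce to finite case analysis driven entirely by Lemma~\ref{lem:polyhedral-range-gamma} and the discreteness of $\R$.
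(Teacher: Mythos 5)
Your proposal is correct and follows essentially the same route as the paper's proof: a Hoffman-type polyhedral error bound (the paper's Lemma~\ref{lem:hoffman-polyhedral}) gives separation $\Rightarrow$ calibration, while Lipschitz continuity of $u\mapsto\inprod{p}{L(u)}$ together with the finiteness of the ranges of $\prop{L}$ and $\prop{\ell}$ gives the converse. Your packaging of the converse via the function $p\mapsto\epsilon_p$ with finite range is a tidier restatement of the paper's pigeonhole-on-subsequences argument in Lemma~\ref{lem:calibrated-eps-sep}, and your forward direction is a bit more direct (the paper detours through $R_\ell$ because it is simultaneously setting up the regret-transfer bound), but the key ingredients and the logic are the same.
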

Intuitively, calibration of a polyhedral surrogate and separated link follows from two facts.
First, Lemma~\ref{lem:polyhedral-range-gamma} states that a polyhedral surrogate has only a finite number of ``optimal report sets'' $\{\Gamma(p) : p \in \simplex\}$.
Second, for a given $p$, the expected surrogate loss for a suboptimal point scales with the distance from the optimal set $\Gamma(p)$ at some minimum linear rate $\alpha > 0$; this rate is related to \emph{Hoffman constants}.
Combining with the first fact gives a universal minimum constant $\alpha$ for all $p$.
Now bringing in $\epsilon$-separation, any surrogate report linking to a suboptimal target report has expected surrogate loss at least $\alpha \cdot \epsilon > 0$.
On the other hand, if a link is not separated, then the same two facts imply that, for a sequence of surrogate reports that get arbitrarily close to an optimal report set while linking to a suboptimal target report, this sequence has expected loss approaching optimal, violating calibration.
See \S~\ref{sec:equiv-sep-calib} for the proof.

\subsection{Consistency}

\restatehack{
  \begin{theorem}
    \label{thm:calibrated-separated}
    \label{thm:thickened-separated}
  \end{theorem}}

To prove Theorem \ref{thm:link-main}, that embedding implies calibration, we now show how to construct a calibrated link from an embedding.
In light of Theorem~\ref{thm:calibrated-separated}, it now suffices to show that for any polyhedral $L$ embedding some $\ell$, there exists a \emph{separated} link $\psi$ with respect to $L$ and $\ell$.
Construction~\ref{const:eps-thick-link}, discussed next, ``thickens'' a given embedding to produce a link.
Theorem \ref{thm:thickened-separated} states that, for a small enough choice of $\epsilon$, that link is separated.
See \S~\ref{app:sep-link-exists} for the proof.
\begin{restatable}{theorem}{thickenedseparated} \label{thm:thickened-separated}
  Let polyhedral surrogate $L:\reals^d \to \reals^\Y_+$ embed the discrete loss $\ell:\R\to\reals^\Y_+$.
  Then there exists $\epsilon_0 > 0$ such that, for all $0 < \epsilon \leq \epsilon_0$, Construction~\ref{const:eps-thick-link} for $L,\ell,\epsilon,\|\cdot\|$ produces a nonempty set of links, all of which are $\epsilon$-separated with respect to $L$ and $\ell$.
\end{restatable}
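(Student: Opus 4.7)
My plan is to use the finiteness of $\prop{L}(\simplex)$ from Lemma~\ref{lem:polyhedral-range-gamma} and the value-matching condition of the embedding to extract a strictly positive geometric gap between each embedded point $\varphi(r)$ and the ``bad'' optimal cells it must be separated from. Separation inside the $\epsilon$-balls around embedded points will then follow from a triangle-inequality argument; nonemptyness of the construction reduces to a compatibility check over the finite polyhedral structure.

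First I would enumerate the finite structure. By Lemma~\ref{lem:polyhedral-range-gamma}, write $\prop{L}(\simplex) = \{U_1,\dots,U_k\}$ as a finite collection of closed polyhedra in $\reals^d$. Because $L$ embeds $\ell$ via $\varphi:\Sc \to \reals^d$, the equivalence in Definition~\ref{def:loss-embed}(ii) gives that, for every $p$ with $\prop{L}(p) = U_i$, we have $r \in \prop{\ell}(p) \iff \varphi(r) \in U_i$. Hence the ``bad pairs'' $\bigl\{(r, U_i) : \varphi(r) \notin U_i\bigr\}$ form a finite set, and each such pair has $d_\infty(\varphi(r), U_i) > 0$ by closedness of $U_i$. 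I would then set
\[
\epsilon_0 := \tfrac{1}{2}\min\Bigl(\bigl\{d_\infty(\varphi(r), U_i) : \varphi(r)\notin U_i\bigr\} \cup \bigl\{\tfrac{1}{2}\|\varphi(r) - \varphi(r')\|_\infty : r \neq r'\bigr\}\Bigr),
\]
which is strictly positive because $\Sc$ is finite, $\varphi$ is injective, and each term in the minimum is positive. For any $\epsilon \leq \epsilon_0$, the closed balls $B_\epsilon(\varphi(r))$ are pairwise disjoint, so Construction~\ref{const:eps-thick-link}'s rule $\psi(u) = r$ on $B_\epsilon(\varphi(r))$ is consistent. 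For separation on these balls: if $u \in B_\epsilon(\varphi(r))$ and $\psi(u) = r \notin \prop{\ell}(p)$, the embedding gives $\varphi(r) \notin \prop{L}(p)$, so $d_\infty(\varphi(r), \prop{L}(p)) \geq 2\epsilon$, and the triangle inequality yields $d_\infty(u, \prop{L}(p)) \geq 2\epsilon - \epsilon = \epsilon$.

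The main obstacle will be the ``outside-the-balls'' case: for $u$ not in any $B_\epsilon(\varphi(r))$, Construction~\ref{const:eps-thick-link} must still produce a well-defined link value $\psi(u)$, and that value must not link to a report that is $\ell$-suboptimal for any $p$ whose optimal set $\prop{L}(p)$ comes within $\epsilon$ of $u$. Nonemptyness of the construction's candidate set at every such $u$ reduces to a combinatorial compatibility property: whenever several $U_i$'s share a common $\epsilon$-neighborhood of $u$, they must share a common embedded report in $\varphi(\Sc)$. I expect this to require shrinking $\epsilon_0$ further, so that pairs $U_i, U_j$ whose $\epsilon$-neighborhoods overlap already share a point of $\varphi(\Sc)$, which follows from representativity of $\Sc$ (Proposition~\ref{prop:representative-embeds-restriction}) together with the finiteness of $\prop{L}(\simplex)$. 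Once nonemptyness is secured, separation at outer points is immediate from the construction's compatibility rule: if $\psi(u) \notin \prop{\ell}(p)$ then no $p$ with $d_\infty(u, \prop{L}(p)) < \epsilon$ can violate compatibility, giving $d_\infty(u, \prop{L}(p)) \geq \epsilon$. As a sanity check, the matching-Bayes-risks formulation (Proposition~\ref{prop:embed-bayes-risks}) aligns with this picture, since optimal cells for distinct risks at suboptimal reports must be separated by the gap $\epsilon_0$ we extracted.
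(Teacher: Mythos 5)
The high-level plan is reasonable, and you correctly identify the ``outside-the-balls'' case as the main obstacle, but that is exactly where the argument has a real gap. You propose that the needed compatibility reduces to: ``pairs $U_i, U_j$ whose $\epsilon$-neighborhoods overlap already share a point of $\varphi(\Sc)$.'' Two things go wrong. First, a pairwise statement does not suffice: the construction intersects $R_U$ over \emph{every} $U \in \U$ within $\epsilon$ of $u$, so you must show that for an arbitrary subcollection $\U' \subseteq \U$ whose $\epsilon$-thickenings have a common point, the sets $\{R_U : U \in \U'\}$ have a common element. Closed convex sets can pairwise intersect (or pairwise come within $\epsilon$) without a joint intersection, so the pairwise claim gives you nothing for $|\U'| \geq 3$. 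Second, and more fundamentally, the geometric fact that for small enough $\epsilon$ the $\epsilon$-thickenings of a subcollection of $\{U_i\}$ have a common point only if the $U_i$ themselves do is not a one-line consequence of finiteness and closedness: this is Lemma~\ref{lem:thick-intersect}, whose proof requires a nontrivial double induction (Lemmas~\ref{lem:thick-nonempty} and \ref{lem:thick-empty}) using polyhedral enclosures of the intersections, because thickenings of nearby-but-disjoint unbounded polyhedra can overlap for any fixed $\epsilon$ unless one carefully controls the geometry. Your proposed $\epsilon_0$ (distances from embedded points to cells, plus pairwise gaps between embedded points) does not bound the relevant distances, which are between entire optimal polyhedra, not between embedded points and polyhedra.

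A secondary, smaller issue: even granting the geometric lemma, the combinatorial half of the compatibility --- eq.~\eqref{eq:intersection-property-embedding}, that $\cap_{U\in\U'} U \neq \emptyset$ implies $\cap_{U\in\U'} R_U \neq \emptyset$ --- does not follow from representativity of $\Sc$ alone. It requires the embedding's condition (ii), and in the paper is derived by first showing embeddings are a special case of indirect elicitation (Lemma~\ref{lem:embedding-implies-indirect-elic}, which itself goes through the trim machinery of \S~\ref{sec:min-rep-sets}) and then proving indirect elicitation is equivalent to the intersection property (Lemma~\ref{lem:intersection-equiv-indirect-elic}). You should also note that the triangle-inequality paragraph is not needed: the construction enforces $\epsilon$-separation by design (if $\psi(u) = r \notin \prop{\ell}(p)$, then $r$ must have been pruned by $R_{\Gamma(p)}$, which only happens if $d_\infty(u, \Gamma(p)) \geq \epsilon$, cf.\ Proposition~\ref{prop:link-converse}), and separation is automatic for \emph{all} $u$ once nonemptyness of $\Psi$ is established, not just for $u$ near embedded points.
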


To set the stage for Construction \ref{const:eps-thick-link}, we sketch the two main steps in proving Theorem \ref{thm:thickened-separated}: (i) showing that one can define a link $\psi$ on all possible optimal points of $L$; (ii) ``thickening'' $\psi$ so that it is separated.

For (i), given the embedding $\varphi: \Sc\to\reals^d$, begin by linking each embedding point back to its original report, so that $\psi(\varphi(r)) = r$.
Now we wish to determine $\psi(u)$ for non-embedding points $u\in\reals^d$ which optimize $L$ for some distribution.
Let $\Gamma = \prop L$ and define $\U = \{\Gamma(p) \mid p\in\simplex\}$ to be the ``range'' of $\Gamma$, i.e., all possible optimal sets.
For each $U\in\U$, we can define $R_U = \{ r\in\Sc \mid \varphi(r) \in U\}$ to be the set of target reports which, by definition of embedding, are $\ell$-optimal when $U$ is $L$-optimal.
We would like to restrict $\psi(U) \subseteq R_U$, so that optimal surrogate reports are mapped to optimal target reports.
The challenge is that we could have a point $u\in U\cap U'$ for two optimal sets $U,U'\in\U$, and \emph{a priori}, it could be that $R_U \cap R_{U'} = \emptyset$.
The first step of the proof is to show that this scenario cannot arise.
\begin{equation}
  \label{eq:intersection-property-embedding}
  \forall \U'\subseteq\U, \; \cap_{U\in\U'} U \neq \emptyset \implies \cap_{U\in\U'} R_U \neq \emptyset~.
\end{equation}
\noindent
Thus, for any $u\in\cup\U$, we have a nonempty set of valid choices for $\psi(u)$.
(Eq.~\eqref{eq:intersection-property-embedding} may appear similar to indirect elicitation; in fact the two conditions are equivalent, as we discuss in \S~\ref{sec:poly-ie-consistency}.)

For (ii), we show that this link can be ``thickened'' by some positive $\epsilon$, as described next.
Let $U\in\U$.
By the above, $\psi$ is already correct on $U$.
Now, we ``thicken'' $U$ to obtain $U_{\epsilon} = \{u : \|u - U\| \leq \epsilon\}$.
Then we require that all points in $U_{\epsilon}$ are linked to some element of $R_U$.
For $\epsilon > 0$, this condition directly implies separation.

It is not clear that this linking is possible, however, because a point $u$ may be in the intersection of several thickened sets $U_{\epsilon}, U'_{\epsilon}$, etc., corresponding to $\Gamma(p), \Gamma(p')$, etc.
Therefore, we need to take each possible collection $U,U'$, etc., show that their intersection (if nonempty) contains a legal choice for the link, and then thicken their intersection in an analogous way.

To do so, given $u \in U_\epsilon \cap U'_\epsilon \cap \dots$, we define a \emph{link envelope} $\Psi(u)$ which encodes the remaining legal choices for $\psi(u)$ after imposing the requirements for each such set $U_\epsilon,U'_\epsilon$, etc.
The key claim is that, for small enough $\epsilon > 0$, $\Psi(u)$ is nonempty: at least one permitted value for $\psi(u)$ remains.
This claim follows from a geometric result (Lemma~\ref{lem:thick-intersect}) that, for all small enough $\epsilon$, a subset of thickenings $U_{\epsilon}$ intersect if and only if the $U$ sets themselves intersect.
When they do intersect, eq.~\eqref{eq:intersection-property-embedding} implies that there exists a permitted choice of link for the intersection of the thickenings.
It is crucial that, by Lemma~\ref{lem:polyhedral-range-gamma}, polyhedral surrogates only have finitely many sets of the form $U = \Gamma(p)$.
Together, these observations yield a single uniform smallest $\epsilon$ such that the key claim is true for all $u \in \reals^d$.

Given the above proof sketch, the following construction is relatively straightforward.
We initialize the link using the embedding points and optimal report sets, then adjust $\Psi$ to narrow down to only legal choices; we then pick from $\psi(u)$ from $\Psi(u)$ arbitrarily.
Theorem \ref{thm:thickened-separated} implies that, for all small enough $\epsilon$, the resulting link $\psi$ is well-defined at all points, and $\epsilon$-separated.
\begin{construction}[$\epsilon$-thickened link] \label{const:eps-thick-link}
  Let $L:\reals^d\to\reals^\Y_+$, $\ell:\R\to\reals^\Y_+$, $\epsilon > 0$, and a norm $\|\cdot\|$ be given, such that $L$ is polyhedral and embeds $\ell$ via the embedding $\varphi: \Sc \to \reals^d$ for a representative set $\Sc\subseteq\R$.
  Define $\Gamma = \prop L$ and $\U = \{\Gamma(p) \mid p \in \simplex\}$.
  For all $U \in \U$, define $R_U = \{r \in \Sc \mid \varphi(r) \in U\}$.
  The \emph{$\epsilon$-thickened link} $\psi$ is constructed as follows.
  First, initialize the \emph{link envelope} $\Psi: \reals^d \to 2^{\Sc}$ by setting $\Psi(u) = \Sc$ for all $u$.
  Then for each $U \in \U$, for all points $u$ such that $\inf_{u^* \in U} \|u^*-u\| < \epsilon$, update $\Psi(u) = \Psi(u) \cap R_U$.
  If we have $\Psi(u)\neq\emptyset$ for all $u\in\reals^d$, then the construction \emph{produces a link} $\psi \in \Psi$ pointwise, breaking ties arbitrarily.
\end{construction}
In \S~\ref{sec:poly-ie-consistency} we generalize this construction beyond embeddings, where we only require that $L$ indirectly elicits $\prop\ell$.
There we will see that, perhaps surprisingly, this construction recovers every possible calibrated link.

Applying Construction \ref{const:eps-thick-link} also enables one to verify the consistency of a given proposed link $\psi^*$.
For a given $\epsilon$ and norm $\|\cdot\|$, suppose one follows the routine of Construction~\ref{const:eps-thick-link} until the last step in which values for the link $\psi$ are selected.
Instead, we can simply test whether the proposed link values are contained in the valid choices, i.e., if $\psi^*(u) \in \Psi(u)$ for all $u\in\reals^d$.
If so, then the proposed link $\psi^*$ is calibrated.
See \S~\ref{sec:top-k} for an illustration of this test.
On the other hand, if $\psi^*$ cannot be produced from Construction \ref{const:eps-thick-link}, then it cannot be a calibrated link.
This impossibility can be shown, for example, if there exists a point $u$ where $\Psi(u)$ is empty for all $\epsilon > 0$.

Construction~\ref{const:eps-thick-link} is not necessarily computationally efficient as the number of labels $n$ grows.
In practice this potential inefficiency is not typically a concern, as the family of losses typically has some closed form expression in terms of $n$, and thus the construction can proceed at the symbolic level.
We illustrate this formulaic approach in \S~\ref{sec:abstain}.

\subsection{Surrogate regret bounds}\label{subsec:regret-bounds}
Recall that the approach of surrogate risk minimization is to learn a hypothesis $h$ that minimizes expected surrogate loss, then output hypothesis $\psi \circ h$, which hopefully minimizes expected target loss.
We would like surrogates where a bound on surrogate loss of $h$ immediately implies a bound on target loss of $\psi \circ h$.
One can formalize this problem in terms of \emph{regret}, as follows.
Fix a data distribution $\D$.
The \emph{surrogate regret} $R_L$ of $h$ and the \emph{target regret} $R_{\ell}$ of the implied hypothesis $\psi \circ h$, are given by
\begin{align*}
  R_L(h;\D) &= \E_{(X,Y)\sim\D} L(h(X))_Y - \inf_{h':\X\to\reals^d} \E_{(X,Y)\sim\D} L(h'(X))_Y~,
  \\
  R_\ell(\psi\circ h;\D) &= \E_{(X,Y)\sim\D} \ell(\psi(h(X)))_Y - \inf_{g':\X\to\R} \E_{(X,Y)\sim\D} \ell(g'(X))_Y~,
\end{align*}
where the infimum is taken over all measurable functions.
The infimum represents the risk of the Bayes optimal hypothesis, so regret can be viewed as \emph{excess risk} under the assumption that the learner's hypothesis class contains the Bayes optimal.

Consistency means that if the surrogate regret of $h$ converges to zero, then the target regret of $\psi \circ h$ does as well; in other words, $R_L(h;\D) \to 0$ implies $R_{\ell}(\psi \circ h;\D) \to 0$.
Consistency is therefore a minimal requirement; in general, we are also interested in the \emph{rate} at which the target regret diminishes, as a function of the number of data points $n$.
A \emph{regret transfer bound}, also called a \emph{surrogate regret bound}, gives a guarantee on the relationship between the rates of convergence of $R_L$ and $R_{\ell}$.
For example, a surrogate with a fast rate of convegence $R_L \to 0$ as $n \to \infty$ is not very useful if we nevertheless have a slow rate $R_{\ell} \to 0$.

We show that, for any polyhedral surrogate, the transfer is linear: if surrogate regret diminishes at a rate of $O(f(n))$, then the target rate is also $O(f(n))$.
In particular, fast convergence in surrogate regret implies fast convergence in target regret.
\begin{theorem}
  \label{thm:linear-regret-bound}
  Let $(L,\psi)$ be consistent for a discrete loss $\ell$, and $L$ polyhedral.
  Then there exists $c > 0$ such that, for all hypotheses $h$ and data distributions $\D$, we have $R_{\ell}(\psi \circ h;\D) \leq c \cdot R_L(h;\D)$.
\end{theorem}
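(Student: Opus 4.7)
The plan is to reduce to a pointwise (conditional) regret inequality and then combine two ingredients: the $\epsilon$-separation characterization of calibration (Theorem~\ref{thm:calibrated-separated}) and a Hoffman-type linear lower bound on the pointwise surrogate regret that holds because $L$ is polyhedral.

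First I would use the tower property to write, with $p_x(\cdot) := \D(Y=\cdot \mid X=x)$,
\[
R_L(h;\D) = \E_X\!\left[\inprod{p_X}{L(h(X))} - \risk{L}(p_X)\right],
\]
and analogously for $R_\ell(\psi\circ h;\D)$. It therefore suffices to produce a constant $c>0$ such that
\[
\inprod{p}{\ell(\psi(u))} - \risk{\ell}(p) \;\leq\; c\cdot\big(\inprod{p}{L(u)} - \risk{L}(p)\big)
\]
holds for every $p\in\simplex$ and $u\in\reals^d$; taking expectation over $X$ then yields the claim.

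Fix such $p$ and $u$ and set $\Gamma = \prop{L}$. When $\psi(u)\in\prop{\ell}(p)$, the left-hand side vanishes. Otherwise, since consistency of $(L,\psi)$ for a discrete $\ell$ coincides with calibration, Theorem~\ref{thm:calibrated-separated} supplies an $\epsilon>0$, depending only on $L$, $\ell$, $\psi$, such that $\psi$ is $\epsilon$-separated, and hence $d_\infty(u,\Gamma(p))\geq\epsilon$. The left-hand side is at most $M := \max_{r\in\R,\,y\in\Y}\ell(r)_y<\infty$, which is finite by discreteness of $\ell$.

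The heart of the argument is a uniform linear lower bound
\[
\inprod{p}{L(u)} - \risk{L}(p) \;\geq\; \alpha\cdot d_\infty(u,\Gamma(p)) \qquad \text{for all } p\in\simplex,\ u\in\reals^d,
\]
for some $\alpha>0$ depending only on $L$. Given such an $\alpha$, combining with $\epsilon$-separation yields $\inprod{p}{L(u)} - \risk{L}(p) \geq \alpha\epsilon$ on the complementary event, so the pointwise bound holds with $c := M/(\alpha\epsilon)$. The main obstacle is establishing this inequality with $\alpha$ independent of $p$. For each fixed $p$ the function $u\mapsto\inprod{p}{L(u)} - \risk{L}(p)$ is polyhedral, nonnegative, and vanishes exactly on $\Gamma(p)$, so a standard Hoffman bound for polyhedral functions gives a linear rate $\alpha_p>0$. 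To make this uniform in $p$, I would appeal to Lemma~\ref{lem:polyhedral-range-gamma}: only finitely many distinct optimal sets $U\in\U=\Gamma(\simplex)$ arise, and the distributions giving rise to a fixed $U$ form a polytope in $\simplex$ on which $p$ enters the regret piecewise-linearly. A Hoffman bound uniform over each such polytope, combined with taking the minimum over the finitely many pieces, yields a single $\alpha>0$. A shorter alternative is to invoke the linear regret bound for polyhedral surrogates with separated links established in~\citet{frongillo2021surrogate}, whose hypotheses are satisfied here.
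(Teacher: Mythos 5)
There is a genuine gap at the step you call ``the heart of the argument.'' The claimed uniform lower bound $\inprod{p}{L(u)} - \risk{L}(p) \geq \alpha\,d_\infty(u,\Gamma(p))$ with $\alpha>0$ independent of $p$ is false for essentially any polyhedral $L$ beyond trivial cases. The per-$p$ Hoffman constant $H_{L,p}$ of Lemma~\ref{lem:hoffman-polyhedral} is not bounded as $p$ varies: as $p$ approaches a boundary between cells of $\prop{L}$, the affine pieces of $u\mapsto\inprod{p}{L(u)}$ nearly coincide in slope, the matrix $A(p)$ in the Hoffman bound degenerates, and $H_{L,p}\to\infty$. Concretely, take binary hinge loss and $p_\delta=(\tfrac12-\delta,\tfrac12+\delta)$: then $\Gamma(p_\delta)=\{1\}$, and for $u\in(-1,0)$ one computes $R_L(u,p_\delta)=2\delta(1-u)\to 0$ while $d_\infty(u,\Gamma(p_\delta))=1-u>1$, so no uniform $\alpha$ exists. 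This persists after restricting to $u$ with $\psi(u)\notin\prop{\ell}(p)$, and partitioning $\simplex$ into the finitely many cells $\Gamma_U$ as you propose does not repair it: the degeneracy occurs at the cell boundaries, and a per-cell compactness argument does not control $H_{L,p}$ there.

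The theorem nonetheless holds because your decomposition into $R_\ell\leq M$ (a constant) and $R_L\geq\alpha\epsilon$ (a constant) is simply too lossy. In the example above, $R_\ell(-1,p_\delta)=4\delta$ also tends to $0$, at the same linear rate as $R_L$, so the ratio stays bounded even though neither quantity is bounded away from zero. The idea your proof is missing, and the crux of the paper's argument in \S~\ref{app:regret-bounds}, is that \emph{both} $R_L(u,\cdot)$ and $R_\ell(r,\cdot)$ are linear functions of $p$ on each full-dimensional level set of $\Gamma$ (Lemma~\ref{lem:linear-on-levelset}). One therefore proves the pointwise ratio bound only at the finitely many vertices $q\in\mathcal{Q}$ of those polytopal level sets, where $H_{L,q}<\infty$ and $H_L:=\max_{q\in\mathcal{Q}}H_{L,q}$ is a legitimate constant (Lemma~\ref{lem:separated-constant-p}), and then extends to arbitrary $p$ by writing it as a convex combination of vertices and using linearity of both sides of the inequality. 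The uniform constant in the final bound comes from this linearity-plus-vertex reduction, not from a uniform-in-$p$ Hoffman constant, which does not exist.
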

In the proof (\S~\ref{app:regret-bounds}), we further show that the constant $c$ can be decomposed in terms of three constants, which depend on $L$, $\psi$, and $\ell$, respectively.
Specifically, we may write $c = C_\ell H_L / \epsilon_\psi$, where $H_L$ is the Hoffman constant for $L$, $\epsilon_\psi$ the separation of $\psi$, and $C_\ell$ the maximum loss gap of $\ell$.
This expression gives the intuition that larger link separation is generally better for performance.
In some cases, this bound can be tightened, as we discuss in \S~\ref{sec:regret-tighter-bounds}.
See \citet{frongillo2021surrogate} for a quadratic lower bound on the rate transfer for sufficiently non-polyhedral surrogates.

\section{Application to Specific Surrogates}\label{sec:applications}

Our results give a framework to construct consistent polyhedral surrogates and link functions for any discrete target loss, as well as to verify consistency or inconsistency for specific surrogate and link pairs.
Below, we illustrate the power of this framework with specific examples from the literature.
To warm up, we study the abstain surrogate given by~\citet{ramaswamy2018consistent}, which is an embedding, and show how to rederive their link function and surrogate regret bounds (\S~\ref{sec:abstain}). 
We then give three examples of subsequent works that use our framework, in the context of structured binary classification (\S~\ref{sec:lovasz-hinge}), multiclass classification (\S~\ref{sec:winge}), and top-$k$ classification (\S~\ref{sec:top-k}).
In each of these latter three examples, our framework illuminates the behavior of inconsistent surrogates by revealing the discrete losses they embed, i.e., the true targets for which they are consistent.
In structured binary classification and top-$k$ classification, our framework also gives new consistent surrogates and link functions which appear challenging to derive otherwise.

\subsection{Applying the embedding framework}
\label{sec:apply-embedd-fram}

When using our framework to study the consistency or inconsistency of an existing surrogate $L:\reals^d \to \reals^\Y_+$, often the first step is determining the loss it embeds.
To do so, we suggest the following general approach.
First, for each $y\in\Y$, divide $\reals^d$ into a finite number of polyhedral regions on which $L(\cdot)_y$ is an affine function.
Second, identify the vertices of these polyhedral regions.
\footnote{In some cases, these regions do not have vertices, such as the top-$k$ surrogates in \S~\ref{sec:top-k} which are invariant in the all-ones direction; here one can restrict to a subspace, or otherwise select among equivalent reports.}
Third, conclude that the union of these vertices, $\Sc\subset\reals^d$, is a finite representative set for $L$.
Now $L$ embeds $L|_\Sc$ from Proposition~\ref{prop:representative-embeds-restriction}.
From here one can further remove redundant reports until arriving at a tight embedding if desired, or re-label embedded reports to a more intuitive form; call this resulting embedded loss $\hat \ell$.

Once the embedded discrete loss $\hat\ell$ is known, the behavior of the surrogate $L$ becomes more clear.
In particular, we learn what problem $L$ is actually solving, as captured by $\hat\ell$.
If this problem $\hat\ell$ is not the desired target problem $\ell$, we can still derive restrictions on label distributions (e.g., $\P \subseteq \simplex$) for which $L$ is $\P$-calibrated for $\ell$.
Any level set of the embedded property $\hat\gamma = \prop{\hat\ell}$ which spans multiple level sets of the target property $\gamma=\prop{\ell}$ will lead to inconsistency for $\ell$ (Figure~\ref{fig:abstain-intuition-inconsistent}).
To obtain consistency with respect to a desired target, therefore, it suffices to restrict to the union of level sets of $\hat\gamma$ which are each fully contained in some level set of $\gamma$.

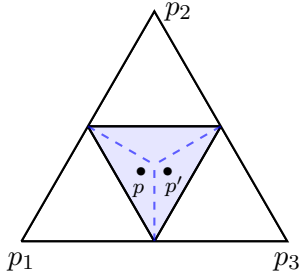
\begin{figure}
	\begin{minipage}{0.28\linewidth}

\begin{tikzpicture} [scale=\tikzfigscale, thick, tdplot_main_coords]
\coordinate (orig) at (0,0,0);

\coordinate[label=below:${p_1}$] (h) at (1,0,0);
\coordinate[label=below:${p_3}$] (m) at (0,1,0);
\coordinate[label=right:${p_2}$] (l) at (0,0,1);

\coordinate (onetwo) at (1/2, 0, 1/2);
\coordinate (onethree) at (1/2, 1/2, 0);
\coordinate (twothree) at (0, 1/2, 1/2);
\coordinate (uniform) at (1/3, 1/3, 1/3);

\draw[simplex] (h) -- (m) -- (l) -- (h);

\begin{scope}
\clip (h) -- (m) -- (l) -- (h);

\draw (h) -- (onetwo) -- (onethree) -- cycle;
\draw (m) -- (onethree) -- (twothree) -- cycle;
\draw (l) -- (onetwo) -- (twothree) -- cycle;

\draw[fill = blue, fill opacity = 0.1] (onetwo) -- (twothree) -- (onethree) -- cycle;

\draw[dashed, color=blue, opacity = 0.6] (onetwo) -- (uniform);
\draw[dashed, color=blue, opacity = 0.6] (twothree) -- (uniform);
\draw[dashed, color=blue, opacity = 0.6](onethree) -- (uniform);

\node(p) at (2/5, 3/10, 3/10){\textbullet};
\node at (0.45, 0.33, .22){\scriptsize $p$};
\node(pprime) at (3/10, 2/5, 3/10){\textbullet} node[anchor=north west]{\scriptsize $p'$};

\end{scope}

\end{tikzpicture}
 \end{minipage}
\begin{minipage}{0.72\linewidth}
	\caption{Using an embedding to show inconsistency.
    Let $L$ be a surrogate embedding $\ell$, and let $\ell$ be a desired target; here $L=\BEP$, $\hat\ell = \ellabstain$ (\S~\ref{sec:abstain}) and $\ell$ is 0-1 loss for multiclass classification.
    Let $\Gamma = \prop{L}$, $\hat\gamma = \prop{\hat\ell}$, and $\gamma = \prop{\ell}$ be the properties elicited by these losses; here $\gamma = \mode$, as 0-1 loss elicits the mode.
    The level sets of $\hat\gamma$ are given in solid black lines, and those of $\gamma$ in dashed blue lines.
    To exhibit non-calibration, take distributions $p, p'$ in the relative interior of the blue cell $\hat \gamma_{\hat r}$ (here $\hat r=\bot$ for $\ellabstain$) but in different cells of $\gamma$.
    These distributions will satisfy $\hat\gamma(p) = \hat\gamma(p')$, and thus $\Gamma(p) = \Gamma(p')$ by definition of embedding, but $\gamma(p) \cap \gamma(p') = \emptyset$.
    Taking $u\in\Gamma(p)=\Gamma(p')$, it is impossible to define $\psi(u)$ to satisfy calibration, as $\psi(u)$ cannot be in $\gamma(p)$ and $\gamma(p')$ simultaneously.
    In particular, even though $u$ is $L$-optimal for both $p$ and $p'$, $\psi(u)$ will be $\ell$-suboptimal for at least one, violating calibration.
    We may impose restrictions on the conditional label distributions to remove the blue cell, however, and thus satisfy calibration.
    For this example, the $\BEP$ is classification-calibrated if one restricts to the set of distributions where at least one label $y\in\Y$ has probability $p_y \geq \tfrac 1 2$.
	}
	\label{fig:abstain-intuition-inconsistent}
	\end{minipage}
\end{figure}

With an embedding in hand, Construction~\ref{const:eps-thick-link} provides a calibrated link function from $L$ to $\hat\ell$.
This construction is especially beneficial in cases where the most intuitive link functions are not calibrated, and no known calibrated link is known; see \S~\ref{sec:lovasz-hinge} for a somewhat intricate example.
Surrogate regret bounds then follow from Theorem~\ref{thm:linear-regret-bound}, as we illustrate in \S~\ref{sec:abstain}.
In particular, our results imply the existence of linear regret transfer bounds bounds for several applications where no such bounds were known (\S~\ref{sec:lovasz-hinge},~\ref{sec:winge},~\ref{sec:top-k}).

Finally, our link construction can even be useful in cases where the search for consistent surrogates has been restricted to those accommodating a particular canonical link function $\psi$.
For example, one typically uses the sign link for binary classification, and the argmax link (the $k$ largest coordinates) for top-$k$ classification (\S~\ref{sec:top-k}).
As we show in Proposition~\ref{prop:link-converse}, Construction~\ref{const:eps-thick-link} fully characterizes the set of possible calibrated link functions for a polyhedral embedding via the link envelope $\Psi$, so $\psi$ is calibrated if and only if it is contained in $\Psi$ for some $\epsilon>0$.
We demonstrate this approach for top-$k$ classification in \S~\ref{sec:top-k}.
More generally, however, while such canonical link functions may be intuitive for a given problem, our results suggest that researchers should consider setting them aside and instead let Construction~\ref{const:eps-thick-link} determine the link.

\subsection{Consistency of abstain surrogate and link construction}
\label{sec:abstain}

Several authors consider a variant of multiclass classification, with the addition of an \emph{abstain} option~\citep{bartlett2008classification,ramaswamy2018consistent,madras2018predict,elyaniv2010foundations,cortes2016learning}.
\citet{ramaswamy2018consistent} study the loss $\ellabstain: \Y \cup \{\bot\} \to \reals^\Y_+$ defined by $\ellabstain(r)_y = 0$ if $r=y$, $1/2$ if $r = \bot$, and 1 otherwise.
The report $\bot$ corresponds to ``abstaining'' to predict, in exchange for a constant loss regardless of outcome $y$. 
\citeauthor{ramaswamy2018consistent} give the polyhedral \emph{binary encoded predictions (BEP)} surrogate $\BEP$, and the link $\psi^\infty$ which they show is calibrated for $\ellabstain$.
Letting $d = \ceil{\log_2 |\Y|}$, their surrogate $\BEP : \reals^d \to \reals^\Y_+$ is given by
\begin{equation}\label{eq:abstain-surrogate}
\BEP(u)_y = \max_{j \in [d]} \left(1 - \varphi(y)_j u_j\right)_+~,
\end{equation}
where $\varphi:\Y\to\{-1,1\}^d$ is an injection.
\footnote{To translate our notation to that of \citet{ramaswamy2018consistent}, take $B = -\varphi$.}
Observe that $\BEP$ is exactly hinge loss when $|\Y|=2$ and thus $d=1$. 
The authors show that the link $\psi^{\infty}$ is calibrated, where
\begin{equation}\label{eq:abstain-link}
  \psi^{\infty}(u) = \begin{cases}
	\bot & \min_{i \in [d]} |u_i| \leq 1/2\\ 
	\varphi^{-1}(\sgn(u)) &\text{otherwise}
  \end{cases}~,
\end{equation}
and they go on to establish linear surrogate regret bounds for $(\BEP,\psi^{\infty})$.

Using our framework, one can show that $\BEP$ embeds (2 times) $\ellabstain$, with the embedding given by $\varphi$ above where we define $\varphi(\bot) = 0 \in \reals^d$.
(Following the general procedure outlined above, the regions where $\BEP$ is affine all have vertices in the set $\{-1,1\}^d \cup \{0\}$, meaning it is representative, and $\BEP$ restricted to that set is precisely $2\ellabstain \circ \varphi^{-1}$.)

As an illustration, one can use the fact that $\BEP$ embeds $\ellabstain$ to verify that $\BEP$ is inconsistent for multiclass classification, i.e., with respect to 0-1 loss.
In particular, since the abstain report $\bot$ is $\ellabstain$-optimal whenever $\max_{y\in\Y} p_y \leq 1/2$, by the definition of embedding, the origin $0\in\reals^d$ is $\BEP$-optimal for the same distributions.
Recalling that 0-1 loss elicits the mode, one can now find two distributions with different modes but for which $0$ is $\BEP$-optimal, violating calibration (Figure~\ref{fig:abstain-intuition-inconsistent}).

Moreover, as we illustrate in Figure~\ref{fig:abstain-links}(L), the link $\psi^{\infty}$ proposed by \citeauthor{ramaswamy2018consistent} can be recovered from Construction~\ref{const:eps-thick-link} by choosing the norm $\|\cdot\|_\infty$ and $\epsilon=1/2$ (or smaller).
Hence, our framework could have simplified the process of finding $\psi^\infty$, and the corresponding proof of consistency.
It also could have simplified the derivation of surrogate regret bounds (\S~\ref{subsec:regret-bounds}); we show how to recover the tight bound of \citeauthor{ramaswamy2018consistent} for the BEP surrogate in \S~\ref{sec:regret-tighter-bounds}.

To illustrate these points further, consider the alternate link $\psi^1$
in Figure~\ref{fig:abstain-links}(R),
given by
\begin{equation}\label{eq:abstain-link-1}
  \psi^1(u) = \begin{cases}
	\bot & \|u\|_1 \leq 1\\
	\varphi^{-1}(\sgn(u)) &\text{otherwise}
  \end{cases}~.
\end{equation}
This link is the result of Construction~\ref{const:eps-thick-link} for norm $\|\cdot\|_1$ and the choice $\epsilon=1$, which proves calibration of $(\BEP,\psi^1)$ with respect to $\ellabstain$.
Aside from its simplicity, one possible advantage of $\psi^1$ is that it assigns $\bot$ to much less of the surrogate space $\reals^d$.

\begin{figure}
\begin{center}
\begin{minipage}{0.32\linewidth}
\includegraphics[width=\linewidth]{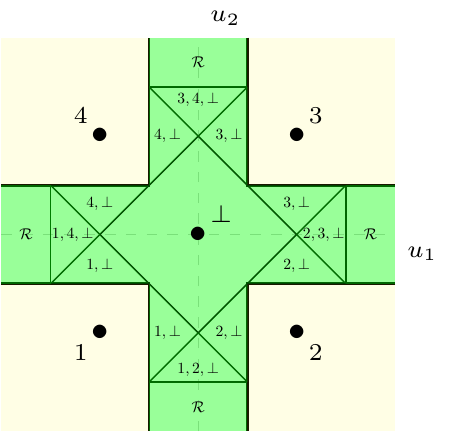}
\end{minipage}\hfill
\begin{minipage}{0.32\linewidth}
\includegraphics[width=\linewidth]{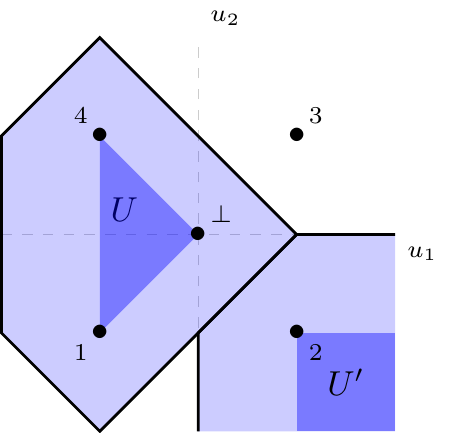}
\end{minipage}\hfill
\begin{minipage}{0.32\linewidth}
\includegraphics[width=\linewidth]{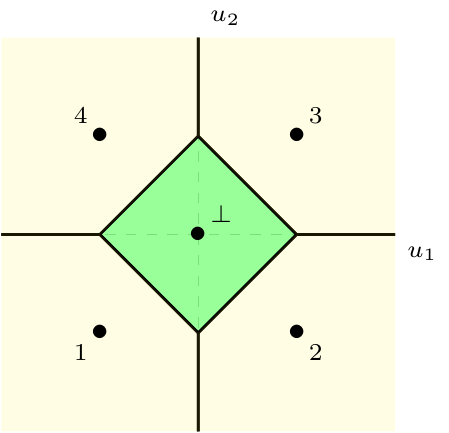}
\end{minipage}\hfill
\caption{Designing links for $\BEP$ with $d=2$ using Construction~\ref{const:eps-thick-link}. The embedding is shown in bold labeled by the corresponding reports. (L) The link envelope $\Psi$ resulting from Construction~\ref{const:eps-thick-link} using $\|\cdot\|_\infty$ and $\epsilon = 1/2$, and a possible link $\psi$ which matches eq.~\eqref{eq:abstain-link} from~\cite{ramaswamy2018consistent}.  (M) An illustration of the thickened sets for two sets $U, U' \in \U$, using $\|\cdot\|_1$ and $\epsilon = 1$. (R) The envelope $\Psi$ and link $\psi$ 
  using $\|\cdot\|_1$ and $\epsilon = 1$.}
\label{fig:abstain-links}
\end{center}
\end{figure}

\subsection{Lov\'asz hinge and the structured abstain problem}
\label{sec:lovasz-hinge}

\newcommand{\dis}{\mathrm{dis}}
\newcommand{\abs}{\mathrm{abs}}

Many structured prediction settings can be thought of as making multiple predictions at once, with a loss function that jointly measures error based on the relationship between these predictions~\cite{hazan2010direct, gao2011consistency, osokin2017structured}.
In the case of $k$ binary predictions, these settings are typically formalized by taking the predictions and outcomes to be $\R=\Y=\{-1,1\}^k$, with the $i$th coordinate giving the result for the $i$th binary prediction.
A natural family of losses are those which are functions of the misprediction or disagreement set $\dis(r,y) = \{i \in [k] \mid r_i \neq y_i\}$, meaning we may write $\ell^f(r)_y = f(\dis(r,y))$ for some set function $f:2^{[k]}\to\reals$.
For example, Hamming loss is given by $f(S) = |S|$.
In an effort to provide a general convex surrogate for these settings when $f$ is a submodular function, Yu and Blaschko~\cite{yu2018lovasz} introduce the \emph{Lov\'asz hinge} surrogate $L^f:\reals^k\to\reals^\Y_+$ which leverages the well-known convex Lov\'asz extension of submodular functions.
While the authors provide theoretical justification and experiments, they leave open whether the Lov\'asz hinge is actually consistent for $\ell^f$.

\citet{ourlovaszpaper} use our embedding framework to resolve the consistency of $L^f$, showing that it is inconsistent with respect to $\ell^f$ outside of the trivial case where $f$ is modular (in which case $\ell^f$ is a weighted Hamming loss).
Moreover, they show that $L^f$ embeds a variant $\ellabs$ of $\ell^f$ where one is allowed to abstain on a set of indices $A \subseteq [k]$, which they call the \emph{structured abstain problem}.
The inclusion of abstain options is natural when observing that the BEP surrogate $\BEP$, for multiclass classification with an abtain option (\S~\ref{sec:abstain}), is the special case of $L^f$ where $f(S) = \ones\{S \neq \emptyset\}$.

To derive the discrete loss $\ellabs$ that $L^f$ embeds, the authors follow an approach similar to \S~\ref{sec:apply-embedd-fram} to show that the set $\V = \{-1,0,1\}^k$ is representative for $L^f$, for any choice of $f$.
From Proposition~\ref{prop:representative-embeds-restriction}, they conclude that $L^f$ embeds $\ellabs := L^f|_{\V}$.
Letting $\abs(v) = \{i\in[k] \mid v_i = 0\}$ denote the ``abstain'' set, we may write $\ellabs : \V \to \reals^\Y_+$ as
\begin{equation}
	\ellabs(v)_y = f(\dis(v,y) \setminus \abs(v)) + f(\dis(v,y))~.
\end{equation}
(Observe that $\abs(v,y) \subseteq \dis(v,y)$, since $y\in\{-1,1\}^k$.)
By Theorem~\ref{thm:link-main}, then, there is a link function such that the Lov\'asz hinge is consistent with respect to the structured abstain loss $\ellabs$.

As \citeauthor{ourlovaszpaper} observe, actually determining a calibrated link function in this case is nontrivial.
Simple threshold links like for the BEP surrogate in \S~\ref{sec:abstain} are not always calibrated, thus casting doubt that a trial-and-error approach for finding the link would be successful.
Instead, the authors leverage our thickened link construction (Construction~\ref{const:eps-thick-link}) to derive two links $\psi^*$ and $\psi^\diamond$, which have somewhat intricate geometric structure (Figure~\ref{fig:lovasz-links}).
Perhaps surprisingly, by deriving the link envelope $\Psi$ which is contained in the envelopes for $L^f$ for all submodular and increasing $f$, they prove that $\psi^*(u) \subseteq \Psi(u)$ and $\psi^\diamond(u) \subseteq \Psi(u)$ for all $u \in \reals^d$.
Thus, both $(L^f, \psi^*)$ and $(L^f, \psi^\diamond)$ are simultaneously calibrated with respect to $\ellabs$ for all such $f$.

\begin{figure}
	\begin{center}
		\begin{minipage}{0.32\linewidth}
			\includegraphics[width=\linewidth]{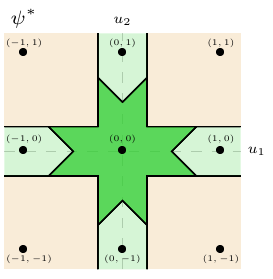}
		\end{minipage}\hfill
		\begin{minipage}{0.32\linewidth}
			\includegraphics[width=\linewidth]{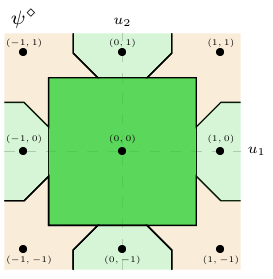}
		\end{minipage}\hfill
		\begin{minipage}{0.32\linewidth}
		\includegraphics[width=\linewidth]{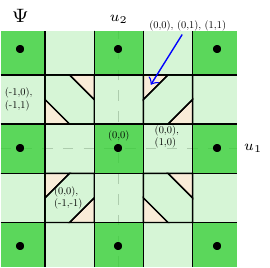}
		\end{minipage}\hfill
		\caption{Links $\psi^*$ and $\psi^\diamond$ such that $(L^f, \psi^*)$ and $\psi^\diamond$ are calibrated with respect to $\ellabs$ for all suitable $f$. Points in each region link to the embedding point contained in the region. Both are constructed via the link envelope $\Psi$ from Construction~\ref{const:eps-thick-link}, which yields possible choices for calibrated links.
			}
		\label{fig:lovasz-links}
	\end{center}
\end{figure}

\subsection{Embedding ordered partitions via Weston-Watkins hinge}
\label{sec:winge}
As the hinge loss is one of the most common surrogates for binary support vector machines (SVMs), original extensions to the multiclass setting included a one-vs-all reduction to the binary problem via hinge loss, generating ${n \choose 2}$ hyperplanes for $n$ labels.
Proposing a more efficient solution, \citet{weston1999support} give an alternate surrogate for multiclass SVM prediction, defined as follows for predictions $u \in \reals^n$,
\begin{equation}\label{eq:ww-hinge}
\LWW(u)_y = \sum_{i \in \Y : i \neq y} (1 - (u_y - u_i))_+~.
\end{equation}
This surrogate $\LWW$ was later shown to be inconsistent with respect to 0-1 loss~\citep{tewari2007consistency,liu2007fisher}.

\citet{wang2020weston} use our embedding framework to show that the Weston--Watkins hinge embeds the \emph{ordered partition} loss $\ellOP$, as defined below.
In turn, they recover the result of inconsistency with respect to 0-1 loss.
The report space for $\ellOP$ can be defined in terms of nested subsets of $[n] := \{1, \ldots, n\}$, as follows.
\footnote{To recover the partition of~\citet{wang2020weston}, one can define $S_i = T_i \setminus T_{i-1}$.}
\begin{align*}
\T = \{ (T_0,\ldots,T_s) \mid s \geq 1, \emptyset = T_0 \subsetneq T_1 \subsetneq \ldots \subsetneq T_s = [n]\}~.
\end{align*}
\noindent
The ordered partition target loss $\ellOP : \T \to \reals^\Y_+$ is then defined
\begin{align*}
\ellOP(T)_y &= \sum_{i=1}^{s} \left(|T_{i}| \cdot \Ind{y \not \in T_{i-1}} \right) -1~.~
\end{align*}
\noindent
The loss $\ellOP$ can be interpreted as a variation of 0-1 loss incorporating confidence: reports are a nested sequence of sets, and the penalty upon seeing label $y$ is the cardinality of the first set containing $y$, plus the cardinality of all earlier sets.

Upon showing that $\LWW$ embeds $\ellOP$, \citeauthor{wang2020weston} then characterize $\prop{\ellOP}$.
In the same manner as Figure~\ref{fig:abstain-intuition-inconsistent}, knowledge of the level sets of $\prop{\ellOP}$ clarifies which label distributions are the source of inconsistency for classification.
Removing these distributions gives a set $\P \subseteq \simplex$
such that $\LWW$ and the canonical link $\psi(u) : u \mapsto r \in \argmax_y \inprod{e_y}{u}$ are calibrated with respect to 0-1 loss on $\P \subseteq \simplex$ (i.e., such that eq.~\eqref{eq:calibrated} holds for all $p \in \P$).
See Figure~\ref{fig:ordered-partition} for an illustration.

\begin{figure}[t]
	\begin{minipage}{0.47\linewidth}
		\centering
		\includegraphics[width=0.9\linewidth]{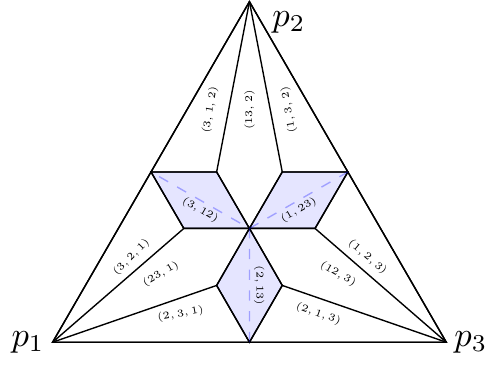}
	\end{minipage}
	\hfill
	\begin{minipage}{0.5\linewidth}
		\caption{Level sets of $\prop{\ellOP}$ (solid lines)
			juxtaposed against the level sets of $\mode$ (dashed lines).
      For the same reason as in Figure~\ref{fig:abstain-intuition-inconsistent}, the level sets of $\prop{\ellOP}$ whose relative interiors span multiple cells of the mode, colored blue, cannot be properly linked to the mode.
      These offending cells correspond to reports whose highest partition has more than one element, where in the white cells, the ``highest'' element of the partition is well-defined.}
		\label{fig:ordered-partition}
	\end{minipage}
\end{figure}

\subsection{Surrogates for top-$k$ classification}
\label{sec:top-k}
In settings like object recognition and information retrieval, it is natural to predict a set $S$ of labels.
In top-$k$ classification, one requires $|S|=k$, and given the true label $y$, the target loss is $\elltopk(S)_y = \ones\{y\notin S\}$~\citep{lapin2015top, lapin2016loss, lapin2018analysis,yang2018consistency,berrada2018smooth,rastegari2011scalable,reddi2019stochastic}.
In the literature on surrogates for top-$k$ classification, one goal has been to find a surrogate satisfying the following three desiderata: convexity, consistency, and piecewise linear (``hinge-like'') structure.
\citet{yang2018consistency} show that a number of previously proposed polyhedral losses, i.e., those which are convex and hinge-like, are inconsistent.
They further suggest that perhaps no surrogate could satisfy all three properties.

\citet{finocchiaro2022consistenttopk} apply the general approach outlined above to each of the polyhedral surrogates shown to be inconsistent by \citeauthor{yang2018consistency}, and determine the target problems they do solve, i.e., the discrete losses they embed.
Each of the examined surrogates embeds a discrete loss which can be viewed as a variant of the top-$k$ problem, allowing the algorithm to express varying levels of ``confidence'' on the top $k$ labels or report fewer than $k$ labels.
The label distributions for which these optimal reports differ from the optimal top-$k$ reports are shown in Table~\ref{tab:loss-slices} with $n=4$ and $k \in \{2,3\}$.
(Recall $n=|\Y|$, the number of labels.)

\begin{table*}
	\centering
	\begin{tabular}{ccccc}
		& $\prop{\Li{2}}$ & $\prop{\Li{3}}$ & $\prop{\Li{4}}$ & $\prop{L^k}$\\
		\hline \hline
		\rotatebox[origin=c]{90}{$k = 2$\hspace*{-2.5cm}} & \begin{tikzpicture} [scale=\tikzfigscale, thick, tdplot_main_coords]
\coordinate (orig) at (0,0,0);

\coordinate (uniform) at (1/4,1/4,1/4);
\coordinate[label=below:${p_1}$] (h) at (3/4,0,0);
\coordinate[label=below:${p_3}$] (m) at (0,3/4,0);
\coordinate[label=right:${p_2}$] (l) at (0,0,3/4);

\coordinate (one) at (3/4 - 1/8, 0, 1/8);
\coordinate (two) at (3/4 - 1/8, 1/8, 0);
\coordinate (three) at (1/8, 3/4 - 1/8, 0);
\coordinate (four) at (0, 3/4 - 1/8, 1/8);
\coordinate (five) at (0, 1/8, 3/4-1/8);
\coordinate (six) at (1/8, 0, 3/4-1/8);
\coordinate (seven) at (1/3, 0, 3/4 - 1/3);
\coordinate (eight) at (3/4 - 1/3, 0, 1/3);
\coordinate (nine) at (3/4 - 1/3, 1/3, 0);
\coordinate (ten) at (1/3, 3/4 - 1/3, 0);
\coordinate (eleven) at (0, 3/4 - 1/3, 1/3);
\coordinate (twelve) at (0, 1/3, 3/4 - 1/3);
\coordinate (thirteen) at (1/3, 3/4 - 2/3, 1/3);
\coordinate (fourteen) at (1/3, 1/3, 3/4 - 2/3);
\coordinate (fifteen) at (3/4 - 2/3, 1/3, 1/3);

\draw[simplex] (h) -- (m) -- (l) -- (h);

\begin{scope}
\clip (h) -- (m) -- (l) -- (h);

\draw (h) -- (one) -- (two) -- cycle;
\draw (l) -- (five) -- (six) -- cycle;
\draw (m) -- (three) -- (four) -- cycle;

\draw[fill = blue, fill opacity = 0.1] (one) -- (two) -- (nine)-- (fourteen) -- (thirteen) -- (eight) -- cycle;
\draw[fill = blue, fill opacity = 0.1] (three) -- (four) -- (eleven) -- (fifteen) -- (fourteen) -- (ten) -- cycle;
\draw[fill = blue, fill opacity = 0.1] (five) -- (six) -- (seven) -- (thirteen) -- (fifteen) -- (twelve) -- cycle;

\draw (nine) -- (ten) -- (fourteen) -- cycle;
\draw (seven) -- (eight) -- (thirteen) -- cycle;
\draw (eleven) -- (twelve) -- (fifteen) -- cycle;

\draw[fill = blue, fill opacity = 0.1] (thirteen) -- (fourteen) -- (fifteen) -- cycle;

\draw[dashed, color=blue, opacity = 0.6] (1/2, 1/4, 0) -- (uniform);
\draw[dashed, color=blue, opacity = 0.6] (1/2, 0, 1/4) -- (uniform);
\draw[dashed, color=blue, opacity = 0.6](1/4, 0, 1/2) -- (uniform);
\draw[dashed, color=blue, opacity = 0.6] (0, 1/4, 1/2) -- (uniform);
\draw[dashed, color=blue, opacity = 0.6](0, 1/2, 1/4) -- (uniform);
\draw[dashed, color=blue, opacity = 0.6] (1/4, 1/2, 0) -- (uniform);

\end{scope}

%
%
%
%

\end{tikzpicture} & \begin{tikzpicture} [scale=\tikzfigscale, thick, tdplot_main_coords]
\coordinate (orig) at (0,0,0);

\coordinate (uniform) at (1/4,1/4,1/4);
\coordinate[label=below:${p_1}$] (h) at (3/4,0,0);
\coordinate[label=below:${p_3}$] (m) at (0,3/4,0);
\coordinate[label=right:${p_2}$] (l) at (0,0,3/4);

\coordinate (one) at (3/4 - 1/8, 0, 1/8);
\coordinate (two) at (3/4 - 1/8, 1/8, 0);
\coordinate (three) at (1/8, 3/4 - 1/8, 0);
\coordinate (four) at (0, 3/4 - 1/8, 1/8);
\coordinate (five) at (0, 1/8, 3/4-1/8);
\coordinate (six) at (1/8, 0, 3/4-1/8);
\coordinate (seven) at (1/3, 0, 3/4 - 1/3);
\coordinate (eight) at (3/4 - 1/3, 0, 1/3);
\coordinate (nine) at (3/4 - 1/3, 1/3, 0);
\coordinate (ten) at (1/3, 3/4 - 1/3, 0);
\coordinate (eleven) at (0, 3/4 - 1/3, 1/3);
\coordinate (twelve) at (0, 1/3, 3/4 - 1/3);
\coordinate (thirteen) at (1/3, 3/4 - 2/3, 1/3);
\coordinate (fourteen) at (1/3, 1/3, 3/4 - 2/3);
\coordinate (fifteen) at (3/4 - 2/3, 1/3, 1/3);

\draw[simplex] (h) -- (m) -- (l) -- (h);

\begin{scope}
\clip (h) -- (m) -- (l) -- (h);

\draw (h) -- (one) -- (two) -- cycle;
\draw (l) -- (five) -- (six) -- cycle;
\draw (m) -- (three) -- (four) -- cycle;

\draw[fill = blue, fill opacity = 0.1] (one) -- (two) -- (nine)-- (fourteen) -- (thirteen) -- (eight) -- cycle;
\draw[fill = blue, fill opacity = 0.1] (three) -- (four) -- (eleven) -- (fifteen) -- (fourteen) -- (ten) -- cycle;
\draw[fill = blue, fill opacity = 0.1] (five) -- (six) -- (seven) -- (thirteen) -- (fifteen) -- (twelve) -- cycle;

\draw (nine) -- (ten) -- (fourteen) -- cycle;
\draw (seven) -- (eight) -- (thirteen) -- cycle;
\draw (eleven) -- (twelve) -- (fifteen) -- cycle;

\draw[fill = blue, fill opacity = 0.1] (thirteen) -- (fourteen) -- (fifteen) -- cycle;

\draw[dashed, color=blue, opacity = 0.6] (1/2, 1/4, 0) -- (uniform);
\draw[dashed, color=blue, opacity = 0.6] (1/2, 0, 1/4) -- (uniform);
\draw[dashed, color=blue, opacity = 0.6](1/4, 0, 1/2) -- (uniform);
\draw[dashed, color=blue, opacity = 0.6] (0, 1/4, 1/2) -- (uniform);
\draw[dashed, color=blue, opacity = 0.6](0, 1/2, 1/4) -- (uniform);
\draw[dashed, color=blue, opacity = 0.6] (1/4, 1/2, 0) -- (uniform);

\end{scope}

%
%
%
%

\end{tikzpicture} & 




\begin{tikzpicture} [scale=\tikzfigscale, thick, tdplot_main_coords]
\coordinate (orig) at (0,0,0);

\coordinate (uniform) at (1/4,1/4,1/4);
\coordinate[label=below:${p_1}$] (h) at (3/4,0,0);
\coordinate[label=below:${p_3}$] (m) at (0,3/4,0);
\coordinate[label=right:${p_2}$] (l) at (0,0,3/4);

\coordinate (one) at (1/2, 1/4, 0);
\coordinate (two) at (1/4, 1/2, 0);
\coordinate (three) at (0, 1/2, 1/4);
\coordinate (four) at (0, 1/4, 1/2);
\coordinate (five) at (1/4, 0, 1/2);
\coordinate (six) at (1/2, 0, 1/4);
\coordinate (seven) at (1/3, 1/3, 1/12);
\coordinate (eight) at (1/12, 1/3, 1/3);
\coordinate (nine) at (1/3, 1/12, 1/3);

\draw[simplex] (h) -- (m) -- (l) -- (h);

\begin{scope}
\clip (h) -- (m) -- (l) -- (h);

\draw (h) -- (one) -- (six) -- cycle;
\draw (l) -- (four) -- (five) -- cycle;
\draw (m) -- (three) -- (two) -- cycle;

\draw (one) -- (two) -- (seven) -- cycle;
\draw (three) -- (four) -- (eight) -- cycle;
\draw (five) -- (six) -- (nine) -- cycle;

\draw[fill=blue,fill opacity = 0.1] (one) -- (seven) -- (nine) -- (six) -- cycle;
\draw[fill=blue,fill opacity = 0.1] (two) -- (three) -- (eight) -- (seven) -- cycle;
\draw[fill=blue,fill opacity = 0.1] (four) -- (five) -- (nine) -- (eight) -- cycle;

\draw[fill=blue,fill opacity = 0.1] (seven) -- (eight) -- (nine) -- cycle;

\draw[dashed, color=blue, opacity = 0.6] (1/2, 1/4, 0) -- (uniform);
\draw[dashed, color=blue, opacity = 0.6] (1/2, 0, 1/4) -- (uniform);
\draw[dashed, color=blue, opacity = 0.6](1/4, 0, 1/2) -- (uniform);
\draw[dashed, color=blue, opacity = 0.6] (0, 1/4, 1/2) -- (uniform);
\draw[dashed, color=blue, opacity = 0.6](0, 1/2, 1/4) -- (uniform);
\draw[dashed, color=blue, opacity = 0.6] (1/4, 1/2, 0) -- (uniform);

\end{scope}

\node (13center) at (3/8 - 1/48, 3/8 - 1/48, 1/24) {};
\node (13label) at (3/8, 3/8, -1/5) {{\tiny $1,3$}};
\draw[-{Latex[length=1mm, width=1mm]}, red] (13label) -- (13center.center);

\node (emptycenter) at (.23 * .75, .38 * .75,.38 * .75) {};
\node (emptylabel) at (-1/8, 3/8, 3/8) {{\tiny $\emptyset$}};
\draw[-{Latex[length=1mm, width=1mm]}, red] (emptylabel) -- (emptycenter.center);

\node (2center) at (.2 * .75, .2 * .75,.6 * .75) {};
\node (2label) at (3/8, -1/8, 5/8) {{\tiny $2$}};
\draw[-{Latex[length=1mm, width=1mm]}, red] (2label) -- (2center.center);

\node (14center) at (.8 * .75, .1 * .75,.1 * .75) {};
\node (14label) at (5/8, -1/8, 3/8) {{\tiny $14$}};
\draw[-{Latex[length=1mm, width=1mm]}, red] (14label) -- (14center.center);

\end{tikzpicture}


\begin{tikzpicture} [scale=\tikzfigscale, thick, tdplot_main_coords]
\coordinate (orig) at (0,0,0);

\coordinate (uniform) at (1/4,1/4,1/4);
\coordinate[label=below:${p_1}$] (h) at (3/4,0,0);
\coordinate[label=below:${p_3}$] (m) at (0,3/4,0);
\coordinate[label=right:${p_2}$] (l) at (0,0,3/4);

\draw[simplex] (h) -- (m) -- (l) -- (h);

\begin{scope}
\clip (h) -- (m) -- (l) -- (h);

\draw[opacity=0.9] (2/3, 1/3, 0) -- (uniform);
\draw[opacity=0.9] (2/3, 0, 1/3) -- (uniform);
\draw[opacity=0.9] (1/3, 0, 2/3) -- (uniform);
\draw[opacity=0.9] (0, 1/3, 2/3) -- (uniform);
\draw[opacity=0.9] (0, 2/3, 1/3) -- (uniform);
\draw[opacity=0.9] (1/3, 2/3, 0) -- (uniform);

\end{scope}

\node (14center) at (.66 * .75, 0.16 * .75, 0.16 * 0.75) {};
\node (14label) at (5/8, -1/4,2/8) {{\tiny $14$}};
\draw[-{Latex[length=1mm, width=1mm]}, red] (14label) -- (14center.center);

\node (23center) at (.1 * .75, 0.45 * .75, 0.45 * 0.75) {};
\node (23label) at (-1/8, 1/2,1/2) {{\tiny $23$}};
\draw[-{Latex[length=1mm, width=1mm]}, red] (23label) -- (23center.center);

\end{tikzpicture}
 \\ 
		\rotatebox[origin=c]{90}{$k = 3$\hspace*{-2.5cm}} & 




\begin{tikzpicture} [scale=\tikzfigscale, thick, tdplot_main_coords]
\coordinate (orig) at (0,0,0);

\coordinate (uniform) at (1/4,1/4,1/4);
\coordinate[label=below:${p_1}$] (h) at (3/4,0,0);
\coordinate[label=below:${p_3}$] (m) at (0,3/4,0);
\coordinate[label=right:${p_2}$] (l) at (0,0,3/4);

\coordinate (one) at (1/2, 1/20, 3/4 - 1/2 - 1/20);
\coordinate (two) at (1/2, 3/4 - 1/2 - 1/20,1/20);
\coordinate (three) at (3/4 - 1/2 - 1/20,1/2,1/20);
\coordinate (four) at (1/20,1/2, 3/4 - 1/2 - 1/20);
\coordinate (five) at (1/20,3/4 - 1/2 - 1/20, 1/2);
\coordinate (six) at (3/4 - 1/2 - 1/20,1/20, 1/2);
\coordinate (seven) at (1/2, 0, 1/4);
\coordinate (eight) at (1/2, 1/4, 0);
\coordinate (nine) at (1/4, 1/2, 0);
\coordinate (ten) at (0, 1/2, 1/4);
\coordinate (eleven) at (0, 1/4, 1/2);
\coordinate (twelve) at (1/4, 0, 1/2);
\coordinate (thirteen) at (1/3, 1/12, 1/3);
\coordinate (fourteen) at (1/3, 1/3, 1/12);
\coordinate (fifteen) at (1/12, 1/3, 1/3);
\coordinate (sixteen) at (1/6, 1/4, 1/3);
\coordinate (seventeen) at (1/4, 1/6, 1/3);
\coordinate (eighteen) at (1/3, 1/6, 1/4);
\coordinate (nineteen) at (1/3, 1/4, 1/6);
\coordinate (twenty) at (1/4, 1/3, 1/6);
\coordinate (twentyone) at (1/6, 1/3, 1/4);
\coordinate(twentytwo) at (1/4, 1/4, 1/4);

\draw[simplex] (h) -- (m) -- (l) -- (h);

\begin{scope}
\clip (h) -- (m) -- (l) -- (h);

\draw[fill=blue,fill opacity = 0.1] (h) -- (one) -- (two) -- cycle;
\draw[fill=blue,fill opacity = 0.1] (m) -- (three) -- (four) -- cycle;
\draw[fill=blue,fill opacity = 0.1] (l) -- (five) -- (six) -- cycle;

\draw (h) -- (one) -- (seven) -- cycle;
\draw (h) -- (two) -- (eight) -- cycle;
\draw (m) -- (three) -- (nine) -- cycle;
\draw (m) -- (four) -- (ten) -- cycle;
\draw (l) -- (five) -- (eleven) -- cycle;
\draw (l) -- (six) -- (twelve) -- cycle;

\draw (seven) -- (twelve) -- (thirteen) -- cycle;
\draw (eight) -- (nine) -- (fourteen) -- cycle;
\draw (ten) -- (eleven) -- (fifteen) -- cycle;

\draw (five) -- (eleven) -- (fifteen) -- (sixteen) -- cycle;
\draw (six) -- (twelve) -- (thirteen) -- (seventeen) -- cycle;
\draw (five) -- (six) -- (seventeen) -- (sixteen) -- cycle;
\draw (one) -- (seven) -- (thirteen) -- (eighteen) -- cycle;
\draw[fill=blue,fill opacity = 0.1] (one) -- (two) -- (nineteen) -- (eighteen) -- cycle;
\draw (two) -- (eight) -- (fourteen) -- (nineteen) -- cycle;
\draw (three) -- (nine) -- (fourteen) -- (twenty) -- cycle;
\draw[fill=blue,fill opacity = 0.1] (three) -- (four) -- (twentyone) -- (twenty) -- cycle;
\draw (four) -- (ten) -- (fifteen) -- (twentyone) -- cycle;
\draw[fill=blue,fill opacity = 0.1] (five) -- (six) -- (seventeen) -- (sixteen) -- cycle;

\draw (fourteen) -- (nineteen) -- (twentytwo) -- (twenty) -- cycle;
\draw[fill=blue,fill opacity = 0.1] (eighteen) -- (nineteen) -- (twentytwo) -- cycle;
\draw (thirteen) -- (eighteen) -- (twentytwo) -- (seventeen) -- cycle;
\draw[fill=blue,fill opacity = 0.1] (sixteen) -- (seventeen) -- (twentytwo) -- cycle;
\draw (fifteen) -- (sixteen) -- (twentytwo) -- (twentyone) -- cycle;
\draw[fill=blue,fill opacity = 0.1] (twenty) -- (twentyone) -- (twentytwo) -- cycle;

\draw[dashed, color=blue, opacity = 0.6] (h) -- (twentytwo);
\draw[dashed, color=blue, opacity = 0.6] (m) -- (twentytwo);
\draw[dashed, color=blue, opacity = 0.6] (l) -- (twentytwo);

\end{scope}

\end{tikzpicture}




\begin{tikzpicture} [scale=\tikzfigscale, thick, tdplot_main_coords]
\coordinate (orig) at (0,0,0);

\coordinate (uniform) at (1/4,1/4,1/4);
\coordinate[label=below:${p_1}$] (h) at (3/4,0,0);
\coordinate[label=below:${p_3}$] (m) at (0,3/4,0);
\coordinate[label=right:${p_2}$] (l) at (0,0,3/4);

\coordinate (one) at (3/4 - 1/8, 0, 1/8);
\coordinate (two) at (3/4 - 1/8, 1/8, 0);
\coordinate (three) at (1/8, 3/4 - 1/8, 0);
\coordinate (four) at (0, 3/4 - 1/8, 1/8);
\coordinate (five) at (0, 1/8, 3/4 - 1/8);
\coordinate (six) at (1/8, 0, 3/4 - 1/8);
\coordinate (seven) at (3/4 - 1/8, 1/40,1/8 - 1/40);
\coordinate (eight) at (3/4 - 1/8, 1/8 - 1/40, 1/40);
\coordinate (nine) at (1/8 - 1/40, 3/4 - 1/8, 1/40);
\coordinate (ten) at (1/40, 3/4 - 1/8, 1/8 - 1/40);
\coordinate (eleven) at (1/40, 1/8 - 1/40, 3/4 - 1/8);
\coordinate (twelve) at (1/8 - 1/40, 1/40, 3/4 - 1/8);
\coordinate (thirteen) at (3/4 - 1/3, 1/3, 0);
\coordinate (fourteen) at (1/3, 3/4 - 1/3, 0);
\coordinate (fifteen) at (0,3/4 - 1/3, 1/3);
\coordinate (sixteen) at (0,1/3, 3/4 - 1/3);
\coordinate (seventeen) at (1/3, 0, 3/4 - 1/3);
\coordinate (eighteen) at (3/4 - 1/3,0, 1/3);
\coordinate (nineteen) at (1/2,1/8, 1/8);
\coordinate (twenty) at (1/3, 1/3, 3/4 - 2/3);
\coordinate (twentyone) at (1/8, 1/2, 1/8);
\coordinate (twentytwo) at (3/4 - 2/3, 1/3, 1/3);
\coordinate (twentythree) at (1/8, 1/8, 1/2);
\coordinate (twentyfour) at (1/3, 3/4 - 2/3, 1/3);
\coordinate (twentyfive) at (1/3, 1/4, 1/2 - 1/3);
\coordinate (twentysix) at (1/4, 1/3, 1/2 - 1/3);
\coordinate (twentyseven) at (1/2 - 1/3, 1/3, 1/4);
\coordinate (twentyeight) at (1/2 - 1/3, 1/4, 1/3);
\coordinate (twentynine) at (1/4, 1/2-1/3, 1/3);
\coordinate (thirty) at (1/3, 1/2 - 1/3, 1/4);

\draw[simplex] (h) -- (m) -- (l) -- (h);

\begin{scope}
\clip (h) -- (m) -- (l) -- (h);

\draw (h) -- (one) -- (seven) -- cycle;
\draw (h) -- (two) -- (eight) -- cycle;
\draw (m) -- (three) -- (nine) -- cycle;
\draw (m) -- (four) -- (ten) -- cycle;
\draw (l) -- (five) -- (eleven) -- cycle;
\draw (l) -- (six) -- (twelve) -- cycle;

\draw[fill=blue,fill opacity = 0.1] (h) -- (seven) -- (nineteen) -- (eight) -- cycle;
\draw[fill=blue,fill opacity = 0.1] (m) -- (nine) -- (twentyone) -- (ten) -- cycle;
\draw[fill=blue,fill opacity = 0.1] (l) -- (eleven) -- (twentythree) -- (twelve) -- cycle;

\draw (one) -- (seven) -- (nineteen) -- (thirty) -- (twentyfour) -- (eighteen) -- cycle;
\draw (two) -- (eight) -- (nineteen) -- (twentyfive) -- (twenty) -- (thirteen); 
\draw (three) -- (nine) -- (twentyone) -- (twentysix) -- (twenty) -- (fourteen);
\draw (four) -- (ten) -- (twentyone) -- (twentyseven) -- (twentytwo) -- (fifteen) -- cycle;
\draw (five) -- (eleven) -- (twentythree) -- (twentyeight) -- (twentytwo) -- (sixteen) -- cycle;
\draw (six) -- (twelve) -- (twentythree) -- (twentynine) -- (twentyfour) -- (seventeen) -- cycle;

\draw[fill=blue,fill opacity = 0.1] (nineteen) -- (twentyfive) -- (uniform) -- (thirty);
\draw[fill=blue,fill opacity = 0.1] (twentyone) -- (twentyseven) -- (uniform) -- (twentysix);
\draw[fill=blue,fill opacity = 0.1] (twentythree) -- (twentyeight) -- (uniform) -- (twentynine) -- cycle;

\draw (twenty) -- (twentysix) -- (uniform) -- (twentyfive) -- cycle;
\draw (twentytwo) -- (twentyseven) -- (uniform) -- (twentyeight) -- cycle;
\draw (twentyfour) -- (twentynine) -- (uniform) -- (thirty) -- cycle;

\draw[dashed, color=blue, opacity = 0.6] (h) -- (uniform);
\draw[dashed, color=blue, opacity = 0.6] (m) -- (uniform);
\draw[dashed, color=blue, opacity = 0.6] (l) -- (uniform);



\end{scope}

%
%
%
%

\end{tikzpicture}




\begin{tikzpicture} [scale=\tikzfigscale, thick, tdplot_main_coords]
\coordinate (orig) at (0,0,0);

\coordinate (uniform) at (1/4,1/4,1/4);
\coordinate[label=below:${p_1}$] (h) at (3/4,0,0);
\coordinate[label=below:${p_3}$] (m) at (0,3/4,0);
\coordinate[label=right:${p_2}$] (l) at (0,0,3/4);

\draw[simplex] (h) -- (m) -- (l) -- (h);

\begin{scope}
\clip (h) -- (m) -- (l) -- (h);

\draw (h) -- (uniform) -- (m) -- cycle;
\draw (h) -- (uniform) -- (l) -- cycle;
\draw (m) -- (uniform) -- (l) -- cycle;

\draw[dashed, color=blue, opacity = 0.6] (h) -- (uniform) -- (m) -- (uniform) -- (l);

\end{scope}

\node (134center) at (.45 * .75, 0.45 * .75, 0.1 * 0.75) {};
\node (134label) at (3/8, 3/8, -1/5) {{\tiny $134$}};
\draw[-{Latex[length=1mm, width=1mm]}, red] (134label) -- (134center.center);

\node (234center) at (.1 * .75, 0.45 * .75, 0.45 * 0.75) {};
\node (234label) at (-1/4, 3/8, 3/8) {{\tiny $234$}};
\draw[-{Latex[length=1mm, width=1mm]}, red] (234label) -- (234center.center);

\node (124center) at (.45 * .75, 0.1 * .75, 0.45 * 0.75) {};
\node (124label) at (3/8, -1/4,3/8) {{\tiny $124$}};
\draw[-{Latex[length=1mm, width=1mm]}, red] (124label) -- (124center.center);

\end{tikzpicture}


\begin{tikzpicture} [scale=\tikzfigscale, thick, tdplot_main_coords]
\coordinate (orig) at (0,0,0);

\coordinate (uniform) at (1/4,1/4,1/4);
\coordinate[label=below:${p_1}$] (h) at (3/4,0,0);
\coordinate[label=below:${p_3}$] (m) at (0,3/4,0);
\coordinate[label=right:${p_2}$] (l) at (0,0,3/4);

\draw[simplex] (h) -- (m) -- (l) -- (h);

\begin{scope}
\clip (h) -- (m) -- (l) -- (h);

\draw (h) -- (uniform) -- (m) -- cycle;
\draw (h) -- (uniform) -- (l) -- cycle;
\draw (m) -- (uniform) -- (l) -- cycle;

\draw[dashed, color=blue, opacity = 0.6] (h) -- (uniform) -- (m) -- (uniform) -- (l);

\end{scope}

\node (134center) at (.45 * .75, 0.45 * .75, 0.1 * 0.75) {};
\node (134label) at (3/8, 3/8, -1/5) {{\tiny $134$}};
\draw[-{Latex[length=1mm, width=1mm]}, red] (134label) -- (134center.center);

\node (234center) at (.1 * .75, 0.45 * .75, 0.45 * 0.75) {};
\node (234label) at (-1/4, 3/8, 3/8) {{\tiny $234$}};
\draw[-{Latex[length=1mm, width=1mm]}, red] (234label) -- (234center.center);

\node (124center) at (.45 * .75, 0.1 * .75, 0.45 * 0.75) {};
\node (124label) at (3/8, -1/4,3/8) {{\tiny $124$}};
\draw[-{Latex[length=1mm, width=1mm]}, red] (124label) -- (124center.center);

\end{tikzpicture}

 \\ 
	\end{tabular}
	\caption{
		Visualizations of the properties elicited by the losses (embedded by) $\Li{2}$, $\Li{3}$, $\Li{4}$ studied by \citet{yang2018consistency} and \citet{finocchiaro2022consistenttopk}, and $L^k$ in eq.~\eqref{eq:topk-embedding}.
    We take $n=4$ and $k \in \{2,3\}$, and visualize in 2 dimensions by fixing $p_4 = 1/4$. 
		The blue-filled regions are cells of the surrogate property which cross the dashed blue lines of the target property, exhibiting inconsistency (see Figure~\ref{fig:abstain-intuition-inconsistent}).
    Intuitively, the inconsistency arises from ambiguity in the top-$k$ elements of the optimal surrogate report.  
	}
	\label{tab:loss-slices}
\end{table*}

For example, one of the surrogates is $\Li {4}(u)_y = \left(1 - u_y + \frac 1 k \sum_{i=1}^k (u_{\setminus y})_{[i]}\right)_+$, where $u_{[i]}$ denotes the $i$th largest element of $u \in \reals^n$.
The authors show that $\Li{4}$ embeds $\ell^{(4)}(T)_y = \tfrac {k+1} {k+1-|T|} \ones\{y\notin T\}$, where $T$ is a set of at most $k$ labels.
These embedded losses may therefore be useful in top-$k$ settings where choosing smaller sets may have some benefit, such as a search engine that can use unused space for advertisements.
Using the losses each proposed surrogate embeds, using the same technique from Figure~\ref{fig:abstain-intuition-inconsistent}, the authors go on to derive constraints on the label distributions under which the proposed surrogates are actually consistent for top-$k$ classification; these constraints are tighter than previous constraints~\citep{yang2018consistency}.

Beyond analyzing the previously proposed surrogates, \citeauthor{finocchiaro2022consistenttopk} also use our framework to derive the first consistent polyhedral surrogate for $\elltopk$,
\begin{align}\label{eq:topk-embedding}
L^k(u)_y &= \max \left(u_{[1]}, \max_{m \in \{k+1, \ldots, n\}} \left[ 1 - \frac k m + \frac 1 m \sum_{i=1}^m u_{[i]}\right] \right)- u_y~.
\end{align}
That is, they show that a hinge-like surrogate does exist which is both convex and consistent.
In light of our framework, this fact is unsurprising: Theorems~\ref{thm:embed-poly-main} and~\ref{thm:link-main} imply that \emph{every} discrete loss has a consistent polyhedral surrogate.
This new surrogate $L^k$ is given directly by the construction from the proof of Theorem~\ref{thm:discrete-loss-poly-embeddable} and applying Theorem~\ref{thm:link-main} to obtain consistency.
While Theorem~\ref{thm:link-main} guarantees the existence of some consistent link function, the authors further ask whether the canonical argmax link function $\psi^k$, which returns the $k$ largest elements of $u$, is calibrated.
They indeed confirm its consistency using our framework, showing that $\psi^k$ is $\epsilon$-separated for $L^k$ and $\elltopk$, for any $\epsilon \leq \frac 1 {2n}$ \citep[Theorem 4.4]{finocchiaro2022consistenttopk}.

\section{Additional Structure of Embeddings}
\label{sec:min-rep-sets}

We have shown in \S~\ref{sec:poly-loss-embed} a close connection between embeddings and polyhedral losses.
Here we go beyond polyhedral losses, showing a more general necessary condition for an embedding: a surrogate embeds a discrete loss if and only if it has a polyhedral Bayes risk, or equivalently, a finite representative set (Lemma~\ref{lem:X}).
This result implies that the embedding condition simplifies to matching Bayes risks (Proposition~\ref{prop:embed-bayes-risks}).
We also use this result to understand deeper structure of embeddings, and the geometry of the underlying properties. 
In particular, we study a natural notion of a ``trimmed'' loss function (Definition~\ref{def:trim-loss}), and connect this notion to tight embeddings, and to non-redundancy from property elicitation (Proposition~\ref{prop:embed-iff-trims-equal}).

\subsection{Structure of polyhedral Bayes risks}

While we have focused on polyhedral losses thus far, many of our results extend to losses with polyhedral Bayes risks, a strictly weaker condition.
(We say a concave function is polyhedral if its negation is a polyhedral convex function.)
To see that every polyhedral loss has a polyhedral Bayes risk, recall that Theorem~\ref{thm:poly-embeds-discrete} constructs a finite representative set $\Sc$ for any polyhedral loss $L$, and thus $\risk{L} = \risk{L|_\Sc}$ by Lemma~\ref{lem:loss-restrict}, which is polyhedral.
Conversely, however, a Bayes risk may be polyhedral even if the loss itself is not.
For example, a modified hinge loss $L(r)_y = \max(r^2-1,1-ry)$
as shown in Figure~\ref{fig:modified-hinge}, which matches hinge loss on the interval $[-1,1]$ but is strictly convex outside the interval $[-2,2]$, still embeds twice 0-1 loss.

Much of our embedding framework relies on the existence of finite representative sets.
Our main structural result is that a minimizable loss has a finite representative sets if and only if its Bayes risk is polyhedral.
The proof looks at the facets (full-dimensional faces) of the Bayes risk, and argues that each facet is generated by the loss at a particular report, and the (finite) set of these reports is representative.
Along the way, we identify several other useful facts deriving from this same geometry; for example, a discrete loss tightly embedded by a loss are unique up to relabeling, any set-wise minimal representative set must be minimum in cardinality, and the level sets of the corresponding property are unique and full-dimensional.
Together, these facts form Lemma~\ref{lem:X}, which we use throughout this section.
See \S~\ref{app:polyhedra} for ommitted proofs.

\begin{figure}[t]
	\begin{minipage}{0.47\linewidth}
		\centering
		\includegraphics[width=0.95\linewidth]{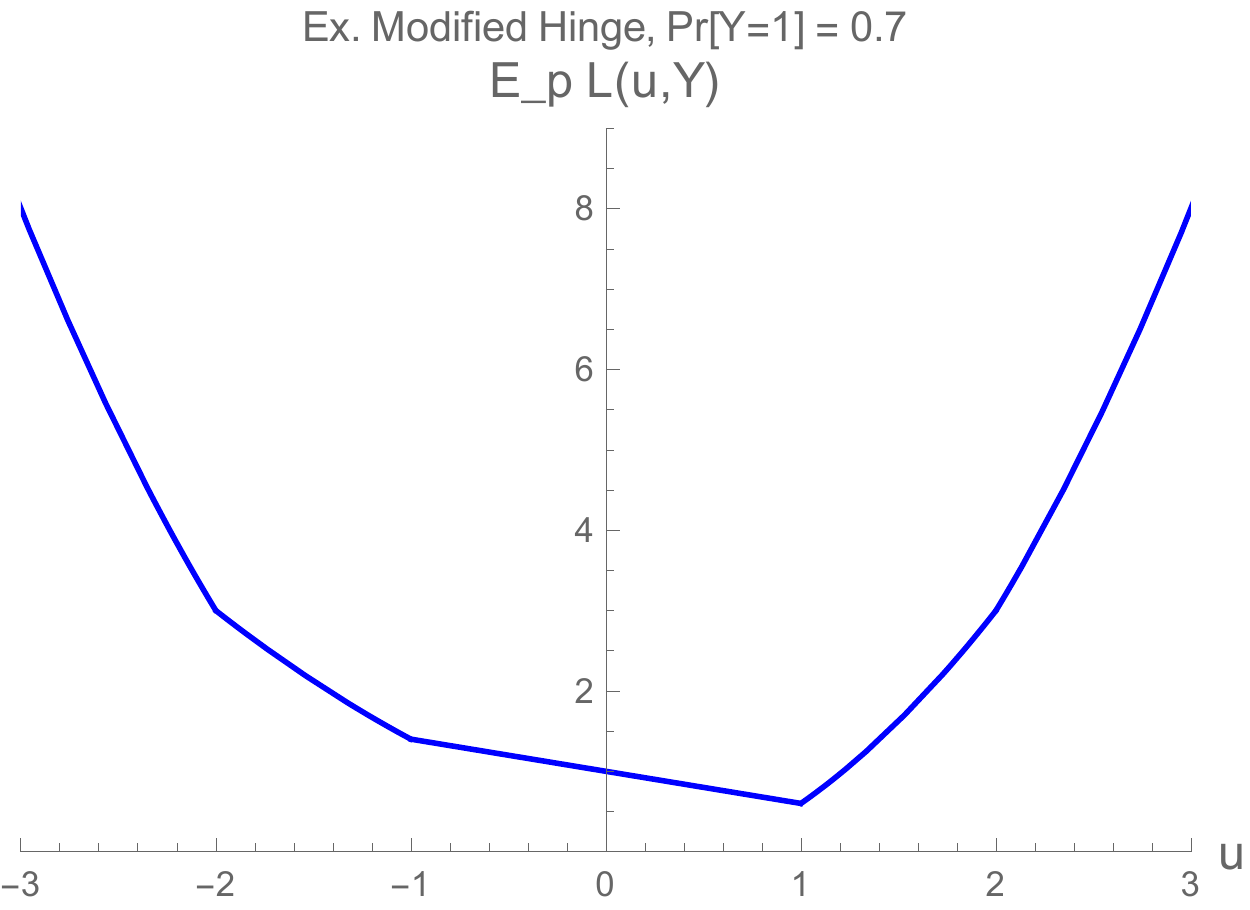}
	\end{minipage}
	\hfill
	\begin{minipage}{0.47\linewidth}
		\centering		\includegraphics[width=0.95\linewidth]{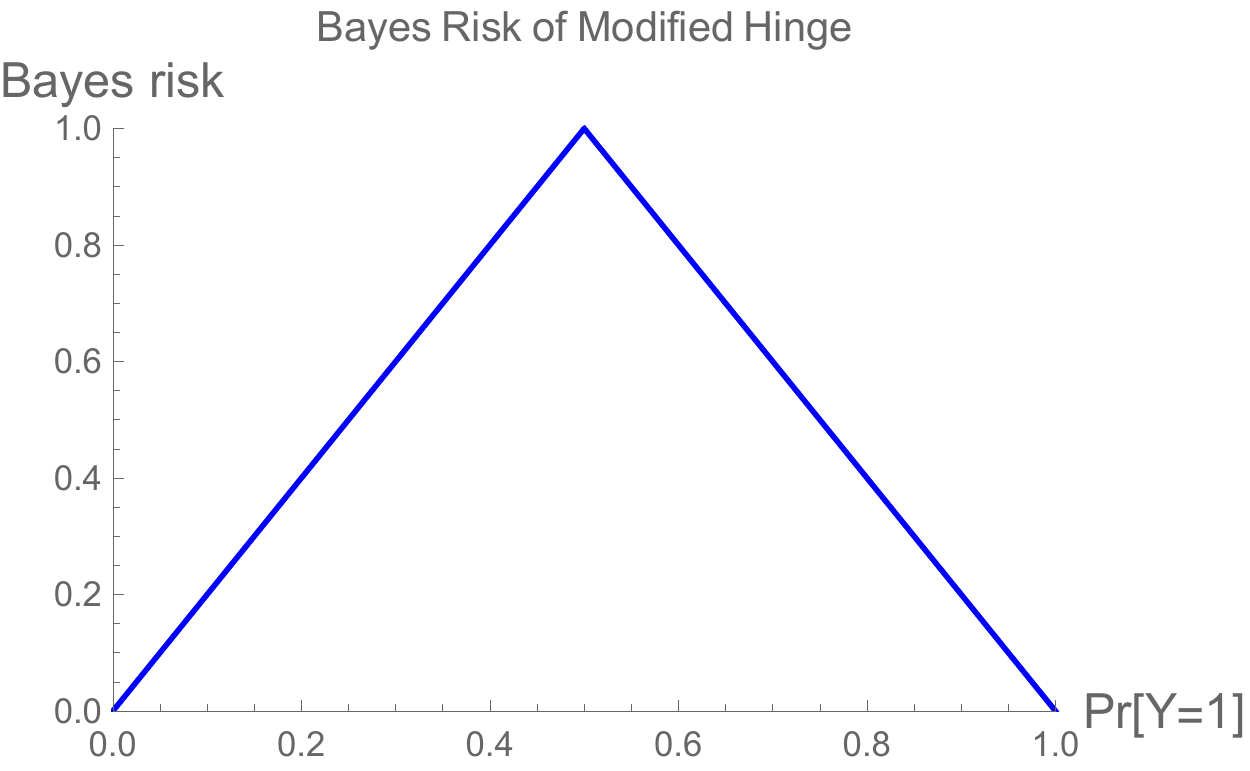}
	\end{minipage}
	\caption{(L) Expected modified hinge loss for fixed distribution; (R) Bayes risk of modified hinge still matches the Bayes risk of hinge.}
	\label{fig:modified-hinge}
\end{figure}

\begin{restatable}{lemma}{lemmaX}\label{lem:X}
  Let $L: \R \to \reals^\Y_+$ be a minimizable loss with a polyhedral Bayes risk $\risk L$.
  Then $L$ has a finite representative set.
  Furthermore, letting $\Gamma = \prop{L}$, there exist finite sets
  $\V \subseteq \reals^\Y_+$ and
  $\Theta = \{\theta_v \subseteq \simplex \mid v\in\V\}$,
  both uniquely determined by $\risk{L}$ alone,
  such that
  \begin{enumerate}
  \item A set $\R'\subseteq\R$ is representative if and only if $\V \subseteq L(\R')$.\label{item:X-rep-V}
  \item A set $\R'\subseteq\R$ is minimum representative if and only if $L(\R') = \V$.\label{item:X-min-V}
  \item A set $\R'\subseteq\R$ is representative if and only if $\Theta \subseteq \{\Gamma_r \mid r \in \R'\}$.\label{item:X-rep-Theta}
  \item A set $\R'\subseteq\R$ is minimum representative if and only if $\{\Gamma_r \mid r \in \R'\} = \Theta$.\label{item:X-min-Theta}
  \item Every representative set for $L$ contains a minimum representative set for $L$.\label{item:X-rep-contain-min}
  \item The set of full-dimensional level sets of $\Gamma$ is exactly $\Theta$.\label{item:X-full-dim}
  \item For any $r \in \R$, there exists $\theta \in \Theta$ such that $\Gamma_r \subseteq \theta$.\label{item:X-redundant}
  \item $L$ tightly embeds $\ell:\R'\to\reals^\Y_+$ if and only if $\ell$ is injective and $\ell(\R') = \V$.\label{item:X-tight-embed}
  \end{enumerate}
\end{restatable}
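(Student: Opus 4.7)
First I would set up the structural decomposition. Since $\risk L$ is polyhedral and concave on $\simplex$, it admits a representation $\risk L(p) = \min_{v \in \tilde \V} \inprod{p}{v}$ for some finite $\tilde \V \subseteq \reals^\Y$ (any affine offsets are absorbed into the linear term via $\inprod{p}{\ones} = 1$ on $\simplex$). I would then take $\V \subseteq \tilde \V$ to be the unique minimal such set, namely those $v$ whose cell $\theta_v := \{p \in \simplex : \inprod{p}{v} = \risk L(p)\}$ is full-dimensional in $\simplex$ — geometrically these label the non-vertical facets of the hypograph of $\risk L$, so $\V$, and $\Theta := \{\theta_v : v \in \V\}$, are uniquely determined by $\risk L$ alone. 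Two distinct vectors give distinct linear functions on $\simplex$ (since the vertices $e_y$ span $\reals^\Y$), so $\V$ really is unique as a subset of $\reals^\Y$.

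The core technical step, from which the rest falls out, is: for each $v \in \V$ and each $p_0 \in \relint(\theta_v)$, $\Gamma(p_0) = \{r \in \R : L(r) = v\}$, and in particular every $v \in \V$ equals $L(r_v)$ for some $r_v \in \R$. Minimizability gives some $r \in \Gamma(p_0)$ with $\inprod{p_0}{L(r)} = \risk L(p_0) = \inprod{p_0}{v}$. For $p$ in a full-dimensional $\affhull(\simplex)$-neighborhood of $p_0$ still in $\theta_v$, $\inprod{p}{L(r)} \geq \risk L(p) = \inprod{p}{v}$, so the linear map $p \mapsto \inprod{p}{L(r) - v}$ is $\geq 0$ on this neighborhood and vanishes at the interior point $p_0$. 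A linear function nonnegative on a full-dimensional neighborhood of an interior zero must vanish on the affine hull; hence $L(r) - v \in \mathrm{span}(\ones)$, and evaluating at $p_0$ (with $\inprod{p_0}{\ones} = 1$) pins down $L(r) = v$. As corollaries, $\V \subseteq \reals^\Y_+$ and $\{r_v : v \in \V\}$ is a finite representative set.

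Items 1--7 then follow with mostly routine arguments. Item 1 is immediate: representative $\R'$ forces $\V \subseteq L(\R')$ by evaluating at a rel-int point of each $\theta_v$, and conversely $\V \subseteq L(\R')$ suffices since $\risk L(p) = \min_{v \in \V} \inprod{p}{v}$ is always attained. Item 2 follows from $|\R'| \geq |L(\R')| \geq |\V|$ for any representative $\R'$, with equality (minimum cardinality) iff $L|_{\R'}$ is a bijection onto $\V$ — this is how one should read the stated $L(\R') = \V$. Items 3 and 4 use the bijection $v \leftrightarrow \theta_v$ between $\V$ and $\Theta$, together with $\Gamma_r = \theta_{L(r)}$ whenever $L(r) \in \V$, plus the converse: if $\Gamma_r = \theta_v$ for some $v \in \V$, the Paragraph~2 argument applied at a rel-int point of $\theta_v$ forces $L(r) = v$. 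Item 5 follows by pruning a representative $\R'$ down to one report per $v \in \V$. For item 6, if $\Gamma_r$ is full-dimensional then the Paragraph~2 argument at a rel-int point of $\Gamma_r$ gives $L(r) \in \V$ and $\Gamma_r = \theta_{L(r)} \in \Theta$; conversely each $\theta_v \in \Theta$ is full-dimensional by construction. For item 7, set $P = \Gamma_r$ and pick $p_0 \in \relint(P)$ and any $v \in \V$ with $p_0 \in \theta_v$. For arbitrary $p \in P$, relative interiority lets us extend the segment from $p_0$ to $p$ slightly past $p_0$ while remaining in $P$; the linear function $\inprod{\cdot}{v - L(r)} \geq 0$ on this extended segment (since $\inprod{\cdot}{v} \geq \risk L = \inprod{\cdot}{L(r)}$ on $P$) and vanishes at the interior point $p_0$, so vanishes identically on the segment, giving $p \in \theta_v$ and thus $P \subseteq \theta_v$.

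For item 8 I would argue directly, since Proposition~\ref{prop:embed-bayes-risks} is itself derived from this lemma. If $L$ tightly embeds $\ell : \R' \to \reals^\Y_+$ via $\varphi$ (so $\R'$ itself serves as the minimum representative set $\Sc$), then $L \circ \varphi = \ell$ and the optimality equivalence together force $\risk L = \risk \ell$ at every $p$, so $\V_L = \V_\ell =: \V$; item 2 applied to $\ell$ then yields $\ell$ injective with $\ell(\R') = \V$. Conversely, if $\ell$ is injective with $\ell(\R') = \V$, then $\risk \ell(p) = \min_{r \in \R'} \inprod{p}{\ell(r)} = \min_{v \in \V} \inprod{p}{v} = \risk L(p)$, and defining $\varphi(r) := r_{\ell(r)}$ with $r_v \in L^{-1}(v)$ furnished by Paragraph~2 gives an injective embedding satisfying both conditions (i) and (ii); item 2 applied to $\ell$ then shows $\R'$ is min representative for $\ell$, so the embedding is tight. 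The main obstacle is the Paragraph~2 step: the uniqueness of $\V$ and its realization within $L(\R)$ both rely on carefully working in $\affhull(\simplex)$ and managing the $\ones$-direction, which is easy to mishandle. Once that is nailed down, the rest is bookkeeping with the power-diagram structure on $\simplex$.
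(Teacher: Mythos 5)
Your approach is genuinely different from the paper's and is, to my reading, correct modulo some compression in exposition. The paper works in the lifted space $\reals^\Y\times\reals$: it forms the $1$-homogeneous extension $\risk L_+ = g_{L(\R)}$ of the Bayes risk, works with its \emph{hypograph}, invokes Gallier's uniqueness of irredundant halfspace decompositions and Aurenhammer's power-diagram machinery to identify the facet normals $\V^*$, checks that the coordinate halfspaces $\H_\Y$ are always among the facets, and projects back down to $\reals^\Y_+$ and then to $\simplex$ via a dimension-preserving map $\pi$. You instead stay on $\simplex$ throughout, absorb the affine offsets into the linear term using $\inprod{p}{\ones}=1$, take $\V$ to be the normals of the full-dimensional cells, and derive the core identification $L(r)=v$ for $r\in\Gamma(p_0)$, $p_0\in\relint(\theta_v)$, from the elementary fact that a linear function nonnegative on a full-dimensional relative neighborhood of an interior zero vanishes on $\affhull(\simplex)$, pinning $L(r)-v$ into $\mathrm{span}(\ones)$ and then to zero by evaluation. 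This is more self-contained and avoids the power-diagram infrastructure entirely, at the cost of having to handle the $\ones$-direction by hand — exactly the subtlety the lift to $\reals^\Y\times\reals$ is designed to sidestep. Your single key lemma then drives items 1--7 by pattern-matching; items 6 and 7 require small adaptations (for item 6 you must first argue that a full-dimensional $\Gamma_r$ has a relative-interior point where some $\theta_v$ coincides with it on a full-dimensional subset, before the vanishing argument applies, but this is routine), and you are right to note that the stated form of item 2 needs to be read as ``$L|_{\R'}$ is a bijection onto $\V$'' rather than the literal set-equality alone. For item 8, your forward direction tacitly takes $\Sc=\R'$; the paper's statement has the same implicit reading (and its own proof via Corollary~\ref{cor:tight-embed-min-rep} carries the same assumption), so this is a shared subtlety rather than a defect of your argument. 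On the whole you give a lighter-weight proof of the same structural facts, trading the paper's polyhedron/face-lattice theory for a direct separation argument on the simplex.
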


As a finite representative set implies a polyhedral Bayes risk by Lemma~\ref{lem:loss-restrict}, Lemma~\ref{lem:X} shows that polyhedral Bayes risks are equivalent to having finite representative sets, which in turn gives an embedding by
Proposition~\ref{prop:representative-embeds-restriction}.
\begin{corollary}\label{cor:poly-risk-fin-rep}
  The following are equivalent for any minimizable loss $L:\R\to\reals^\Y_+$.
  \begin{enumerate}
  \item $\risk{L}$ is polyhedral.
  \item $L$ has a finite representative set.
  \item $L$ embeds a discrete loss.
  \end{enumerate}
\end{corollary}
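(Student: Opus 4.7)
The plan is to establish the equivalence by proving the cycle of implications (1) $\Rightarrow$ (2) $\Rightarrow$ (3) $\Rightarrow$ (1), drawing on Lemma~\ref{lem:X}, Proposition~\ref{prop:representative-embeds-restriction}, and Lemma~\ref{lem:loss-restrict}, which together supply essentially all the machinery needed.

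The first two implications are nearly immediate from the results already in hand. For (1) $\Rightarrow$ (2), the hypothesis of Lemma~\ref{lem:X} is exactly that $\risk L$ is polyhedral, and its first stated conclusion is that $L$ has a finite representative set, giving (2) directly. For (2) $\Rightarrow$ (3), given a finite representative set $\Sc \subseteq \R$ for $L$, Proposition~\ref{prop:representative-embeds-restriction} states that $L$ embeds the loss $L|_\Sc$; finiteness of $\Sc$ then makes $L|_\Sc$ a discrete loss, so (3) holds.

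For (3) $\Rightarrow$ (1), the plan is to suppose $L$ embeds a discrete loss $\ell : \R' \to \reals^\Y_+$ via representative set $\Sc \subseteq \R'$ and injective embedding $\varphi$. Discreteness of $\ell$ forces $\Sc$ to be finite, so $\varphi(\Sc)$ is a finite subset of the report space. The first substep is to verify that $\varphi(\Sc)$ is representative for $L$: for any $p \in \simplex$, picking $r \in \Sc \cap \prop{\ell}(p)$ (which exists since $\Sc$ is representative for $\ell$) and applying the biconditional in eq.~\eqref{eq:embed-loss} gives $\varphi(r) \in \prop{L}(p) \cap \varphi(\Sc)$. Lemma~\ref{lem:loss-restrict} then yields $\risk L = \risk{L|_{\varphi(\Sc)}}$. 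Using condition (i) of the embedding definition, this restricted Bayes risk can be written as $p \mapsto \min_{r \in \Sc} \langle p, \ell(r)\rangle$, i.e., a pointwise minimum of finitely many linear functions of $p$, which is concave and polyhedral, establishing (1).

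There is no substantial obstacle in this argument; every step reduces to a direct invocation of an earlier result. The only place requiring any care is the verification in (3) $\Rightarrow$ (1) that $\varphi(\Sc)$ is a representative set for $L$, since this is the moment where one actually uses the biconditional in Definition~\ref{def:loss-embed}(ii), rather than just the matching-loss-values condition (i). One may also wish to note briefly that the corollary implicitly identifies the report space $\R$ with $\reals^d$ in order for the embedding definition to apply verbatim.
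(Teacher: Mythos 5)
Your proof is correct and follows essentially the same approach as the paper, relying on the same three ingredients (Lemma~\ref{lem:X} for $(1)\Rightarrow(2)$, Proposition~\ref{prop:representative-embeds-restriction} for $(2)\Rightarrow(3)$, and Lemma~\ref{lem:loss-restrict} for closing the cycle). The one place you add value is in making $(3)\Rightarrow(1)$ explicit: the paper's lead-in paragraph only spells out $(1)\Leftrightarrow(2)$ and $(2)\Rightarrow(3)$, leaving the closure to be inferred (e.g., via the forward direction of Proposition~\ref{prop:embed-bayes-risks}, whose proof again just uses the embedding conditions with Lemma~\ref{lem:loss-restrict}); your direct computation that $\risk L = \risk{L|_{\varphi(\Sc)}}$ equals a pointwise minimum of finitely many linear functions is a clean way to close the loop without citing the later proposition.
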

From Corollary~\ref{cor:poly-risk-fin-rep}, $L$ having a finite representative set is an equivalent condition to $L$ being minimizable and $\risk{L}$ being polyhedral.
(Recall that having a finite representative set already implies minimizability.)
As it is also a more succinct condition, we will use the former in the sequel.
In particular, the implications of Lemma~\ref{lem:X} follow whenever $L$ has a finite representative set.

\subsection{Equivalent condition: matching Bayes risks}\label{subsec:match-BR}

Lemma~\ref{lem:X} leads to another appealing equivalent condition to an embedding: a surrogate embeds a discrete loss if and only if their Bayes risks match.
The proof follows by mapping the two conditions of an embedding onto the geometric structure revealed by Lemma~\ref{lem:X}: (ii) if the properties have the same level sets, the Bayes risks have the same projections onto $\simplex$, and (i) if the loss values match, then the slopes of the Bayes risk must be identical as well.

\begin{proposition}\label{prop:embed-bayes-risks}
  Let discrete loss $\ell$ and minimizable loss $L$ be given.
  Then $L$ embeds $\ell$ if and only if $\risk{L}=\risk{\ell}$.
\end{proposition}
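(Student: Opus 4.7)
The plan is to prove the two directions separately, leveraging Lemma~\ref{lem:X} and the key observation that level sets of a property elicited by a loss depend only on the loss values at the report and the overall Bayes risk.

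For the forward direction, suppose $L$ embeds $\ell$ via representative set $\Sc\subseteq\R$ and injection $\varphi:\Sc\to\reals^d$. I would first use condition (ii) of Definition~\ref{def:loss-embed} to argue that $\varphi(\Sc)$ is representative for $L$: for any $p\in\simplex$, representativeness of $\Sc$ gives some $r \in \prop{\ell}(p)\cap\Sc$, and then (ii) yields $\varphi(r)\in\prop{L}(p)\cap\varphi(\Sc)$. Then by Lemma~\ref{lem:loss-restrict} applied to both $\ell|_\Sc$ and $L|_{\varphi(\Sc)}$, we get $\risk{\ell}=\risk{\ell|_\Sc}$ and $\risk{L}=\risk{L|_{\varphi(\Sc)}}$. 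Condition (i), $L(\varphi(r))=\ell(r)$, implies $\risk{L|_{\varphi(\Sc)}}(p) = \min_{r\in\Sc}\inprod{p}{L(\varphi(r))} = \min_{r\in\Sc}\inprod{p}{\ell(r)} = \risk{\ell|_\Sc}(p)$, and chaining equalities gives $\risk{L}=\risk{\ell}$.

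For the reverse direction, assume $\risk{L}=\risk{\ell}$. Since $\ell$ is discrete, $\risk{\ell}$ is polyhedral (the infimum of finitely many linear functions), hence so is $\risk{L}$. By Corollary~\ref{cor:poly-risk-fin-rep}, $L$ has a finite representative set, so Lemma~\ref{lem:X} applies to both losses; crucially, the sets $\V$ and $\Theta$ from Lemma~\ref{lem:X} depend only on the shared Bayes risk and hence are the \emph{same} for $L$ and $\ell$. Let $\Sc_\ell\subseteq\R$ be a minimum representative set for $\ell$ and $\Sc_L\subseteq\reals^d$ a minimum representative set for $L$. By items~\ref{item:X-min-V} and \ref{item:X-min-Theta} of Lemma~\ref{lem:X}, $\ell(\Sc_\ell)=\V=L(\Sc_L)$, and since minimum representative sets have cardinality $|\V|$, both $\ell$ and $L$ are injective on their respective sets. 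This lets me define $\varphi:\Sc_\ell\to\Sc_L$ by $\varphi(r) :=$ the unique $u\in\Sc_L$ with $L(u)=\ell(r)$; injectivity of $\varphi$ is inherited, and condition (i) of Definition~\ref{def:loss-embed} holds by construction.

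For condition (ii), I would use the central observation that $\prop{\ell}_r = \{p\in\simplex : \inprod{p}{\ell(r)} = \risk{\ell}(p)\}$, and analogously for $L$. Since $\ell(r) = L(\varphi(r))$ and $\risk{\ell}=\risk{L}$, these sets are literally identical:
\[
\prop{\ell}_r \;=\; \{p : \inprod{p}{\ell(r)} = \risk{\ell}(p)\} \;=\; \{p : \inprod{p}{L(\varphi(r))} = \risk{L}(p)\} \;=\; \prop{L}_{\varphi(r)}~.
\]
Unfolding level-set membership then gives $r\in\prop{\ell}(p)\iff p\in\prop{\ell}_r\iff p\in\prop{L}_{\varphi(r)}\iff\varphi(r)\in\prop{L}(p)$, completing condition (ii).

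The main potential obstacle is subtle: I need to be sure that the minimum-cardinality condition really forces $\ell$ and $L$ to be injective on $\Sc_\ell$ and $\Sc_L$. This follows because surjectivity onto $\V$ combined with $|\Sc_\ell|=|\V|$ (the defining property of minimum representative sets in Lemma~\ref{lem:X}) forces injectivity, but it relies on $\V$ and $\Theta$ having the same cardinality across both losses — which is exactly what the uniqueness clauses of Lemma~\ref{lem:X} provide. Everything else is algebraic manipulation once that structure is in place.
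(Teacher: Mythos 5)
Your proof is correct and follows essentially the same route as the paper's: the forward direction chains Lemma~\ref{lem:loss-restrict} through the embedding, and the reverse direction invokes Lemma~\ref{lem:X}(\ref{item:X-min-V}) to obtain matching minimum representative sets and defines $\varphi$ by matching loss vectors. The only minor variation is in verifying condition (ii): the paper passes through Proposition~\ref{prop:representative-embeds-restriction} applied to the restrictions $\ell|_{\R^*}$ and $L|_{\U^*}$, whereas you invoke the level-set identity $\prop{\ell}_r = \{p : \inprod{p}{\ell(r)} = \risk{\ell}(p)\}$ directly, which is slightly more streamlined but rests on the same ingredients.
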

\begin{proof}
  Define $\Gamma = \prop{L}$ and $\gamma = \prop{\ell}$.
  
  Suppose $L$ embeds $\ell$, so we have some $\Sc\subseteq \R$ which is representative for $\ell$ and an embedding $\varphi:\Sc\to\reals^d$; take $\U := \varphi(\Sc)$.
  Since $\Sc$ is representative for $\ell$, by embedding condition (ii) we have $\{\gamma_s \mid s\in\Sc\} = \{\Gamma_u \mid u\in\U\}$, so $\U$ is representative for $L$.
  By Lemma~\ref{lem:loss-restrict}, we have $\risk{\ell} = \risk{\ell|_{\Sc}}$ and $\risk{L} = \risk{L|_{\U}}$.
  As $L(\varphi(\cdot)) = \ell(\cdot)$ by embedding condition (i), for all $p\in\simplex$ we have
  \begin{equation*}
    \risk{\ell}(p) = \risk{\ell|_\Sc}(p) = \min_{r \in \Sc}\inprod{p}{\ell(r)} = \min_{r \in \Sc}\inprod{p}{L(\varphi(r))} = \min_{u \in \U}\inprod{p}{L(u)} = \risk{L|_\U}(p) = \risk{L}(p)~.
  \end{equation*}
  
	For the reverse implication, assume $\risk{L} = \risk{\ell}$, which are polyhedral functions as $\ell$ is discrete.
  From Lemma~\ref{lem:X}(\ref{item:X-min-V}), we have some set $\V\subseteq\reals^\Y_+$ and minimum representative sets $\R^* \subseteq \R$ and $\U^* \subseteq \U$, for $\ell$ and $L$ respectively, such that $\ell(\R^*) = \V = L(\U^*)$.
  As $\R^*$ and $\U^*$ are miniumum, they cannot repeat loss vectors, and thus $|\R^*|=|\ell(\R^*)|$ and $|L(\U^*)|=|\U^*|$.
  We conclude that $\R^*$ and $\U^*$ are both in bijection with $\V$.
  The map $\varphi :\R^* \to \reals^d$, given by $\varphi(r) = u \in \U^*$ where $\ell(r) = L(u)$, is therefore well-defined.
  Condition (i) of an embedding is immediate.
  From Proposition~\ref{prop:representative-embeds-restriction}, $\ell$ embeds $\ell|_{\R^*}$ and $L$ embeds $L|_{\U^*}$, both via the identity embedding.
  Using condition (ii) from both embeddings, for all $p\in\simplex$ and $r\in\R^*$, we have
  \begin{equation*}
    r \in \gamma(p) \iff r \in \prop{\ell|_{\R^*}}(p) \iff \varphi(r) \in \prop{L|_{\U^*}}(p)
    \iff \varphi(r) \in \prop{L}(p)~,
  \end{equation*}
  giving condition (ii).
\end{proof}

We use this fact in the proof of Theorem~\ref{thm:discrete-loss-poly-embeddable} to show that every discrete loss is embedded by some polyhedral surrogate.
See Figure~\ref{fig:bayes-risks-01} for an illustration.

\subsection{Trimming a loss}
\label{sec:trim}

Central to the structural results in Lemma~\ref{lem:X} is the existence of a canonical set of loss vectors $\V$ which match the loss vectors of any minimum representative set.
This fact may seem surprising when one considers that losses may have many mimimum representative sets.
For example, consider hinge loss with a spurious extra dimension, i.e., $L:\reals^2\to\reals^\Y$, $L((r_1, r_2))_y = \max(0,1-r_1y)$ for $\Y = \{-1,+1\}$.
Here the minimum representative sets are exactly the two-element sets of the form $\{(-1,a),(1,b)\}$ for any $a,b\in\reals$. 
Lemma~\ref{lem:X}(\ref{item:X-min-V}) states that, while the minimum representative set is not unique, its loss vectors are.

Motivated by this observation, let us define the ``trim'' of a loss to be this unique set $\V$ of loss vectors induced by any minimum representative set, which again is well-defined by Lemma~\ref{lem:X}(\ref{item:X-min-V}).
\begin{definition}[Trim]\label{def:trim-loss}
  Given a loss $L:\R \to \reals_+^\Y$ with a finite representative set, we define $\trimcover(L) = \{L(r) \mid r \in \R^*\}$ given any minimum representative set $\R^*$ for $L$.
\end{definition}

Using this notion of trimming a loss, we can again recast our embedding condition: a loss embeds another if and only if they induce the same loss vectors, or have the same $\trimcover$.

\begin{proposition}\label{prop:embed-iff-trims-equal}
  Let $L:\reals^d\to\reals^\Y_+$ have a finite representative set, and let $\ell:\R\to\reals^\Y_+$ be a discrete loss.
  Then $L$ embeds $\ell$ if and only if $\trimcover(L) = \trimcover(\ell)$.
  Furthermore, $L$ tightly embeds $\ell$ if and only if $\ell$ is injective and $\trimcover(L) = \ell(\R)$.
\end{proposition}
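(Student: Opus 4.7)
The plan is to reduce both statements to a combination of Proposition~\ref{prop:embed-bayes-risks} and Lemma~\ref{lem:X}. The key bridge I would establish first is the identification $\trimcover(L) = \V$, where $\V$ is the canonical loss-vector set of Lemma~\ref{lem:X}: by Definition~\ref{def:trim-loss}, $\trimcover(L) = L(\R^*)$ for any minimum representative set $\R^*$, and Lemma~\ref{lem:X}(\ref{item:X-min-V}) says this equals $\V$. Since $\ell$ is discrete, its Bayes risk is polyhedral and Lemma~\ref{lem:X} applies to it as well, giving $\trimcover(\ell) = \V(\risk \ell)$. Given this identification, the proposition becomes a matter of translating the embedding condition and the Bayes-risk condition through these objects.

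For the forward direction of the first equivalence, I would argue that if $L$ embeds $\ell$, then Proposition~\ref{prop:embed-bayes-risks} gives $\risk L = \risk \ell$. Since Lemma~\ref{lem:X} asserts that $\V$ is uniquely determined by the Bayes risk alone, we obtain
\[
\trimcover(L) \;=\; \V(\risk L) \;=\; \V(\risk \ell) \;=\; \trimcover(\ell)~.
\]
For the reverse direction, I would use the formula $\risk L(p) = \min_{r \in \R^*} \inprod{p}{L(r)} = \min_{v \in \trimcover(L)} \inprod{p}{v}$, which follows from Lemma~\ref{lem:loss-restrict} applied to a minimum representative set $\R^*$ for $L$, together with the definition of $\trimcover(L)$. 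The analogous identity holds for $\ell$. Therefore $\trimcover(L) = \trimcover(\ell)$ implies the two Bayes risks agree pointwise on $\simplex$, and Proposition~\ref{prop:embed-bayes-risks} yields that $L$ embeds $\ell$.

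For the tight-embedding statement, I would appeal directly to Lemma~\ref{lem:X}(\ref{item:X-tight-embed}), which says that $L$ tightly embeds $\ell : \R \to \reals^\Y_+$ if and only if $\ell$ is injective and $\ell(\R) = \V$. Substituting $\V = \trimcover(L)$ via the bridge established above gives exactly the claim.

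The proof is almost entirely bookkeeping once the bridge $\trimcover(\cdot) = \V$ is in place; the main point requiring care is simply verifying the hypotheses of the tools invoked—that $L$ is minimizable with polyhedral Bayes risk (given, since $L$ is assumed to have a finite representative set, and one may appeal to Corollary~\ref{cor:poly-risk-fin-rep}), and that $\ell$ is discrete so Lemma~\ref{lem:X} applies to $\ell$ as well. There is no genuine obstacle; the structural content all lives in Lemma~\ref{lem:X} and Proposition~\ref{prop:embed-bayes-risks}, and the proposition is the clean packaging of those results in the language of $\trimcover$.
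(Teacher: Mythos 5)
Your proof is correct and follows the same overall strategy as the paper: reduce to matching Bayes risks via Proposition~\ref{prop:embed-bayes-risks}, using the fact that $\trimcover$ is uniquely determined by the Bayes risk (Lemma~\ref{lem:X}(\ref{item:X-min-V})). The forward direction and the tight-embedding statement are handled identically. The only place you diverge is the reverse direction: you recover $\risk L = \risk \ell$ directly by writing $\risk{L}(p) = \min_{v \in \trimcover(L)} \inprod{p}{v}$ (via Lemma~\ref{lem:loss-restrict} applied to a minimum representative set), whereas the paper introduces the auxiliary discrete loss $\ell_{\trimcover} : \V \to \V$, $v \mapsto v$, shows that both $L$ and $\ell$ tightly embed it by Lemma~\ref{lem:X}(\ref{item:X-tight-embed}), and then applies Proposition~\ref{prop:embed-bayes-risks} twice to get $\risk L = \risk{\ell_{\trimcover}} = \risk \ell$. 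Your route is a bit more direct and avoids the intermediate object; the paper's route has the side benefit of exhibiting a canonical discrete loss that both $L$ and $\ell$ tightly embed, which is useful intuition but not needed for the result. Both are correct and rely on the same underlying structural lemmas.
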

\begin{proof}
  As $L$ has a finite representative set, it is minimizable.
  Proposition~\ref{prop:embed-bayes-risks} gives $L$ embeds $\ell$ if and only if $\risk L = \risk \ell$.
  If $\risk L = \risk \ell$, Lemma~\ref{lem:X}(\ref{item:X-min-V}) gives $\trim(L) = \trim(\ell)$.
  For the converse, suppose $\trim(L) = \trim(\ell) =: \V$.
  Define the discrete loss $\ell_\trim : \V \to \V, v\mapsto v$.
  Then $\ell_\trim$ is injective and $\ell_\trim(\V) = \V$, so from Lemma~\ref{lem:X}(\ref{item:X-tight-embed}), both $L$ and $\ell$ tightly embed $\ell_\trim$.
  We conclude $\risk L = \risk{\ell_\trim} = \risk \ell$ from Proposition~\ref{prop:embed-bayes-risks}.
  The second statement also follows directly from Lemma~\ref{lem:X}(\ref{item:X-tight-embed}).
\end{proof}

In a strong sonse, the trim operation reduces a loss to its core: the unique minimal set of loss vectors that drive its statistical behavior.
One can therefore think of designing consistent convex surrogates as trying to ``fill out'' this minimal set with additional loss vectors so that one attains convexity while keeping trim the same.

\subsection{Minimum representative sets and non-redundancy}
\label{sec:min-rep-sets-trim}

The condition that a representative set be minimum implies that one has identified exactly the ``active'' reports of a loss, in some sense.
We now relate this condition to another natural notion from the property elicitation literature: non-redundancy~\cite{frongillo2014general,lambert2018elicitation}.
Intuitively, a loss is non-redundant if no report is weakly dominated by another report.

\begin{definition}[Non-redundancy]\label{def:nonredundant}
  A loss $L : \R \to \reals^\Y_+$ eliciting $\Gamma:\simplex \toto \R$ is \emph{redundant} if there are reports $r, r' \in \R$ with $r \neq r'$ such that $\Gamma_r \subseteq \Gamma_{r'}$, and \emph{non-redundant} otherwise.
\end{definition}

From the structural result of Lemma~\ref{lem:X}, we can see that in fact these two notions are equivalent when $L$ has a polyhedral Bayes risk.
\begin{proposition}\label{prop:tfae-min-rep-nonredundant}
  Let $L:\R\to\reals^\Y_+$ have a finite representative set $\R'$.
  Then $\R'$ is a minimum representative set for $L$ if and only if $L|_{\R'}$ is non-redundant.
\end{proposition}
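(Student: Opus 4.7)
My plan is to reduce the statement to the structural facts collected in Lemma~\ref{lem:X}, using the observation that the property elicited by $L|_{\R'}$ has level sets that coincide with those of $\Gamma := \prop{L}$ restricted to $\R'$. Specifically, by Lemma~\ref{lem:loss-restrict}, $\prop{L|_{\R'}}(p) = \Gamma(p) \cap \R'$, so for each $r \in \R'$ the level set of $L|_{\R'}$ at $r$ is exactly $\Gamma_r$. Thus non-redundancy of $L|_{\R'}$ translates to: there are no distinct $r, r' \in \R'$ with $\Gamma_r \subseteq \Gamma_{r'}$.

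For the forward direction, suppose $\R'$ is a minimum representative set and, for contradiction, that $\Gamma_r \subseteq \Gamma_{r'}$ for some distinct $r, r' \in \R'$. I would argue $\R' \setminus \{r\}$ is still representative: for any $p \in \simplex$, the set $\Gamma(p) \cap \R'$ is nonempty; if it contains any $s \neq r$ we are done, and otherwise it equals $\{r\}$, giving $p \in \Gamma_r \subseteq \Gamma_{r'}$, so $r' \in \Gamma(p) \cap \R'$, contradicting the assumption that the intersection is $\{r\}$. This contradicts minimum cardinality.

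For the reverse direction, assume $L|_{\R'}$ is non-redundant. By Lemma~\ref{lem:X}(\ref{item:X-rep-contain-min}), $\R'$ contains some minimum representative set $\R^* \subseteq \R'$; I want to show $\R^* = \R'$. Suppose not, and pick $r \in \R' \setminus \R^*$. By Lemma~\ref{lem:X}(\ref{item:X-redundant}), there exists some $\theta \in \Theta$ with $\Gamma_r \subseteq \theta$. Minimality of $\R^*$ combined with Lemma~\ref{lem:X}(\ref{item:X-min-Theta}) gives $\{\Gamma_s : s \in \R^*\} = \Theta$, so $\theta = \Gamma_s$ for some $s \in \R^*$. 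Since $s \in \R^* \subseteq \R'$ and $r \notin \R^*$, we have $r \neq s$, so $\Gamma_r \subseteq \Gamma_s$ with distinct $r, s \in \R'$, contradicting non-redundancy of $L|_{\R'}$.

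The main obstacle is mostly conceptual: one has to cleanly separate the property elicited by $L$ on its full domain from the property elicited by the restriction $L|_{\R'}$, and verify that non-redundancy of the latter is literally a statement about the level sets of the former indexed by $\R'$. Once that identification is made, both directions are short applications of the structural items in Lemma~\ref{lem:X}, with no additional geometric work required.
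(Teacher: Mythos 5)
Your proof is correct and follows essentially the same route as the paper's: contrapositive (equivalently, contradiction) for the forward direction by removing the redundant report, and Lemma~\ref{lem:X}(\ref{item:X-rep-contain-min},\ref{item:X-min-Theta},\ref{item:X-redundant}) for the reverse. The one thing you spell out that the paper leaves implicit is the identification (via Lemma~\ref{lem:loss-restrict}) of the level sets of $\prop{L|_{\R'}}$ with the $\Gamma_r$ for $r\in\R'$, which is a helpful clarification but not a different approach.
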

\begin{proof}
  Let $\Gamma = \prop{L}$.
  Suppose first that $L|_{\R'}$ is redundant.
  Then there exist $r,r' \in \R'$ such that $\Gamma_r \subseteq \Gamma_{r'}$.
  Thus, for all $p \in \Gamma_r$, we have $\{r, r'\} \subseteq \Gamma(p)$.
  Therefore $\R' \setminus \{r\}$ still a representative set, so $\R'$ is not minimum.

  Now suppose $L|_{\R'}$ is non-redundant.
  As $\R'$ is a representative set, Lemma~\ref{lem:X}(\ref{item:X-rep-contain-min}) gives some minimum representative set $\Sc \subseteq \R'$.
  Suppose we had some $r \in \R' \setminus \Sc$.
  Now Lemma~\ref{lem:X}(\ref{item:X-min-Theta},\ref{item:X-redundant}) gives some $s\in\Sc$ such that $\Gamma_r \subseteq \Gamma_s$, which contradicts $L|_{\R'}$ being non-redundant.
  We conclude $L(\Sc)=L(\R')$, meaning $\R'$ is a minimum representative set.
\end{proof}

\begin{corollary}\label{cor:tight-embed-min-rep}
  Let loss $L:\R\to\reals^\Y_+$ with finite representative set $\R'$ be given.
  Then $L$ tightly embeds $L|_{\R'}$ if and only if $L|_{\R'}$ is non-redundant.
\end{corollary}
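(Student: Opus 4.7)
The plan is to derive this corollary by chaining the two characterizations already established: Lemma~\ref{lem:X}(\ref{item:X-tight-embed}), which describes tight embeddings in terms of injectivity and matching the canonical loss-vector set $\V$, and Proposition~\ref{prop:tfae-min-rep-nonredundant}, which relates minimum representative sets and non-redundancy. The bridge between them is Lemma~\ref{lem:X}(\ref{item:X-min-V}), which characterizes minimum representative sets via $L(\R') = \V$.

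For the forward direction, I would assume $L$ tightly embeds $L|_{\R'}$. Applying Lemma~\ref{lem:X}(\ref{item:X-tight-embed}) with $\ell = L|_{\R'}$ gives that $L|_{\R'}$ is injective and $L|_{\R'}(\R') = \V$. Since $L|_{\R'}(\R') = L(\R')$, Lemma~\ref{lem:X}(\ref{item:X-min-V}) then yields that $\R'$ is a minimum representative set for $L$, and Proposition~\ref{prop:tfae-min-rep-nonredundant} finishes the direction by giving non-redundancy of $L|_{\R'}$.

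For the reverse direction, I would assume $L|_{\R'}$ is non-redundant. I first observe that non-redundancy forces injectivity on $\R'$: if $L(r) = L(r')$ for distinct $r,r' \in \R'$, then since the level sets $\Gamma_r$ depend only on the loss vector $L(r)$, we would have $\Gamma_r = \Gamma_{r'}$, contradicting non-redundancy. Proposition~\ref{prop:tfae-min-rep-nonredundant} then gives that $\R'$ is a minimum representative set, so by Lemma~\ref{lem:X}(\ref{item:X-min-V}), $L|_{\R'}(\R') = \V$. Applying Lemma~\ref{lem:X}(\ref{item:X-tight-embed}) once more, with $\ell = L|_{\R'}$, concludes that $L$ tightly embeds $L|_{\R'}$.

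There is no substantial obstacle; the only minor subtlety is the brief observation that non-redundancy implies injectivity, since Lemma~\ref{lem:X}(\ref{item:X-tight-embed}) requires the embedded loss to be injective. Everything else is a direct application of the two cited structural results.
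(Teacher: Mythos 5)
Your proof is mathematically correct, and you rightly flag the one real subtlety: Lemma~\ref{lem:X}(\ref{item:X-tight-embed}) requires injectivity, which you obtain from non-redundancy (two reports with the same loss vector have the same level set, contradicting non-redundancy). However, you take a different route from the paper, and the route matters for the paper's dependency structure.

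The paper treats this corollary as following directly from Proposition~\ref{prop:tfae-min-rep-nonredundant} plus the essentially definitional fact that, for the full-domain restriction $L|_{\R'}$, ``$L$ tightly embeds $L|_{\R'}$'' is the same as ``$\R'$ is a minimum representative set for $L$'' (the embedding is the identity on $\R'$, as in Proposition~\ref{prop:representative-embeds-restriction}, and ``tight'' just asks that the representative set used be minimum). The crucial point is that the paper later proves Lemma~\ref{lem:X}(\ref{item:X-tight-embed}) as a consequence of Lemma~\ref{lem:X}(\ref{item:X-min-V}) together with Corollary~\ref{cor:tight-embed-min-rep} itself. So if your proof of the corollary is inserted verbatim, it introduces a circular dependency: you invoke Lemma~\ref{lem:X}(\ref{item:X-tight-embed}) to prove the very corollary that the paper uses to establish Lemma~\ref{lem:X}(\ref{item:X-tight-embed}). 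The circularity is inessential --- Lemma~\ref{lem:X}(\ref{item:X-tight-embed}) can be obtained independently from Lemma~\ref{lem:X}(\ref{item:X-min-V}) by the same unwinding of the definition --- but as written, your argument is not a self-contained replacement for the paper's; it relies on a statement that, in the paper's own logical order, comes after. To fix this, bypass Lemma~\ref{lem:X}(\ref{item:X-tight-embed}) entirely: show that $L$ tightly embeds $L|_{\R'}$ iff $\R'$ is a minimum representative set (from the definition and Proposition~\ref{prop:representative-embeds-restriction}), then apply Proposition~\ref{prop:tfae-min-rep-nonredundant}.
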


In fact, we can show something stronger: the reports in minimum representative sets are precisely those which are not strictly redundant.
To formalize this statement, given $\Gamma : \simplex \toto \R$, let $\red(\Gamma) := \{r\in\R \mid \exists r'\in\R,\; \Gamma_r \subsetneq \Gamma_{r'}\}$ be the set of strictly redundant reports.
Similarly, for minimizable $L$, let $\red(L) := \red(\prop L)$.

\begin{proposition}
  Let $L : \R \to \reals^\Y_+$ have a finite representative set.
  Let $\R'$ be the union of all minimum representative sets for $L$.
  Then $\R' = \R \setminus \red(L)$.
\end{proposition}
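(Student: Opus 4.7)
The plan is to establish the equality by proving two inclusions, with the structural content of Lemma~\ref{lem:X} (especially items~\ref{item:X-min-Theta}, \ref{item:X-full-dim}, \ref{item:X-redundant}) doing most of the work once a single auxiliary fact is in place. The auxiliary fact I would prove first is that distinct elements of $\Theta$ cannot strictly contain each other: if $\theta \subsetneq \theta'$ for $\theta,\theta' \in \Theta$, pick any minimum representative set $\Sc_0$ and use Lemma~\ref{lem:X}(\ref{item:X-min-Theta}) to obtain $s,s'\in \Sc_0$ with $\Gamma_s=\theta$, $\Gamma_{s'}=\theta'$; then for every $p$ with $s\in\Gamma(p)$ we have $p\in\theta\subseteq\theta'$, so $s'\in\Gamma(p)$, meaning $\Sc_0\setminus\{s\}$ is still representative (via Lemma~\ref{lem:X}(\ref{item:X-rep-Theta})), contradicting minimality. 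The same style of argument shows that $s\mapsto \Gamma_s$ is injective on any minimum representative set, a fact I would use later.

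For the forward inclusion $\R'\subseteq\R\setminus\red(L)$, suppose $r\in\Sc$ for some minimum $\Sc$ but $\Gamma_r\subsetneq\Gamma_{r'}$. Lemma~\ref{lem:X}(\ref{item:X-min-Theta}) gives $\Gamma_r\in\Theta$, while Lemma~\ref{lem:X}(\ref{item:X-redundant}) gives some $\theta'\in\Theta$ with $\Gamma_{r'}\subseteq\theta'$. Then $\Gamma_r\subsetneq\theta'$, and the auxiliary fact forces $\Gamma_r=\theta'$; but that sandwiches $\Gamma_{r'}$ between two sets equal to $\Gamma_r$, contradicting $\Gamma_r\subsetneq\Gamma_{r'}$.

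For the reverse inclusion, given a non-strictly-redundant $r$, use Lemma~\ref{lem:X}(\ref{item:X-redundant}) to pick $\theta\in\Theta$ with $\Gamma_r\subseteq\theta$, and any minimum $\Sc_0$ to pick $s_0\in\Sc_0$ with $\Gamma_{s_0}=\theta$ (Lemma~\ref{lem:X}(\ref{item:X-min-Theta})). Non-redundancy of $r$ forbids $\Gamma_r\subsetneq\Gamma_{s_0}$, so $\Gamma_r=\theta$. If $r\in\Sc_0$ we are done; otherwise form $\Sc=(\Sc_0\setminus\{s_0\})\cup\{r\}$. By injectivity of $s\mapsto\Gamma_s$ on $\Sc_0$, no other element of $\Sc_0$ carries the level set $\theta$, so $\{\Gamma_s\mid s\in\Sc\}=\Theta$, and Lemma~\ref{lem:X}(\ref{item:X-min-Theta}) certifies $\Sc$ as a minimum representative set containing $r$.

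The main obstacle is the auxiliary non-containment claim about $\Theta$; everything else is an essentially mechanical swap argument built on Lemma~\ref{lem:X}. One has to be a little careful with the edge cases $r = s_0$ and $r\in\Sc_0$ in the swap, but they resolve immediately, and the injectivity of the level-set map on minimum representative sets is the natural by-product of the same minimality-removal argument used for the auxiliary claim.
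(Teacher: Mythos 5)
Your proof is correct and takes essentially the same route as the paper: both directions hinge on Lemma~\ref{lem:X}(\ref{item:X-min-Theta},\ref{item:X-redundant}) plus a report-swap argument, and your auxiliary claim that elements of $\Theta$ cannot strictly contain each other is exactly the observation the paper makes inline (there phrased as ``$\Gamma_s \subsetneq \Gamma_{s'}$ for $s,s'$ in a minimum representative set contradicts minimality''). Two minor notes: the parenthetical citation of Lemma~\ref{lem:X}(\ref{item:X-rep-Theta}) in the auxiliary claim is misplaced---the direct argument preceding it (any $p$ served by $s$ is also served by $s'$, so removing $s$ preserves representativeness, contradicting minimum cardinality) is what does the work---and the injectivity of $s\mapsto\Gamma_s$ is not actually needed in the reverse inclusion, since $\{\Gamma_s \mid s \in (\Sc_0\setminus\{s_0\})\cup\{r\}\} = \Theta$ follows by a one-line set computation once $\Gamma_r = \Gamma_{s_0}$ is established.
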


\begin{proof}
  Let $\Gamma = \prop L$.
  Let $\Sc$ be a minimum representative set for $L$, and let $s\in\Sc$.
  Suppose for a contradiction that $s\in\red(\Gamma)$.
  Then we have some $r\in\R$ with $\Gamma_s \subsetneq \Gamma_r$.
  From Lemma~\ref{lem:X}(\ref{item:X-min-Theta},\ref{item:X-redundant}) we have some $s'\in\Sc$ such that $\Gamma_r \subseteq \Gamma_{s'}$.
  But now $\Gamma_s \subsetneq \Gamma_r \subseteq \Gamma_{s'}$, contradicting $\Sc$ being minimum representative.
  Thus $\Sc \subseteq \R \setminus \red(\Gamma)$.

  For the reverse inclusion, let $r\in\R\setminus\red(\Gamma)$.
  Let $\Sc$ again be a minimum representative set for $L$.
  From Lemma~\ref{lem:X}(\ref{item:X-min-Theta},\ref{item:X-redundant}), we have some $s\in\Sc$ such that $\Gamma_r \subseteq \Gamma_s$.
  By definition of $\red(L)$, we conclude $\Gamma_r = \Gamma_s$.
  Now take $\Sc' = (\Sc \setminus \{s\}) \cup \{r\}$, that is, the same set of reports with $r$ replacing $s$.
  We have $\{\Gamma_s \mid s\in\Sc\} = \{\Gamma_{s'} \mid s'\in\Sc'\}$, and thus $\Sc'$ is a minimum representative for $L$ by Lemma~\ref{lem:X}(\ref{item:X-min-Theta}).
  As $r\in\Sc'$, we have $r \in \R'$ and we are done.
\end{proof}

As a corollary, we can state another characterization of $\trim$ in terms of redundant reports.
The result follows immediately from the definition of $\trim$.

\begin{corollary}\label{cor:trim-loss-red}
  Let $L : \R \to \reals^\Y_+$ have a finite representative set.
  Then $\trimcover(L) = L(\R \setminus \red(L))$.
\end{corollary}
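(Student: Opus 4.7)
The plan is to read off the corollary as a one-line consequence of the preceding proposition combined with the definition of $\trim$ (Definition~\ref{def:trim-loss}) and Lemma~\ref{lem:X}(\ref{item:X-min-V}). The key observation is that $\trim(L)$ is defined via \emph{any} minimum representative set, so if we can express $\R \setminus \red(L)$ as a union of such sets, the image under $L$ collapses to the canonical set $\V$.

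Concretely, I would proceed as follows. First, invoke the previous proposition to write $\R \setminus \red(L) = \bigcup_{\R^*} \R^*$, where the union ranges over all minimum representative sets $\R^*$ for $L$. Second, apply $L$ to both sides to get
\begin{equation*}
L(\R \setminus \red(L)) \;=\; \bigcup_{\R^*} L(\R^*).
\end{equation*}
Third, by Lemma~\ref{lem:X}(\ref{item:X-min-V}), every minimum representative set $\R^*$ satisfies $L(\R^*) = \V$, where $\V$ is the canonical set of loss vectors associated to $L$. By Definition~\ref{def:trim-loss}, $\trim(L) = \V$, so each term in the union equals $\trim(L)$, giving $L(\R \setminus \red(L)) = \trim(L)$.

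There is essentially no obstacle here: the two non-trivial ingredients (that $\R \setminus \red(L)$ is the union of minimum representative sets, and that all minimum representative sets have the same image $\V$) have already been established. The proof is really just a bookkeeping exercise chaining together the previous proposition, the definition of $\trim$, and the uniqueness portion of Lemma~\ref{lem:X}. The only thing to be slightly careful about is making sure the union is nonempty — which is guaranteed because $L$ has a finite representative set, hence by Lemma~\ref{lem:X}(\ref{item:X-rep-contain-min}) at least one minimum representative set exists.
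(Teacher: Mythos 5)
Your proof is correct and matches the paper's intended argument: the paper states the corollary follows immediately from the definition of $\trim$ together with the preceding proposition, and you have spelled out exactly that chain (decompose $\R\setminus\red(L)$ as the union of minimum representative sets, apply $L$, and collapse each image to $\V = \trim(L)$ via Lemma~\ref{lem:X}(\ref{item:X-min-V})). The nonemptiness remark is a nice safety check but not a point of divergence.
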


This result motivates the analogous definition for properties, $\trimred(\Gamma) := \{\Gamma_r \mid r \in \R\setminus\red(\Gamma)\}$.
We leverage this definition next, to study embeddings at the property level.

\subsection{A property elicitation perspective on trimmed losses}
\label{sec:prop-trim}

We conclude this section with a structural result similar to Lemma~\ref{lem:X}, but for properties.
To do so, we must first generalize the definition of embeddeding to properties.
We say a property $\Gamma:\simplex\toto\reals^d$ embeds a finite property $\gamma:\simplex\toto\R$ if condition (ii) of Definition~\ref{def:loss-embed} holds.
In other words, $\Gamma$ embeds $\gamma$ if we have some representative set $\Sc\subseteq\R$ for $\gamma$ and embedding $\varphi:\Sc\to\reals^d$ such that for all $s\in\Sc$ we have $\gamma_s = \Gamma_{\varphi(s)}$.

Roughly, our result is as follows.
First, if $\Gamma$ embeds $\gamma$, the level sets of $\Gamma$ must all be redundant relative to $\gamma$.
In other words, $\Gamma$ is exactly the property $\gamma$ up to relabelling reports, but potentially with other reports ``filling in the gaps'' between the embedded reports of $\gamma$.
When working with convex surrogates, extra reports often arise in the convex hull of the embedded reports.
In this sense, we can regard embedding as only a slight departure from direct elicitation: if a loss $L$ directly elicits $\Gamma$ which embeds $\gamma$, we can almost think of $L$ as eliciting $\gamma$ itself.
Finally, we have an important converse: if $\Gamma$ has finitely many full-dimensional level sets, or equivalently, if $\trimred(\Gamma)$ is finite, then $\Gamma$ must embed some finite elicitable property with the same full-dimensional level sets.

The proof relies heavily on Lemma~\ref{lem:X}.
The statements about level sets use the following corollary of Proposition~\ref{prop:embed-iff-trims-equal} for properties.
\begin{corollary}\label{cor:trim-prop-red}
  Let $\Gamma : \simplex \toto \R$ be an elicitable property with a finite representative set.
  Then $\trimred(\Gamma)$ is the set of full-dimensional level sets of $\Gamma$.
\end{corollary}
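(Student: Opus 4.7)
The plan is to fix a loss $L$ that elicits $\Gamma$ and apply Lemma~\ref{lem:X} to it. Since $\Gamma$ is elicitable, such an $L$ exists with $\prop L = \Gamma$, and any representative set for $\Gamma$ is one for $L$, so $L$ has a finite representative set; Corollary~\ref{cor:poly-risk-fin-rep} then ensures $\risk L$ is polyhedral, so Lemma~\ref{lem:X} applies. By part~(\ref{item:X-full-dim}) of that lemma, the set $\Theta$ it provides is precisely the set of full-dimensional level sets of $\Gamma$, so the corollary reduces to showing $\trimred(\Gamma) = \Theta$.

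For the easier inclusion $\trimred(\Gamma) \subseteq \Theta$, fix $r \in \R \setminus \red(\Gamma)$. Lemma~\ref{lem:X}(\ref{item:X-redundant}) yields some $\theta \in \Theta$ with $\Gamma_r \subseteq \theta$, and Lemma~\ref{lem:X}(\ref{item:X-min-Theta}) produces $r^* \in \R$ with $\Gamma_{r^*} = \theta$. Non-redundancy of $r$ rules out strict containment, so $\Gamma_r = \theta \in \Theta$.

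For the reverse inclusion $\Theta \subseteq \trimred(\Gamma)$, given $\theta \in \Theta$ pick $r$ with $\Gamma_r = \theta$ via Lemma~\ref{lem:X}(\ref{item:X-min-Theta}); we must argue $r \notin \red(\Gamma)$. I expect this to be the main obstacle, and concretely it reduces to establishing that no two distinct elements of $\Theta$ can be strictly nested. Suppose for contradiction $\Gamma_r \subsetneq \Gamma_{r'}$, and apply Lemma~\ref{lem:X}(\ref{item:X-redundant}) to obtain $\theta' \in \Theta$ with $\Gamma_{r'} \subseteq \theta'$, so that $\theta \subsetneq \theta'$. To derive a contradiction, take any minimum representative set $\Sc^*$; by Lemma~\ref{lem:X}(\ref{item:X-min-Theta}), $\{\Gamma_s : s \in \Sc^*\} = \Theta$, so there exist (necessarily distinct) $s, s' \in \Sc^*$ with $\Gamma_s = \theta$ and $\Gamma_{s'} = \theta'$. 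Because $\theta \subseteq \theta'$, any $p$ with $p \in \Gamma_s$ also satisfies $p \in \Gamma_{s'}$, so $\Sc^* \setminus \{s\}$ remains representative, contradicting the minimality of $\Sc^*$. Hence $r \notin \red(\Gamma)$, yielding $\theta = \Gamma_r \in \trimred(\Gamma)$ and completing the proof.
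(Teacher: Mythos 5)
Your proof is correct and follows essentially the same route as the paper's: both rest on Lemma~\ref{lem:X}(\ref{item:X-min-Theta},\ref{item:X-full-dim},\ref{item:X-redundant}) and the observation that a strictly nested pair of level sets would let one shrink a minimum representative set. The only difference is that the paper dispatches the key step by citing an earlier proposition (that $\R \setminus \red(L)$ is exactly the union of all minimum representative sets), whereas you re-derive that content inline; the underlying argument is identical.
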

\begin{proof}
  Let $L$ elicit $\Gamma$.
  From Lemma~\ref{lem:X}(\ref{item:X-min-Theta},\ref{item:X-full-dim}), for any finite minumum representative set $\Sc\subseteq\R$, the set $\{\Gamma_s\mid s\in\Sc\}$ is exactly the set of full-dimensional level sets $\Theta$ of $\Gamma$.
  From Proposition~\ref{prop:tfae-min-rep-nonredundant}, we have $r \in \R\setminus \red(\Gamma)$ if and only if $r$ is an element of some minimum representative set.
  As $\Gamma$ has at least one minimum representative set, we conclude $\trimred(\Gamma) = \{\Gamma_r \mid r\in \R\setminus\red(\Gamma)\} = \Theta$.  
\end{proof}

\begin{proposition}\label{prop:embed-trim}
  Let $\Gamma:\simplex\toto\reals^d$ be an elicitable property.
  The following are equivalent:
  \begin{enumerate}\setlength{\itemsep}{0pt}
  \item $\Gamma$ embeds a elicitable finite property $\gamma:\simplex \toto \R$.
  \item $\trimred(\Gamma)$ is a finite set.
  \item There is a finite minimum representative set $\U$ for $\Gamma$.
  \item There is a finite set of full-dimensional level sets $\hat\Theta$ of $\Gamma$, and $\cup\,\hat\Theta = \simplex$.
  \end{enumerate}
  Moreover, when any of the above hold, $\trimred(\gamma) = \trimred(\Gamma) = \{\Gamma_u \mid u\in\U\} = \hat\Theta$.
\end{proposition}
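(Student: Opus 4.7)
The plan is to prove the four conditions equivalent by establishing a cycle of implications, using the structural facts from Lemma~\ref{lem:X} and Corollary~\ref{cor:trim-prop-red} whenever a finite representative set for $\Gamma$ is available. The unifying thread is that each of (1), (3), (4) is essentially a restatement of ``$\Gamma$ admits a finite representative set'', while (2) connects to this through Corollary~\ref{cor:trim-prop-red}.

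For (3) $\Leftrightarrow$ (1), I would first take a finite minimum representative set $\U$ for $\Gamma$ together with a loss $L$ eliciting $\Gamma$; Lemma~\ref{lem:loss-restrict} then gives that $L|_\U$ elicits the finite property $\gamma(p) = \Gamma(p) \cap \U$, and the identity on $\U$ is an embedding since $\gamma_u = \{p : u \in \Gamma(p) \cap \U\} = \Gamma_u$ for every $u \in \U$. Conversely, given an embedding $\varphi : \Sc \to \reals^d$ of a finite elicitable $\gamma$, the image $\varphi(\Sc)$ is a finite representative set for $\Gamma$ (because $\gamma_s = \Gamma_{\varphi(s)}$ and $\Sc$ covers $\simplex$), and Lemma~\ref{lem:X}(\ref{item:X-rep-contain-min}) carves out a finite minimum representative set inside it.

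For (3) $\Leftrightarrow$ (4), the forward direction follows by setting $\hat\Theta = \{\Gamma_u : u \in \U\}$, which by Lemma~\ref{lem:X}(\ref{item:X-min-Theta},\ref{item:X-full-dim}) coincides with the set $\Theta$ of all full-dimensional level sets and covers $\simplex$ since $\U$ is representative. Conversely, picking one report $u_\theta$ per $\theta \in \hat\Theta$ yields a finite representative set (using $\cup\hat\Theta = \simplex$), whereupon Lemma~\ref{lem:X}(\ref{item:X-rep-contain-min}) concludes. For (3) $\Rightarrow$ (2), Corollary~\ref{cor:trim-prop-red} directly identifies $\trimred(\Gamma)$ with the finite set $\Theta$.

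The hard part will be (2) $\Rightarrow$ (3). The natural approach is to select a non-redundant report $r_\theta$ with $\Gamma_{r_\theta} = \theta$ for each $\theta \in \trimred(\Gamma)$ and show $\{r_\theta\}$ is representative, equivalently, that $\bigcup \trimred(\Gamma) = \simplex$. For $p \in \simplex$ and $r \in \Gamma(p)$, one would iterate $\Gamma_r \subsetneq \Gamma_{r_1} \subsetneq \cdots$ hoping to reach a non-redundant report, but in an abstract poset this chain could fail to terminate. To close the gap I would exploit elicitability: level sets have the form $\Gamma_r = \bigcap_{r'} \{p : \inprod{p}{L(r) - L(r')} \leq 0\}$ and are therefore closed and convex, and comparing $\risk{L}$ with the polyhedral function $p \mapsto \min_\theta \inprod{p}{L(r_\theta)}$ (which equals $\risk{L}$ on $\bigcup \trimred(\Gamma)$) via concavity of $\risk{L}$ should force the two to coincide on all of $\simplex$, yielding a polyhedral Bayes risk and hence a finite representative set via Corollary~\ref{cor:poly-risk-fin-rep}. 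Finally, the moreover clause follows by chaining the identifications already set up: $\{\Gamma_u : u \in \U\} = \Theta = \trimred(\Gamma)$ from Lemma~\ref{lem:X}(\ref{item:X-min-Theta},\ref{item:X-full-dim}) and Corollary~\ref{cor:trim-prop-red}; $\hat\Theta = \Theta$ because the representative set $\{u_\theta : \theta \in \hat\Theta\}$ must contain every full-dimensional level set by Lemma~\ref{lem:X}(\ref{item:X-rep-Theta}); and $\trimred(\gamma) = \{\Gamma_u : u \in \U\}$ by applying Corollary~\ref{cor:trim-prop-red} to the finite property $\gamma$, noting $\gamma_u = \Gamma_u$ under the identity embedding.
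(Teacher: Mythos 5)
Your overall strategy tracks the paper's closely (both cycle through the equivalences via Lemma~\ref{lem:X} and Corollary~\ref{cor:trim-prop-red}), and you are right to single out $(2) \Rightarrow (3)$ as the delicate direction: the paper's proof asserts ``$\cup_{r\in\reals^d}\Gamma_r = \cup\trimred(\Gamma)$'' with only ``by definition of $\trimred$'' as justification, and that does gloss over the chain-termination issue you flag. However, the concavity argument you sketch does not close the gap. Knowing that the polyhedral function $f(p) = \min_\theta\inprod{p}{L(r_\theta)}$ majorizes $\risk{L}$ and agrees with it on the closed set $\cup\trimred(\Gamma)$ does not, by concavity alone, force agreement on all of $\simplex$: for instance $f\equiv 1$ and $g(x)=\min(1,2-x)$ on $[0,2]$ are both concave, satisfy $f\geq g$, and agree precisely on $[0,1]$. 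Your argument would need to already know $\cup\trimred(\Gamma)=\simplex$ to conclude $f=\risk{L}$ everywhere, which is circular.

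The gap is real but is closed by a dimension argument, not a concavity one. Each level set $\Gamma_r$ is the bijective projection of an exposed face of $\hyp(\risk L)$, namely the contact set of the supporting affine function $p\mapsto\inprod{p}{L(r)}$. If $\Gamma_r\subsetneq\Gamma_{r'}$, the corresponding faces $F_r\subsetneq F_{r'}$ satisfy: $F_r$ is a face of $F_{r'}$ (a face of a convex set intersected with a larger face is a face of that larger face), and a proper face of a convex set has strictly smaller dimension. Hence $\dim\Gamma_r < \dim\Gamma_{r'}$, and since dimensions are bounded above by $|\Y|-1$, any chain $\Gamma_r\subsetneq\Gamma_{r_1}\subsetneq\cdots$ terminates at some $\Gamma_{r_k}$ with $r_k\notin\red(\Gamma)$. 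This shows every $p\in\simplex$ lies in some element of $\trimred(\Gamma)$, so when $\trimred(\Gamma)$ is finite, picking one representative per element gives a finite representative set and the rest of your argument (Corollary~\ref{cor:poly-risk-fin-rep}, then Lemma~\ref{lem:X}) goes through. Your treatment of the remaining implications and the ``moreover'' clause matches the paper and is fine.
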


\begin{proof}
  Let $L$ be a fixed loss eliciting $\Gamma$, so that in particular $\risk L$ is fixed.
  By definition of elicitable properties, $L$ is minimizable.
  In each case, we will show that $\risk L$ is polyhedral (or equivalently, that $L$ has a finite representative set), and thus Lemma~\ref{lem:X} will give us the set $\Theta$ of full-dimensional level sets of $\Gamma$, uniquely determined by $\risk L$.
  We will prove $1 \Rightarrow 2 \Rightarrow 3 \Rightarrow 4 \Rightarrow 1$, and in each case show that the relevant set of level sets is equal to $\Theta$, giving the result.

  $1 \Rightarrow 2$:
  Let $\Sc$ be the representative set for $\gamma$ and $\varphi:\Sc\to\reals^d$ the embedding.
  Since $\Sc$ is finite, $\varphi(\Sc)$ is a finite representative set for $\Gamma$ (and $L$; thus, $\risk L$ is polyhedral).
  Corollary~\ref{cor:trim-prop-red} now gives $\trimred(\Gamma) = \Theta$, which is finite, showing Case 2.

  $2 \Rightarrow 3$:
  If $\trimred(\Gamma)$ is finite, then in particular we have a finite set of reports $\Sc \subseteq \reals^d$ such that $\trimred(\Gamma) = \{\Gamma_s \mid s\in\Sc\}$.
  As $\Gamma$ is elicitable, $\reals^d$ is representative for $\Gamma$.
  By definition of $\trimred$, we have $\simplex = \cup_{r\in\reals^d} \Gamma_r = \cup \trimred(\Gamma) = \cup_{s\in\Sc} \Gamma_s$, and therefore $\Sc$ is representative for $\Gamma$ and for $L$.
  As $\Sc$ is finite, we have $\risk L$ polyhedral.
  From Lemma~\ref{lem:X}(\ref{item:X-rep-contain-min}), we have some minimum representative set $\U\subseteq\Sc$ for $L$ and $\Gamma$, implying statement 3.
  Moreover, Lemma~\ref{lem:X}(\ref{item:X-min-Theta},\ref{item:X-full-dim}) gives $\{\Gamma_u \mid u\in\U\} = \Theta$.

  $3 \Rightarrow 4$:
  Let $\U$ be a finite minimum representative set for $\Gamma$.
  Then $\risk L = \risk{L|_\U}$ is polyhedral.
  Lemma~\ref{lem:X}(\ref{item:X-min-Theta},\ref{item:X-full-dim}) once again gives $\{\Gamma_u \mid u\in\U\} = \Theta$.
  We simply let $\hat\Theta = \Theta$, giving statement 4 as $\U$ is representative.

  $4 \Rightarrow 1$:
  Let $\Sc\subseteq\R$ such that $\{\Gamma_s \mid s\in\Sc\} = \hat \Theta$.
  Then $\Sc$ is representative for $\Gamma$ and $L$, as $\cup\hat\Theta = \simplex$.
  Again, this yields a finite representative set for $L$.
  Lemma~\ref{lem:loss-restrict} now states that $L$ embeds $L|_\Sc$, so $\Gamma$ embeds $\gamma := \Gamma|_\Sc$, giving Case 1.
  Finally, Corollary~\ref{cor:trim-prop-red} gives $\trimred(\gamma) = \Theta$.
\end{proof}

As a final observation, recall that a property $\Gamma$ elicited by a polyhedral loss has a finite range, in the sense that there are only finitely many optimal sets $\Gamma(p)$ for $p\in\simplex$ (Lemma~\ref{lem:polyhedral-range-gamma}).
Proposition~\ref{prop:embed-trim} shows a complementary statement: there are only finitely many level sets $\Gamma_u$ for $u\in\reals^d$.
In other words, both $\Gamma$ and $\Gamma^{-1}$ have a finite range as multivalued maps.

\section{Polyhedral Indirect Elicitation Implies Consistency}
\label{sec:poly-ie-consistency}

As we have observed, consistency, and therefore calibration, implies indirect elicitation (\S~\ref{subsec:calibration-links}).
In general, indirect elicitation is simpler and weaker than calibration, since it only depends on the loss through the property it elicits, i.e., its exact minimizers.
Surprisingly, for polyhedral surrogates, we show the converse: indirect elicitation implies calibration, and therefore consistency.

\begin{theorem}\label{thm:poly-ie-implies-consistent}
	Let $L:\reals^d \to \reals^\Y_+$ be a polyhedral loss which indirectly elicits a finite property $\gamma$.
  For any loss $\ell$ eliciting $\gamma$, there exists a link $\psi$ such that $(L, \psi)$ is calibrated with respect to $\ell$.
\end{theorem}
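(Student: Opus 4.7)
The plan is to reduce to the embedding case and then post-compose a suitable link. Since $L$ is polyhedral, Theorem~\ref{thm:poly-embeds-discrete} produces a finite representative set $\Sc \subseteq \reals^d$ such that $L$ embeds the discrete loss $\hat\ell := L|_\Sc$. Theorem~\ref{thm:link-main} then yields a link $\hat\psi : \reals^d \to \Sc$ for which $(L, \hat\psi)$ is calibrated with respect to $\hat\ell$. All that remains is to build a map $\chi : \Sc \to \R$ carrying embedded reports to reports of $\ell$ in a way that preserves calibration, so that $\psi := \chi \circ \hat\psi$ is calibrated for $\ell$.

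The map $\chi$ is obtained from indirect elicitation, applied pointwise on $\Sc$. Write $\Gamma = \prop{L}$ and $\hat\gamma = \prop{\hat\ell}$. By Lemma~\ref{lem:loss-restrict}, $\hat\gamma(p) = \Gamma(p) \cap \Sc$, so $\hat\gamma_s = \Gamma_s$ for every $s \in \Sc$. Because $L$ indirectly elicits $\gamma$ and $\Sc \subseteq \reals^d$, for each $s \in \Sc$ there exists some $r_s \in \R$ with $\Gamma_s \subseteq \gamma_{r_s}$; set $\chi(s) := r_s$. The key contrapositive step is: if $\chi(s) \notin \gamma(p)$, then $p \notin \gamma_{\chi(s)}$, hence $p \notin \hat\gamma_s$, i.e., $s \notin \hat\gamma(p)$. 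Applying this with $s = \hat\psi(u)$ gives, for every $p \in \simplex$,
\[
\{u \in \reals^d : \psi(u) \notin \gamma(p)\} \;\subseteq\; \{u \in \reals^d : \hat\psi(u) \notin \hat\gamma(p)\}.
\]
Since $\ell$ elicits $\gamma$, we have $\prop{\ell}(p) = \gamma(p)$, so the calibration inequality~\eqref{eq:calibrated} for $(L,\hat\psi)$ with respect to $\hat\ell$ transfers verbatim to $(L,\psi)$ with respect to $\ell$, because restricting the infimum to a smaller set can only increase its value.

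No step presents a serious obstacle, but the argument hinges on one small observation worth emphasizing: indirect elicitation for $L$ transfers to the embedded discrete loss $\hat\ell$ essentially for free, because $\hat\gamma_s = \Gamma_s$. This is what lets us short-circuit the envelope machinery and just compose maps. An alternative (stronger) approach is to directly generalize Construction~\ref{const:eps-thick-link}, replacing the set $R_U = \{r \in \Sc : \varphi(r) \in U\}$ by $R_U = \{r \in \R : \Gamma_u \subseteq \gamma_r \text{ for all } u \in U\}$, and verifying that the nonemptiness-of-intersection property analogous to eq.~\eqref{eq:intersection-property-embedding} follows from indirect elicitation; this route yields every calibrated link, but the reduction above is sufficient for the existence claim asserted here.
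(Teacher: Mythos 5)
Your proof is correct, and it is essentially the paper's \emph{alternate} proof of this theorem (the ``centrality of embeddings'' route): embed $L$ into a discrete loss $\hat\ell$ on a finite representative set $\Sc$, obtain a calibrated link $\hat\psi$ from the embedding, then post-compose with a map $\chi$ supplied by indirect elicitation and transfer calibration through the resulting set inclusion. Where you differ, and in a way that slightly streamlines the argument, is in how you establish that $\hat\ell$ indirectly elicits $\gamma$. The paper routes this step through Lemma~\ref{lem:ie-iff-embeds-refinement}, which in turn relies on the heavier machinery of Lemma~\ref{lem:X} (parts on tight embeddings and minimum representative sets). You instead observe directly from Lemma~\ref{lem:loss-restrict} that $\hat\gamma_s = \Gamma_s$ for every $s \in \Sc$ (since $\hat\gamma(p) = \Gamma(p) \cap \Sc$), which immediately hands you a valid $r_s$ from the indirect elicitation of $L$. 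This works for \emph{any} finite representative set, not just a minimum one, so you never need tight embeddings at all. The trade-off is that the paper's Lemma~\ref{lem:ie-iff-embeds-refinement} proves a two-way equivalence used elsewhere, whereas your argument isolates exactly the one direction needed here. Your closing remark about generalizing Construction~\ref{const:eps-thick-link} via $\hat R_U = \{r\in\R \mid \Gamma_U \subseteq \gamma_r\}$ is in fact the paper's \emph{primary} proof (Construction~\ref{const:general-eps-thick-link}, Proposition~\ref{prop:general-eps-thick-separated}), which additionally yields the converse characterization of all calibrated links (Theorem~\ref{thm:link-char}).
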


One technical detail is that the link function may have to change.
That is, we will show that if $(L,\psi)$ indirectly elicits $\prop{\ell}$, then there exists some potentially different link $\psi'$ such that $(L,\psi')$ is calibrated with respect to $\ell$.
To see why this change may be necessary, consider again the example from \S~\ref{subsec:calibration-links}:
hinge loss with the link $\psi(u) = -1$ for $u < 1$ and $\psi(u) = 1$ for $u\geq 1$.
Here indirect elicitation is achieved, since we have $\psi((-\infty,-1]) = \{-1\}$ and $\psi([1,\infty)) = \{1\}$, but the link is not $\epsilon$-separated for any $\epsilon>0$.
In general, it is not clear whether one can always adjust the link in this case to achieve separation, and therefore calibration.
Fortunately, for polyhedral surrogates, one can always ``thicken'' a given link to achieve separation.

We give two proofs of Theorem~\ref{thm:poly-ie-implies-consistent}.
The first is direct: we show, as foreshadowed in \S~\ref{sec:calibration}, that our thickened link construction can be generalized for indirect elicitation.
In fact, we will further prove that our general construction recovers every possible calibrated link function.
The second proof highlights the central role that embeddings play when reasoning about polyhedral surrogates.
Specifically, we will show that if a polyhedral surrogate indirectly elicits a finite property, the link function must ``pass through'' an embedding, giving calibration through Construction~\ref{const:eps-thick-link}.

\subsection{Generalizing the thickened link construction}

Given that Construction~\ref{const:eps-thick-link} uses the embedding $\varphi:\Sc\to\reals^d$ in a crucial role, it is not immediately clear how to generalize the construction beyond embeddings.
Specifically, this crucial role is in the definition of $R_U$, the set of target reports which must be optimal whenever a given surrogate report set $U\subseteq\reals^d$ is optimal.
Using the embedding, we can simply define $R_U = \{ r\in\Sc \mid \varphi(r) \in U \}$, since the definition of embedding means that those are exactly the target reports (among the representative set $\Sc$) which are optimal when $U$ is.

Now suppose we merely know that $L$ indirectly elicits some finite property $\gamma:\simplex\toto\R$.
In \S~\ref{app:sep-link-exists}, we give Construction~\ref{const:general-eps-thick-link}, which is the same as Construction~\ref{const:eps-thick-link} but with the following modification of $R_U$ to $\hat R_U$.
Let $\Gamma = \prop L$ and $\U = \{\Gamma(p) \mid p \in \simplex\}$ as before.
Then for all $U\in\U$, we define $\hat R_U := \{r\in\R \mid \Gamma_U \subseteq \gamma_r\}$, where $\Gamma_U := \{p\in\simplex \mid U = \Gamma(p)\}$ is the set of distributions for which $U$ is the surrogate optimal set.
In words, $\hat R_U$ is the set of reports $r$ which may be linked to from points in $U$, in the sense that $U$ being $L$-optimal implies $r$ is $\ell$-optimal.
For the special case where $L$ embeds $\ell$, it is straightforward to verify that $R_U = \hat R_U \cap \Sc$.
As a result, Construction~\ref{const:eps-thick-link} is the special case of Construction~\ref{const:general-eps-thick-link} where one is given an embedding and restricts to the representative set $\Sc$ (Lemma~\ref{lem:general-construction-special-case}).

The main result of \S~\ref{app:sep-link-exists} is that, if a polyhedral surrogate indirectly elicits a finite property, then for small enough $\epsilon>0$, Construction~\ref{const:general-eps-thick-link} always produces a link (Proposition~\ref{prop:general-eps-thick-produce}).
Combined with the fact that, by design, the construction and our choice of $\hat R_U$ enforce separation, we have the following.
\begin{proposition}\label{prop:general-eps-thick-separated}
  Let $L$ be a polyhedral surrogate which indirectly elicits a finite property $\gamma$.
  Then there exists $\epsilon_0>0$ such that for all $0 < \epsilon \leq \epsilon_0$, Construction \ref{const:general-eps-thick-link} for $L,\gamma,\epsilon,\|\cdot\|_{\infty}$ produces a separated link from $\prop{L}$ to $\gamma$.
\end{proposition}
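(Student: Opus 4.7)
The plan is to verify two things about Construction~\ref{const:general-eps-thick-link}: first, that any link it produces is automatically $\epsilon$-separated, so the separation conclusion is essentially immediate from how the envelope is narrowed; and second, that for small enough $\epsilon$, the envelope $\Psi(u)$ is nonempty at every $u \in \reals^d$, so the construction actually produces a link. The main work lies in the second point, which is exactly where the indirect elicitation hypothesis is used.

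For separation, fix any produced $\psi \in \Psi$, $u \in \reals^d$, and $p \in \simplex$ with $d_\infty(u, \Gamma(p)) < \epsilon$. Setting $U^* = \Gamma(p) \in \U$, the construction forces $\psi(u) \in \hat R_{U^*} = \{r \in \R \mid \Gamma_{U^*} \subseteq \gamma_r\}$, and since $p \in \Gamma_{U^*}$ we conclude $p \in \gamma_{\psi(u)}$, i.e.\ $\psi(u) \in \gamma(p)$. Contrapositively, $\psi(u) \notin \gamma(p)$ implies $d_\infty(u, \Gamma(p)) \geq \epsilon$, which is the definition of $\epsilon$-separation for $\prop{L}$ and $\gamma$.

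For nonemptiness of $\Psi(u)$, I would first re-express indirect elicitation as an intersection property of the $\hat R_U$, mirroring eq.~\eqref{eq:intersection-property-embedding} in the embedding case. Because $p \in \Gamma_u$ iff $u \in \Gamma(p)$, we have $\Gamma_u = \bigcup_{U \in \U : u \in U} \Gamma_U$ for every $u$. The indirect elicitation hypothesis therefore provides, for each $u \in \reals^d$, some $r \in \R$ with $\bigcup_{U : u \in U} \Gamma_U \subseteq \gamma_r$; equivalently,
\begin{equation*}
  \bigcap_{U \in \U :\, u \in U} \hat R_U \;\neq\; \emptyset \quad \text{for every } u \in \reals^d.
\end{equation*}
Next, I would invoke Lemma~\ref{lem:polyhedral-range-gamma} to note that $\U$ is a finite family of closed polyhedra, so there are only finitely many subfamilies $\U' \subseteq \U$. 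The thickening Lemma~\ref{lem:thick-intersect}, already used in the proof of Theorem~\ref{thm:thickened-separated}, then supplies a uniform $\epsilon_0 > 0$ such that for every $0 < \epsilon \leq \epsilon_0$ and every $\U' \subseteq \U$, the intersection $\bigcap_{U \in \U'} U_\epsilon$ is nonempty iff $\bigcap_{U \in \U'} U$ is.

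Combining these, fix $0 < \epsilon \leq \epsilon_0$ and $u \in \reals^d$, and let $\U_\epsilon(u) = \{U \in \U : d_\infty(u, U) < \epsilon\}$ be the collection of thickened sets whose update step touches $u$. Since $u \in \bigcap_{U \in \U_\epsilon(u)} U_\epsilon$, the thickening lemma yields $v \in \bigcap_{U \in \U_\epsilon(u)} U$. Then $\U_\epsilon(u) \subseteq \{U \in \U : v \in U\}$, so the indirect elicitation consequence gives some $r \in \bigcap_{U : v \in U} \hat R_U \subseteq \bigcap_{U \in \U_\epsilon(u)} \hat R_U = \Psi(u)$, proving $\Psi(u) \neq \emptyset$. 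Since this holds for every $u$, Construction~\ref{const:general-eps-thick-link} produces a link, and by the first paragraph every such link is $\epsilon$-separated.

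The main obstacle is the thickening lemma and the uniformity of $\epsilon_0$ across all $u$; but the finiteness of $\U$ (Lemma~\ref{lem:polyhedral-range-gamma}) reduces this to finitely many subfamilies, so the lemma can be used as a black box. The only genuinely new conceptual step beyond the embedding proof of Theorem~\ref{thm:thickened-separated} is recognizing that the sets $\hat R_U$ inherit the same intersection property from indirect elicitation that $R_U$ enjoyed from the embedding, via the identity $\Gamma_u = \bigcup_{U : u \in U} \Gamma_U$.
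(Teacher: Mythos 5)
Your proposal is correct and matches the paper's approach: the paper splits the result into Proposition~\ref{prop:general-eps-thick-produce} (the envelope is nonempty for small $\epsilon$, via Lemma~\ref{lem:thick-intersect} and the intersection property of the $\hat R_U$, which the paper isolates as Lemma~\ref{lem:intersection-equiv-indirect-elic}) and Proposition~\ref{prop:link-converse} (produced iff $\epsilon$-separated), and you prove the same two pieces directly. The only cosmetic difference is that you re-derive the intersection property of the $\hat R_U$ inline using the identity $\Gamma_u = \bigcup_{U : u\in U}\Gamma_U$ rather than citing it as a standalone lemma, and you argue only the direction of Proposition~\ref{prop:link-converse} that is needed here.
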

\noindent
Since separation is equivalent to calibration for polyhedral surrogates (Theorem~\ref{thm:calibrated-separated}), we now have Theorem~\ref{thm:poly-ie-implies-consistent}: indirect elicitation implies calibration for polyhedral surrogates.

In fact, we can show something stronger: $\hat R_U$ enforces separation exactly, and therefore every possible calibrated link must arise from Construction~\ref{const:general-eps-thick-link}.

\begin{theorem}\label{thm:link-char}
  A link $\psi$ is calibrated for a given polyhedral surrogate $L$ and discrete target $\ell$ if and only if there exists $\epsilon>0$ such that $\psi$ is produced by Construction \ref{const:general-eps-thick-link} for $L,\prop\ell,\epsilon,\|\cdot\|$.
\end{theorem}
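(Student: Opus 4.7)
The plan is to derive Theorem~\ref{thm:link-char} as a clean corollary of two already-established facts: Theorem~\ref{thm:calibrated-separated} (for polyhedral $L$, calibration is equivalent to $\epsilon$-separation), and the way the construction is designed, whose enforcement step is precisely engineered to exactly encode separation. I would work throughout with $\|\cdot\|_\infty$ so that the distances used in Construction~\ref{const:general-eps-thick-link} coincide with those appearing in the definition of $\epsilon$-separation (Definition~\ref{def:sep-link}). The backbone of both directions is a single observation: the construction updates $\Psi(u) \leftarrow \Psi(u) \cap \hat R_U$ whenever $d_\infty(u, U) < \epsilon$, and $\hat R_U = \{r\in\R : \Gamma_U \subseteq \gamma_r\}$ encodes exactly the targets $r$ that may legally be linked to from a point $\epsilon$-close to $U$. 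The contrapositive of $\epsilon$-separation says precisely the same thing, which makes the two notions interchangeable.

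For the ``if'' direction, I would fix a link $\psi$ produced by Construction~\ref{const:general-eps-thick-link} for some $\epsilon>0$ and show that $\psi$ is $\epsilon$-separated with respect to $L$ and $\ell$; then Theorem~\ref{thm:calibrated-separated} upgrades this to calibration. The verification is direct: for any $u\in\reals^d$ and $p\in\simplex$, setting $U = \Gamma(p)$, if $d_\infty(u, \Gamma(p)) < \epsilon$ then the construction forces $\psi(u) \in \hat R_U$, which by definition of $\hat R_U$ gives $\psi(u) \in \gamma(p)$; this is the contrapositive of $\epsilon$-separation.

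For the ``only if'' direction, suppose $(L,\psi)$ is calibrated with respect to $\ell$. Calibration implies indirect elicitation of $\prop\ell$ (as recalled in \S~\ref{subsec:calibration-links}), so Construction~\ref{const:general-eps-thick-link} is applicable. Theorem~\ref{thm:calibrated-separated} supplies $\epsilon>0$ for which $\psi$ is $\epsilon$-separated, and I would use this $\epsilon$ to show $\psi(u)\in\Psi(u)$ for every $u$, so that $\psi$ lies among the links the construction can produce. Fix $u$ and any $U\in\U$ with $d_\infty(u,U)<\epsilon$; then for each $p$ with $\Gamma(p)=U$ we have $d_\infty(u,\Gamma(p))<\epsilon$, so by the contrapositive of $\epsilon$-separation, $\psi(u)\in\gamma(p)$, whence $\psi(u)\in\hat R_U$. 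Since Lemma~\ref{lem:polyhedral-range-gamma} gives only finitely many such $U$, intersecting across the triggered refinements preserves $\psi(u)$, yielding $\psi(u)\in\Psi(u)$; in particular $\Psi(u)$ is nonempty, so the construction produces some link and may select $\psi$.

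I do not expect any genuine obstacle: the substantive content sits in Theorem~\ref{thm:calibrated-separated} and in the definition of $\hat R_U$, and Theorem~\ref{thm:link-char} is essentially the observation that $\hat R_U$ is tight, both sufficient and necessary for separation. The only subtleties worth flagging in the write-up are the strict-versus-weak inequality alignment (the construction triggers on $<\epsilon$, separation is defined on $\geq\epsilon$, and these meet exactly under contrapositive), and checking that $\epsilon$ need not be shrunk in the ``only if'' direction since we do not invoke Proposition~\ref{prop:general-eps-thick-separated}; we obtain nonemptiness of $\Psi(u)$ ``for free'' from the candidate $\psi(u)$ itself.
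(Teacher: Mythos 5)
Your argument is correct and follows the same route as the paper: the paper factors the ``construction produces $\psi$ $\iff$ $\psi$ is $\epsilon$-separated'' observation into Proposition~\ref{prop:link-converse} (a chain of equivalences identical to your two directions combined), then composes with Theorem~\ref{thm:calibrated-separated}, handling a general norm via a one-line appeal to norm equivalence. You inline that equivalence rather than state it as a separate proposition, but the decomposition, the role of $\hat R_U$, and the observation that Proposition~\ref{prop:general-eps-thick-produce} is not needed for nonemptiness in the ``only if'' direction all match the paper.
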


\subsection{Centrality of embeddings}

To derive another proof of Theorem~\ref{thm:poly-ie-implies-consistent}, we now show that, for polyhedral surrogates, indirect elicitation must always pass through an embedding.
That is, if $L$ indirectly elicits $\gamma$, then there is some loss $\ell$ which $L$ embeds, such that $\ell$ indirectly elicits $\gamma$.
This result holds more generally whenever $L$ has a finite representative set, as in \S~\ref{sec:min-rep-sets}.
\begin{lemma}\label{lem:ie-iff-embeds-refinement}
  Let $L:\reals^d\to\reals^\Y_+$ be polyhedral.
  Then $L$ indirectly elicits a property $\gamma$ if and only if $L$ tightly embeds a discrete loss $\ell$ that indirectly elicits $\gamma$.
\end{lemma}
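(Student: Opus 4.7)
My plan is to handle the two implications separately, using the structural theory of polyhedral surrogates from Section~\ref{sec:min-rep-sets}. Throughout the argument I write $\Gamma := \prop{L}$. For the forward direction I would take $\ell$ to be the restriction of $L$ to a minimum representative set; for the backward direction I would compose the link from $L$ to $\ell$ (obtained from the embedding) with the link from $\ell$ to $\gamma$.

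For the forward direction, assume $(L,\psi)$ indirectly elicits $\gamma$. Since $L$ is polyhedral, Lemma~\ref{lem:polyhedral-range-gamma} gives minimizability with polyhedral Bayes risk, so Lemma~\ref{lem:X}(\ref{item:X-rep-contain-min}) yields a finite minimum representative set $\Sc\subseteq\reals^d$. Taking $\ell:=L|_\Sc$, Proposition~\ref{prop:representative-embeds-restriction} shows that $L$ embeds $\ell$ via the identity, and minimality of $\Sc$ combined with Lemma~\ref{lem:X}(\ref{item:X-tight-embed}) makes the embedding tight. Setting $\psi':=\psi|_\Sc$, I would then argue that $(\ell,\psi')$ indirectly elicits $\gamma$: for each $s\in\Sc$, Lemma~\ref{lem:loss-restrict} gives $\prop{\ell}(p)=\Gamma(p)\cap\Sc$, so $p\in(\prop{\ell})_s\iff s\in\Gamma(p)\iff p\in\Gamma_s$, and therefore
\[
(\prop{\ell})_s \;=\; \Gamma_s \;\subseteq\; \gamma_{\psi(s)} \;=\; \gamma_{\psi'(s)}~,
\]
as needed.

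For the backward direction, assume $L$ tightly embeds $\ell$ and $\ell$ indirectly elicits $\gamma$ via a link $\psi_\ell$. Since embedding is a special case of indirect elicitation (Lemma~\ref{lem:embedding-implies-indirect-elic}), there exists a link $\psi_L:\reals^d\to\R$ with $\Gamma_u\subseteq(\prop{\ell})_{\psi_L(u)}$ for all $u$. Composing, $\Gamma_u\subseteq\gamma_{\psi_\ell(\psi_L(u))}$ for all $u\in\reals^d$, so $\psi_\ell\circ\psi_L$ witnesses that $L$ indirectly elicits $\gamma$.

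The main obstacle I anticipate is conceptual rather than technical and appears in the forward direction: one must recognize that tightness of the embedding, together with Lemma~\ref{lem:loss-restrict}, already makes $\psi|_\Sc$ a valid witness of indirect elicitation for $\ell$, with no modification of link values at the representative points. Once this is in hand, both directions reduce to short applications of Proposition~\ref{prop:representative-embeds-restriction} and Lemma~\ref{lem:X}.
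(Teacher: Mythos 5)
Your proof is correct and follows essentially the same route as the paper: the forward direction constructs a tight embedding from a minimum representative set and uses the fact that $(\prop\ell)_s = \Gamma_s$ for each $s$ in that set, while the backward direction composes links via transitivity of $\subseteq$. The only notable difference is that the paper proves the slightly stronger claim that $L$ indirectly elicits $\gamma$ if and only if \emph{any} tightly embedded $\ell$ does, whereas you exhibit one specific $\ell = L|_\Sc$ — which suffices for the existential statement of the lemma.
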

\begin{proof}
  Let $\Gamma = \prop L$.
From Lemma~\ref{lem:X}(\ref{item:X-tight-embed}), $L$ tightly embeds a discrete loss.
  Furthermore, Lemma~\ref{lem:X}(\ref{item:X-min-Theta},\ref{item:X-redundant},\ref{item:X-tight-embed}) implies that $L$ indirectly elicits $\hat\gamma := \prop \ell$ for any discrete loss $\ell$ that $L$ tightly embeds.
  The link is any function $\hat\psi: u \mapsto r$ such that $\Gamma_u \subseteq \hat\gamma_r$ for all $u \in \reals^d$.

  We will prove the stronger statement that, for any property $\gamma$, and any loss $\ell:\R\to\reals^\Y_+$ that $L$ tightly embeds, $L$ indirectly elicits $\gamma$ if and only if $\ell$ indirectly elicits $\gamma$.
  If $\ell$ indirectly elicits $\gamma$ via the link $\psi$, then $L$ indirectly elicits $\gamma$ by transitivity of subset inclusion, as $\Gamma_u \subseteq \hat\gamma_{\hat\psi(u)} \subseteq \gamma_{\psi \circ \hat\psi(u)}$ for all $u \in \reals^d$.
  Conversely, suppose $L$ indirectly elicits $\gamma$ via the link $\psi$.
  As $L$ tightly embeds $\ell$, from Lemma~\ref{lem:X}(\ref{item:X-min-Theta},\ref{item:X-tight-embed}), the level sets of $\hat \gamma$ are contained in the set $\{\Gamma_u \mid u\in\reals^d\}$.
  Letting the map $\hat\psi:\R\to\reals^d$ exhibit this contaiment, we have $\hat\gamma_r = \Gamma_{\hat\psi(r)} \subseteq \gamma_{\psi \circ \hat\psi (r)}$ for all $r\in\R$.
\end{proof}

\begin{proof}[{Alternate proof of Theorem~\ref{thm:poly-ie-implies-consistent}}]
  Let $\R$ be the range of $\gamma$, so that $\gamma: \simplex \toto \R$, and let $\ell$ elicit $\gamma$.
  By Lemma~\ref{lem:ie-iff-embeds-refinement}, $L$ tightly embeds a discrete loss $\hat\ell:\hat\R\to\reals^\Y_+$ such that $\hat\ell$ indirectly elicits $\gamma$; let $\psi^\R: \hat\R \to \R$ be the corresponding link function.
  Let $\hat\gamma := \prop{\hat\ell}$ be the property that $\hat\ell$ directly elicits.
  Then for all $r\in\hat\R$ and $p\in\simplex$ we have $r\in \hat\gamma(p) \implies \psi^\R(r) \in \gamma(p)$. 
  Moreover, Construction~\ref{const:eps-thick-link} gives a link function $\hat \psi : \reals^d \to \hat\R$ such that $(L,\hat\psi)$ is calibrated with respect to $\hat\ell$.

  Consider $\psi := \psi^\R \circ \hat\psi$ and fix $p\in\simplex$.
	For any $u\in\reals^d$, if $\hat\psi(u) \in \hat\gamma(p)$, then $\psi(u) = \psi^\R\circ \hat\psi(u) \in \gamma(p)$ by definition of $\hat\psi$ and $\psi^\R$.
  Contrapositively,
  $\psi(u) \notin \gamma(p) \implies \hat\psi(u) \notin \hat\gamma(p)$.
  Thus, we have
  \begin{equation*}
    \label{eq:link-set-inclusion}
    \{u\in\reals^d \mid \psi(u) \not \in \gamma(p) \} \subseteq \{u\in\reals^d \mid \hat\psi(u) \not \in \hat\gamma(p) \}~.
  \end{equation*}
  Combined with the fact that $(L,\hat\psi)$ is calibrated with respect to $\hat\ell$, we have
	\begin{align*}
\inf_{u\in\reals^d : \psi(u) \not \in \gamma(p)} \inprod{p}{L(u)} \geq	\inf_{u\in\reals^d : \hat\psi(u) \not \in \hat\gamma(p)} \inprod{p}{L(u)} > \inf_{u\in\reals^d}\inprod{p}{L(u)}~,
	\end{align*}
  showing calibration of $\psi$.
\end{proof}

\section{Conclusion} \label{sec:conclusion}

In this work, we introduce an embedding framework to design and analyze consistent, convex surrogates for discrete prediction tasks.
Our results are constructive; as we outline in \S~\ref{sec:applications}, they can be fruitfully applied to a range of tasks, from designing new surrogates and link functions to understanding the consistency or inconsistency of existing surrogates.
Beyond these tools, our results shed light on fundamental questions about the design of consistent surrogates.

Perhaps the most pressing open direction is simply to apply our framework to prediction problems of interest.
We hope that the discussion in \S~\ref{sec:apply-embedd-fram}, and the detailed examples in subsequent works, serve as useful guidelines for doing so.
A particularly promising domain to apply our framework is structured prediction, where relatively few consistent surrogates are known.
Indeed, our framework has already been applied to submodular structured problems (\S~\ref{sec:lovasz-hinge}) and to max-margin losses~\cite{nowak2022consistency}.

Beyond applying our fromework, we see several interesting directions for theoretical research.
Below we outline several such directions.

\paragraph{Prediction dimension}

It can be important for applications to understand the minimum prediction dimension $d$ of a consistent convex surrogate $L:\reals^d\to \reals^\Y_+$ for a given target problem, also called its convex elicitation complexity~\cite{frongillo2021elicitation}.
Theorem~\ref{thm:discrete-loss-poly-embeddable} constructs a consistent surrogate for any discrete loss, with prediction dimension $d = n := |\Y|$.
In some settings, such as structured prediction and information retrieval, a prediction dimension of $d=n$ can be prohibitively large.
For example, in \S~\ref{sec:lovasz-hinge} we discuss structured problems which decompose as $k$ simple subproblems, like pixel classification for image segmentation.
The Lov\'asz hinge has prediction dimension $k$ for this problem, whereas our construction would give one with $d = n = 2^k$, an impractical number even for relatively small images.
While one could achieve $d = n-1$ with a simple modification to our construction,\footnote{One can always reduce to $d=n-1$ in Theorem~\ref{thm:discrete-loss-poly-embeddable} via a linear transformation from $\reals^n$ to $\reals^{n-1}$ which is injective on $\simplex$; redefining the surrogate appropriately, the Bayes risks will still match.}
it is unclear when and how the prediction dimension could be further lowered.

Beyond studying convex elicitation complexity directly~\citep{ramaswamy2016convex,finocchiaro2021unifying,frongillo2021elicitation},
one promising approach to this question is to first understand the minimum $d$ for which a polyhedral surrogate $L:\reals^d\to \reals^\Y_+$ embeds $\ell$, called the \emph{embedding dimension} of $\ell$, and then relate this dimension to polyhedral, or general convex, elicitation complexity.
One reason this approach may be fruitful is that embeddings have much more structure than general convex losses, such as the fact that calibrated links arise automatically (\S~\ref{thm:thickened-separated}).
Yet from \S~\ref{sec:poly-ie-consistency} and similar observations, it may well be that the lowest possible prediction dimension is achieved by an embedding.

In previous work, we introduce and present some bounds for embedding dimension
based on optimality conditions \citep{finocchiaro2020embedding}.
We show in particular that a target loss has embedding dimension $1$ if and only if it hase convex elicitation complexity $1$, underscoring the possibility that these quantities may be the same for all discrete losses.
It is unclear if these bounds are tight or if they can be improved by leveraging information about adjacent level sets of an embedded property.
Moreover, beyond the fact that the embedding dimension upper bounds convex elicitation complexity, it remains to understand the relationship between these two quantities in dimensions greater than~$1$.

\paragraph{Indirect elicitation as a condition for consistency}

It is well-known that statistical consistency is equivalent to calibration (Definition~\ref{def:calibrated}) for discrete target problems.
As calibration is a much easier condition to work with, and in particular, only involves the conditional label distributions, the bulk of work on consistency of learning algorithms for discrete target problems uses calibration.
It is easy to verify that calibration in turn implies indirect elicitation, meaning that exact minimizers of the surrogate loss are linked to exact minimizers of the target.
In \S~\ref{sec:poly-ie-consistency}, we show that indirect elicitation is actually \emph{equivalent} to calibration, and therefore consistency, when restricting to the class of polyhedral surrogates.
As indirect elicitation is even simpler of a condition than calibration, an important line of future work is to identify other classes of surrogates for which this equivalence holds.

\paragraph{Polyhedral vs.\ smooth surrogates}
The literature on convex surrogates focuses mainly on smooth surrogate losses~\citep{crammer2001algorithmic,bartlett2006convexity,bartlett2008classification, duchi2018multiclass, williamson2016composite, reid2010composite,menon2019multilabel,zhang2020convex,bao2020calibrated}.
In practice, minimizing such surrogates often implicitly fits a model to the full conditional label distributions.
On the other hand, \citet[Section 1.2]{ramaswamy2018consistent} contend that optimizing nonsmooth losses may enable reduction of the prediction dimension while maintaining consistency relative to smooth losses, improving downstream efficiency of the learning algorithm.
While generalization rates may suffer for nonsmooth losses, polyhedral surrogates achieve linear regret transfer bounds (\S~\ref{subsec:regret-bounds}), so the target generalization rates may remain the same; see also~\citet{frongillo2021surrogate}.
Even further, \citet{lapin2016loss} suggest that optimizing a nonsmooth loss that directly captures the target problem of interest, rather than a smooth one that implicitly fits to the full conditional label distributions, can improve performance in limited data settings.
We would like to verify this intuition, with specific cases or broad results comparing smooth and polyhedral losses.

\paragraph{Superprediction sets}

An interesting direction is to understand consistent surrogates by studying their superprediction sets, as has been done for proper losses~\citep{williamson2014geometry}.
The superprediction set of a loss is the set of loss vectors weakly dominated by the range of the loss: $\{v \in \reals^\Y \mid \exists r\; L(r) \leq v\}$, where the inequality holds pointwise.
One appealing aspect of the superprediction set is that it ignores the surrogate reports and focuses directly on the set of loss vectors, in a similar fashion to the trim operation in \S~\ref{sec:trim}.
In particular, it may be that questions about the required prediction dimension (see above) could be more readily answered by trying to find low-dimensional structures in the superprediction set of the target loss.

\paragraph{Convex envelope}

Finally, recall that we motivated the idea of an embedding as a way to ``convexify'' a discrete loss.
It is not clear, however, how embeddings relate to the convex envelope operation, which is perhaps the most direct way to perform this convexification given the map $\varphi$.
For example, suppose $L:\reals^d\to\reals^\Y_+$ embeds $\ell:\R\to\reals^\Y_+$ via the embedding $\varphi:\R\to\reals^d$, and consider the (polyhedral) surrogate $L':\reals^d\to\reals^\Y_+$ given by $L'_y = (L_y + \ones_{\varphi(\R)})^{**}$, where here $\ones$ denotes the convex indicator and $(\cdot)^{**}$ the biconjugate.
(One might also consider similar operations that keep $L'$ finite-valued.)
When is it the case that $L'$ also embeds $\ell$?
Conversely, we would like to know when the construction in Theorem~\ref{thm:discrete-loss-poly-embeddable} can be viewed as a convex envelope.

\subsection*{Acknowledgements}
We thank Arpit Agarwal and Peter Bartlett for many early discussions and insights,
Stephen Becker for a reference to Hoffman constants,
and Nishant Mehta, Enrique Nueve, and Anish Thilagar for other suggestions.
This material is based upon work supported by the National Science Foundation under Grant Nos.\ CCF-1657598, IIS-2045347, and DGE-1650115.

\newpage
\bibliographystyle{plainnat}
\bibliography{diss,extra}

\appendix

\newpage
\section{Power diagrams}\label{app:power-diagrams}
We begin with several definitions from Aurenhammer~\cite{aurenhammer1987power}.
\begin{definition}\label{def:cell-complex}
  A \emph{cell complex} in $\reals^d$ is a set $C$ of faces (of dimension $0,\ldots,d$) which (i) union to $\reals^d$, (ii) have pairwise disjoint relative interiors, and (iii) any nonempty intersection of faces $F,F'$ in $C$ is a face of $F$ and $F'$ and an element of $C$.
\end{definition}

\begin{definition}\label{def:power-diagram}
  Given sites $s_1,\ldots,s_k\in\reals^d$ and weights $w_1,\ldots,w_k \geq 0$, the corresponding \emph{power diagram} is the cell complex given by
  \begin{equation}
    \label{eq:pd}
    \cell(s_i) = \{ x \in\reals^d : \forall j \in \{1,\ldots,k\} \, \|x - s_i\|^2 - w_i \leq \|x - s_j\|^2 - w_j\}~.
  \end{equation}
\end{definition}

\begin{definition}\label{def:affine-equiv}
  A cell complex $C$ in $\reals^d$ is \emph{affinely equivalent} to a (convex) polyhedron $P \subseteq \reals^{d+1}$ if $C$ is a (linear) projection of the faces of $P$.
\end{definition}

Some of the convex polyhedra we study are the (negative) Bayes risks of loss functions, whose projections onto $\simplex$ form the level sets of the property they elicit.
The following result from~\citet{aurenhammer1987power} therefore immediately applies to elicitable properties.
We also make use of the same structure for the loss itself; in particular, one can consider the epigraph of a polyhedral convex function on $\reals^d$ and the projection down to $\reals^d$.
In either case, we refer to the resulting power diagram as being \emph{induced} by the convex function.

\begin{theorem}[Aurenhammer~\cite{aurenhammer1987power}]\label{thm:aurenhammer}
	A cell complex is affinely equivalent to a convex polyhedron if and only if it is a power diagram.
\end{theorem}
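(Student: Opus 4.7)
The plan is to exploit the quadratic-to-linear reduction built into the definition of a power diagram, turning the theorem into a statement about the upper envelope of affine functions over $\reals^d$.

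First, I would unpack the defining inequality in eq.~\eqref{eq:pd}. Since $\|x-s_i\|^2 - w_i$ and $\|x-s_j\|^2 - w_j$ share the same $\|x\|^2$ term, that term cancels, and setting $a_i := s_i$ and $b_i := \tfrac{1}{2}(w_i - \|s_i\|^2)$ the cell becomes $\cell(s_i) = \{x\in\reals^d : \langle x,a_i\rangle + b_i \geq \langle x,a_j\rangle + b_j \text{ for all } j\}$, i.e.\ the set of $x$ where the affine function $f_i(x) := \langle x,a_i\rangle + b_i$ attains the pointwise maximum. This is precisely the projection onto $\reals^d$ of the top-dimensional face of the convex polyhedron $P = \{(x,y)\in\reals^{d+1} : y \leq f_i(x) \text{ for all } i\}$ (equivalently, a face of the epigraph of $\max_i f_i$) that lies in the hyperplane $y = f_i(x)$. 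Running this argument over all faces of the upper envelope gives the ``only if'' direction: every power diagram is affinely equivalent to such a $P$.

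For the converse, let $C$ be a cell complex affinely equivalent to a polyhedron $P\subseteq\reals^{d+1}$ via a linear projection $\pi$. By an affine change of coordinates I may assume $\pi$ is projection onto the first $d$ coordinates, and by passing to the upper (or lower) envelope of $P$ along the last coordinate, I may assume that each full-dimensional cell of $C$ is the image of a unique $d$-dimensional face of $P$ lying in some supporting hyperplane $\{y = \langle x,a_i\rangle + b_i\}$. Inverting the correspondence from the first paragraph, I set $s_i := a_i$ and $w_i := \|a_i\|^2 + 2b_i + c$ for a constant $c$ large enough to force $w_i\geq 0$; since translating all weights by the same constant leaves every inequality in eq.~\eqref{eq:pd} unchanged, the resulting power diagram is unaffected. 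By construction, its top-dimensional cells agree with those of $C$, and then Definition~\ref{def:cell-complex}(ii)--(iii) pins down all lower-dimensional faces as intersections of the closures of the top-dimensional cells, so the full cell complex is recovered.

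The main obstacle is in the converse direction, namely justifying that the entire face lattice of $C$ (not just its top-dimensional cells) is reproduced by the power diagram. One must argue that a cell complex in $\reals^d$ is determined by its top-dimensional faces together with the cell-complex axioms, and that the projected face structure of $P$ matches the face structure of the upper envelope of the $f_i$'s; this is where affine equivalence (projection of \emph{all} faces of $P$) is essential, since without it one could choose a $P$ whose projection subdivides cells in ways incompatible with any power diagram. A secondary subtlety, easily handled, is that $P$ may have faces not of the graph-of-affine-function form (e.g.\ vertical faces in the last coordinate); these project to lower-dimensional boundaries and so do not interfere with the top-dimensional reconstruction.
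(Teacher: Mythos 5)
The paper does not prove Theorem~\ref{thm:aurenhammer}; it is imported verbatim from \citet{aurenhammer1987power} and used as a black box in \S~\ref{app:power-diagrams}. So there is no in-paper proof to compare against, and your attempt has to stand on its own.

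Your ``only if'' direction is the standard lifting argument and is correct in spirit, but it contains an internal inconsistency you should fix: you define $P = \{(x,y) : y \leq f_i(x)\ \forall i\}$, which is the hypograph of $\min_i f_i$, and then parenthetically call it ``a face of the epigraph of $\max_i f_i$.'' Those are different polyhedra. The facet of the hypograph lying in $y = f_i(x)$ projects to the set where $f_i$ is the \emph{minimum}, not where it attains the \emph{maximum} as you derived for $\cell(s_i)$. Either switch to $P = \{(x,y) : y \geq f_i(x)\ \forall i\}$, i.e.\ the epigraph of $\max_i f_i$, or flip the sign in your derivation of the $f_i$'s. With that fixed, the direction is fine.

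The converse direction has a genuine gap, which you yourself flag but do not close, and I want to be concrete about why it is nontrivial. You reduce to coordinate projection and claim you may ``pass to the upper (or lower) envelope of $P$'' so that each top cell of $C$ comes from a unique facet of $P$ of the form $y = \langle x, a_i\rangle + b_i$. But the definition of affine equivalence requires that the projections of \emph{all} faces of $P$ form the cell complex $C$; it is not given that $P$ is an epigraph, nor that its projection is injective on the relevant faces. You need to argue (a) that the facets of $P$ contributing full-dimensional cells are never ``vertical'' (their supporting hyperplanes have nonzero last-coordinate normal component), so they can be written in graph form, and (b) that the resulting $a_i$ are pairwise distinct, so the inversion $s_i := a_i$, $w_i := \|a_i\|^2 + 2b_i + c$ is well-defined and injective. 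Point (a) is where the cell-complex axioms---specifically that the cells union to $\reals^d$ and have disjoint relative interiors---must be invoked; a vertical facet would project to a set of lower dimension and so cannot account for any top cell, but you must also rule out the case that $P$ itself has lower-dimensional projection. Finally, your claim that ``Definition~\ref{def:cell-complex}(ii)--(iii) pins down all lower-dimensional faces as intersections of closures of the top-dimensional cells'' is plausible but unargued: condition (iii) is a compatibility requirement on intersections that are present, not a completeness axiom saying all such intersections are present, and you additionally need that $C$ contains no ``extra'' lower-dimensional faces beyond these. Stating and proving that a cell complex in the sense of Definition~\ref{def:cell-complex} is determined by its top-dimensional cells is a lemma you should isolate and prove, or cite, before the converse is complete.

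The weight normalization $w_i \geq 0$ via a uniform additive shift is fine, since all the defining inequalities in eq.~\eqref{eq:pd} are invariant under $w_i \mapsto w_i + c$.
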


We extend Theorem~\ref{thm:aurenhammer} to a weighted sum of convex functions, showing that the induced power diagram is the same for any choice of strictly positive weights.

\begin{lemma}\label{lem:polyhedral-pd-same}
	Let $f_1,\ldots,f_m:\reals^d\to\reals$ be polyhedral convex functions.
	The power diagram induced by $\sum_{i=1}^m p_i f_i$ is the same for all $p \in \inter(\simplex)$.
\end{lemma}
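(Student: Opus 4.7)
The plan is to reduce everything to a convexity-driven criterion for when two adjacent pieces of $g_p := \sum_{i=1}^m p_i f_i$ merge into a single affine region of the induced power diagram, and to show this criterion is independent of the choice of $p \in \inter(\simplex)$.

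First, let $\mathcal{K}$ be the common polyhedral refinement of the maximal affine regions of $f_1,\ldots,f_m$; this is a finite polyhedral cell complex covering $\reals^d$ on which every $f_i$, and hence $g_p$, is affine. Since $g_p$ is polyhedral convex, the power diagram it induces, obtained via Theorem~\ref{thm:aurenhammer} by projecting the faces of $\epi(g_p) \subseteq \reals^{d+1}$ onto $\reals^d$, is the cell complex whose top-dimensional cells are precisely the maximal regions on which $g_p$ is affine. These maximal regions are obtained from $\mathcal{K}$ by merging any two adjacent cells across a facet on which the two affine representations of $g_p$ agree.

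The crux is the following merging criterion. Let $K_1, K_2 \in \mathcal{K}$ be adjacent cells sharing a $(d{-}1)$-dimensional facet $F$ with unit normal $n$ pointing from $K_1$ into $K_2$, and for each $i$ define $\delta_i := (\nabla f_i|_{K_2} - \nabla f_i|_{K_1}) \cdot n$. Because each $f_i$ is convex, its directional derivative in direction $n$ is nondecreasing across $F$, giving $\delta_i \geq 0$. The gradient jump of $g_p$ across $F$ in direction $n$ equals $\sum_i p_i \delta_i$, and by continuity of $g_p$ the two affine expressions on $K_1$ and $K_2$ agree across $F$ if and only if this sum vanishes. For $p \in \inter(\simplex)$ all $p_i > 0$, so the sum vanishes exactly when every $\delta_i = 0$. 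This merging condition thus depends only on $f_1,\ldots,f_m$ and $F$, not on $p$.

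Letting $\mathcal{B}$ be the collection of facets of $\mathcal{K}$ where some $\delta_i > 0$, the induced power diagram of $g_p$ is obtained from $\mathcal{K}$ by merging cells across every facet not in $\mathcal{B}$, and its full cell structure is therefore determined by $\mathcal{K}$ and $\mathcal{B}$. Since $\mathcal{B}$ is independent of $p \in \inter(\simplex)$, so is the induced power diagram, completing the proof. The main subtlety is identifying the projected faces of $\epi(g_p)$ with this merged cell complex; this follows from the observation that a face of the graph of $g_p$ in $\reals^{d+1}$ is exactly a maximal union of $\mathcal{K}$-cells across which the affine form of $g_p$ extends consistently, which is precisely the merging procedure governed by $\mathcal{B}$.
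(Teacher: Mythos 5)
Your proof is correct and takes a genuinely different route from the paper's. The paper argues pointwise with subdifferentials: it shows that $\partial\bigl(\sum_i f_i\bigr)(x)$ is a singleton if and only if each $\partial f_i(x)$ is, hence the differentiability loci and constant-gradient regions of $\sum_i f_i$ and $\sum_i p_i f_i$ coincide, from which the facets of the two epigraphs project to the same sets. You instead pass to the common polyhedral refinement $\mathcal{K}$ on which each $f_i$ is individually affine, and reduce everything to a local merging criterion across facets of $\mathcal{K}$. The underlying convexity insight is identical---monotonicity makes each per-term contribution nonnegative, and strictly positive $p_i$ forces the weighted sum to vanish iff each term does---but your framing makes the geometry explicit and avoids the slightly delicate step, elided in the paper, of arguing that the projections of facets of the two epigraphs coincide (which amounts to showing $\nabla f_i(x) \in \partial f_i(x')$ for each $i$ whenever $\nabla\bigl(\sum f_i\bigr)(x) \in \partial\bigl(\sum f_i\bigr)(x')$). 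One detail worth spelling out: continuity of each $f_i$ across $F$ forces $\nabla f_i|_{K_2} - \nabla f_i|_{K_1}$ to be parallel to $n$, so $\delta_i$ captures the \emph{entire} gradient jump and not just its normal component; this is what makes $\sum_i p_i\delta_i = 0$ equivalent to the two affine pieces of $g_p$ agreeing as affine functions, not merely on $F$.
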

\begin{proof}
	For any polyhedral convex function $g$ with epigraph $P$, the proof of~\citet[Theorem 4]{aurenhammer1987power} shows that the power diagram induced by $g$ is determined by the facets of $P$.
	Let $F$ be a facet of $P$, and $F'$ its projection down to $\reals^d$.
	It follows that $g|_{F'}$ is affine, and thus $g$ is differentiable on $\inter(F')$ with constant derivative $d\in\reals^d$.
	Conversely, for any subgradient $d'$ of $g$, the set of points $\{x\in\reals^d : d'\in\partial g(x)\}$ is the projection of a face of $P$; we conclude that $F = \{(x,g(x))\in\reals^{d+1} : d\in\partial g(x)\}$ and $F' = \{x\in\reals^d : d\in\partial g(x)\}$.
	
	Now let $f := \sum_{i=1}^k f_i$ with epigraph $P$, and $f' := \sum_{i=1}^k p_i f_i$ with epigraph $P'$.
	By Rockafellar~\cite{rockafellar1997convex}, $f,f'$ are polyhedral.
	We now show that $f$ is differentiable whenever $f'$ is differentiable:
	\begin{align*}
	\partial f(x) = \{d\}
	&\iff \sum_{i=1}^k \partial f_i(x) = \{d\} \\
	&\iff \forall i\in\{1,\ldots,k\}, \; \partial f_i(x) = \{d_i\} \\
	&\iff \forall i\in\{1,\ldots,k\}, \; \partial p_i f_i(x) = \{p_id_i\} \\
	&\iff \sum_{i=1}^k \partial p_if_i(x) = \left\{\sum_{i=1}^k p_id_i\right\} \\
	&\iff \partial f'(x) = \left\{\sum_{i=1}^k p_id_i\right\}~.
	\end{align*}
	From the above observations, every facet of $P$ is determined by the derivative of $f$ at any point in the interior of its projection, and vice versa.
	Letting $x$ be such a point in the interior, we now see that the facet of $P'$ containing $(x,f'(x))$ has the same projection, namely $\{x'\in\reals^d : \nabla f(x) \in \partial f(x')\} = \{x'\in\reals^d : \nabla f'(x) \in \partial f'(x')\}$.
	Thus, the power diagrams induced by $f$ and $f'$ are the same.
	The conclusion follows from the observation that the above held for any strictly positive weights $p$, and $f$ was fixed.
\end{proof}

We now include the full proof of Lemma~\ref{lem:polyhedral-range-gamma}.

\polyhedralrangegamma*
\begin{proof}
	First, observe that $L: \reals^d \to \reals^\Y_+$ is finite and bounded from below (by $0$), and thus its infimum is finite. 
	Therefore, we can apply \citet[Corollary 19.3.1]{rockafellar1997convex} to conclude that its infimum is attained for all $p \in \simplex$ and is therefore minimizable.
  Thus, $L$ elicits a property.
	
	For all $p$, let $P(p)$ be the epigraph of the convex function $u\mapsto \inprod{p}{L(u)}$.
	From Lemma~\ref{lem:polyhedral-pd-same}, we have that the power diagram $D_\Y$ induced by the projection of $P(p)$ onto $\reals^d$ is the same for any $p\in\inter(\simplex)$.
	Let $\F_\Y$ be the set of faces of $D_\Y$, which by the above are the set of faces of $P(p)$ projected onto $\reals^d$ for any $p\in\inter(\simplex)$.
	
	We claim for all $p\in\inter(\simplex)$, that $\Gamma(p) \in \F_\Y$.
	To see this, let $u \in \Gamma(p)$, and $u' = (u,\inprod{p}{L(u)}) \in P(p)$.
	The optimality of $u$ is equivalent to $u'$ being contained in the face $F$ of $P(p)$ exposed by the normal $(0,\ldots,0,-1)\in\reals^{d+1}$.
	Thus, $\Gamma(p) = \argmin_{u\in\reals^d} \inprod{p}{L(u)}$ is a projection of $F$ onto $\reals^d$, which is an element of $\F_\Y$.
	
	Now for $p \not \in \inter(\simplex)$, consider $\Y'\subsetneq \Y$, $\Y'\neq\emptyset$.
	Applying the above argument, we have a similar guarantee: a finite set $\F_{\Y'}$ such that $\Gamma(p) \in \F_{\Y'}$ for all $p$ with support exactly $\Y'$.
	Taking $\F = \bigcup\{\F_{\Y'} | \Y'\subseteq\Y, \Y'\neq\emptyset\}$, we have for all $p\in\simplex$ that $\Gamma(p) \in \F$, giving $\U \subseteq \F$.
	As $\F$ is finite, so is $\U$, and the elements of $\U$ are closed polyhedra as faces of $D_{\Y'}$ for some $\Y'\subseteq\Y$.
\end{proof}

\section{Equivalence of Separation and Calibration for Polyhedral Surrogates}
\label{sec:equiv-sep-calib}

We recall that Theorem \ref{thm:link-main} states that, if a polyhedral $L$ embeds a discrete $\ell$, then there exists a calibrated link $\psi$.
Theorem \ref{thm:link-main} is directly implied by the combination of Theorem \ref{thm:calibrated-separated}, that calibration is equivalent to separation (Definition \ref{def:sep-link}); and Theorem \ref{thm:thickened-separated}, existence of a separated link.
Theorem \ref{thm:calibrated-separated} is proven in this section and Theorem \ref{thm:thickened-separated} is proven in Appendix \ref{app:sep-link-exists}.

Throughout we will work with the two \emph{regret} functions:
the \emph{surrogate regret} $R_L(u,p) = \inprod{p}{L(u)} - \risk{L}(p)$, and similarly the \emph{target regret} $R_{\ell}(r,p) = \inprod{p}{\ell(r)} - \risk{\ell}(p)$.
We will use these functions again when we prove surrogate regret bounds (\S~\ref{app:regret-bounds}).

We first show one direction: any calibrated link from a polyhedral surrogate to a discrete target must be $\epsilon$-separated.
The proof follows a similar argument to that of~\citet[Lemma 6]{tewari2007consistency}.
\begin{lemma}\label{lem:calibrated-eps-sep}
  Let polyhedral surrogate $L:\reals^d \to \reals^\Y_+$, discrete loss $\ell:\R\to\reals^\Y_+$, and link $\psi:\reals^d\to\R$ be given such that $(L,\psi)$ is calibrated with respect to $\ell$.
  Then there exists $\epsilon>0$ such that $\psi$ is $\epsilon$-separated with respect to   $\prop{L}$ and $\prop{\ell}$.
\end{lemma}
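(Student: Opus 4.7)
The plan is to argue by contradiction. Suppose $\psi$ is not $\epsilon$-separated for any $\epsilon > 0$. Then for each $n \in \mathbb{N}$ we can pick $u_n \in \reals^d$ and $p_n \in \simplex$ with $\psi(u_n) \notin \gamma(p_n)$ but $d_\infty(u_n, \Gamma(p_n)) < 1/n$, where $\Gamma = \prop{L}$ and $\gamma = \prop{\ell}$. The goal is to exhibit a single distribution at which the calibration inequality~\eqref{eq:calibrated} fails.

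The first step is a pigeonhole reduction to a constant configuration along the sequence. By Lemma~\ref{lem:polyhedral-range-gamma} the range of $\Gamma$ is finite, and since $\R$ is finite, so are the range of $\gamma$ (as subsets of $\R$) and $\{\psi(u_n)\} \subseteq \R$. Passing to a subsequence, I may assume $\Gamma(p_n) = U$, $\gamma(p_n) = G$, and $\psi(u_n) = r$ are all constant in $n$, with $r \notin G$.

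The key move is to fix $p^* := p_1$ rather than attempt to take a limit of $p_n$. By construction $p^*$ lies in the cell where $\Gamma = U$ and $\gamma = G$, so $\Gamma(p^*) = U$ and $r \notin \gamma(p^*)$. Each coordinate $L(\cdot)_y$ is a finite-valued polyhedral convex function on $\reals^d$, hence a pointwise maximum of finitely many affine functions and therefore globally Lipschitz in $\|\cdot\|_\infty$; let $M$ be a common Lipschitz constant. For each $n$, let $u_n^* \in U$ minimize $\|u_n - u_n^*\|_\infty$ (attained, since $U$ is a closed polyhedron by Lemma~\ref{lem:polyhedral-range-gamma}), so this distance is less than $1/n$. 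Since $u_n^* \in U = \Gamma(p^*)$ gives $\inprod{p^*}{L(u_n^*)} = \risk{L}(p^*)$, the Lipschitz estimate yields $R_L(u_n, p^*) = \inprod{p^*}{L(u_n) - L(u_n^*)} \leq \|p^*\|_1 \cdot M \|u_n - u_n^*\|_\infty \leq M/n \to 0$. Thus $(u_n)$ satisfies $\psi(u_n) = r \notin \gamma(p^*)$ with vanishing surrogate regret at the fixed distribution $p^*$, directly contradicting calibration at $p^*$.

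The main subtlety, and the reason I fix $p^*$ at a point of the subsequence rather than at its limit, is exactly the following pitfall: if one set $p^* := \lim_n p_n$, then $p^*$ could lie on the boundary of the cell $\{p : \gamma(p) = G\}$ and pick up $r$ as a new target minimizer (so that $r \in \gamma(p^*) \setminus G$), in which case $u_n$ is no longer a calibration-violating sequence at $p^*$. Choosing $p^*$ inside the cell instead of on its boundary sidesteps this entirely: the cell having constant $\Gamma$ lets the Lipschitz bound go through for that $p^*$, and $r \notin \gamma(p^*)$ holds automatically. No limit in $p$, and therefore no continuity argument for $\gamma$, is needed.
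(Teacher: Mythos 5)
Your proof is correct and takes essentially the same route as the paper's: assume no $\epsilon$ works, build the sequences, use finiteness of the ranges of $\Gamma$ and $\gamma$ (Lemma~\ref{lem:polyhedral-range-gamma} plus $|\R|<\infty$) to pass to a subsequence on which these are constant, fix $p$ at one member of that subsequence rather than at a limit, and then exploit the Lipschitz property of the polyhedral expected loss to drive $R_L(u_n,p)\to 0$, contradicting calibration. The only cosmetic difference is that you also freeze $\psi(u_n)=r$ constant (the paper only needs $\psi(u_j)\notin\gamma(p)$), and you make explicit the pitfall of taking the limit of $p_n$ to a cell boundary — a point the paper handles the same way, implicitly, by selecting $p$ from the subsequence.
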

\begin{proof}
  Let $\Gamma := \prop{L}$ and $\gamma := \prop{\ell}$.
  Suppose that $\psi$ is not $\epsilon$-separated for any $\epsilon>0$.
  Then letting $\epsilon_i := 1/i$ we have sequences $\{p_i\}_i \subset \simplex$ and  $\{u_i\}_i \subset \reals^d$ such that for all $i\in\mathbb N$ we have both $\psi(u_i) \notin \gamma(p_i)$ and $d_\infty(u_i,\Gamma(p_i)) < \epsilon_i$.
  First, observe that there are only finitely many values for $\gamma(p_i)$ and $\Gamma(p_i)$, as $\R$ is finite and $L$ is polyhedral (from Lemma~\ref{lem:polyhedral-range-gamma}).
  Thus, there must be some $p\in\simplex$ and some infinite subsequence indexed by $j\in J \subseteq \mathbb N$ where
  for all $j\in J$, we have $\psi(u_j) \notin \gamma(p)$ and $\Gamma(p_j) = \Gamma(p)$.

  Next, observe that, as $L$ is polyhedral, the expected loss $\inprod{p}{L(u)}$ is $\beta$-Lipschitz in $\|\cdot\|_\infty$ for some $\beta>0$.
  Thus, for all $j\in J$, we have
  \begin{align*}
    d_\infty(u_i,\Gamma(p)) < \epsilon_j
    &\implies \exists u^*\in\Gamma(p)\; \|u_j-u^*\|_\infty < \epsilon_j
    \\
    &\implies \left| \inprod{p}{L(u_j)} - \inprod{p}{L(u^*)} \right| < \beta\epsilon_j
    \\
    &\implies \left| \inprod{p}{L(u_j)} - \risk{L}(p) \right| < \beta\epsilon_j~.
  \end{align*}
  Finally, for this $p$, we have
  \begin{align*}
    \inf_{u:\psi(u)\notin\gamma(p)} \inprod{p}{L(u)}
    \leq
    \inf_{j\in J} \inprod{p}{L(u_j)}
    =
    \risk{L}(p)~,
  \end{align*}
  contradicting the calibration of $\psi$.
\end{proof}

For the other direction, we will make use of Hoffman constants for systems of linear inequalities.
See \citet{zalinescu2003sharp} for a modern treatment.
\begin{theorem}[Hoffman constant \cite{hoffman1952approximate}]
  \label{thm:hoffman}
  Given a matrix $A\in\reals^{m\times n}$, there exists some smallest $H(A)\geq 0$, called the \emph{Hoffman constant} (with respect to $\|\cdot\|_\infty$), such that for all $b\in\reals^m$ and all $x\in\reals^n$,
  \begin{equation}
    \label{eq:hoffman}
    d_\infty(x,S(A,b)) \leq H(A) \|(A x - b)_+\|_\infty~,
  \end{equation}
  where $S(A,b) = \{x\in\reals^n \mid A x \leq b\}$ and $(u)_+ := \max(u,0)$ component-wise.
\end{theorem}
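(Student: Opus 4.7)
The plan is to realize the distance $d_\infty(x, S(A,b))$ as the value of a linear program, and then exploit the fact that LP vertices come from a finite combinatorial family depending only on $A$. Concretely, for $S(A,b) \neq \emptyset$, the distance equals
\begin{align*}
d_\infty(x, S(A,b)) \;=\; \min_{y \in \reals^n,\, t \geq 0}\;t \quad \text{subject to}\quad y - x \leq t\ones,\ x - y \leq t\ones,\ Ay \leq b,
\end{align*}
an LP whose constraint matrix is assembled from $\pm I$ and $A$, with right-hand side an affine function of $(x, b)$. The whole result is about controlling the optimum $t^{*}$ of this LP uniformly in $(x,b)$ in terms of the violation $v(x,b) := \|(Ax-b)_+\|_\infty$.

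First I would argue that this LP is feasible (since $S(A,b)\neq\emptyset$) and bounded below by $0$, so some optimum $(y^{*}, t^{*})$ is attained at a vertex. By standard LP theory, each such vertex corresponds to a set $B$ of active constraints whose rows form an invertible square submatrix $M_B$ of the combined constraint matrix; the vertex is then $M_B^{-1}$ applied to the relevant entries of $(x, -x, b, 0)$. There are only finitely many admissible $B$, and each $M_B$ is drawn from rows of $\pm I$ and $A$, hence depends only on $A$. Thus $t^{*}$ is, on each basis's region, an explicit affine function of $(x, b)$ via $M_B^{-1}$.

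The key step is to show $t^{*} \leq C_B \cdot v(x,b)$ for each admissible $B$, where $C_B$ depends only on $M_B$. Since $(y,t) = (x, v(x,b))$ satisfies the first two constraint groups and violates $Ay \leq b$ by at most $v(x,b)$, feasibility can be restored by translating $y$ along a direction dictated by the active rows of $A$; accounting for the cost of this correction in the $\|\cdot\|_\infty$ metric through $M_B^{-1}$ yields the bound. Setting $H(A) := \max_B C_B$ over the finitely many admissible bases gives a constant depending only on $A$, and taking the infimum over all valid constants yields the \emph{smallest} such $H(A)$.

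The main obstacle is uniform control of $C_B$: one has to verify that bases producing unbounded ratios cannot arise as optima (e.g., those corresponding to infeasible or unbounded subproblems), so that the finite maximum is well-defined. A cleaner route that sidesteps the explicit basis bookkeeping passes through LP duality: the distance equals the maximum of a linear functional of $(Ax-b)_+$ over a bounded dual polyhedron determined solely by $A$, and estimating the dual value on its finitely many vertices delivers $H(A)$ at once. Hoffman's original 1952 argument instead proceeds by induction on $m$ via Farkas' lemma, with base case $m=1$ (distance to a halfspace, which has closed form $(a^\top x - b)_+/\|a\|_1$) and an inductive step that separates a violated constraint from a feasible reduction onto fewer rows.
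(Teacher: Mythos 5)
The paper supplies no proof of this statement. It is a classical theorem cited directly to Hoffman (1952), with \citet{zalinescu2003sharp} pointed to for a modern treatment, and it is used as a black box in the proof of Lemma~\ref{lem:hoffman-polyhedral}. So there is no authors' proof to compare against; I will assess your proposal on its own merits.

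Your primary route (enumerate vertices of the primal LP and bound $t^*$ per basis) has a genuine gap, which to your credit you flag yourself. On the region of a fixed optimal basis $B$, the optimum $t^*$ is indeed an affine function of $(x,b)$ via $M_B^{-1}$, but that alone does not give $t^* \le C_B\,\|(Ax-b)_+\|_\infty$. You would have to show that this affine function vanishes whenever $Ax\le b$ and is bounded by a nonnegative combination of the violations $(A_i x - b_i)_+$; the ``translate $y$ along a direction dictated by the active rows of $A$'' step is a restatement of the conclusion rather than an argument for it. You also assert the primal optimum is attained at a vertex; that is true, but should be justified by noting the feasible set $\{(y,t) : \|y-x\|_\infty \le t,\ Ay\le b,\ t\ge 0\}$ is pointed, since $t\ge 0$ and the box constraint on $y-x$ exclude any line.

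Your alternative dual route is the one that actually closes, and is essentially the standard modern proof. Dualizing the LP and eliminating the multipliers for the box constraints yields, whenever $S(A,b)\neq\emptyset$, the identity $d_\infty(x,S(A,b)) = \max\{\mu^\top(Ax-b) : \mu\ge 0,\ \|A^\top\mu\|_1 \le 1\}$. One imprecision in your description: this dual polyhedron is pointed (it lies in the nonnegative orthant) but need not be bounded (take $A=0$), contrary to what you wrote; boundedness is not needed. Since the dual optimal value is finite, equal to the primal optimum, it is attained on a face, and since the polyhedron is pointed that face contains a vertex $\mu^*$, a member of a finite set determined by $A$ alone. Then $\mu^*\ge 0$ gives $\mu^{*\top}(Ax-b) \le \mu^{*\top}(Ax-b)_+ \le \|\mu^*\|_1\,\|(Ax-b)_+\|_\infty$, so $H(A) := \max\{\|\mu^*\|_1 : \mu^* \text{ a vertex}\}$ works. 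Fleshing out this paragraph gives a complete proof; the inductive Farkas-based argument from Hoffman's original paper, which you also mention, is another standard route but you do not develop it.
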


\begin{lemma}\label{lem:hoffman-polyhedral}
  Let $L: \reals^d \to \reals_+^{\Y}$ be a polyhedral loss with $\Gamma = \prop{L}$.
  Then for any fixed $p$, there exists some smallest constant $H_{L,p} \geq 0$ such that $d_{\infty}(u,\Gamma(p)) \leq H_{L,p} R_L(u,p)$ for all $u \in \reals^d$.
\end{lemma}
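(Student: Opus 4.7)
The plan is to realize $\Gamma(p)$ as the solution set of a finite system of linear inequalities, then apply Hoffman's theorem (Theorem~\ref{thm:hoffman}) to turn violation of those inequalities into a distance bound. First, since $L$ is polyhedral, each coordinate $L(\cdot)_y$ is a polyhedral convex function and can be written as a pointwise maximum of finitely many affine functions $\langle a_{y,i}, u\rangle + b_{y,i}$ with $i\in I_y$. Expanding the sum of maxes as a max of sums over the product index set $K = \prod_y I_y$, we obtain, for this fixed $p$,
\[
\inprod{p}{L(u)} \;=\; \max_{k \in K} \bigl( \langle c_k(p), u \rangle + d_k(p) \bigr),
\]
where $c_k(p) = \sum_y p_y a_{y,k_y}$ and $d_k(p) = \sum_y p_y b_{y,k_y}$.

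Next I would observe that, because $L$ is minimizable (Lemma~\ref{lem:polyhedral-range-gamma}), the Bayes risk $\risk{L}(p)$ is attained, so
\[
\Gamma(p) \;=\; \bigl\{ u \in \reals^d \;:\; \langle c_k(p), u\rangle + d_k(p) \leq \risk{L}(p) \text{ for all } k \in K \bigr\}.
\]
Let $A_p \in \reals^{|K| \times d}$ be the matrix whose rows are the $c_k(p)$, and let $b_p \in \reals^{|K|}$ have entries $\risk{L}(p) - d_k(p)$. Then $\Gamma(p) = S(A_p, b_p)$ in the notation of Theorem~\ref{thm:hoffman}, so Hoffman's theorem gives the Hoffman constant $H(A_p)$ with
\[
d_\infty(u, \Gamma(p)) \;\leq\; H(A_p) \, \bigl\|(A_p u - b_p)_+\bigr\|_\infty \qquad \text{for every } u \in \reals^d.
\]

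The final step is to recognize the right-hand norm as the surrogate regret. By construction,
\[
\bigl\|(A_p u - b_p)_+\bigr\|_\infty
= \max_{k \in K} \bigl(\langle c_k(p), u\rangle + d_k(p) - \risk{L}(p)\bigr)_+
= \bigl(\inprod{p}{L(u)} - \risk{L}(p)\bigr)_+
= R_L(u,p),
\]
where the last equality uses $\inprod{p}{L(u)} \geq \risk{L}(p)$. Hence the inequality $d_\infty(u,\Gamma(p)) \leq H(A_p) R_L(u,p)$ holds for all $u$, and setting $H_{L,p}$ to be the infimum over all $H \geq 0$ for which the inequality $d_\infty(u,\Gamma(p)) \leq H \cdot R_L(u,p)$ holds for every $u \in \reals^d$ (equivalently, $H_{L,p} = \sup_{u \notin \Gamma(p)} d_\infty(u,\Gamma(p))/R_L(u,p)$, which is finite by the above and nonnegative) yields the desired smallest constant. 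Note that when $R_L(u,p) = 0$ we have $u \in \Gamma(p)$ so $d_\infty(u,\Gamma(p)) = 0$ and the inequality is trivially satisfied.

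I expect no serious obstacle here: the main subtlety is simply that the affine pieces used to describe $\inprod{p}{L(u)}$ depend on $p$ (since they arise from a product over $y$), so Hoffman is applied for each $p$ separately, giving a pointwise constant $H_{L,p}$ rather than a uniform one—which is exactly what the lemma states. Uniformity across $p$ is not claimed here and would require the additional finite-range structure from Lemma~\ref{lem:polyhedral-range-gamma}, to be used in subsequent arguments.
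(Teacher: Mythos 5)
Your proposal is correct and follows essentially the same route as the paper's proof: express the fixed-$p$ expected loss $\inprod{p}{L(\cdot)}$ as a pointwise maximum of finitely many affine functions of $u$, realize $\Gamma(p)$ as the solution set $S(A_p,b_p)$ of the corresponding linear inequalities with $b_p$ shifted by $\risk{L}(p)$, observe that $\|(A_p u - b_p)_+\|_\infty = R_L(u,p)$, and invoke Theorem~\ref{thm:hoffman}. The only difference is that the paper asserts the max-affine representation directly (it follows since a nonnegative combination of polyhedral functions is polyhedral), while you build it explicitly via the product index set $K=\prod_y I_y$; this is a harmless elaboration, and your closing remarks about taking $H_{L,p}$ as the optimal (infimal) constant and handling the $R_L(u,p)=0$ case are also sound.
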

\begin{proof}
  Since $L$ is polyhedral, there exist $a_1,\ldots,a_m \in \reals^d$ and $c\in\reals^m$ such that we may write $\inprod{p}{L(u)} = \max_{1\leq j\leq m} a_j \cdot u + c_j$.
  Let $A \in \reals^{m\times d}$ be the matrix with rows $a_j$, and let $b = \risk{L}(p)\ones - c$, where $\ones\in\reals^m$ is the all-ones vector.
  Then we have
  \begin{align*}
    S(A,b)
    &:= \{u\in\reals^d \mid A u \leq b\}
    \\
    &= \{u\in\reals^d \mid A u + c \leq \risk{L}(p)\ones\}
    \\
    &= \{u\in\reals^d \mid \forall i\, (A u + c)_i \leq \risk{L}(p)\}
    \\
    &= \{u\in\reals^d \mid \max_i \;(A u + c)_i \leq \risk{L}(p)\}
    \\
    &= \{u\in\reals^d \mid \inprod{p}{L(u)} \leq \risk{L}(p)\}
    \\
    & = \Gamma(p)~.
  \end{align*}
  Similarly, we have $\max_i\; (A u - b)_i = \inprod{p}{L(u)} - \risk{L}(p) = \regret{L}{u}{p} \geq 0$.
  Thus,
  \begin{align*}
    \|(Au - b)_+\|_\infty
    &= \max_i\; ((Au - b)_+)_i
    \\
    &= \max((Au - b)_1,\ldots,(Au - b)_m, 0)
    \\
    &= \max(\max_i\; (Au - b)_i, \, 0)
    \\
    &= \max_i\; (Au - b)_i
    \\
    &= \regret{L}{u}{p}~.
  \end{align*}
  Now applying Theorem~\ref{thm:hoffman}, we have
  \begin{align*}
    d_\infty(u,\Gamma(p))
    &=    d_\infty(u,S(A,b))
    \\
    &\leq H(A) \|(Au-b)_+\|_\infty
    \\
    &= H(A) \regret{L}{u}{p}~.\qedhere
  \end{align*}
\end{proof}

Given discrete loss $\ell: \R \to \reals_+^{\Y}$, define the constant $C_{\ell} = \max_{r,r' \in \R, y \in \Y} \ell(r)_y - \ell(r')_y$.
We are now ready to prove Theorem \ref{thm:calibrated-separated}.
\calibratedseparated*
\begin{proof}
  Let $\gamma=\prop{\ell}$ and $\Gamma=\prop{L}$.
  From Lemma~\ref{lem:calibrated-eps-sep}, calibration implies $\epsilon$-separation.
  For the converse, suppose $\psi$ is $\epsilon$-separated with respect to $L$ and $\ell$.
  Fix $p\in\simplex$.
  To show calibration, it suffices to find a positive lower bound for $R_L(u,p)$ that holds for all $u\in\reals^d$ with $\psi(u) \notin \gamma(p)$.
  
  Applying the definition of $\epsilon$-separated and Lemma~\ref{lem:hoffman-polyhedral}, $\psi(u) \notin \gamma(p)$ implies
  \begin{align*}
    \epsilon &\leq    d_{\infty}(u,\Gamma(p)) \leq H_{L,p} R_L(u,p) \implies 1 \leq \frac{H_{L,p}}{\epsilon} R_L(u,p)~.
  \end{align*}
  Let $C_{\ell} = \max_{r,p} R_{\ell}(r,p)$.
  Then $R_{\ell}(\psi(u),p) \leq C_{\ell} \leq \frac{C_{\ell} H_{L,p}}{\epsilon} R_L(u,p)$.

  If $H_{L,p} = 0$, then for all $u\in\reals^d$ we have $R_\ell(\psi(u),p) = 0$, so calibration for this $p$ is trivial.
  Similarly, if $C_\ell = 0$, then $R_\ell(r,p) = 0$ for all $r\in\R$, so again $R_\ell(\psi(u),p) = 0$ for all $u\in\reals^d$.

  Now assume $C_\ell > 0$ and $H_{L,p} > 0$.
  Let $C'_{\ell,p} \doteq \min_{r \notin \gamma(p)} R_\ell(r,p) > 0$.
  (As we assume $C_\ell > 0$, we must have $\gamma(p) \neq \R$, so the minimum is attained.)
  Then for all $u$ such that $\psi(u) \notin \gamma(p)$, we have $R_\ell(\psi(u),p) \geq C'_{\ell,p}$.
  Rearranging, we have
  \[ \psi(u) \notin \gamma(p) \implies R_L(u,p) \geq \frac{C'_{\ell,p} \epsilon}{C_\ell H_{L,p}} > 0~.\]
  Thus, $\inf_{u : \psi(u) \notin \gamma(p)} \inprod{L(u)}{p} > \risk{L}(p)$.
  Since the above holds for all $p\in\simplex$, $\psi$ is calibrated.
\end{proof}

\section{Surrogate Regret Bounds} \label{app:regret-bounds}

\subsection{Proof of Theorem~\ref{thm:linear-regret-bound}}

\begin{lemma}\label{lem:linear-on-levelset}
  Suppose $(L,\psi)$ indirectly elicits $\ell$ and let $\Gamma = \prop{L}$.
  Then for any fixed $u,u^* \in \reals^d$ and $r \in \R$, the functions $R_L(u,\cdot)$ and $R_{\ell}(r,\cdot)$ are linear in their second arguments on $\Gamma_{u^*}$.
\end{lemma}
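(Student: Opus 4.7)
The plan is to exploit the fact that, while the Bayes risks $\risk L$ and $\risk \ell$ are concave in general (infima of linear functions), they collapse to a single affine piece on the level set $\Gamma_{u^*}$, because within that level set we know a concrete minimizer.

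First I would handle $R_L(u,\cdot)$. By definition of $\Gamma_{u^*}$, for every $p \in \Gamma_{u^*}$ we have $u^* \in \Gamma(p) = \argmin_{v} \inprod{p}{L(v)}$, which means $\risk{L}(p) = \inprod{p}{L(u^*)}$. Substituting into the definition of the regret gives
\begin{equation*}
  R_L(u,p) \;=\; \inprod{p}{L(u)} - \risk{L}(p) \;=\; \inprod{p}{L(u) - L(u^*)}
\end{equation*}
for all $p \in \Gamma_{u^*}$. Since $u,u^*$ are fixed, $L(u) - L(u^*) \in \reals^\Y$ is a fixed vector, so the right-hand side is a linear function of $p$.

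Next I would handle $R_\ell(r,\cdot)$ by invoking indirect elicitation. Writing $\gamma = \prop{\ell}$, the assumption that $(L,\psi)$ indirectly elicits $\ell$ means $\Gamma_{u^*} \subseteq \gamma_{\psi(u^*)}$ by Definition~\ref{def:indirect-elic}, so for every $p \in \Gamma_{u^*}$, the report $\psi(u^*)$ is $\ell$-optimal at $p$, i.e., $\risk{\ell}(p) = \inprod{p}{\ell(\psi(u^*))}$. Substituting again,
\begin{equation*}
  R_\ell(r,p) \;=\; \inprod{p}{\ell(r)} - \risk{\ell}(p) \;=\; \inprod{p}{\ell(r) - \ell(\psi(u^*))}
\end{equation*}
for all $p \in \Gamma_{u^*}$, which is linear in $p$ since $r$ and $u^*$ are fixed. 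There is no real obstacle: the only conceptual content is the observation that indirect elicitation transfers the ``single-minimizer'' structure from $L$ to $\ell$ along level sets of $\Gamma$, which is precisely what the definition provides.
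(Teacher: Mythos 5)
Your proof is correct and follows essentially the same route as the paper: both express the regret as $\inprod{p}{L(u)-L(u^*)}$ (resp.\ $\inprod{p}{\ell(r)-\ell(r^*)}$) using the fact that $u^*$ (resp.\ some fixed $r^*$) is a known minimizer on $\Gamma_{u^*}$. The only cosmetic difference is that you explicitly take $r^* = \psi(u^*)$, whereas the paper merely asserts the existence of such an $r^*$ from the definition of indirect elicitation.
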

\begin{proof}
  Let $u^* \in \reals^d$ and $p \in \Gamma_{u^*}$.
  By definition, for all $p \in \Gamma_{u^*}$, $\risk{L}(p) = \inprod{p}{L(u^*)}$.
  So for fixed $u$,
    \[ R_L(u,p) = \inprod{p}{L(u)} - \inprod{p}{L(u^*)} = \inprod{p}{L(u) - L(u^*)} , \]
  a linear function of $p$ on $\Gamma_{u^*}$.
  Next, by the definition of indirect elicitation, there exists $r^*$ such that $\Gamma_{u^*} \subseteq \gamma_{r^*}$.
  By the same argument, for fixed $r$, $R_{\ell}(r,p) = \inprod{p}{\ell(r) - \ell(r^*)}$, a linear function of $p$ on $\gamma_{r^*}$ and thus on $\Gamma_{u^*}$.
\end{proof}

Recall the definitions of $C_\ell$ and $H_{L,p}$ from \S~\ref{sec:equiv-sep-calib}.

\begin{lemma}\label{lem:separated-constant-p}
  Let $\ell: \R \to \reals_+^{\Y}$ be a discrete target loss, $L: \reals^d \to \reals_+^{\Y}$ be a polyhedral surrogate loss, and $\psi: \reals^d \to \R$ a link function.
  If $(L,\psi)$ indirectly elicit $\ell$ and $\psi$ is $\epsilon$-separated, then for all $u$ and $p$,
    \[ R_{\ell}(\psi(u),p) \leq \frac{C_{\ell} H_{L,p}}{\epsilon} R_L(u,p) . \]
\end{lemma}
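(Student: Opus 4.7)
The plan is to split on whether $\psi(u)$ is already $\ell$-optimal for $p$. If $\psi(u) \in \gamma(p)$ (where $\gamma = \prop{\ell}$), then $R_\ell(\psi(u),p) = 0$ and the inequality is trivial, since $H_{L,p}, \epsilon, R_L(u,p)$ are all nonnegative. So the substantive case is when $\psi(u) \notin \gamma(p)$, and here I would follow almost verbatim the chain of inequalities already used in the proof of Theorem~\ref{thm:calibrated-separated}.

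In that case, $\epsilon$-separation gives $d_\infty(u,\Gamma(p)) \geq \epsilon$, while Lemma~\ref{lem:hoffman-polyhedral} (applied to the polyhedral $L$ at the fixed distribution $p$) gives $d_\infty(u,\Gamma(p)) \leq H_{L,p} R_L(u,p)$. Chaining these two inequalities yields
\begin{equation*}
  1 \;\leq\; \frac{H_{L,p}}{\epsilon}\, R_L(u,p)~.
\end{equation*}
On the target side, the definition $C_\ell = \max_{r,r',y}\bigl(\ell(r)_y - \ell(r')_y\bigr)$ immediately implies $R_\ell(r,p) \leq C_\ell$ for every $r\in\R$ and $p\in\simplex$, and in particular for $r = \psi(u)$. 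Multiplying the previous inequality by $C_\ell$ and combining gives
\begin{equation*}
  R_\ell(\psi(u),p) \;\leq\; C_\ell \;\leq\; \frac{C_\ell H_{L,p}}{\epsilon}\, R_L(u,p)~,
\end{equation*}
as required.

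I do not expect any real obstacle here: the two main ingredients, $\epsilon$-separation from the hypothesis and the Hoffman-style linear bound from Lemma~\ref{lem:hoffman-polyhedral}, are exactly what drove the analogous step in Theorem~\ref{thm:calibrated-separated}, and the only additional observation needed is the elementary bound $R_\ell(r,p) \leq C_\ell$. Note that indirect elicitation is not explicitly used in the argument beyond giving the inequality a natural interpretation (so that linking $\ell$-optimal surrogate points to $\ell$-optimal targets is compatible); all one really needs is that $\psi$ be $\epsilon$-separated with respect to $\prop{L}$ and $\prop{\ell}$. One mild care point is that $H_{L,p}$ is guaranteed to be well-defined (and finite) by Lemma~\ref{lem:hoffman-polyhedral}, so the stated bound always makes sense.
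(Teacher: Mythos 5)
Your proof is correct and follows the paper's argument essentially verbatim: the same case split on $\psi(u)\in\gamma(p)$, the same chaining of $\epsilon$-separation with Lemma~\ref{lem:hoffman-polyhedral}, and the same closing bound $R_\ell(\psi(u),p)\leq C_\ell$. Your side remark that indirect elicitation is not explicitly invoked in the chain of inequalities is accurate, and your use of $\geq\epsilon$ rather than the paper's $>\epsilon$ is actually the one consistent with Definition~\ref{def:sep-link}.
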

\begin{proof}
  If $\psi(u) \in \gamma(p)$, then $R_{\ell}(u,p) = 0$ and we are done.
  Otherwise, applying the definition of $\epsilon$-separated and Lemma~\ref{lem:hoffman-polyhedral},
  \begin{align*}
    \epsilon &<    d_{\infty}(u,\Gamma(p))  \\
             &\leq H_{L,p} R_L(u,p) .
  \end{align*}
  So $R_{\ell}(\psi(u),p) \leq C_{\ell} \leq \frac{C_{\ell} H_{L,p}}{\epsilon} R_L(u,p)$.
\end{proof}

We can now restate and prove Theorem~\ref{thm:linear-regret-bound}.
\begin{theorem*}[Theorem~\ref{thm:linear-regret-bound}]
  Let $\ell: \R \to \reals_+^{\Y}$ be discrete, $L: \reals^d \to \reals_+^{\Y}$ polyhedral, and $\psi: \reals^d \to \R$.
  If $(L,\psi)$ are consistent for $\ell$, then there exists constants $\epsilon_\psi, H_L > 0$ such that
    \[ (\forall h,\D) \quad R_{\ell}(\psi \circ h ; \D) \leq \frac{C_{\ell} H_L}{\epsilon_{\psi}} R_L(h ; \D) ~. \]
\end{theorem*}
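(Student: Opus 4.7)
The plan is to upgrade the pointwise bound of Lemma~\ref{lem:separated-constant-p} to one that is uniform in $p$, using the finite cellular structure of $\prop L$, and then integrate over the data distribution. First, I would invoke Theorem~\ref{thm:calibrated-separated}: consistency of $(L,\psi)$ for the discrete target $\ell$ is equivalent to calibration, which yields some $\epsilon_\psi > 0$ for which $\psi$ is $\epsilon_\psi$-separated. Lemma~\ref{lem:separated-constant-p} then supplies the pointwise bound $R_\ell(\psi(u), p) \leq (C_\ell H_{L,p} / \epsilon_\psi) R_L(u, p)$, and the goal is to replace the $p$-dependent Hoffman constant $H_{L,p}$ by a single constant $H_L$ depending only on $L$.

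To do so I would exploit the finite cellular structure of $\Gamma := \prop L$. By Corollary~\ref{cor:poly-risk-fin-rep}, the polyhedral $L$ has a finite representative set, so Lemma~\ref{lem:X}(\ref{item:X-full-dim},\ref{item:X-redundant}) tells us that $\Gamma$ has only finitely many full-dimensional level sets $\Gamma_{u_1^*}, \ldots, \Gamma_{u_k^*}$ and that together they cover $\simplex$. Each such level set is a closed polytope in $\simplex$ with finitely many extreme points, so the union $P$ of all extreme points across all these level sets is a finite subset of $\simplex$. I set $H_L := \max_{p_0 \in P} H_{L,p_0}$, which is finite by Lemma~\ref{lem:hoffman-polyhedral}.

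Given any $u \in \reals^d$ and $p \in \simplex$, I pick any full-dimensional level set $\Gamma_{u^*}$ containing $p$ and consider
\[
\phi(q) \;:=\; \frac{C_\ell H_L}{\epsilon_\psi}\, R_L(u, q) \;-\; R_\ell(\psi(u), q) \qquad (q \in \Gamma_{u^*}).
\]
By Lemma~\ref{lem:linear-on-levelset}, together with the fact that calibration implies indirect elicitation (so $\psi(u^*) \in \gamma(q)$ for every $q \in \Gamma_{u^*}$, anchoring $R_\ell(\psi(u),\cdot)$ at the common optimum $\psi(u^*)$ throughout $\Gamma_{u^*}$), both $R_L(u,\cdot)$ and $R_\ell(\psi(u),\cdot)$ are linear on $\Gamma_{u^*}$, hence so is $\phi$. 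A linear function on a compact polytope attains its minimum at an extreme point, so it suffices to verify $\phi(p_0) \geq 0$ at each extreme point $p_0$ of $\Gamma_{u^*}$. If $\psi(u) \in \gamma(p_0)$ the right-hand term vanishes; otherwise separation gives $d_\infty(u, \Gamma(p_0)) \geq \epsilon_\psi$, and Lemma~\ref{lem:hoffman-polyhedral} with $H_{L,p_0} \leq H_L$ yields $\epsilon_\psi \leq H_L\, R_L(u, p_0)$, so $(C_\ell H_L / \epsilon_\psi) R_L(u, p_0) \geq C_\ell \geq R_\ell(\psi(u), p_0)$. This gives the pointwise bound on $\Gamma_{u^*}$ and therefore on all of $\simplex$; integrating over $X \sim \D$ with conditional label distribution $p_X$ delivers $R_\ell(\psi \circ h ; \D) \leq (C_\ell H_L/\epsilon_\psi) R_L(h;\D)$.

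The hard part is precisely this uniformization: the naive candidate $\sup_p H_{L,p}$ is infinite in general, since already for hinge loss the per-$p$ Hoffman constant diverges as $p$ approaches the boundary of a level set of $\prop L$, so no universal Hoffman bound of that form exists. The extreme-point argument works only because the linearity of both $R_L(u,\cdot)$ and $R_\ell(\psi(u),\cdot)$ on each $\Gamma_{u^*}$ reduces the inequality to its values at the finitely many corner distributions, where the Hoffman bound is finite.
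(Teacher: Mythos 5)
Your proof is correct and follows essentially the same route as the paper's: invoke calibration $\Leftrightarrow$ separation to get $\epsilon_\psi$, use the finitely many full-dimensional level sets of $\prop L$ from Lemma~\ref{lem:X} to define $H_L$ as the max Hoffman constant over their (finitely many) vertices, apply Lemma~\ref{lem:linear-on-levelset} for per-cell linearity of both regrets, and reduce the bound to its values at the vertices. The only cosmetic difference is the phrasing of the per-cell step: you argue that the linear gap $\phi(q) = (C_\ell H_L/\epsilon_\psi)R_L(u,q) - R_\ell(\psi(u),q)$ is nonnegative at extreme points and hence on the whole polytope, while the paper writes $p$ as an explicit convex combination of vertices and carries the inequality through; these are logically the same maneuver. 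You also make explicit a point the paper leaves tacit, namely that calibration implies indirect elicitation so that Lemma~\ref{lem:linear-on-levelset} applies, and you correctly diagnose why the naive uniform bound $\sup_p H_{L,p}$ fails.
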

\begin{proof}
  Let $\Gamma = \prop{L}$.
  From Lemma~\ref{lem:X}, there is a finite set $U \subset \reals^d$ of predictions such that (a) for each $u \in U$, the level set $\Gamma_u$ is a polytope (see e.g.\ Lemma~\ref{lem:level-set-is-projected-face}), and (b) $\cup_{u \in U} \Gamma_u = \simplex$.
  For each $u\in U$ let $\mathcal{Q}_u \subset \simplex$ be the finite set of vertices of the polytope $\Gamma_u$, and define the finite set $\mathcal{Q} = \cup_{u \in U} \mathcal{Q}_u$.
  Let $H_L := \max_{q\in\mathcal{Q}} H_{L,q}$.
  
  By Lemma~\ref{lem:calibrated-eps-sep}, $\psi$ is $\epsilon$-separated for some $\epsilon>0$; let $\epsilon_{\psi} = \epsilon$.
  By Lemma~\ref{lem:separated-constant-p}, for each $q \in \mathcal{Q}$,
  \[ R_{\ell}(\psi(u),q) \leq \frac{C_{\ell} H_{L,q}}{\epsilon_{\psi}} R_L(u,q) \leq \frac{C_{\ell} H_L}{\epsilon_{\psi}} R_L(u,q)\]
  for all $u\in\reals^d$.
  Now consider a general $p \in \simplex$, which is in some full-dimensional polytope level set $\Gamma_u$.
  Write $p = \sum_{q \in \mathcal{Q}_u} \beta(q) q$ for some convex combination $\beta \in \Delta_{\mathcal Q_u}$.
  By Lemma~\ref{lem:linear-on-levelset}, $R_L$ and $R_{\ell}$ are linear in the second argument on $\Gamma_u$, so for any $u'\in\reals^d$,
  \begin{align*}
    R_{\ell}(\psi(u'),p)
    &=    \sum_{q \in \mathcal{Q}_u} \beta(q) R_{\ell}(\psi(u'), q)  \\
    &\leq \sum_{q \in \mathcal{Q}_u} \beta(q) \frac{C_{\ell} H_{L}}{\epsilon_{\psi}} R_L(u', q) \\
    &\leq \frac{C_{\ell} H_L}{\epsilon_{\psi}} \sum_{q \in \mathcal{Q}_u} \beta(q) R_L(u', q)  \\
    &= \frac{C_{\ell} H_L}{\epsilon_{\psi}} R_L(u', p) .
  \end{align*}
  The result for $\D$ now holds by linearity of expectation over $\D$.
\end{proof}

\subsection{Tighter bounds}
\label{sec:regret-tighter-bounds}

Our goal in proving Theorem~\ref{thm:linear-regret-bound} is to show a broad result that consistent polyhedral losses always yield linear regret bounds.
As one may expect given the generality of the result, however, the specific constant we derive may be loose in some cases.
We now discuss some techniques to further tighten the constant.

Let us consider the tightest possible constant $c^*$ for which $R_{\ell}(\psi \circ h;\D) \leq c^* R_L(h;\D)$ for all $h$ and $\D$.
In general, for a fixed $p$, there is some smallest $c_p^*$ such that $R_{\ell}(\psi(u),p) \leq c_p^* R_L(u,p)$ for all $u$.
It therefore follows from our results that $c^* = \max_{p \in \mathcal{Q}} c_p^*$ for the finite set $\mathcal{Q}$ used in the proof, i.e., the vertices of the full-dimensional level sets of $\Gamma = \prop{L}$.

Above, we bounded $c_p^* \leq \frac{C_{\ell} H_{L,p}}{\epsilon_{\psi}}$.
The intuition is that some $u$ at distance $\geq \epsilon_{\psi}$ from $\Gamma(p)$, the optimal set, may link to a ``bad'' report $r = \psi(u) \not\in \gamma(p)$.
The rate at which $L$ grows is at least $H_{L,p}$, so the surrogate loss at $u$ may be as small as $\frac{\epsilon_{\psi}}{H_{L,p}}$, while the target regret may be as high as $C_{\ell} = \max_{r',p'} R_{\ell}(r',p')$.
The ratio of regrets is therefore bounded by $\frac{H_{L,p} C_{\ell}}{\epsilon_{\psi}}$.

The tightest possible bound, on the other hand, is $c_p^* = \sup_{u: \psi(u) \not\in \gamma(p)} \frac{R_{\ell}(\psi(u),p)}{R_L(u,p)}$.
This bound can be smaller if the values of numerator and denominator are correlated across $u$.
For example, $u$ may only be $\epsilon_{\psi}$-close to the optimal set when it links to reports $\psi(u)$ with lower target regret; or $L$ may have a smaller slope in the direction where the link's separation is larger than $\epsilon$.

To illustrate with a concrete example, consider the \emph{binary encoded predictions (BEP) surrogate} of \citet{ramaswamy2018consistent}, which we discuss in \S~\ref{sec:abstain}.
The target loss here is the abstain loss, $\ell(r,y) = \frac{1}{2}$ if $r = \bot$, otherwise $\ell(r,y) = \ones\{r \neq y\}$.
Letting $d = \ceil{\log_2 |\Y|}$, the BEP surrogate $L : \reals^d \to \reals^\Y_+$ is given by
$L(u)_y = \max_{j \in [d]} \left(1 - \varphi(y)_j u_j\right)_+$,
where $\varphi:\Y\to\{-1,1\}^d$ is an injection.
The associated link is $\psi(u) = \bot$ if $\|u\|_\infty \leq \tfrac{1}{2}$, otherwise $\psi(u) = \argmin_{y \in \Y} \|B(y) - u\|_{\infty}$.

One can show for $p = \delta_y$, the distribution with full support on some $y \in \Y$, that $L(u)_y = d_{\infty}(u,\Gamma(p))$ exactly, giving $H_{L,p} = 1$.
It is almost immediate that $\epsilon_{\psi} = \tfrac{1}{2}$.
Meanwhile, $R_{\ell}(r,p) \leq 1$, giving us an upper bound $c^*_p \leq \frac{(1)(1)}{1/2} = 2$.
The exact constant as given by \citeauthor{ramaswamy2018consistent}, however, is $c^* = 1$.
The looseness stems from the fact that for $p = \delta_y$, the closest reports $u$ to the optimal set, i.e., at distance only $\epsilon_{\psi} = \tfrac{1}{2}$ away, do not link to reports maximizing target regret; they link to the abstain report $\bot$, which has regret only $\tfrac{1}{2}$.
With this correction, and an observation that all $u$ linking to reports $y' \neq y$ are at distance at least $\tfrac{3}{2}$ from $\Gamma(p)$, we restore the tight bound $c^*_p \leq 1$.
A similar but slightly more involved calculation can be carried out for the other vertices $p \in \mathcal{Q}$, which turn out to be all vertices of the form $\tfrac{1}{2} \delta_y + \tfrac{1}{2} \delta_{y'}$.

Finally, while we use $\|\cdot\|_{\infty}$ to define the minimum-slope $H_L$ and the separation $\epsilon_\psi$, in principle one could use another norm.
One reason for restricting to $\|\cdot\|_\infty$ is that it is more compatible with Hoffman constants.
However, all definitions hold for other norms and so does the main upper bound, as existence of an $H_L$ and $\epsilon_{\psi}$ in $\|\cdot\|_{\infty}$ imply existence of constants for other norms.
These constants may change for different norms, and in particular, the optimal overall constant may arise from a norm other than $\|\cdot\|_\infty$.

\section{Existence of a Separated Link} \label{app:sep-link-exists}
In this section, we prove Theorem \ref{thm:thickened-separated} from \S~\ref{sec:calibration}, as discussed at the beginning of Appendix \ref{sec:equiv-sep-calib}: embeddings give rise to separated links.
The crux of the proof is showing that embeddings imply eq.~\eqref{eq:intersection-property-embedding}, the intersection condition on optimal sets, and that this condition is sufficient for the construction to produce a link.
Calibration then follows by the fact every link produced by Construction~\ref{const:eps-thick-link} is separated, and therefore calibrated.

In fact, we will show that the approach outlined above also suffices for the more general case where the given polyhedral surrogate indirectly elicits a given finite property.
This more general setting will allow us to prove the results from \S~\ref{sec:poly-ie-consistency}, and in particular, that indirect elicitation implies calibration for polyhedral surrogates.

To relate back to embeddings, we split the first phase into two: (i) an embedding is a special case of indirect elicitation (Lemma~\ref{lem:embedding-implies-indirect-elic}), and (ii) indirect elicitation is equivalent to the intersection condition (Lemma~\ref{lem:intersection-equiv-indirect-elic}).
We will then reason instead about Construction~\ref{const:general-eps-thick-link}, a generalization of Construction~\ref{const:eps-thick-link} for indirect elicitation.

\subsection{A more general construction}

As described in \S~\ref{sec:poly-ie-consistency}, the task of generalizing Construction~\ref{const:eps-thick-link} beyond embeddings reduces to carefully generalizing the definition of $R_U$.
Informally, $R_U$ in Construction~\ref{const:eps-thick-link} is the set of target reports which must be $\ell$-optimal whenever $U$ is $L$-optimal.
There we define $R_U$ simply as $\{r\in\Sc \mid \varphi(r) \in U\}$ where $\Sc$ is the given representative set.
For the more general case, we can define $R_U$ as follows.
\begin{definition}\label{def:general-link-defs}
  For polyhedral loss $L:\reals^d\to\reals^\Y_+$, and finite propert $\gamma:\simplex\toto\R$, we will define
  \begin{itemize}
  \item $\Gamma = \prop L$,
  \item $\U = \{\Gamma(p) \mid p \in \simplex\}$,
  \item $\Gamma_U := \{p\in\simplex \mid U = \Gamma(p)\}$ for all $U \in \U$,
  \item $R_U := \{r\in\R \mid \Gamma_U \subseteq \gamma_r\}$ for all $U \in \U$.
  \end{itemize}
\end{definition}

Construction~\ref{const:general-eps-thick-link} is essentially the same as Construction~\ref{const:eps-thick-link} but with the definition of $R_U$ above.

\begin{construction}[General $\epsilon$-thickened link] \label{const:general-eps-thick-link}
  Let $L:\reals^d\to\reals^\Y_+$, $\gamma:\simplex\toto\R$, $\epsilon > 0$, and a norm $\|\cdot\|$ be given, such that $L$ is polyhedral and indirectly elicits $\gamma$.
  Let $\U$ and $R_U$ be defined as in Definition~\ref{def:general-link-defs}.
  The \emph{$\epsilon$-thickened link} $\psi$ is constructed as follows.
  First, initialize the \emph{link envelope} $\Psi: \reals^d \to 2^{\R}$ by setting $\Psi(u) = \R$ for all $u$.
  Then for each $U \in \U$, for all points $u$ such that $\inf_{u^* \in U} \|u^*-u\| < \epsilon$, update $\Psi(u) = \Psi(u) \cap R_U$.
  If we have $\Psi(u)\neq\emptyset$ for all $u\in\reals^d$, then the construction \emph{produces a link} $\psi \in \Psi$ pointwise, breaking ties arbitrarily.
\end{construction}

It is straightforward to show that Construction~\ref{const:eps-thick-link} is a special case of Construction~\ref{const:general-eps-thick-link}.

\begin{lemma}\label{lem:general-construction-special-case}
  Let $L:\reals^d\to\reals^\Y_+$ be a polyhedral surrogate which embeds $\ell:\R\to\reals^\Y_+$ via the representative set $\Sc\subseteq\R$ and embedding $\varphi:\Sc\to\reals^d$.
  Then for any $\epsilon>0$ and norm $\|\cdot\|$,
  Construction~\ref{const:eps-thick-link} for $L,\ell,\epsilon,\|\cdot\|$ is equivalent to
  Construction~\ref{const:general-eps-thick-link} for $L,\prop{\ell|_\Sc},\epsilon,\|\cdot\|$.
\end{lemma}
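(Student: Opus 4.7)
My plan is to verify that the two constructions, when written side by side with $\gamma := \prop{\ell|_\Sc}$, have (a) identical starting envelopes and (b) identical update sets $R_U$ indexed by the same family $\U$ of optimal report sets; then the pointwise updates and the final envelope coincide, and they produce the same collection of links.

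First, I would observe that since $\gamma = \prop{\ell|_\Sc}$ has range $\Sc$, the initialization $\Psi(u) = \R$ in Construction~\ref{const:general-eps-thick-link} becomes $\Psi(u) = \Sc$, matching Construction~\ref{const:eps-thick-link}. Both constructions use the same $L$, and hence the same $\Gamma = \prop L$ and the same family $\U = \{\Gamma(p) : p \in \simplex\}$. The only nontrivial step is to show that, for each $U \in \U$, the sets
\[
R_U^{\mathrm{emb}} := \{r \in \Sc : \varphi(r) \in U\} \qquad \text{and} \qquad R_U^{\mathrm{gen}} := \{r \in \Sc : \Gamma_U \subseteq \gamma_r\}
\]
coincide, where $\Gamma_U = \{p \in \simplex : U = \Gamma(p)\}$. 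Once this is shown, the per-$U$ updates agree and so do the resulting envelopes.

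To prove $R_U^{\mathrm{emb}} = R_U^{\mathrm{gen}}$, I would first apply Lemma~\ref{lem:loss-restrict} to get $\gamma(p) = \prop{\ell}(p) \cap \Sc$, and combine with embedding condition (ii) to conclude that for every $r \in \Sc$,
\[
\gamma_r = \{p \in \simplex : r \in \prop{\ell}(p)\} = \{p \in \simplex : \varphi(r) \in \Gamma(p)\},
\]
i.e., $\gamma_r$ is exactly the level set of $\Gamma$ at $\varphi(r)$. The forward inclusion is then immediate: if $\varphi(r) \in U$, then any $p \in \Gamma_U$ has $\Gamma(p) = U \ni \varphi(r)$, so $p \in \gamma_r$, giving $\Gamma_U \subseteq \gamma_r$. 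For the reverse inclusion, since $U \in \U$ we may pick some $p \in \Gamma_U$ (nonempty by definition of $\U$); then $p \in \gamma_r$ implies $r \in \gamma(p) = \prop{\ell}(p) \cap \Sc$, and embedding condition (ii) gives $\varphi(r) \in \Gamma(p) = U$.

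With $R_U^{\mathrm{emb}} = R_U^{\mathrm{gen}}$ for every $U \in \U$ and identical initializations, the sequence of intersections defining $\Psi$ is the same in both constructions, so the link envelopes $\Psi$ are pointwise equal; hence the two constructions produce identical sets of links, and in particular one produces a link if and only if the other does. The main (and only real) obstacle is the notational juggling between the set-level $\Gamma_U$, the report-level level set $\gamma_r$, and the embedding image $\varphi(r)$; once the identification $\gamma_r = \Gamma_{\varphi(r)}$ is pinned down via Lemma~\ref{lem:loss-restrict} and the embedding condition, everything else is bookkeeping.
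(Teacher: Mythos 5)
Your proof is correct and follows essentially the same route as the paper's: both reduce the claim to showing that the update sets $R_U$ agree for each $U\in\U$, via the identification $\gamma_r = \Gamma_{\varphi(r)}$ obtained from Lemma~\ref{lem:loss-restrict} and embedding condition (ii). Your version is slightly more spelled out (splitting the equivalence into two inclusions and explicitly invoking nonemptiness of $\Gamma_U$, which the paper leaves implicit), but there is no substantive difference in approach.
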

\begin{proof}
  Let $\gamma = \prop{\ell|_\Sc}$, which is also given by $\gamma: p \mapsto \prop{\ell}(p) \cap \Sc$.
  Let $U\in\U$.
  Then for $r\in\Sc$, we have $r \in R_U \iff \Gamma_U \subseteq \gamma_r \iff \Gamma_U \subseteq \Gamma_{\varphi(r)} \iff (\Gamma(p) = U \implies \varphi(r) \in \Gamma(p)) \iff \varphi(r) \in U$.
  As we have $\R = \Sc$ in Construction~\ref{const:general-eps-thick-link} for $L,\prop{\ell|_\Sc},\epsilon,\|\cdot\|$, we conclude
  $R_U = \{r\in\Sc \mid \varphi(r) \in U\}$, exactly as in Construction~\ref{const:eps-thick-link} for $L,\ell,\epsilon,\|\cdot\|$.
  The equivalence of the two constructions follows.
\end{proof}

\subsection{Indirect elicitation and optimal set intersection}

We first show (i), that embedding is a special case of indirect elicitation.
\begin{lemma}\label{lem:embedding-implies-indirect-elic}
  If $L$ embeds $\ell$, then $L$ indirectly elicits $\prop\ell$.
\end{lemma}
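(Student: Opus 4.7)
The plan is to apply the structural results of Lemma~\ref{lem:X}. First, by Proposition~\ref{prop:embed-bayes-risks}, $L$ embedding $\ell$ forces $\risk L = \risk \ell$; assuming $\ell$ is discrete (the intended setting), this common Bayes risk is polyhedral, so Lemma~\ref{lem:X} applies to both $L$ and $\ell$. Since the finite set $\Theta$ of full-dimensional level sets guaranteed by that lemma is uniquely determined by the Bayes risk alone, the same $\Theta$ arises for $\Gamma := \prop L$ and $\gamma := \prop \ell$.

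Given this setup, for any $u \in \reals^d$, Lemma~\ref{lem:X}(\ref{item:X-redundant}) applied to $L$ yields some $\theta \in \Theta$ such that $\Gamma_u \subseteq \theta$. By Lemma~\ref{lem:X}(\ref{item:X-full-dim}) applied to $\ell$, that same $\theta$ is a level set $\gamma_r$ for some $r \in \R$, since $\Theta$ is precisely the collection of full-dimensional level sets of $\gamma$. Chaining gives $\Gamma_u \subseteq \gamma_r$; defining the link $\psi$ to send each $u$ to any such $r$ (chosen arbitrarily when several work, and arbitrarily in $\R$ when $\Gamma_u = \emptyset$) yields the witness that $L$ indirectly elicits $\gamma = \prop \ell$.

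The main obstacle is ensuring the hypotheses of Lemma~\ref{lem:X} are met, i.e.\ that $\risk L$ is polyhedral: when $\ell$ is discrete the representative set $\Sc$ is automatically finite and $\risk \ell$ is polyhedral as a minimum of finitely many affine functions, so this is free, but for general $\ell$ one must assume $\Sc$ is finite. A secondary subtlety is that, although Proposition~\ref{prop:embed-bayes-risks} gives Bayes risk equality immediately, one may prefer a direct derivation to keep the argument self-contained: given $p \in \simplex$, representativeness supplies $s \in \gamma(p) \cap \Sc$, condition (ii) of embedding forces $\varphi(s) \in \Gamma(p)$, and condition (i) yields $\risk L(p) = \inprod{p}{L(\varphi(s))} = \inprod{p}{\ell(s)} \geq \risk \ell(p)$, with the reverse inequality immediate via Lemma~\ref{lem:loss-restrict} since each $s \in \Sc$ has an image under $\varphi$ in $\reals^d$ with the same loss vector.
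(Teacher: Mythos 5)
Your proof is correct and takes essentially the same approach as the paper. Both arguments establish that $\Gamma := \prop{L}$ and $\gamma := \prop{\ell}$ share the same finite collection $\Theta$ of full-dimensional level sets—the paper via Proposition~\ref{prop:embed-trim}, you via Proposition~\ref{prop:embed-bayes-risks} together with the clause in Lemma~\ref{lem:X} that $\Theta$ depends on $\risk{L}$ alone—and then invoke Lemma~\ref{lem:X}(\ref{item:X-redundant}) and (\ref{item:X-full-dim}) to chain $\Gamma_u \subseteq \theta = \gamma_r$ for some $r$, which is exactly the definition of indirect elicitation.
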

\begin{proof}
  Let $\Gamma = \prop L$ and $\gamma = \prop\ell$.
  From Proposition \ref{prop:embed-trim}, we have $\trim(\gamma) = \trim(\Gamma) =: \Theta$.
  By definition of trim, for any $u\in\reals^d$, we have some $\theta \in \Theta$ such that $\Gamma_u \subseteq \theta$.
  Since $\trim(\gamma) = \Theta$, we have some $r\in\R$ such that $\theta = \gamma_r$, giving $\Gamma_u \subseteq \gamma_r$.
\end{proof}

We next show (ii), the equivalence of indirect elicitation and the following intersection condition.

\begin{lemma}\label{lem:intersection-equiv-indirect-elic}
  Let $L:\reals^d\to\reals^\Y_+$ be polyhedral and $\gamma:\simplex\toto\R$ be a finite property.
  Let $\U$ and $R_U$ be defined as in Definition~\ref{def:general-link-defs}.
  Then $L$ indirectly elicits $\gamma$
  if and only if the following condition holds
  \begin{align}
    \label{eq:general-intersection-condition}
    \forall \U'\subseteq\U,\; \cap_{U\in\U'} U \neq \emptyset \implies \cap_{U\in\U'} R_U \neq \emptyset~.
  \end{align}
\end{lemma}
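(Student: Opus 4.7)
The plan is to recast both sides of the equivalence in terms of a single key identity relating the level sets $\Gamma_u$ to the sets $\Gamma_U$. Observe that for any $u \in \reals^d$, a distribution $p$ lies in $\Gamma_u$ exactly when $u \in \Gamma(p)$, and if we set $U = \Gamma(p) \in \U$ then this is the same as saying $u \in U$ and $p \in \Gamma_U$. Thus
\[
  \Gamma_u \;=\; \bigcup_{U \in \U \,:\, u \in U} \Gamma_U,
\]
and in particular $\Gamma_u \neq \emptyset$ if and only if there exists $U \in \U$ with $u \in U$. This identity will let us translate the existence of a link value $r$ satisfying $\Gamma_u \subseteq \gamma_r$ into the simultaneous membership $r \in R_U$ for all $U$ containing $u$.

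For the forward direction, suppose $L$ indirectly elicits $\gamma$ via a link $\psi$, and let $\U' \subseteq \U$ with $\bigcap_{U \in \U'} U \neq \emptyset$. I would pick any $u^*$ in this intersection and show that $\psi(u^*) \in \bigcap_{U \in \U'} R_U$. Fix $U \in \U'$; then $u^* \in U$, so for any $p \in \Gamma_U$ we have $u^* \in \Gamma(p)$, i.e., $p \in \Gamma_{u^*}$. By indirect elicitation, $\Gamma_{u^*} \subseteq \gamma_{\psi(u^*)}$, hence $p \in \gamma_{\psi(u^*)}$. Since this holds for all $p \in \Gamma_U$, we get $\Gamma_U \subseteq \gamma_{\psi(u^*)}$, i.e., $\psi(u^*) \in R_U$, as desired.

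For the backward direction, assume the intersection condition and construct $\psi$ pointwise. For $u \in \reals^d$ with $\Gamma_u = \emptyset$, assign $\psi(u)$ arbitrarily. For $u$ with $\Gamma_u \neq \emptyset$, let $\U'_u = \{U \in \U : u \in U\}$, which is nonempty by the identity above. Then $u \in \bigcap_{U \in \U'_u} U$, so by the intersection condition there is some $r \in \bigcap_{U \in \U'_u} R_U$; set $\psi(u) = r$. To verify indirect elicitation, take any $p \in \Gamma_u$; by the identity, $p \in \Gamma_U$ for some $U \in \U'_u$, and since $\psi(u) \in R_U$ gives $\Gamma_U \subseteq \gamma_{\psi(u)}$, we conclude $p \in \gamma_{\psi(u)}$.

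I do not expect a significant obstacle here: the whole lemma is essentially unpacking the definitions through the identity $\Gamma_u = \bigcup_{U \ni u} \Gamma_U$. The only mild care needed is handling points $u$ with $\Gamma_u = \emptyset$ (which correspond to $u$ not lying in any $U \in \U$) and, on the forward direction, noting that $\Gamma_U \neq \emptyset$ for every $U \in \U$ so the implication $\Gamma_U \subseteq \gamma_{\psi(u^*)}$ is nonvacuous and captured by $\psi(u^*) \in R_U$.
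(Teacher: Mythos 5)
Your proof is correct and essentially identical to the paper's: both directions hinge on the identity $\Gamma_u = \bigcup_{U \ni u} \Gamma_U$, the forward direction picks a point in the intersection and shows its link value lands in every $R_U$, and the backward direction defines $\psi$ pointwise from the intersection condition with the $\Gamma_u = \emptyset$ case handled trivially.
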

\begin{proof}
  First assume $L$ indirectly elicits $\gamma$.
  As $\U$ is the range of $\Gamma$, we have for all $u\in\reals^d$ that $\Gamma_u = \cup\{\Gamma_U \mid U\in\U, u\in U\}$.
  Suppose $\cap_{U\in\U'} U \neq \emptyset$; let $u \in \cap_{U\in\U'} U$.
  As $u\in U$ for all $U\in\U'$, we have $\Gamma_U \subseteq \Gamma_u$ for all $U\in\U'$.
  By indirect elicitation, there exists some $r\in\R$ such that $\Gamma_u \subseteq \gamma_r$.
  Thus, for all $U\in\U'$, we have $\Gamma_U \subseteq \gamma_r$ and thus $r\in R_U$.
  We conclude $\cap_{U\in\U'} R_U \neq \emptyset$.

  For the converse, let $u\in\reals^d$.
  If $\Gamma_u = \emptyset$, then $\Gamma_u \subseteq \gamma_r$ for any $r\in\R$.
  Otherwise, $\Gamma_u \neq \emptyset$, and the set $\U'_u = \{U\in\U \mid u\in U\}$ is nonempty.
  Moreover, $\cap \U'_u \neq \emptyset$ as $u\in\cap \U'_u$.
  Eq.~\eqref{eq:general-intersection-condition} now gives some $r\in\cap\{R_U \mid U\in\U'_u\}$.
  By definition of $R_U$, for all $U\in\U'_u$ we have $\Gamma_U \emptyset \gamma_r$.
  Thus $\Gamma_u = \cup\{\Gamma_U \mid U\in\U'_u\} \subseteq \gamma_r$, showing indirect elicitation.
\end{proof}

\subsection{Convex geometry for separation}

Let some norm $\|\cdot\|$ on $\reals^d$ be given.
Given a set $T\subseteq\reals^d$ and a point $u\in\reals^d$, let $d(T,u) = \inf_{t \in T} \|t-u\|$.
Given two sets $T,T'\subseteq\reals^d$, let $d(T,T') = \inf_{t\in T, t' \in T'} \|t-t'\|$.
Finally, for $T\subseteq \reals^d$ and $\epsilon > 0$, let the ``thickening'' $B(T,\epsilon)$ be defined as
\[ B(T,\epsilon) = \{u \in \R' : d(T,u) < \epsilon \} . \]

The goal of this subsection is to prove the first part of step (iii): for small enough $\epsilon>0$, if any set of $\epsilon$-thickened optimal sets intersect, then the optimal sets themselves must intersect.
We will conclude that, for small enough $\epsilon$, the link envelope $\Psi$ is non-empty everywhere, meaning there will be legal choices left over for the link.
\begin{lemma}~\label{lem:thick-intersect}
  Let $\U$ be defined as in Definition~\ref{def:general-link-defs}.
  There exists $\epsilon_0 > 0$ such that, for any $0 < \epsilon \leq \epsilon_0$, for any subset $\{U_j : j \in \mathcal{J}\}$ of $\U$, if $\cap_j U_j = \emptyset$, then $\cap_j B(U_j,\epsilon) = \emptyset$.
\end{lemma}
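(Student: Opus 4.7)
The plan is to exploit two facts in combination: by Lemma~\ref{lem:polyhedral-range-gamma}, $\U$ is a finite collection of closed polyhedra in $\reals^d$; and for disjoint polyhedra, the distance between them is strictly positive (unlike for general closed convex sets). Since $\U$ is finite, there are only finitely many subcollections $\mathcal{J} \subseteq \U$, so it suffices to produce, for each subcollection with $\cap_{U \in \mathcal{J}} U = \emptyset$, a positive separation constant $\epsilon_{\mathcal{J}} > 0$ with the desired property, and then take $\epsilon_0$ to be the minimum of these constants. The contrapositive form I aim for is: whenever $u \in \cap_{U\in\mathcal{J}} B(U,\epsilon)$ with $\epsilon \leq \epsilon_{\mathcal{J}}$, we derive a contradiction by producing a point in $\cap_{U\in\mathcal{J}} U$.

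For a fixed $\mathcal{J} = \{U_1,\ldots,U_k\}$ with empty intersection, I would reduce the problem to distance between two disjoint polyhedra in a higher-dimensional space. Concretely, let $W = U_1 \times \cdots \times U_k \subseteq \reals^{dk}$ and $D = \{(u,u,\ldots,u) : u \in \reals^d\}$; both are polyhedra, and $W \cap D \neq \emptyset$ iff $\cap_j U_j \neq \emptyset$. Using the natural block-max extension of $\|\cdot\|$ to $\reals^{dk}$, and the fact that the factors $u_j \in U_j$ can be minimized independently, one gets the identity
\begin{equation*}
d(W,D) \;=\; \inf_{u \in \reals^d}\; \max_{j}\; d(U_j, u)~.
\end{equation*}
Hence showing $d(W,D) > 0$ immediately yields an $\epsilon_{\mathcal{J}}$ such that no single $u$ can satisfy $d(U_j,u) < \epsilon_{\mathcal{J}}$ for every $j$ simultaneously.

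The key analytic step is showing $d(W,D)>0$ from the hypothesis $W \cap D = \emptyset$. Since polyhedra are closed under linear images and Minkowski sums, $W - D$ is itself a polyhedron, hence closed; and $0 \notin W - D$ because $W \cap D = \emptyset$. Closedness then gives $d(0, W - D) > 0$, and this infimum equals $d(W,D)$. Set $\epsilon_{\mathcal{J}} = d(W,D)$. If some $u$ lay in $\cap_j B(U_j, \epsilon)$ for $\epsilon \leq \epsilon_{\mathcal{J}}$, then $\max_j d(U_j, u) < \epsilon \leq \epsilon_{\mathcal{J}} = \inf_{u'} \max_j d(U_j, u')$, a contradiction.

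The main obstacle, and the reason a naive compactness argument does not suffice, is that the $U_j$ may be unbounded: the continuous function $u \mapsto \max_j d(U_j,u)$ is strictly positive on $\reals^d$ but could in principle have infimum $0$ attained only in the limit at infinity. For general closed convex sets this really does happen (e.g.\ a halfplane and the epigraph of $e^{x}$), so the polyhedral hypothesis is essential. The product-and-diagonal reduction sidesteps this by converting the question into disjointness of two polyhedra, where the Minkowski-difference/closedness argument forces a uniform positive gap. The rest, including transferring between different norms on $\reals^d$ via norm equivalence and taking the minimum $\epsilon_0 = \min_{\mathcal{J}} \epsilon_{\mathcal{J}}$ over the finitely many offending subcollections, is routine bookkeeping.
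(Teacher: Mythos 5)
Your proof is correct and takes a genuinely different route from the paper's. The paper establishes the result through a chain of three geometric lemmas: it encloses a polyhedron in a slightly larger open polyhedron bounded by finitely many open halfspaces (Lemma~\ref{lem:enclose-halfspaces}), proves that thickenings of \emph{intersecting} closed convex sets stay near their intersection (Lemma~\ref{lem:thick-nonempty}), and then uses this inside an induction on $|\mathcal J|$ to show thickenings of \emph{non-intersecting} sets become disjoint (Lemma~\ref{lem:thick-empty}); the stated lemma then follows by taking the minimum over the finitely many offending subcollections. Your product-and-diagonal reduction replaces this entire inductive machinery with a single algebraic step: since $W = U_1\times\cdots\times U_k$ and the diagonal $D$ are polyhedra, $W - D$ is a polyhedron (a linear image of the polyhedron $W\times D$), hence closed, and since $0\notin W-D$ you get $d(W,D)>0$ directly; the identity $d(W,D)=\inf_u\max_j d(U_j,u)$ under the block-max norm converts this into exactly the uniform thickening gap you need. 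You correctly isolate why the polyhedral hypothesis is essential: for general closed convex sets the Minkowski difference need not be closed and the gap can vanish at infinity, as your halfplane/epigraph example shows. The one thing the paper's longer route also yields, as a byproduct, is the quantitative ``nonempty intersection'' containment of Lemma~\ref{lem:thick-nonempty}; but since the paper only uses that lemma internally toward the present result, your shorter argument would serve just as well here, and the finite union over subcollections plus norm equivalence at the end is indeed routine.
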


The next few geometric results build to Lemma~\ref{lem:thick-intersect}.

\begin{lemma} \label{lem:enclose-halfspaces}
  Let $D$ be a closed, convex polyhedron in $\reals^d$.
  For any $\epsilon > 0$, there exists an \emph{open}, convex set $D'$, the intersection of a finite number of open halfspaces, such that
    \[ D \subseteq D' \subseteq B(D,\epsilon) . \]
\end{lemma}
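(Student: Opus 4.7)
The plan is to obtain $D'$ by slightly dilating each of the closed halfspaces that define $D$, and then to invoke Hoffman's theorem (Theorem~\ref{thm:hoffman}) to control the resulting expansion. If $D = \emptyset$, the statement is trivial: take $D' = \{x : x_1 < 0\} \cap \{x : x_1 > 1\}$, which is the intersection of two open halfspaces, is empty, and hence contains $D$ and is contained in $B(D,\epsilon) = \emptyset$. Otherwise, since $D$ is a closed convex polyhedron, we can write
\[ D = \{x \in \reals^d : Ax \leq b\} \]
for some $A \in \reals^{m \times d}$ with rows $a_1, \ldots, a_m$ and some $b \in \reals^m$. For a parameter $\delta > 0$ to be chosen momentarily, define
\[ D' := \{x \in \reals^d : Ax < b + \delta \ones\} = \bigcap_{i=1}^m \{x : \langle a_i, x\rangle < b_i + \delta\}. \]
By construction, $D'$ is the intersection of finitely many open halfspaces, and therefore open and convex. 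The inclusion $D \subseteq D'$ is immediate since $Ax \leq b < b + \delta \ones$ for every $x \in D$.

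Next, I would verify the inclusion $D' \subseteq B(D,\epsilon)$ by way of Hoffman's theorem. Since $D \neq \emptyset$, Theorem~\ref{thm:hoffman} furnishes a constant $H := H(A) \geq 0$ such that
\[ d_\infty(x, D) \leq H \, \|(Ax - b)_+\|_\infty \qquad \text{for all } x \in \reals^d. \]
For any $x \in D'$ the violation vector $(Ax - b)_+$ is componentwise at most $\delta$, so $d_\infty(x, D) \leq H \delta$. All norms on $\reals^d$ are equivalent, so there exists $c > 0$ with $\|\cdot\| \leq c\|\cdot\|_\infty$, giving $d(x, D) \leq cH\delta$. Choosing $\delta := \epsilon / (cH + 1)$ yields $d(x, D) < \epsilon$, hence $x \in B(D,\epsilon)$.

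The one substantive obstacle is precisely the step that Hoffman's theorem addresses: when the defining halfspaces meet at a very shallow angle, a naive dilation of each halfspace by $\delta$ can, a priori, carve out a region that is much farther than $\delta$ from $D$. Hoffman's constant $H(A)$ quantifies exactly this worst-case blowup as a finite linear factor depending only on $A$, which is what allows the uniform choice of $\delta$ above. The remaining steps (representability of a closed polyhedron as $\{Ax \leq b\}$, openness and convexity of $D'$, and the norm-equivalence conversion) are standard and routine.
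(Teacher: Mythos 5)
Your proof is correct, and it takes a genuinely different route from the paper's. The paper proves this by a Minkowski-sum argument: it takes a small closed $\ell_1$-ball $S' = \{u : \|u\|_1 \leq \delta\}$ (chosen because an $\ell_1$-ball is a polytope), forms the closed polyhedron $D'' = D + S'$, and sets $D'$ to be the interior of $D''$. The containment $D \subseteq D'$ then comes from $S'$ being a full-dimensional neighborhood of the origin, and $D' \subseteq B(D,\epsilon)$ comes from choosing $\delta$ so that $S' \subseteq B(\{0\},\epsilon)$. Your argument instead relaxes each defining halfspace of $D$ by a slack $\delta$ and invokes Hoffman's bound (Theorem~\ref{thm:hoffman}, which the paper has already introduced) to show the resulting set stays within distance $O(H(A)\delta)$ of $D$; a norm-equivalence constant then gives a uniform choice of $\delta$. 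The two approaches address the same geometric concern---shallow-angled facets can make a naive dilation blow up---in different ways: the paper sidesteps it by dilating the set rather than the constraints, while you dilate the constraints and use the Hoffman constant to certify the blowup is only linear. Your route is shorter and reuses machinery already in the paper; the paper's route is more self-contained convex geometry and avoids introducing a dependence on Hoffman's theorem in this appendix. You also explicitly treat the $D = \emptyset$ edge case, which the paper glosses over (where ``intersection of finitely many open halfspaces'' must be read as permitting an inconsistent pair). One trivial nitpick: you write that the violation vector is componentwise ``at most $\delta$,'' when strict inequality $< \delta$ holds; this only strengthens the conclusion and does not affect correctness.
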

\begin{proof}
  Let $S$ be the standard open $\epsilon$-ball $B(\{\vec{0}\},\epsilon)$.
  Note that $B(D,\epsilon) = D + S$ where $+$ is the Minkowski sum.
  Now let $S' = \{u : \|u\|_1 \leq \delta\}$ be the closed $\delta$ ball in $L_1$ norm.
  By equivalence of norms in Euclidean space~\cite[Appendix A.1.4]{boyd2004convex}, we can take $\delta$ small enough yet positive such that $S' \subseteq S$.
  By standard results, the Minkowski sum of two closed, convex polyhedra, $D'' = D + S'$ is a closed polyhedron, i.e. the intersection of a finite number of closed halfspaces. (A proof: we can form the higher-dimensional polyhedron $\{(x,y,z) : x \in D, y \in S', z = x+y\}$, then project onto the $z$ coordinates.)

  Now, if $T' \subseteq T$, then the Minkowksi sum satisfies $D + T' \subseteq D + T$.
  In particular, because $\emptyset \subseteq S' \subseteq S$, we have
    \[ D \subseteq D'' \subseteq B(D,\epsilon) . \]
  Now let $D'$ be the interior of $D''$, i.e. if $D'' = \{x : Ax \leq b\}$, then we let $D' = \{x: Ax < b\}$.
  We retain $D' \subseteq B(D,\epsilon)$.
  Further, we retain $D \subseteq D'$, because $D$ is contained in the interior of $D'' = D + S'$.
  (Proof: if $x \in D$, then for some $\gamma$, $x + B(\{\vec{0}\},\gamma) = B(x,\gamma)$ is contained in $D + S'$.)
  This proves the lemma.
\end{proof}

\begin{lemma} \label{lem:thick-nonempty}
  Let $\{U_j : j \in \mathcal{J}\}$ be a finite collection of closed, convex sets with $\cap_{j\in\mathcal{J}} U_j \neq \emptyset$.
  Let $\delta > 0$ be given.
  Then there exists  $\epsilon_0 > 0$ such that, for all $0 < \epsilon \leq \epsilon_0$, $\cap_j B(U_j,\epsilon) \subseteq B(\cap_j U_j, \delta)$.
\end{lemma}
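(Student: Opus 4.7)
I would prove this by contradiction via compactness. Suppose the conclusion fails; then there exist $\epsilon_n \downarrow 0$ and $x_n \in \cap_{j\in\mathcal{J}} B(U_j,\epsilon_n)$ with $d(x_n, \cap_{j\in\mathcal{J}} U_j) \geq \delta$ for every $n$. The argument splits by whether $(x_n)$ is bounded.

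The bounded case is routine. Pass to a convergent subsequence $x_n \to x^*$. For each $j$, $d(x^*, U_j) = \lim_n d(x_n, U_j) \leq \lim_n \epsilon_n = 0$, so closedness of $U_j$ gives $x^* \in U_j$; hence $x^* \in \cap_j U_j$ and $d(x^*, \cap_j U_j) = 0$. Since $d(\cdot, \cap_j U_j)$ is $1$-Lipschitz, this contradicts $d(x_n, \cap_j U_j) \geq \delta$ together with $x_n \to x^*$.

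The main obstacle is the unbounded case, which requires recession-cone analysis. After a further subsequence, assume $\|x_n\| \to \infty$ and $v_n := x_n/\|x_n\|$ converges to a unit vector $v^*$. Choosing $u_{j,n} \in U_j$ with $\|x_n - u_{j,n}\| < \epsilon_n$ gives $\|u_{j,n}\| \to \infty$ and $u_{j,n}/\|u_{j,n}\| \to v^*$, so by the standard characterization of recession directions of closed convex sets, $v^* \in \mathrm{recc}(U_j)$ for every $j$. Because $\cap_j U_j \neq \emptyset$, recession cones commute with intersection, so $v^* \in \mathrm{recc}(\cap_j U_j)$ as well. Converting this shared-recession fact into actual proximity to $\cap_j U_j$ is the delicate step, because $-v^*$ need not be a recession direction of any $U_j$ and one cannot simply translate $x_n$ backward. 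My plan is to fix $x^* \in \cap_j U_j$, project $x_n - x^*$ onto the orthogonal complement of $v^*$, and argue iteratively that each shared recession direction contributes at most $O(\epsilon_n)$ to the transverse distance from $x_n$ to $\cap_j U_j$, ultimately stripping all unbounded directions and reducing to the bounded case.

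A much shorter route suffices for the intended application in Lemma~\ref{lem:thick-intersect}, where each $U_j$ is a closed polyhedron. Stacking halfspace descriptions into a single system $Ax \leq b$ whose solution set is $\cap_j U_j$, Hoffman's bound yields a constant $H$ with $d(x, \cap_j U_j) \leq H \max_j d(x, U_j)$ for all $x \in \reals^d$. Then any $x \in \cap_j B(U_j,\epsilon)$ satisfies $d(x, \cap_j U_j) < H\epsilon$, so $\epsilon_0 := \delta/H$ works immediately. The conceptual crux in either approach is the same: controlling how much $\cap_j B(U_j,\epsilon)$ can extend beyond $\cap_j U_j$, which is a clean linear bound for polyhedra and a more delicate recession-based reduction for general closed convex sets.
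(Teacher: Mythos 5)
Your route is genuinely different from the paper's. The paper proves Lemma~\ref{lem:thick-nonempty} by induction on $|\mathcal{J}|$: it peels off one set $U_j$, lets $U' = \cap_{j'\neq j} U_{j'}$ and $D = U_j \cap U'$, encloses $D$ in an open polyhedron $D' \subseteq B(D,\delta)$ via Lemma~\ref{lem:enclose-halfspaces}, and then shows that for each halfspace complement $F$ of $D'$, the sets $F \cap U_j$ and $F \cap U'$ are disjoint closed convex sets and hence separated, so their $\epsilon$-thickenings miss each other for small $\epsilon$. You instead propose a contradiction via compactness, split into a bounded case (Bolzano--Weierstrass, correct as written) and an unbounded case (recession cones), plus a Hoffman-constant shortcut for polyhedra.

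The gap in your proposal is the unbounded case: what you write there is a plan, not a proof. You correctly extract a common recession direction $v^*$ and correctly flag that this alone does not give proximity to $\cap_j U_j$, but the iterative projection scheme you sketch is exactly where the real difficulty lives, and it is left uncarried out. Without it you have not proven the lemma as stated for general closed convex sets. Your Hoffman route is, by contrast, complete modulo one elision: Hoffman gives $d(x, \cap_j U_j) \leq H\,\|(Ax-b)_+\|_\infty$, and to get from there to $d(x, \cap_j U_j) \leq H' \max_j d(x,U_j)$ you still need the easy Lipschitz bound $\|(A_j x - b_j)_+\|_\infty \leq C_j\, d(x, U_j)$; it is worth stating. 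This is a clean proof of the polyhedral case, which is the case actually invoked in Lemma~\ref{lem:thick-intersect} (the $U \in \U$ there are closed polyhedra by Lemma~\ref{lem:polyhedral-range-gamma}).

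For context, the paper's own proof also only really covers the polyhedral case: Lemma~\ref{lem:enclose-halfspaces} is stated for polyhedral $D$, yet $D = U_j \cap U'$ for arbitrary closed convex sets need not be polyhedral; and the key step "disjoint closed convex sets are separated by a nonzero distance" is false in general (consider an exponential epigraph and a halfplane). Both issues disappear for polyhedra. So in effect your Hoffman argument and the paper's induction both establish what is needed, while your general-case compactness/recession program, if you were to complete the unbounded step, would actually give something strictly stronger than either.
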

\begin{proof}
  We induct on $|\mathcal{J}|$.
  If $|\mathcal{J}|=1$, set $\epsilon = \delta$.
  If $|\mathcal{J}|>1$, let $j\in\mathcal{J}$ be arbitrary, let $U' = \cap_{j'\neq j} U_{j'}$, and let $C(\epsilon) = \cap_{j' \neq j} B(U_{j'},\epsilon)$.
  Let $D = U_j \cap U'$.
  We must show that $B(U_j,\epsilon) \cap C(\epsilon) \subseteq B(D,\delta)$.
  By Lemma \ref{lem:enclose-halfspaces}, we can enclose $D$ strictly within a polyhedron $D'$, the intersection of a finite number of open halfspaces, which is itself strictly enclosed in $B(D,\delta)$.
  (For example, if $D$ is a point, then enclose it in a hypercube, which is enclosed in the ball $B(D,\delta)$.)
  We will prove that, for all small enough $\epsilon$, $B(U_j,\epsilon) \cap C(\epsilon)$ is contained in $D'$.
  This implies that it is contained in $B(D,\delta)$.

  For each halfspace defining $D'$, consider its complement $F$, a closed halfspace.
  We prove that $F \cap B(U_j,\epsilon) \cap C(\epsilon) = \emptyset$.
  Consider the intersections of $F$ with $U$ and $U'$, call them $G$ and $G'$.
  These are closed, convex sets that do not intersect (because $D$ in contained in the complement of $F$).
  So $G$ and $G'$ are separated by a nonzero distance, so $B(G,\gamma) \cap B(G',\gamma) = \emptyset$ for all small enough $\gamma$.
  And $B(G,\gamma) = F \cap B(U_j,\gamma)$ while $B(G',\gamma) = F \cap B(U',\gamma)$.
  This proves that $F \cap B(U_j,\gamma) \cap B(U',\gamma) = \emptyset$.
  By inductive assumption, $C(\epsilon) \subseteq B(U',\gamma)$ for small enough $\epsilon = \epsilon_F$.
  So $F \cap B(U_j,\gamma) \cap C(\epsilon) = \emptyset$.
  We now let $\epsilon_0$ be the minimum over these finitely many $\epsilon_F$ (one per halfspace).
\end{proof}

\begin{figure}
\caption{Illustration of a special case of the proof of Lemma \ref{lem:thick-nonempty} where there are two sets $U_1,U_2$ and their intersection $D$ is a point. We build the polyhedron $D'$ inside $B(D,\delta)$. By considering each halfspace that defines $D'$, we then show that for small enough $\epsilon$, $B(U_1,\epsilon)$ and $B(U_2,\epsilon)$ do not intersect outside $D'$. So the intersection is contained in $D'$, so it is contained in $B(D,\delta)$.}
\includegraphics[width=0.24\textwidth]{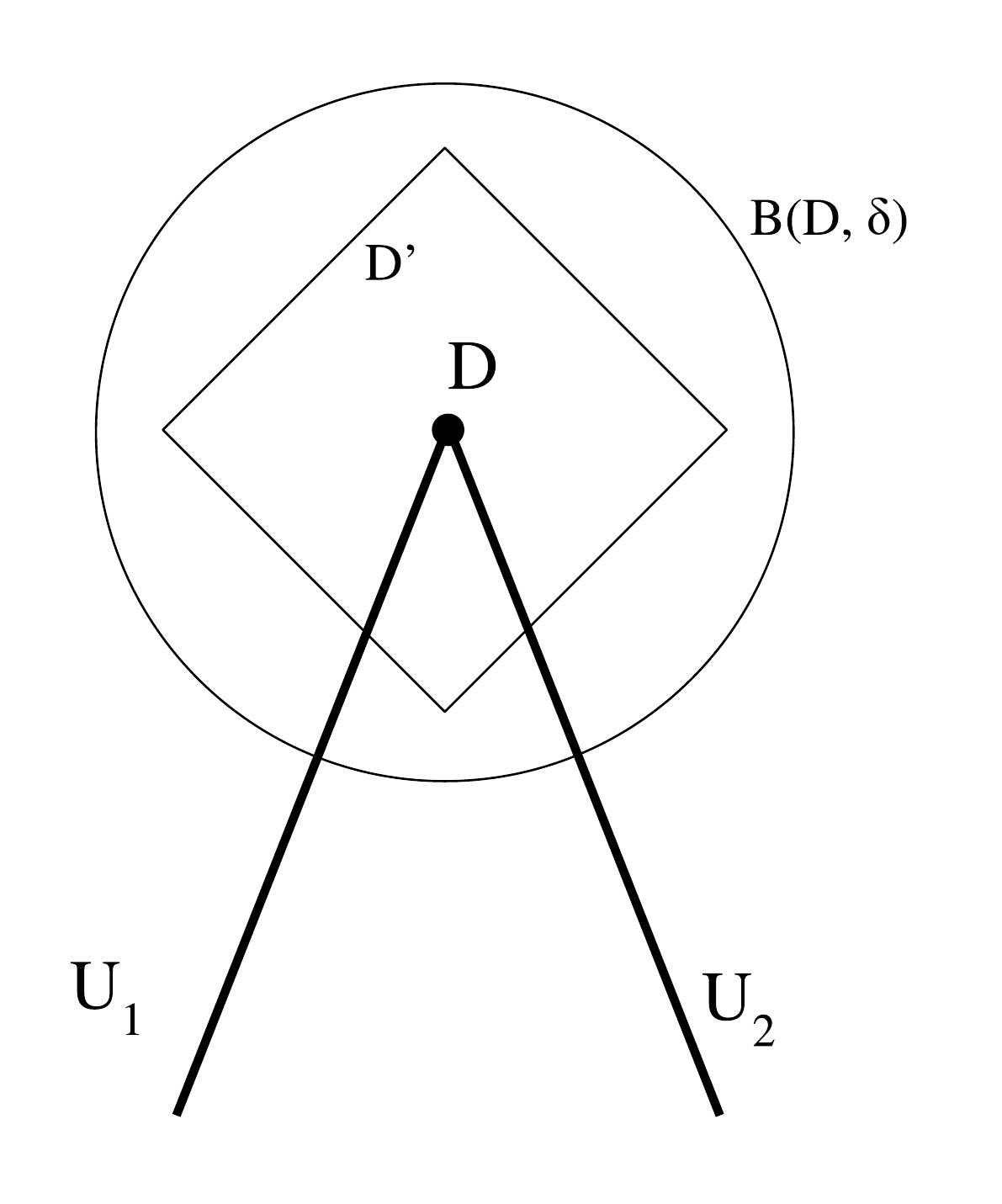} \hfill
\includegraphics[width=0.24\textwidth]{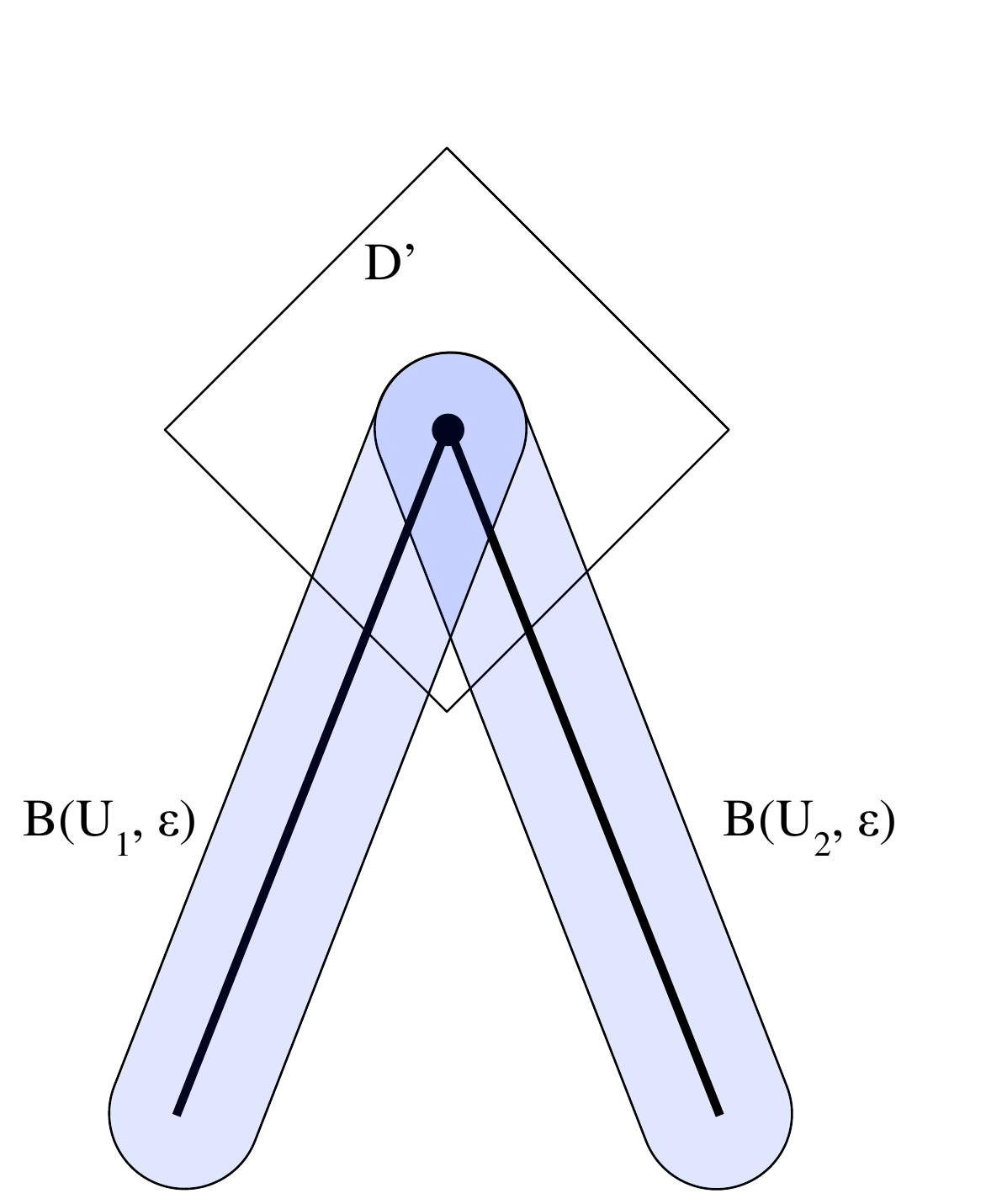} \hfill
\includegraphics[width=0.24\textwidth]{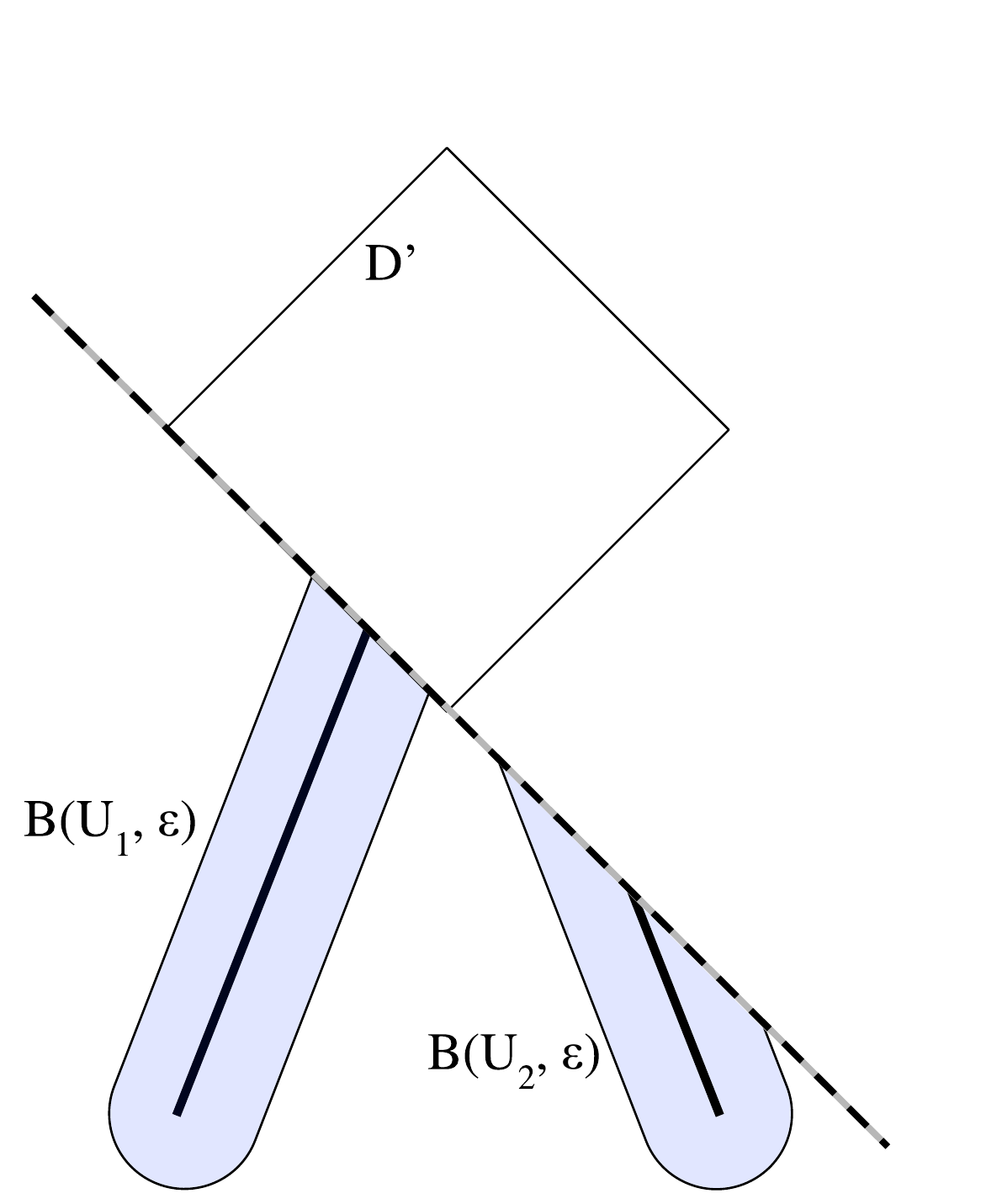} \hfill
\includegraphics[width=0.24\textwidth]{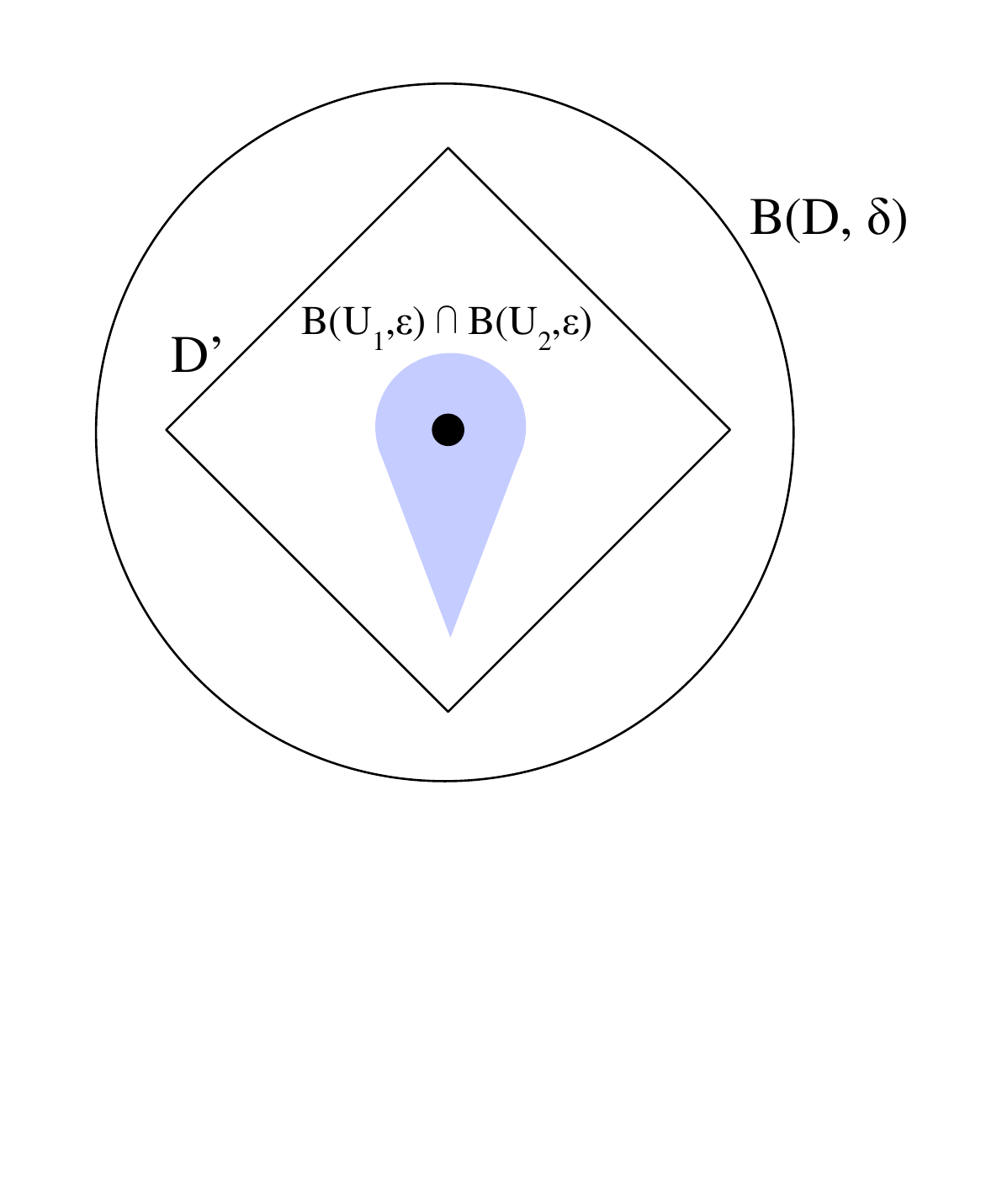}
\end{figure}

\begin{lemma} \label{lem:thick-empty}
  Let $\{U_j : j \in \mathcal{J}\}$ be a finite collection of nonempty closed, convex sets with $\cap_{j\in\mathcal{J}} U_j = \emptyset$.
  Then there exists  $\epsilon_0 > 0$ such that, for all $0 < \epsilon \leq \epsilon_0$, $\cap_{j\in\mathcal{J}} B(U_j,\epsilon) = \emptyset$.
\end{lemma}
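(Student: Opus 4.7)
The plan is to induct on $|\mathcal{J}|$, using Lemma~\ref{lem:thick-nonempty} as the main tool. The base case $|\mathcal{J}|=1$ is vacuous since $U_1$ is nonempty. For the inductive step, I split into two cases: if some proper subcollection $\mathcal{J}' \subsetneq \mathcal{J}$ already satisfies $\bigcap_{j \in \mathcal{J}'} U_j = \emptyset$, the inductive hypothesis directly yields $\epsilon_0$ with $\bigcap_{j \in \mathcal{J}'} B(U_j,\epsilon) = \emptyset$ for $\epsilon \leq \epsilon_0$, and the same $\epsilon_0$ works for $\mathcal{J}$ because adding more sets can only shrink the intersection.

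In the remaining case, every proper subcollection has nonempty intersection. I fix $j_0 \in \mathcal{J}$ and let $V = \bigcap_{j \neq j_0} U_j$, which is closed, convex, and nonempty, while $V \cap U_{j_0} = \emptyset$ by hypothesis. The key sub-claim is that $\delta := d(V, U_{j_0}) > 0$. In the paper's application setting, the $U_j$ are the closed convex polyhedra produced by Lemma~\ref{lem:polyhedral-range-gamma}, so $V$ is itself polyhedral and the Minkowski sum $V + (-U_{j_0})$ is a closed polyhedron not containing the origin; its distance from the origin is therefore strictly positive, and this quantity equals $d(V,U_{j_0})$.

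To finish, I apply Lemma~\ref{lem:thick-nonempty} to the collection $\{U_j : j \neq j_0\}$ (whose intersection is $V$) with prescribed tolerance $\delta/3$: this yields $\epsilon_1 > 0$ such that $\bigcap_{j \neq j_0} B(U_j,\epsilon) \subseteq B(V, \delta/3)$ whenever $0 < \epsilon \leq \epsilon_1$. Taking $\epsilon_0 = \min(\epsilon_1, \delta/3)$, any putative point in $\bigcap_{j \in \mathcal{J}} B(U_j,\epsilon)$ would simultaneously lie in $B(V, \delta/3)$ and in $B(U_{j_0}, \delta/3)$, contradicting $d(V, U_{j_0}) = \delta$. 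The step I expect to require the most care is the positive-distance sub-claim, since closed convexity by itself does not force a Minkowski difference to remain closed; the proof therefore hinges on exploiting the polyhedral structure available from Lemma~\ref{lem:polyhedral-range-gamma} to guarantee closedness of $V + (-U_{j_0})$.
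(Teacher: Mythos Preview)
Your argument is essentially the paper's: induct on $|\mathcal{J}|$, and in the substantive case combine a positive-distance claim between $U_{j_0}$ and $V=\bigcap_{j\neq j_0}U_j$ with Lemma~\ref{lem:thick-nonempty} (applied with tolerance $\delta/3$) to force $\bigcap_j B(U_j,\epsilon)\subseteq B(V,\delta/3)\cap B(U_{j_0},\delta/3)=\emptyset$, taking $\epsilon_0=\min(\epsilon_1,\delta/3)$. The paper's case split is phrased slightly differently---it fixes one $j$ and branches on whether $\bigcap_{j'\neq j}U_{j'}$ is empty---but the logic is identical.

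You are in fact more careful than the paper on one point: the paper simply asserts that disjoint closed convex sets are separated by a positive distance, which is false in general (e.g.\ $\{(x,y):x>0,\ y\ge 1/x\}$ and the $x$-axis). Your appeal to the polyhedral structure supplied by Lemma~\ref{lem:polyhedral-range-gamma}, so that $V+(-U_{j_0})$ is a closed polyhedron not containing the origin and hence at positive distance from it, is the right way to close this gap; it is also consistent with the implicit polyhedrality already used inside the proof of Lemma~\ref{lem:thick-nonempty} via Lemma~\ref{lem:enclose-halfspaces}.
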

\begin{proof}
  By induction on the size of the family.
  Note that the family must have size at least two.
  Let $U_j$ be any set in the family and let $U' = \cap_{j' \neq j} U_{j'}$.
  There are two possibilities.

  The first possibility, which includes the base case where the size of the family is two, is the case $U'$ is nonempty.
  Because $U_j$ and $U'$ are non-intersecting closed convex sets, they are separated by some distance $\delta$.
  So $B(U_j, \delta/3) \cap B(U', \delta/3) = \emptyset$.
  By Lemma \ref{lem:thick-nonempty}, there exists $\epsilon'_0 > 0$ such that $\cap_{j'\neq j} B(U_{j'},\epsilon) \subseteq B(U', \delta/3)$ for all $0 < \epsilon \leq \epsilon'_0$.
  Pick $\epsilon_0 = \min\{\epsilon'_0,\delta/3\}$.
  Then for all $0 < \epsilon \leq \epsilon_0$, the intersection of $\epsilon$-thickenings is contained in the $(\delta/3)$-thickening of the intersection, which is disjoint from the $(\delta/3)$-thickening of $U_j$, which contains the $\epsilon$-thickening of $U_j$.

  The second possibility is that $U'$ is empty.
  This implies we are not in the base case, as the family must have three or more sets.
  By inductive assumption, for all small enough $\epsilon$ we have $\cap_{j' \neq j} B(U_{j'},\epsilon) = \emptyset$, which proves this case.
\end{proof}

The proof of Lemma~\ref{lem:thick-intersect} now follows: for each $\U'\subseteq\U$, Lemma \ref{lem:thick-empty} gives an $\epsilon_0(\U') > 0$; we take the minimum of $\epsilon_0(\U')$ over the finitely many choices of $\U'$.

\subsection{Separation of the general construction}

We now prove the main results for link construction from \S~\ref{sec:calibration} and \S~\ref{sec:poly-ie-consistency}.
Specifically, we show that indirect elicitation implies that Construction~\ref{const:general-eps-thick-link} produces a link, and moreover, a link is produced if and only if it is separated.
As we have established above that Construction~\ref{const:eps-thick-link} is a special case, the results specific to embeddings will follow.

\begin{proposition}\label{prop:general-eps-thick-produce}
  Let $L:\reals^d\to\reals^\Y_+$ be polyhedral and $\gamma:\simplex\toto\R$ be finite.
  If $L$ indirectly elicits $\gamma$, then there exists $\epsilon_0 > 0$ such that, for all $0 < \epsilon \leq \epsilon_0$, Construction~\ref{const:general-eps-thick-link} for $L$, $\gamma$, $\epsilon$, $\|\cdot\|_\infty$, produces a link.
\end{proposition}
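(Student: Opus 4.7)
The plan is to combine the intersection-characterization of indirect elicitation (Lemma~\ref{lem:intersection-equiv-indirect-elic}) with the geometric thickening result (Lemma~\ref{lem:thick-intersect}) to show that, for small enough $\epsilon$, every point has a nonempty link envelope $\Psi(u)$. Since the construction's only failure mode is $\Psi(u) = \emptyset$ at some $u$, this will suffice.

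First I would instantiate $\epsilon_0 > 0$ as the constant produced by Lemma~\ref{lem:thick-intersect} applied to the family $\U$. This is permissible because Lemma~\ref{lem:polyhedral-range-gamma} guarantees that $\U$ is finite and each $U \in \U$ is a closed polyhedron (hence closed and convex). Then I would fix $0 < \epsilon \leq \epsilon_0$, pick an arbitrary $u \in \reals^d$, and let $\U'(u) := \{U \in \U : u \in B(U,\epsilon)\}$ be the subfamily of optimal sets whose $\epsilon$-thickenings contain $u$. By direct inspection of Construction~\ref{const:general-eps-thick-link}, after all updates we have
\[
  \Psi(u) \;=\; \R \,\cap\, \bigcap_{U \in \U'(u)} R_U.
\]
If $\U'(u) = \emptyset$, then $\Psi(u) = \R$, which is nonempty because $\gamma$ is a property. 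Otherwise $u \in \bigcap_{U \in \U'(u)} B(U,\epsilon)$, so the contrapositive of Lemma~\ref{lem:thick-intersect} yields $\bigcap_{U \in \U'(u)} U \neq \emptyset$. Applying Lemma~\ref{lem:intersection-equiv-indirect-elic} to this subfamily, using the hypothesis that $L$ indirectly elicits $\gamma$, gives $\bigcap_{U \in \U'(u)} R_U \neq \emptyset$, hence $\Psi(u) \neq \emptyset$. Since this holds at every $u$, the final step of the construction is free to select $\psi(u) \in \Psi(u)$ pointwise, which is exactly what it means to produce a link.

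The main technical hurdle—already absorbed into the preceding lemmas—is securing a \emph{single} threshold $\epsilon_0$ that simultaneously controls the intersection behavior of $\epsilon$-thickenings across all subfamilies of $\U$. This uniformity is what allows us to conclude $\bigcap_{U \in \U'(u)} U \neq \emptyset$ from the existence of a common point in the thickenings, for every $u$ and every $\U'(u)$ at once. It is available to us only because $\U$ is finite (so there are finitely many subfamilies to worry about) and its elements are closed polyhedra, which is why the polyhedrality hypothesis on $L$ is essential; once that uniformity is in hand via Lemma~\ref{lem:thick-intersect}, the remainder of the argument is a short bookkeeping exercise over which constraints of the construction are active at a given $u$.
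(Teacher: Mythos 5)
Your proof is correct and follows essentially the same approach as the paper's: fix $\epsilon_0$ from Lemma~\ref{lem:thick-intersect}, identify the active constraints $\U'(u)$ at each $u$, invoke the contrapositive of Lemma~\ref{lem:thick-intersect} to conclude $\cap_{U\in\U'(u)}U\neq\emptyset$, and then apply the intersection characterization of indirect elicitation (Lemma~\ref{lem:intersection-equiv-indirect-elic}) to get $\Psi(u)\neq\emptyset$. Your explicit handling of the degenerate case $\U'(u)=\emptyset$ is a small bit of care that the paper leaves implicit (via the empty-intersection convention), but the argument is the same.
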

\begin{proof}
  Fix a small enough $\epsilon_0$ as promised by Lemma~\ref{lem:thick-intersect}.
  Let $u\in\reals^d$ and $\U'_u = \{U\in\U \mid d_\infty(u,U) < \epsilon\}$.
  From Construction \ref{const:general-eps-thick-link}, we have $\Psi(u) = \cap \{R_U \mid U\in\U'_u\}$.
  Since $u\in\cap\{B(U,\epsilon) \mid U\in\U'_u\} $ by definition, Lemma~\ref{lem:thick-intersect} and our choice of $\epsilon$ give $\cap\U'_u \neq \emptyset$.
  By Lemma~\ref{lem:intersection-equiv-indirect-elic}, we have $\Psi(u) = \cap \{R_U \mid U\in\U'_u\} \neq \emptyset$.
\end{proof}

Perhaps surprisingly, one can also show that \emph{every} calibrated link from a polyhedral surrogate to a discrete loss is produced by Construction \ref{const:eps-thick-link}.
This result follows from the fact, stated now, that Construction~\ref{const:eps-thick-link} with $\|\cdot\|_\infty$ is exactly enforcing $\epsilon$-separation.
From Theorem~\ref{thm:calibrated-separated}, every calibrated link $\psi$ is therefore output by the construction for some sufficiently small $\epsilon$, in the sense that $\psi$ is one of the valid choices within the link envelope.

\begin{proposition}\label{prop:link-converse}
  Let polyhedral surrogate $L:\reals^d \to \reals^\Y_+$, finite property $\gamma:\simplex\toto\R$, and $\epsilon>0$ be given.
  Then Construction~\ref{const:general-eps-thick-link} for $L,\gamma,\epsilon,\|\cdot\|_\infty$ produces a link $\psi$ if and only if $\psi$ is $\epsilon$-separated.
\end{proposition}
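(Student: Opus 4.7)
The plan is to unfold both sides of the equivalence in terms of contrapositive statements, and observe that they match almost verbatim once the definition of $R_U$ is expanded. First, I will rewrite $\epsilon$-separation contrapositively: $\psi$ is $\epsilon$-separated with respect to $\Gamma = \prop{L}$ and $\gamma$ iff for all $u\in\reals^d$ and $p\in\simplex$, $d_\infty(u,\Gamma(p)) < \epsilon$ implies $\psi(u) \in \gamma(p)$. Next, I will note that Construction~\ref{const:general-eps-thick-link} produces $\psi$ iff $\psi(u)\in\Psi(u) = \bigcap\{R_U : U\in\U,\ d_\infty(u,U) < \epsilon\}$ for every $u$, which in particular requires each such intersection to be nonempty. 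Both the output condition and separation are quantified over every $u$, so it suffices to prove the equivalence pointwise at each $u$.

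For the forward direction, assume the construction produces $\psi$, and fix $u,p$ with $d_\infty(u,\Gamma(p)) < \epsilon$. Letting $U := \Gamma(p) \in \U$, we have $d_\infty(u,U) < \epsilon$, so the construction enforces $\psi(u)\in R_U = \{r : \Gamma_U \subseteq \gamma_r\}$. Since $p\in\Gamma_U$ by the definition of $\Gamma_U$, this yields $p\in\gamma_{\psi(u)}$, i.e., $\psi(u)\in\gamma(p)$, proving $\epsilon$-separation.

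For the backward direction, assume $\psi$ is $\epsilon$-separated, fix $u\in\reals^d$, and let $U\in\U$ be any optimal set with $d_\infty(u,U) < \epsilon$. For every $p\in\Gamma_U$ we have $\Gamma(p) = U$, hence $d_\infty(u,\Gamma(p)) < \epsilon$, and separation gives $\psi(u)\in\gamma(p)$. Quantifying over $p\in\Gamma_U$ yields $\Gamma_U\subseteq\gamma_{\psi(u)}$, i.e., $\psi(u)\in R_U$. Intersecting over all such $U$ gives $\psi(u)\in\Psi(u)$, so in particular $\Psi(u)\neq\emptyset$ at every $u$, and $\psi$ is a valid pointwise selection from $\Psi$.

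The main potential obstacle is essentially bookkeeping: one has to be careful that $R_U = \{r : \Gamma_U \subseteq \gamma_r\}$ unfolds as ``for all $p$ with $\Gamma(p)=U$, $r\in\gamma(p)$,'' and that this universal quantification matches the quantification in $\epsilon$-separation. Once this is recognized, the proposition is a direct translation between the two formulations, reflecting that the definition of $R_U$ was engineered precisely so that intersecting $\Psi(u)$ with $R_U$ enforces the separation constraint at $u$ arising from any distribution whose optimal surrogate set is $U$.
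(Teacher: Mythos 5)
Your proof is correct and takes essentially the same approach as the paper: the paper presents it as a single chain of equivalences, while you split it into two contrapositive directions, but the content is identical — both proofs unfold the definitions of $\Psi$, $R_U$, $\Gamma_U$ and $\epsilon$-separation and observe that the universal quantifications over $u$ and $p$ (or $U$) match.
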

\begin{proof}
  Let $\U'_u = \{U\in\U \mid d_{\infty}(u,U) < \epsilon\}$.
  Then we have the following chain of equivalences.  
  \begin{align*}
    &\phantom{\iff}~\forall u\in\reals^d,\;\; \psi(u)\in\Psi(u)
    \\
    &\iff \forall u\in\reals^d,\, U\in\U'_u,\;\; \psi(u)\in R_U
    \\
    &\iff \forall u\in\reals^d,\, U\in\U'_u,\;\; \Gamma_U \subseteq \gamma_{\psi(u)}
    \\
    &\iff \forall u\in\reals^d,\, p\in\simplex \text{ s.t. } \Gamma(p)\in\U'_u,\;\; p \in \gamma_{\psi(u)}
    \\
    &\iff \forall u\in\reals^d,\, p\in\simplex \text{ s.t. } d_\infty(u,\Gamma(p)) < \epsilon,\;\; \psi(u) \in \gamma(p)
    \\
    &\iff \text{$\psi$ is $\epsilon$-separated}~.
      \qedhere
  \end{align*}
\end{proof}

From the equivalence of calibration and separation for polyhedral surrogates (Theorem~\ref{thm:calibrated-separated}), we now have Theorem~\ref{thm:link-char}: the construction produces exactly the set of calibrated links.
(The move from $\|\cdot\|_\infty$ to a general norm follows from norm equivalence in finite-dimensional vector spaces.)
Combined with Proposition~\ref{prop:general-eps-thick-produce}, we also have Proposition~\ref{prop:general-eps-thick-separated}: if $L$ indirectly elicits $\gamma$, the construction produces a calibrated link.

Returning to embeddings, recall that Construction~\ref{const:eps-thick-link} is a special case of Construction~\ref{const:general-eps-thick-link} by Lemma~\ref{lem:general-construction-special-case}.
Moreover, embeddings are a special case of indirect elicitation (Lemma~\ref{lem:embedding-implies-indirect-elic}).
As Proposition~\ref{prop:general-eps-thick-produce} guarantees that Construction~\ref{const:general-eps-thick-link} produces a link, and Proposition~\ref{prop:link-converse} that every link produced is separated, we now have
Theorem \ref{thm:thickened-separated} which we restate.
(Note that the converse need not hold for Construction~\ref{const:eps-thick-link}, and indeed, that construction may miss some separated links which make use of reports outside $\Sc$.)
\thickenedseparated*

\section{Proving Lemma~\ref{lem:X}}
\label{app:polyhedra}

In this section, we give a careful treatment of the results on convex polyhedra needed to prove Lemma~\ref{lem:X}.

\subsection{General definitions for polyhedra}
\label{app:polyhedra:defs}

We begin with general definitions of polyhedra.
See also \citet{ziegler1993lectures} and \citet{gallier2008notes}.

\begin{definition}[Closed halfspace]
  A closed halfspace is a set of the form $H_{(w,b)}^+ := \{ x \in \reals^d \mid \inprod{x}{w} \geq b\}$ for some $(w,b) \in \reals^d \times \reals$.
\end{definition}
\begin{definition}[Hyperplane]
  A hyperplane is a set of the form $H_{(w,b)} := \{ x \in \reals^d \mid \inprod{x}{w} = b\}$ for some $(w,b)\in\reals^d \times \reals$.
\end{definition}

Observe that $H_{(w,b)} = \partial H^+_{(w,b)}$, meaning the hyperplane $H_{(w,b)}$ is the boundary of $H^+_{(w,b)}$.
Thus, for any halfspace $H^+$, we have that $H^+$ is one of the two closed halfspaces corresponding to the hyperplane $\partial H^+ = H$.

\begin{definition}[Polyhedron halfspace representation~\citep{ziegler1993lectures}]
	A \emph{polyhedron} $P$ is an intersection of a finite set of closed halfspaces $\H$
  presented in the form $P = \cap \H$.
  Here, we say $\H$ is a halfspace representation for $P$.
\end{definition}

\noindent
Observe that by the halfspace representation, a polyhedron need not be bounded.

\begin{definition}[Supports]
	A hyperplane $H$ \emph{supports} the polyhedron $P$ if 
	(i) $P \subseteq H^+$ for a halfspace $H^+$ with $H = \partial H^+$, and
	(ii) $H \cap \partial P \neq \emptyset$.
	Moreover, $H$ supports $P$ at $x$ if $x \in H \cap \partial P$.
\end{definition}

\begin{definition}[Face, facet]\label{def:face}
  Let $P \subseteq \reals^d$ be a convex polyhedron.
  A \emph{face} $F$ of the polytope $P$ is any set of the form
  \begin{equation*}
    F = P \cap H~,
  \end{equation*}
  for a hyperplane $H$ supporting $P$.
  The dimension of a face $F$ is the dimension of its affine hull $\dim(F) := \dim(\affhull(F))$.
  A face $F$ with $\dim(F) = \dim(\affhull(P)) - 1$ is called a facet.
\end{definition}

While one traditionally considers $P$ to be a trivial face of itself, we exclude this case throughout.

It is often useful to understand polyhedra in terms of their halfspace representations and the set of hyperplanes generating facets of $P$.
To find this set, we must first establish when a halfspace representation is irredundant for a given polyhedron, as this irredundant set corresponds to the facets of a polyhedron in a natural way.

\begin{definition}[Irredundant; adapted from {\citet{gallier2008notes}}]\label{def:irredundant}
	Let $P = \cap \H$ for a finite set of closed halfspaces $\H$ be a polyhedron.
	We say that $\cap \H$ is an \emph{irredundant decomposition} for $P$ (and $\H$ is irredundant for $P$) if $P$ cannot be expressed as $P = \cap \H'$ for some set of closed halfspaces $\H'$ such that $|\H'| < |\H|$, and redundant otherwise.
	Moreover, we call $\H$ irredundant for $P$ if $\cap \H$ is an irredundant decomposition of $P$.
\end{definition}

\citet{gallier2008notes} shows that every $d$-dimensional polyhedron $P \subseteq \reals^d$ has a unique and irredundant halfspace representation $\H^*$, and each $H^+ \in \H^*$ generates a facet of $P$.

\begin{theorem}[{\citet{gallier2008notes}}]\label{thm:polyhedron-uniquely-gen-facets}
  Given a $d$-dimensional polyhedron $P \subseteq \reals^d$, 
  (i) there is a unique irredundant and finite set of closed halfspaces $\H^*$ such that $P = \cap \H^*$, 
  (ii) $\{H \cap P \mid H^+ \in \H^*, H = \partial H^+\}$ is the set of facets of $P$, and
  (iii) for all finite sets of closed halfspaces $\H$ such that $P = \cap \H$, we have $\H^* \subseteq \H$.
\end{theorem}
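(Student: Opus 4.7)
The plan is to define the candidate irredundant representation $\H^*$ directly from the facets of $P$ and then verify all three claims in that order.

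For each facet $F$ of $P$, let $H_F = \affhull(F)$, which is a hyperplane since $\dim F = d-1 = \dim P - 1$. Because $P$ is convex and $F \subseteq \partial P$, the polyhedron $P$ lies entirely on one side of $H_F$; let $H_F^+$ denote the unique closed halfspace with $\partial H_F^+ = H_F$ and $P \subseteq H_F^+$. Define $\H^* := \{H_F^+ : F \text{ a facet of } P\}$, which is finite since $P$ has finitely many facets (inherited from any starting halfspace representation via the observation that a facet's bounding hyperplane must be one of the original bounding hyperplanes).

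The bulk of the work is showing $P = \cap \H^*$ together with irredundancy. The inclusion $P \subseteq \cap \H^*$ is immediate. For the reverse, I would start from an arbitrary halfspace representation $P = \cap \H$ and argue that each $H^+ \in \H$ whose bounding hyperplane $H = \partial H^+$ meets $P$ in a face of dimension strictly less than $d-1$ is redundant, i.e., can be removed without changing the intersection; a short local argument at any candidate witness point of $\cap (\H \setminus \{H^+\}) \setminus P$ derives a contradiction. Iterating produces a sub-representation whose bounding hyperplanes all generate facets, and up to removing duplicates the corresponding halfspaces are precisely the elements of $\H^*$. Irredundancy of $\H^*$ itself follows from: for any facet $F$, pick $y \in \relint F$ (nonempty as $\dim F = d-1$); points just beyond $H_F$ on the complement side of $H_F^+$ lie in $\cap (\H^* \setminus \{H_F^+\})$ but not in $P$, since they are close to $y \in \partial P$ and strictly inside every other $H_{F'}^+$.

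For parts (i) and (iii), let $\H$ be any finite halfspace representation of $P$ and fix a facet $F$. Taking $y \in \relint F \subseteq \partial P$, some $H^+ \in \H$ must have $y \in H := \partial H^+$, for otherwise $y$ would be interior to $\cap \H = P$. A local analysis near $y$ (using that $P$ is full-dimensional on the $H_F^+$ side and of dimension $d-1$ on $H_F$ in a neighborhood of $y$) forces $H \supseteq \affhull(F) = H_F$, so $H = H_F$; then $H^+ = H_F^+$ since both contain $P$. Thus $\H^* \subseteq \H$, giving (iii), and combined with irredundancy of $\H^*$ this gives uniqueness (i). Part (ii) is then immediate from the construction of $\H^*$.

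The main obstacle will be making the local arguments precise for unbounded $P$, where facets and their relative interiors can be unbounded. One must verify that $\relint F$ is nonempty for every facet $F$ and that small-ball intersections with $P$ behave as expected (full-dimensional on one side of $H_F$, a $(d-1)$-dimensional slab on $H_F$ itself). These are standard but delicate. The cleanest route may be to first prove the bounded case and then reduce the unbounded case by intersecting with a large ball $B_R$, working out the correspondence between facets of $P \cap B_R$ and facets of $P$, and passing to the limit $R \to \infty$.
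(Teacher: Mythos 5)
Your overall strategy is valid, and your argument for part (iii) is in substance the same as the paper's: pick $y \in \relint F$, observe that some $H^+ \in \H$ must have $y \in \partial H^+$ (else $y$ would be interior to $\cap\H = P$), and then argue that this hyperplane must be the facet-defining one. The paper leans on \citet{gallier2008notes} (Proposition 3.17 for the existence of a supporting hyperplane, Proposition 4.5 for items (i) and (ii)) and defines $\H^*$ \emph{extremally} as a minimum-cardinality halfspace representation, from which irredundancy is immediate; you instead define $\H^*$ \emph{constructively} from the facets and re-derive the Gallier steps. Both are correct; yours is more self-contained, while the paper's is shorter by citation. The local-analysis step you sketch (that a supporting hyperplane through $y \in \relint F$ must coincide with $\affhull F$) should be made explicit via the standard fact that a point of $\relint F$ lies in a face $G$ only if $F \subseteq G$, so that $H \cap P \supseteq F$ and, being a proper face containing a facet, equals $F$.

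One correction: the obstacle you flag regarding unbounded $P$ is not actually there. The relative interior of any nonempty convex set is nonempty, bounded or not, so $\relint F \neq \emptyset$ is automatic; and since $\H^*$ is finite and $y \in \relint F$ lies strictly inside every $H_{F'}^+$ with $F' \neq F$, a single $\epsilon > 0$ works for all of them simultaneously, which is all the local perturbation argument needs. The detour through $P \cap B_R$ and a limit $R \to \infty$ is unnecessary. What \emph{does} need a word is your preliminary claim that $P$ has finitely many facets: this itself comes from the very fact you are trying to prove (facet-defining hyperplanes lie in $\H$), so to avoid apparent circularity you should derive it up front directly from a fixed starting representation $\H$, before $\H^*$ is defined.
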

\begin{proof}
	Since $P$ is $d$-dimensional in $\reals^d$, it therefore has nonempty interior.
	As $P$ has a finite halfspace representation, it must have a smallest halfspace representation $\H^*$.
	That is, $|\H^*| = \min \{|\H| : P = \cap \H, \H$ finite$\}$.
	As a smallest halfspace representation, $\H^*$ is irredundant by definition.
	\citet[Proposition 4.5(i)]{gallier2008notes} then states that $\H^*$ is unique, giving (i).
	Additionally, (ii) is shown by~\citep[Proposition 4.5(ii)]{gallier2008notes}.

	It remains to show (iii).
	Let $\H$ be a finite set of closed halfspaces such that $P = \cap \H$.
  As noted in the last sentence of the proof of \citep[Proposition 4.5]{gallier2008notes},
  the hyperplanes defining facets are unique:
  if $F$ is a facet of $P$ and $H,H'$ are hyperplanes with $F = H \cap P = H' \cap P$, then it must be the case that $H = H'$.
	It therefore suffices to show that, for each facet $F$ of $P$, there is an $H^+ \in \H$ such that $F = H\cap P$.
  \citet[Proposition 3.17]{gallier2008notes} observes that, for all $x \in \partial P$, there exists some hyperplane $\hat H$ such that $\hat H$ supports $P$ at $x$.
	Since $x \in \relint(F)$ is in exactly one face of $P$, namely $F$, there must be a unique $H^+ \in \H$ such that $F = H \cap P$.
\end{proof}

\subsection{Specializing to $\reals^\Y \times \reals$}\label{appsubsec:notation}
Within this appendix, we use some self-contained notation to work in the function graph space $\reals^\Y \times \reals$.
We will later consider losses over a finite set of outcomes $\Y$; to make notation consistent, we use $\reals^\Y_+$ throughout as shorthand for $\reals^{|\Y|}_+$, and let $d := |\Y|+1$.

Given any $v \in \reals^\Y_+$, define $H^+_v := H^+_{(v, -1)} = \{(x, c) \in \reals^{\Y}_+ \times \reals \mid \inprod{v}{x} \geq c\}$.
Similarly, we denote $H_y^+ := H_{(e_y, 0)}^+$ for any $y \in \Y$; the latter will help us restrict a constructed polyhedron to the nonnegative orthant. 
Extending to hyperplanes, we construct $H_v := H_{(v,-1)}$ and observe that $H_v = \partial H^+_v$ for $v \in \reals^\Y_+$ and define $H_y := H_{(e_y, 0)}$ so that $H_y = \partial H^+_y$.
Given a set $\V \subseteq \reals^d$, we let $\H_{\V} = \{H_v^+ \mid v\in\V\}$ denote the set of halfspaces generated by $\V$.
Similarly, let $\H_\Y = \{H_y^+ \mid y\in\Y\}$.

For any $S \subseteq \reals^k$, let $\delta(\cdot \mid S):\reals^k \to \reals \cup \{\infty\}$ be the convex indicator function, given by $\delta(x \mid S) = 0$ if $x\in S$ and $\infty$ otherwise.
Throughout, we will work with a concave function $g_\V$ generated by a set $\V \subseteq \reals^\Y_+$ of the following form.

\begin{definition}\label{def:g-finite}
  Given a set $\V \subseteq \reals^\Y_+$, define the function $g_\V : \reals^\Y \to \reals_+ \cup \{-\infty\}$ by
  \begin{align*}
    g_\V(x) = \inf_{v\in\V} \inprod{x}{v} - \delta(x \mid \reals_+^\Y)~. 
  \end{align*}
\end{definition}
\noindent
We denote the hypograph of a function $g:\reals^\Y\to\reals\cup\{-\infty\}$ by $\hyp(g) = \{(x,c) \mid c \leq g(x)\} \subseteq \reals^\Y \times \reals$.

A first observation is that the region generated by the intersection of the $H^+_y$ halfspaces restricts the hypograph $g_\V$ to be finite only on the nonnegative orthant for any $\V \subseteq \reals^\Y_+$.
\begin{lemma}\label{lem:x-nonneg-orthant-iff-intersection-HY}
  $\cap \H_\Y = \reals^\Y_+ \times \reals$.
\end{lemma}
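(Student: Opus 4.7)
The plan is to unfold the definitions directly; this is essentially a set-equality verification and the statement should fall out with no real obstacle beyond bookkeeping.

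First I would rewrite each constituent halfspace. By the notation established in \S\ref{appsubsec:notation}, $H_y^+ = H^+_{(e_y,0)}$ where $(e_y,0) \in \reals^\Y \times \reals$, so I would compute
\[
H_y^+ = \{(x,c) \in \reals^\Y \times \reals \mid \inprod{(x,c)}{(e_y,0)} \geq 0\} = \{(x,c) \in \reals^\Y \times \reals \mid x_y \geq 0\}.
\]
The key observation is that the second coordinate of the normal vector is $0$, so $H_y^+$ imposes no constraint on $c$ whatsoever; it only demands $x_y \geq 0$.

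Next I would take the intersection over $y \in \Y$. A pair $(x,c)$ lies in $\cap \H_\Y = \cap_{y \in \Y} H_y^+$ iff $x_y \geq 0$ for every $y \in \Y$, which is precisely the defining condition $x \in \reals^\Y_+$, with $c \in \reals$ free. Thus
\[
\cap \H_\Y = \{(x,c) \in \reals^\Y \times \reals \mid x \in \reals^\Y_+\} = \reals^\Y_+ \times \reals,
\]
establishing the claimed equality. The hardest part, if one wants to call it that, is simply being careful about which ambient space we are working in, since $e_y \in \reals^\Y$ but the halfspace $H_y^+$ lives in $\reals^\Y \times \reals$; once one pads $e_y$ with a trailing zero to get the right normal vector, the calculation is immediate.
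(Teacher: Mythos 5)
Your proof is correct and follows essentially the same route as the paper's: both unfold the definition of $H_y^+$, observe that the constraint is exactly $x_y \geq 0$ with $c$ unconstrained, and intersect over $y \in \Y$. Your version is slightly more explicit about the inner-product computation, but the argument is the same.
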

\begin{proof}
  The result follows if we show $x \in \reals^\Y_+ \iff (x,c) \in \cap \H_\Y$ for all $c \in \reals$.

  $\implies$
  Fix any $c \in \reals$.
  $x \in \reals^\Y_+ \iff x_y \geq 0$ for all $y \in \Y$.
  This means that for any $y \in \Y$, $(x,c) \in \{(x,c) \mid x_y \geq 0\} = H^+_y$.
  As $y$ and $c$ were arbitrary, this shows the forward direction.
  
  $\impliedby$
  $(x,c) \in \cap \H_\Y$ implies $x_y \geq 0$ for all $y \in \Y$, and therefore $x \in \reals^d_+$.	
\end{proof}

\subsection{Hypographs of extended Bayes risks}\label{appsubsec:phase2}
\noindent
We now apply this polyhedral perspective to the Bayes risk of a loss function, extended to ``unnormalized distributions'', i.e., all of $\reals^\Y_+$.
Given a minimizable loss function $L : \R \to \reals^\Y_+$, define the $1$-homogeneous extension of its Bayes risk as
$\risk{L}_+:\reals^\Y\to\reals\cup\{\infty\}$, $x \mapsto \inf_{r\in\R} \inprod{x}{L(r)} - \delta(x \mid \reals^\Y_+)$.
In other words, letting $L(\R) := \{L(r) \mid r\in\R\} \subseteq \reals^\Y_+$, we have $\risk L_+ = g_{L(\R)}$.
Observe that $L(\R)$ and $\H_{L(\R)}$ may be infinite sets.  
\begin{claim}\label{claim:gV-equals-riskL}
	Suppose we are given a minimizable $L : \R \to \reals^\Y_+$ with polyhedral extended risk $\risk L_+$.
	Then $\hyp(g_{L(\R)}) = \cap(\H_\Y \cup \H_{L(\R)})$.
\end{claim}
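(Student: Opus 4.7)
The plan is to prove the set equality by a direct double-inclusion argument, treating each side as essentially a definitional unpacking and using Lemma~\ref{lem:x-nonneg-orthant-iff-intersection-HY} to control the nonnegative-orthant constraint. There is no deep obstacle here; the claim is really just a restatement of the definitions of $g_{L(\R)}$, $\hyp$, and the two families of halfspaces $\H_\Y$ and $\H_{L(\R)}$, organized so it can be fed into the later polyhedral arguments. The only thing to be careful about is consistently handling the ``$-\infty$ off the nonnegative orthant'' behavior coming from the convex indicator $\delta(x\mid\reals^\Y_+)$.

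For the forward inclusion $\hyp(g_{L(\R)}) \subseteq \cap(\H_\Y \cup \H_{L(\R)})$, I will start with $(x,c) \in \hyp(g_{L(\R)})$, so that $c \le g_{L(\R)}(x)$. Since $c \in \reals$, the inequality forces $g_{L(\R)}(x) \neq -\infty$, which rules out $x \notin \reals^\Y_+$ (on which the $\delta$ term is $+\infty$ and the subtraction makes $g_{L(\R)}$ equal $-\infty$). Hence $x \in \reals^\Y_+$, and by Lemma~\ref{lem:x-nonneg-orthant-iff-intersection-HY} we have $(x,c) \in \cap \H_\Y$. The remaining condition $c \le \inf_{v \in L(\R)} \inprod{x}{v}$ unfolds to $c \le \inprod{x}{v}$ for every $v \in L(\R)$, which is exactly $(x,c) \in H_v^+$ for every such $v$, giving membership in $\cap \H_{L(\R)}$.

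For the reverse inclusion $\cap(\H_\Y \cup \H_{L(\R)}) \subseteq \hyp(g_{L(\R)})$, I will take $(x,c) \in \cap(\H_\Y \cup \H_{L(\R)})$. Membership in $\cap \H_\Y$ gives, via Lemma~\ref{lem:x-nonneg-orthant-iff-intersection-HY}, that $x \in \reals^\Y_+$, so $\delta(x\mid\reals^\Y_+)=0$ in the definition of $g_{L(\R)}$. Membership in $\cap \H_{L(\R)}$ gives $c \le \inprod{x}{v}$ for all $v \in L(\R)$, hence $c \le \inf_{v \in L(\R)} \inprod{x}{v} = g_{L(\R)}(x)$, i.e.\ $(x,c) \in \hyp(g_{L(\R)})$.

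The hypothesis that $\risk L_+$ is polyhedral and that $L$ is minimizable plays no role in this particular claim (it is consumed by later results); the claim itself is a purely definitional identity. The only mild subtlety worth flagging in the write-up is that although $\H_{L(\R)}$ may be infinite, both sides of the equality are well-defined as intersections of arbitrary collections of halfspaces, so no finiteness argument is needed here.
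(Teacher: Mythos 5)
Your proof is correct and follows essentially the same route as the paper's: both simply unfold the definitions of $g_{L(\R)}$, $\hyp$, and the two halfspace families, invoking Lemma~\ref{lem:x-nonneg-orthant-iff-intersection-HY} to identify $\cap\H_\Y$ with $\reals^\Y_+\times\reals$. The paper phrases it as a single chain of biconditionals after first restricting to $x\in\reals^\Y_+$, while you present it as two inclusions, but these are the same argument; your side observation that neither the polyhedral hypothesis on $\risk L_+$ nor minimizability is actually used in this particular claim is also accurate.
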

\begin{proof}
	Observe that $x \in \reals^\Y_+ \iff (x, c) \in \cap \H_\Y$.
	Let $x \in \reals^\Y_+$.
	\begin{align*}
	(x,c) \in \hyp(g_{L(\R)})
	&\iff g_{L(\R)}(x) \geq c & \text{definition of hypograph}\\
	&\iff \risk L_+(x) \geq c & \text{$g_{L(\R)}=\risk L_+$}\\
    &\iff \inprod{v}{x} \geq c \,\, \forall v \in {L(\R)} & \text{definition of $\risk L_+$} \\
	&\iff (x,c) \in H^+_{v} \,\, \forall v \in {L(\R)} & \text{definition of $H^+_v$}\\
	&\iff (x,c) \in \cap \H_{L(\R)} & 
	\end{align*}
	Combining the two equalities, we have $\hyp(g_{L(\R)}) = \cap (\H_\Y \cup \H_{L(\R)})$.

\end{proof}

\subsection{Finding the unique smallest subset of loss vectors}

\begin{lemma}\label{lem:G-unique-facets-Hstar}
  Consider a loss $L : \reals^d \to \reals^\Y_+$ with polyhedral extended risk $\risk L_+$.
  There is a unique irredundant set $\H^*$ of closed halfspaces such that $\hyp(g_{L(\R)}) = \cap \H^*$. 
  Moreover, for each $H^+ \in\H^*$ and $H$ such that $H = \partial H^+$, the face $\hyp(g_{L(\R)}) \cap H$ is a facet. 
  Moreover, $\H_\Y \subseteq \H^*$. 
\end{lemma}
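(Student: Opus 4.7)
The plan is to apply Theorem~\ref{thm:polyhedron-uniquely-gen-facets} to the hypograph $P := \hyp(g_{L(\R)})$. First I would verify that $P$ is a full-dimensional polyhedron in $\reals^\Y \times \reals$. Polyhedrality follows immediately from the hypothesis that $\risk L_+ = g_{L(\R)}$ is polyhedral, so $\hyp(\risk L_+)$ admits some finite halfspace representation. For full-dimensionality, note that $g_{L(\R)}$ is finite and nonnegative on $\reals^\Y_+$: for any $x \in \reals^\Y_+$ and any fixed $r_0 \in \R$, we have $0 \leq g_{L(\R)}(x) \leq \inprod{x}{L(r_0)}$. Thus for any $x$ in the interior of $\reals^\Y_+$ and any $c < g_{L(\R)}(x)$, a small open Euclidean ball around $(x,c)$ lies in $P$.

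Having established that $P$ is a $(|\Y|+1)$-dimensional polyhedron in $\reals^\Y \times \reals$, Theorem~\ref{thm:polyhedron-uniquely-gen-facets}(i) yields a unique finite irredundant set of closed halfspaces $\H^*$ with $P = \cap\H^*$, and part (ii) states that the facets of $P$ are exactly $\{H \cap P \mid H^+ \in \H^*, H = \partial H^+\}$. This immediately gives the first two claims of the lemma.

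For the final claim $\H_\Y \subseteq \H^*$, I plan to show that for each $y \in \Y$, the intersection $H_y \cap P$ is a facet of $P$, and then invoke uniqueness of facet-defining hyperplanes to conclude $H_y^+ \in \H^*$. By Lemma~\ref{lem:x-nonneg-orthant-iff-intersection-HY}, $P \subseteq \cap\H_\Y \subseteq H_y^+$, and $H_y$ meets $\partial P$ at any point of the form $(x, g_{L(\R)}(x))$ with $x_y = 0$, so $H_y$ supports $P$. To show $\dim(H_y \cap P) = |\Y|$, pick $\bar x \in \reals^\Y_+$ with $\bar x_y = 0$ and $\bar x_{y'} > 0$ for all $y' \neq y$. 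Since $g_{L(\R)}(\bar x)$ is finite, the point $(\bar x, g_{L(\R)}(\bar x) - 1)$ lies in $H_y \cap P$, and since $g_{L(\R)}$ is continuous on the relative interior of $H_y \cap (\reals^\Y_+ \times \reals)$, a full $|\Y|$-dimensional open neighborhood of this point within $H_y$ still lies in $P$. Hence $H_y \cap P$ is a facet. Combined with Theorem~\ref{thm:polyhedron-uniquely-gen-facets}(ii) and the fact that a facet uniquely determines both its supporting hyperplane and the halfspace orientation containing $P$, we conclude $H_y^+ \in \H^*$.

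The main obstacle I anticipate is justifying the dimension count carefully, i.e.\ verifying that a genuinely $|\Y|$-dimensional neighborhood of $(\bar x, g_{L(\R)}(\bar x)-1)$ within $H_y$ lies inside $P$. This reduces to showing that $g_{L(\R)}$ stays strictly above $g_{L(\R)}(\bar x) - 1$ in a neighborhood of $\bar x$ in $H_y \cap \reals^\Y_+$, which follows from the continuity of the polyhedral concave function $g_{L(\R)}$ on the relative interior of this face (where all coordinates except the $y$-th are strictly positive). Everything else is bookkeeping using the already-proven results.
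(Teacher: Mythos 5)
Your proposal is correct and follows essentially the same route as the paper's proof: establish full-dimensionality of $\hyp(g_{L(\R)})$, invoke Theorem~\ref{thm:polyhedron-uniquely-gen-facets}(i)--(ii), and then show each $H_y\cap\hyp(g_{L(\R)})$ is a facet to conclude $\H_\Y\subseteq\H^*$. The only minor cosmetic difference is in verifying that $F_y$ is $|\Y|$-dimensional: you pick a specific point and use continuity of the polyhedral concave function, whereas the paper notes the simpler containment $F_y \supseteq \{(x,c)\mid x\in\reals^\Y_+,\ x_y=0,\ c\le 0\}$ (using $g_{L(\R)}\ge 0$), which avoids the continuity argument entirely; both are valid.
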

\begin{proof}
	As $g_{L(\R)}$ is nonnegative on $\reals_+^\Y$, the set $\hyp(g_{L(\R)})$ therefore contains $\{(x,c) \mid x\in\reals^\Y_+, c \leq 0\}$, which is $(|\Y| + 1)$-dimensional.
  Therefore $\hyp(g_{L(\R)})$ is full-dimensional.
	Take $\H^*$ to be the unique irredundant and finite set of closed halfspaces such that $\hyp(g_{L(\R)}) = \cap \H^*$ from Theorem~\ref{thm:polyhedron-uniquely-gen-facets}(i).
  Now, $\hyp(g_{L(\R)}) \cap H$ for any $H \in \H^*$ being a facet follows immediately from Theorem~\ref{thm:polyhedron-uniquely-gen-facets}(ii) and (iii).
  
  To show that $\H_\Y \subseteq \H^*$, it suffices from Theorem~\ref{thm:polyhedron-uniquely-gen-facets}(ii) to show that each $F_y := H_y \cap \hyp(g_{L(\R)})$ for $y\in\Y$ is a facet.
  From Claim~\ref{claim:gV-equals-riskL}, since $H_y^+ \in \H_\Y$, we have $\hyp(g_{L(\R)}) \subseteq H_y^+$.
  We have $F_y = H_{y} \cap \hyp(g_{L(\R)}) = \{(x,c) \mid x\in\reals^\Y_+, x_y=0, c \leq g_{L(\R)}(x)\} \supseteq \{(x,c) \mid x\in\reals^\Y_+, x_y=0, c \leq 0\}$ as $g_{L(\R)}(x) \geq 0$.
  Thus, $F_y$ is a nonempty face of $\hyp(g_{L(\R)})$, and contains a $|\Y|$-dimensional set, so must be a facet.
\end{proof}

We now use the above result about the unique irredundant halfspace decomposition of $\hyp(g_\V)$ to observe a unique finite set of loss vectors generating these halfspaces.

\begin{corollary}\label{cor:unique-set-loss-vectors-defining-facets}
  Given a minimizable loss $L: \reals^d \to \reals^\Y_+$ with polyhedral extended risk, consider the unique irredundant set $\H^*$ given by Lemma~\ref{lem:G-unique-facets-Hstar}. 
  Then
  (i) there is a unique finite set $\V^* \subseteq \reals^\Y_+$ such that $\H^* = \H_\Y \cup \H_{\V^*}$.
  (ii) $F_v := H_v \cap \hyp(g_{L(\R)})$ is a facet of $\hyp(g_{L(\R)})$ for each $v\in\V^*$.
  (iii) $g_{L(\R)}(x) = \min_{v \in \V^*}\inprod{v}{x} - \delta(x \mid \reals_+^\Y) = g_{\V^*}(x)$.
\end{corollary}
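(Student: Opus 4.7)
My plan is to extract $\V^*$ as the set of vectors that parameterize the non-$\H_\Y$ halfspaces in the irredundant decomposition $\H^*$ from Lemma~\ref{lem:G-unique-facets-Hstar}, then verify the three claims in order. From Lemma~\ref{lem:G-unique-facets-Hstar}, $\H^*$ is finite, contains $\H_\Y$, and each $H^+ \in \H^*$ gives a facet of $\hyp(g_{L(\R)})$ via $H \cap \hyp(g_{L(\R)})$ with $H = \partial H^+$. From Claim~\ref{claim:gV-equals-riskL}, $\hyp(g_{L(\R)}) = \cap(\H_\Y \cup \H_{L(\R)})$, so Theorem~\ref{thm:polyhedron-uniquely-gen-facets}(iii) forces $\H^* \subseteq \H_\Y \cup \H_{L(\R)}$. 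Hence every halfspace in $\H^* \setminus \H_\Y$ is of the form $H^+_v$ for some $v \in L(\R) \subseteq \reals^\Y_+$.

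For (i), I would verify that the map $v \mapsto H^+_v = H^+_{(v,-1)}$ is injective on $\reals^\Y_+$ and that its image is disjoint from $\H_\Y$. Injectivity follows from the canonical normalization forced by the $-1$ in the second coordinate: if $H^+_{(v,-1)} = H^+_{(v',-1)}$ then the defining pairs must agree up to a positive scalar, and matching the $-1$ forces the scalar to be $1$, so $v = v'$. The same argument rules out $H^+_v = H^+_{(e_y, 0)}$, since no positive scalar can send $-1$ to $0$. Defining $\V^* := \{v \in \reals^\Y_+ \mid H^+_v \in \H^* \setminus \H_\Y\}$ then gives a well-defined, finite set with $\H^* = \H_\Y \cup \H_{\V^*}$ and $\H_\Y \cap \H_{\V^*} = \emptyset$; uniqueness of $\V^*$ follows from the uniqueness of $\H^*$ together with the injectivity just established.

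Claim (ii) is then immediate: for $v \in \V^*$, we have $H^+_v \in \H^*$, and Lemma~\ref{lem:G-unique-facets-Hstar} asserts $F_v = H_v \cap \hyp(g_{L(\R)})$ is a facet. For (iii), I would apply the same argument as in Claim~\ref{claim:gV-equals-riskL} but with $\V^*$ in place of $L(\R)$; the proof there only uses that the set is a subset of $\reals^\Y_+$, so it yields $\hyp(g_{\V^*}) = \cap(\H_\Y \cup \H_{\V^*}) = \cap \H^* = \hyp(g_{L(\R)})$. Equality of hypographs gives $g_{\V^*} = g_{L(\R)}$, and since $\V^*$ is finite, the infimum defining $g_{\V^*}(x)$ is attained for every $x \in \reals^\Y_+$, converting the $\inf$ into the $\min$ required.

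The only substantive obstacle is the bookkeeping in (i), specifically ensuring that halfspaces in $\H^* \setminus \H_\Y$ admit a canonical representative in $\reals^\Y_+$ and that the resulting parameterization is unique. Everything else is a direct consequence of the already-established Lemma~\ref{lem:G-unique-facets-Hstar} and Claim~\ref{claim:gV-equals-riskL}, with (iii) in particular reducing to re-applying the same hypograph computation to the newly identified set $\V^*$.
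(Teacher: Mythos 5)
Your proof is correct and follows essentially the same route as the paper's: define $\V^*$ as the unique set parameterizing the halfspaces in $\H^* \setminus \H_\Y$, read off (ii) directly from Lemma~\ref{lem:G-unique-facets-Hstar}, and establish (iii) by equating $\hyp(g_{\V^*})$ with $\hyp(g_{L(\R)}) = \cap \H^*$. You are somewhat more careful than the paper in making explicit the injectivity of $v \mapsto H^+_v$ (from the fixed $-1$ normalization) and the disjointness of $\H_{\V^*}$ from $\H_\Y$; the one point to tighten is that Theorem~\ref{thm:polyhedron-uniquely-gen-facets}(iii) is stated only for \emph{finite} halfspace representations, whereas $\H_\Y \cup \H_{L(\R)}$ may be infinite, so you should first use polyhedrality of $g_{L(\R)}$ to pass to a finite $\V \subseteq L(\R)$ with $g_{L(\R)} = g_\V$ before invoking (iii).
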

\begin{proof}
  (i) Since $\hyp(g_{L(\R)})$ is full-dimensional, the facets of $\hyp(g_{L(\R)})$ are uniquely determined by the hyperplanes $H$ such that $H = \partial H^+$ and $H^+ \in \H^*$ by Lemma~\ref{lem:G-unique-facets-Hstar}.
  Any facet must then be some intersection of an $H_y \cap \hyp(g_{L(\R)})$ or $H_v \cap \hyp(g_{L(\R)})$.
  Since $\H_\Y \subseteq \H^*$ by Lemma~\ref{lem:G-unique-facets-Hstar}, take $\H_{\V^*} := \H^* \setminus \H_{\Y}$, and $\V^*$ the unique set generating $\H_{\V^*}$.
  (ii) Moreover, $H_v \in \H_{\V^*} \subseteq \H^*$ generates the facet $F_v = H_v \cap \hyp(g_{L(\R)})$ of $\hyp(g_{L(\R)})$ by Lemma~\ref{lem:G-unique-facets-Hstar}.
  (iii)   The result holds if $\hyp(g_{L(\R)}) = \hyp(g_{\V^*})$.
  By construction, $\hyp(g_{L(\R)}) = \cap \H^* = \cap (\H_\Y \cup \H_{\V^*}) = \{(x,c) \in \reals_+^\Y \times \reals \mid \inprod{v^*}{x} \geq c$ for all $v^* \in \V^* \} = \hyp(g_{\V^*})$.
  The result follows from equality of the hypographs.
\end{proof}

The above establishes our primary assumption for the rest of this section.

\begin{assumption}\label{assum:V-star-exists-infinite}
	$L : \R \to \reals^\Y_+$ is a minimizable loss function such that $\risk L_+ = g_{L(\R)}$ is polyhedral.
	$\V^* \subseteq L(\R)$ is the unique finite set such that $g_{L(\R)} = g_{\V^*}$ and $\hyp(g_{L(\R)}) = \hyp(g_{\V^*}) = \cap(\H_{\V^*} \cup \H_\Y)$, the last of which is irredundant.
\end{assumption}

\begin{proposition}\label{prop:poly-risk-assumption-satisfied}
	Given a minimizable loss $L : \R \to \reals^\Y_+$ such that $\risk L_+$ is polyhedral, there exists a finite $\V^*$ satisfying Assumption~\ref{assum:V-star-exists-infinite}.
\end{proposition}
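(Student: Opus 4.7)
My plan is to assemble this directly from the machinery already developed in Lemma~\ref{lem:G-unique-facets-Hstar} and Corollary~\ref{cor:unique-set-loss-vectors-defining-facets}, with one additional observation: that the set $\V^*$ produced there actually lies inside $L(\R)$ (not merely inside $\reals^\Y_+$).

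First, I would invoke Lemma~\ref{lem:G-unique-facets-Hstar} to obtain the unique irredundant finite set of closed halfspaces $\H^*$ with $\hyp(g_{L(\R)}) = \cap \H^*$, satisfying $\H_\Y \subseteq \H^*$. Then Corollary~\ref{cor:unique-set-loss-vectors-defining-facets}(i) supplies a unique finite $\V^* \subseteq \reals^\Y_+$ with $\H^* = \H_\Y \cup \H_{\V^*}$, and Corollary~\ref{cor:unique-set-loss-vectors-defining-facets}(iii) gives $g_{L(\R)} = g_{\V^*}$, hence $\hyp(g_{L(\R)}) = \hyp(g_{\V^*}) = \cap(\H_\Y \cup \H_{\V^*})$. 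Irredundancy of this decomposition is immediate from irredundancy of $\H^*$.

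The only remaining content is the containment $\V^* \subseteq L(\R)$. Here is where I would do the small extra work. By Claim~\ref{claim:gV-equals-riskL}, we already have $\hyp(g_{L(\R)}) = \cap(\H_\Y \cup \H_{L(\R)})$, which exhibits $\H_\Y \cup \H_{L(\R)}$ as a (possibly redundant) halfspace representation of $\hyp(g_{L(\R)})$. Applying Theorem~\ref{thm:polyhedron-uniquely-gen-facets}(iii) to the full-dimensional polyhedron $\hyp(g_{L(\R)})$ (full-dimensionality was established in the proof of Lemma~\ref{lem:G-unique-facets-Hstar}) forces $\H^* \subseteq \H_\Y \cup \H_{L(\R)}$. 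Subtracting $\H_\Y$ from both sides gives $\H_{\V^*} \subseteq \H_{L(\R)}$. Because the map $v \mapsto H_v^+$ is injective (the normal $(v,-1)$ to $H_v^+$ is determined up to positive scale by its offset $-1$ in the last coordinate, and hence $v$ is recovered uniquely from $H_v^+$), this halfspace-level inclusion translates to $\V^* \subseteq L(\R)$, as required.

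The main conceptual step here is the use of Theorem~\ref{thm:polyhedron-uniquely-gen-facets}(iii): the unique minimal halfspace representation $\H^*$ must be contained in every halfspace representation, in particular in the ``loss-vector'' representation $\H_\Y \cup \H_{L(\R)}$ coming from Claim~\ref{claim:gV-equals-riskL}. Everything else is bookkeeping. The only mild subtlety is verifying that the $H_y^+$ halfspaces (which constrain only the $\reals^\Y$ coordinates) and the $H_v^+$ halfspaces (which involve the $\reals$ coordinate through the offset $-1$) are of genuinely different form, so that $\H^* = \H_\Y \cup \H_{\V^*}$ is a disjoint decomposition and $\H_{\V^*} = \H^* \setminus \H_\Y$ is well-defined; this is immediate from comparing normals.
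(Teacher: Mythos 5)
Your proof is essentially correct and follows the same overall route as the paper — invoke Lemma~\ref{lem:G-unique-facets-Hstar} and Corollary~\ref{cor:unique-set-loss-vectors-defining-facets}, then verify that the resulting $\V^*$ matches the description in Assumption~\ref{assum:V-star-exists-infinite}. In fact you are \emph{more} careful than the paper: the paper's proof simply writes ``the unique $\V^* \subseteq L(\R)$ that is given in Corollary~\ref{cor:unique-set-loss-vectors-defining-facets},'' but the corollary as stated only gives $\V^*\subseteq\reals^\Y_+$, so the containment $\V^*\subseteq L(\R)$ is left implicit there. You correctly identify this as the one nontrivial piece of content and attempt to pin it down.

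There is, however, a technical gap in the step where you establish $\V^*\subseteq L(\R)$. You apply Theorem~\ref{thm:polyhedron-uniquely-gen-facets}(iii) to the representation $\hyp(g_{L(\R)}) = \cap(\H_\Y\cup\H_{L(\R)})$ furnished by Claim~\ref{claim:gV-equals-riskL}. But part (iii) of that theorem is stated only for \emph{finite} halfspace families, while $L(\R)$ — and hence $\H_{L(\R)}$ — is generally infinite (think of hinge loss, where $L(\reals)$ is an unbounded piecewise-linear curve, or the modified hinge loss from Figure~\ref{fig:modified-hinge}). So the citation does not literally apply. The conclusion you want is still true, but it needs either (a) an explicit note that the ``every facet-defining halfspace appears in every representation'' principle extends to infinite representations of a polyhedron, or (b) a direct argument bypassing the theorem. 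Route (b) is cleanest and is where minimizability of $L$ — unused in your argument — earns its keep: for each $v^*\in\V^*$, pick $x$ in the relative interior of the full-dimensional projected facet $\pi(F_{v^*})$ (Claim~\ref{claim:projected-Vstar-faces-full-dim}); then $g_{L(\R)}$ is differentiable at $x$ with gradient $v^*$, and minimizability (together with $1$-homogeneity) gives some $r\in\R$ with $\inprod{L(r)}{x} = g_{L(\R)}(x)$, so $L(r)$ is a supergradient of $g_{L(\R)}$ at $x$, forcing $L(r) = v^*$ and hence $v^*\in L(\R)$. Your injectivity observation for $v\mapsto H_v^+$ is correct and worth stating. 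With the finiteness issue addressed, this is a complete proof and, arguably, a more self-contained one than the paper's.
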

\begin{proof}
	Consider the unique $\V^* \subseteq L(\R)$ that is given in Corollary~\ref{cor:unique-set-loss-vectors-defining-facets} such that $g_{L(\R)} = g_{\V^*}$.
	Finally, by construction in Lemma~\ref{lem:G-unique-facets-Hstar}, we additionally have $\H^* = \H_\Y \cup \H_{\V^*}$, and $\cap \H^*$ is irredundant.
\end{proof}

\subsection{Extended Bayes risk to extended level sets: projecting from $\reals^d_+$ to $\reals^\Y_+$}\label{subsec:project-pi}

When the loss $L$ is clear, we denote the faces of $\hyp(g_{L(\R)})$ by $F_v := H_v \cap \hyp(g_{L(\R)})$.
We now define the projection $\pi:\reals^\Y\times \reals \to \reals^\Y, (x,c) \mapsto x$.

In this subsection, we will establish results about the dimensionality of extended level sets, and conditions under which subsets of these extended level sets cover the nonnegative orthant $\reals^\Y_+$.
As a first step, the extended level sets generated by $\V^*$ cover the nonnegative orthant.

\begin{corollary}\label{cor:projected-Vstar-faces-cover-pos-orthant}
	Consider $L, \V^*$ satisfying Assumption~\ref{assum:V-star-exists-infinite}.
	Then $\cup_{v\in\V^*} \pi(F_v) = \reals^\Y_+$.
\end{corollary}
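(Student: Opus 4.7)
The plan is to unpack the definitions of $F_v$, $\pi$, and $g_{\V^*}$ and show the equality by a short two-sided inclusion, with finiteness of $\V^*$ doing all the real work.

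First, I would handle the easy inclusion $\cup_{v \in \V^*} \pi(F_v) \subseteq \reals^\Y_+$. Each $F_v = H_v \cap \hyp(g_{L(\R)})$ is a subset of $\hyp(g_{L(\R)}) = \hyp(g_{\V^*})$, which by Assumption~\ref{assum:V-star-exists-infinite} equals $\cap(\H_\Y \cup \H_{\V^*})$ and is in particular contained in $\cap \H_\Y$. By Lemma~\ref{lem:x-nonneg-orthant-iff-intersection-HY}, this intersection equals $\reals^\Y_+ \times \reals$, so projecting via $\pi$ gives a subset of $\reals^\Y_+$.

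For the forward inclusion, fix $x \in \reals^\Y_+$. By Corollary~\ref{cor:unique-set-loss-vectors-defining-facets}(iii) and Assumption~\ref{assum:V-star-exists-infinite}, we have
\begin{equation*}
  g_{L(\R)}(x) = g_{\V^*}(x) = \min_{v \in \V^*} \inprod{v}{x},
\end{equation*}
where the infimum is a minimum because $\V^*$ is finite. Let $v^* \in \V^*$ attain this minimum, and set $c^* := \inprod{v^*}{x} = g_{\V^*}(x)$. Then $(x, c^*) \in \hyp(g_{\V^*}) = \hyp(g_{L(\R)})$, and the equality $\inprod{v^*}{x} = c^*$ places $(x, c^*)$ on the hyperplane $H_{v^*}$. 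Thus $(x, c^*) \in F_{v^*}$, and applying $\pi$ yields $x \in \pi(F_{v^*}) \subseteq \cup_{v\in\V^*} \pi(F_v)$, as desired.

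There is no real obstacle here: the argument is essentially that the finite pointwise minimum defining $g_{\V^*}$ is always attained, so every point of the nonnegative orthant sits at the ``top edge'' of the hypograph along some supporting hyperplane $H_{v^*}$. The only thing to watch out for is to invoke the right representation of $\hyp(g_{L(\R)})$ from Assumption~\ref{assum:V-star-exists-infinite} rather than the original (possibly infinite) expression $g_{L(\R)} = \inf_{v \in L(\R)} \inprod{v}{\cdot}$, since finiteness of $\V^*$ is what guarantees that the minimum is attained.
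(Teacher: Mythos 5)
Your proof is correct and is essentially the argument the paper intends (the paper leaves this corollary without an explicit proof, treating it as an immediate consequence of Corollary~\ref{cor:unique-set-loss-vectors-defining-facets}(iii)). Both inclusions are handled cleanly: the reverse inclusion via $\hyp(g_{\V^*}) \subseteq \cap\H_\Y = \reals^\Y_+ \times \reals$ from Lemma~\ref{lem:x-nonneg-orthant-iff-intersection-HY}, and the forward inclusion by observing that finiteness of $\V^*$ makes the pointwise minimum defining $g_{\V^*}$ attained, which places $(x, g_{\V^*}(x))$ on the supporting hyperplane $H_{v^*}$ and hence in $F_{v^*}$.
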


Moreover, the projection $\pi$ preserves dimension of faces.

\begin{claim}\label{claim:pi-preserves-dim}
	Consider $L, \V^*$ satisfying Assumption~\ref{assum:V-star-exists-infinite}. 
	Then $\dim(F_v) = \dim(\pi(F_v))$.
\end{claim}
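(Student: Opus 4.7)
The plan is to exploit the fact that $F_v \subseteq H_v$, so every point $(x,c) \in F_v$ satisfies $c = \inprod{v}{x}$. This means the height coordinate is an affine function of $x$ on $F_v$, which should make $\pi|_{F_v}$ an affine bijection onto $\pi(F_v)$, and affine bijections preserve dimension (of the affine hull).

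First I would make the injectivity step precise: suppose $(x,c), (x',c') \in F_v$ with $\pi(x,c) = \pi(x',c')$. Then $x=x'$ and, since both points lie in the hyperplane $H_v$, we get $c = \inprod{v}{x} = \inprod{v}{x'} = c'$, so $(x,c)=(x',c')$. Thus $\pi|_{F_v}$ is injective. Second, I would exhibit the inverse explicitly: define $\sigma: \pi(F_v) \to F_v$ by $\sigma(x) = (x, \inprod{v}{x})$. This is well defined because for any $x \in \pi(F_v)$ there is some $(x,c)\in F_v$ with $c = \inprod{v}{x}$, and clearly $\sigma \circ \pi|_{F_v} = \mathrm{id}$ and $\pi \circ \sigma = \mathrm{id}$. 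Both $\pi$ and $\sigma$ are affine maps between subsets of Euclidean spaces.

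Third, I would use the fact that an affine bijection between two sets induces an affine bijection between their affine hulls, and therefore preserves the dimension of the affine hull. Concretely, $\pi$ maps $\affhull(F_v)$ bijectively and affinely onto $\affhull(\pi(F_v))$, with $\sigma$ as its inverse. Since $\dim(\affhull(\cdot))$ is invariant under affine bijection (an affine bijection maps a maximal affinely independent set to a maximal affinely independent set), we conclude $\dim(F_v) = \dim(\affhull(F_v)) = \dim(\affhull(\pi(F_v))) = \dim(\pi(F_v))$.

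I do not anticipate an obstacle; the argument is essentially the observation that a face lying on the graph $\{(x,\inprod{v}{x})\}$ of a linear function projects faithfully onto its $x$-coordinates. The only thing to be careful about is that we genuinely have $F_v \subseteq H_v$, which is immediate from the definition $F_v = H_v \cap \hyp(g_{L(\R)})$, so the argument goes through without needing any polyhedral machinery beyond Assumption~\ref{assum:V-star-exists-infinite}.
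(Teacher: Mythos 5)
Your argument is correct and takes essentially the same approach as the paper: both rest on the observation that $F_v \subseteq H_v$ forces the height coordinate of any point in $F_v$ to equal $\inprod{v}{x}$, making $\pi|_{F_v}$ an affine bijection onto $\pi(F_v)$. The paper verifies dimension preservation by directly checking that the projections of affinely independent vectors remain affinely independent, which is exactly the content of your explicit inverse $\sigma(x)=(x,\inprod{v}{x})$ together with the general fact that affine bijections preserve affine-hull dimension.
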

\begin{proof}
	Recall from Definition~\ref{def:face} that the dimension of a polytope to be the dimension of its affine hull.
	Suppose we are given $|\Y|+1$ affinely independent vectors $z_i$ in $F_v$. 
	We will show that their projections $\{\pi(z_i)\}_i$ are affinely independent, giving the result.

	Let $a_1 + \ldots + a_{|\Y|+1} = 0$, such that $\sum_i a_i \pi(z_i) = 0$.	
	As $z_i \in F_v \subseteq H_v$, we have some $\{x_i\}_i$ such that $z_i = (x_i, \inprod{v}{x_i})$ for all $i$.
  By assumption, then, $0 = \sum_i a_i \pi(z_i) = \sum_i a_i x_i$.
	Thus, $\sum_i a_i z_i = \sum_i a_i (x_i, \inprod{v}{x_i}) = (\sum_i a_i x_i, \inprod{v}{\sum_i a_i x_i}) = (0,0) = 0 \in \reals^{|\Y|+1}$.
  By affine independence of the $z_i$, we conclude $a_i = 0$ for all $i$.
  Thus, the set $\{\pi(z_i)\}_i$ is affinely independent.
\end{proof}

Since we preserve the dimension of these projected spaces, we can now study equivalence of projected faces of the hypograph and regions of support of $g_{\V^*}$ for any $v \in L(\R)$.

\begin{lemma}\label{lem:projected-faces-iff-support-iff-argmin}
Given $L, \V^*$ satisfying Assumption~\ref{assum:V-star-exists-infinite}, fix $x \in \reals^\Y_+$.
	For any $v \in L(\R)$, the following are equivalent:

	(1) $(x,g_{\V^*}(x)) \in F_v$; 
	
	(2) $\inprod{v}{x}= g_{\V^*}(x)$;
	
	(3) $v \in \argmin_{v' \in L(\R)} \inprod{v'}{x}$; and
	
	(4) $x \in \pi(F_v)$~.
\end{lemma}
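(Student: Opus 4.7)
The plan is to prove the equivalences in a small cycle, leaning on the definitions of $F_v$, $H_v$, $\pi$, and on the key identity from Assumption~\ref{assum:V-star-exists-infinite} that $g_{L(\R)} = g_{\V^*}$. Concretely, I would argue $(1)\iff(2)$, $(2)\iff(3)$, $(1)\Rightarrow(4)$, and $(4)\Rightarrow(1)$.

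For $(1)\iff(2)$: by definition $F_v = H_v \cap \hyp(g_{\V^*})$, and the point $(x,g_{\V^*}(x))$ automatically lies in $\hyp(g_{\V^*})$. So membership in $F_v$ reduces to membership in $H_v = \{(x',c) : \inprod{v}{x'} = c\}$, which says exactly $\inprod{v}{x} = g_{\V^*}(x)$. For $(2)\iff(3)$: since $g_{\V^*}(x) = \inf_{v' \in \V^*}\inprod{v'}{x}$ and $g_{L(\R)}(x) = \inf_{v' \in L(\R)}\inprod{v'}{x}$ agree on $\reals^\Y_+$ by the Assumption, and since $\V^* \subseteq L(\R)$ (Assumption~\ref{assum:V-star-exists-infinite}) shows the infimum over $L(\R)$ is attained, we get that $\inprod{v}{x}$ equals this common value iff $v$ is a minimizer over $L(\R)$.

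For $(1)\Rightarrow(4)$: applying $\pi$ to the point $(x, g_{\V^*}(x)) \in F_v$ gives $x \in \pi(F_v)$, which is immediate. The main content is $(4)\Rightarrow(1)$: if $x \in \pi(F_v)$, then there is some $c \in \reals$ with $(x,c) \in F_v$. Since $F_v \subseteq H_v$, we have $c = \inprod{v}{x}$. Since $H_v^+$ is one of the supporting halfspaces in the irredundant decomposition $\hyp(g_{\V^*}) = \cap(\H_{\V^*} \cup \H_\Y)$ (Assumption~\ref{assum:V-star-exists-infinite}), the hyperplane $H_v$ supports $\hyp(g_{\V^*})$; i.e., $\hyp(g_{\V^*}) \subseteq H_v^+$. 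Applying this to the point $(x, g_{\V^*}(x)) \in \hyp(g_{\V^*})$ yields $\inprod{v}{x} \geq g_{\V^*}(x)$, so $c \geq g_{\V^*}(x)$. Conversely, $(x,c) \in \hyp(g_{\V^*})$ gives $c \leq g_{\V^*}(x)$. Thus $c = g_{\V^*}(x)$, which is precisely $(1)$.

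The only mildly subtle step is $(4)\Rightarrow(1)$, which hinges on the correct orientation of the supporting halfspace $H_v^+$ relative to $\hyp(g_{\V^*})$; once one unwinds that this is exactly the content of Assumption~\ref{assum:V-star-exists-infinite}, the rest is bookkeeping. I would not expect any real obstacle here beyond stating the definitions carefully, and would keep the proof to a short paragraph built around the cycle $(2)\Leftrightarrow(1)\Rightarrow(4)\Rightarrow(1)$ together with the identity $g_{L(\R)}=g_{\V^*}$ for $(2)\Leftrightarrow(3)$.
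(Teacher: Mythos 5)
Your proof follows essentially the same cycle as the paper's — the chain $(1)\iff(2)\iff(3)$ via the definitions of $F_v$, $H_v$, and $g_{\V^*}$, and then $(1)\iff(4)$ with the nontrivial content being $(4)\Rightarrow(1)$. Your treatment of $(4)\Rightarrow(1)$ is in fact somewhat more explicit than the paper's (which compresses the final step into one terse sentence about faces), and your observation that the $\min$ in $(3)$ is attained because $\V^*$ is finite fills a small gap that the paper glosses over.

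There is, however, one genuine slip in the $(4)\Rightarrow(1)$ step. You justify $\hyp(g_{\V^*}) \subseteq H_v^+$ by saying that $H_v^+$ is "one of the supporting halfspaces in the irredundant decomposition $\hyp(g_{\V^*}) = \cap(\H_{\V^*}\cup\H_\Y)$." That reasoning only applies when $v\in\V^*$, but the lemma is stated for arbitrary $v\in L(\R)$, and $\V^*$ is in general a \emph{proper} subset of $L(\R)$. For $v\in L(\R)\setminus\V^*$, the halfspace $H_v^+$ does not appear in the irredundant decomposition. The containment $\hyp(g_{\V^*})\subseteq H_v^+$ is still true, but for a different reason: Claim~\ref{claim:gV-equals-riskL} gives the (redundant) representation $\hyp(g_{L(\R)}) = \cap(\H_\Y\cup\H_{L(\R)})$, so every $v\in L(\R)$ contributes a containing halfspace; equivalently and more elementarily, for $x\in\reals^\Y_+$ one has $\inprod{v}{x}\geq\inf_{v'\in L(\R)}\inprod{v'}{x}=g_{\V^*}(x)$ directly from the definition of the infimum, and any $(x,c)\in\hyp(g_{\V^*})$ then satisfies $c\leq g_{\V^*}(x)\leq\inprod{v}{x}$. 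With that citation corrected, the rest of your argument goes through exactly as written, and the conclusion $c=g_{\V^*}(x)$ is clean.
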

\begin{proof}
	\begin{align*}
	(1) \quad \quad (x,g_{\V^*}(x)) \in F_v
	&\iff (x,g_{\V^*}(x)) \in \{(x',c) \in \hyp(g_{\V^*}) \mid \inprod{v}{x'} = c\} & \\
	&\iff  \inprod{v}{x} = g_{\V^*}(x) & (2)\\
	&\iff \inprod{v}{x} = \min_{v' \in L(\R)}\inprod{v'}{x} & \\
	&\iff v \in \argmin_{v' \in L(\R)}\inprod{v'}{x}~. & (3)
	\end{align*}
	This covers $1 \iff 2 \iff 3$.
	
	For $1 \iff 4$, the forward implication follows trivially by applying the definition of the projection $\pi$.
	For the reverse implication, consider some $x \in \pi(F_v)$.
	There must be a $c \in \reals$ so that $(x,c) \in F_v$.
	Expanding, this is actually saying $(x,c) \in \{(x',c') \in \hyp(g_{\V^*}) \mid \inprod{v}{x'} = c\}$.
	In particular, this is true when $c = \inprod{v}{x}$, which defines a face of $\hyp(g_{\V^*})$ at $x$ if any only if $\inprod{v}{x} = g_{\V^*}(x)$.
	Therefore, we have $(x, g_{\V^*}(x)) \in F_v$.
\end{proof}

\begin{claim}\label{claim:projected-Vstar-faces-full-dim}
	Consider $L, \V^*$ satisfying Assumption~\ref{assum:V-star-exists-infinite}.
	For all $v \in \V^*$, $\pi(F_v)$ is full dimensional in $\reals_+^\Y$.
\end{claim}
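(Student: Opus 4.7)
The plan is to chain together three ingredients already available in the excerpt: (i) $F_v$ is a facet of the hypograph for every $v\in\V^*$; (ii) the hypograph lives in an $(|\Y|+1)$-dimensional ambient space and is full-dimensional; and (iii) the projection $\pi$ preserves dimension. Putting these together forces $\pi(F_v)$ to have dimension $|\Y|$ inside $\reals^\Y$, which is exactly what ``full-dimensional in $\reals^\Y_+$'' means.

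Concretely, I would first invoke Corollary~\ref{cor:unique-set-loss-vectors-defining-facets}(ii) to assert that for each $v\in\V^*$, the set $F_v = H_v \cap \hyp(g_{L(\R)})$ is a facet of $\hyp(g_{L(\R)})$. Next, I would recall from the proof of Lemma~\ref{lem:G-unique-facets-Hstar} that $\hyp(g_{L(\R)})$ is full-dimensional in $\reals^\Y\times\reals$ (since it contains $\{(x,c)\mid x\in\reals^\Y_+,\,c\le 0\}$, a set of dimension $|\Y|+1$). By Definition~\ref{def:face}, a facet of a full-dimensional polyhedron in $\reals^{|\Y|+1}$ has dimension exactly $|\Y|$, so $\dim(F_v)=|\Y|$.

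Then I would apply Claim~\ref{claim:pi-preserves-dim}, which says the projection $\pi:(x,c)\mapsto x$ preserves the dimension of any face of $\hyp(g_{L(\R)})$. This yields $\dim(\pi(F_v))=|\Y|$. Since $F_v\subseteq \hyp(g_{L(\R)})\subseteq \reals^\Y_+\times\reals$ (the latter containment being just the domain constraint encoded by the indicator in $g_{L(\R)}$, or equivalently by $\cap\H_\Y=\reals^\Y_+\times\reals$ from Lemma~\ref{lem:x-nonneg-orthant-iff-intersection-HY}), we have $\pi(F_v)\subseteq\reals^\Y_+$, and a subset of $\reals^\Y_+$ of dimension $|\Y|$ is full-dimensional in $\reals^\Y_+$.

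There is essentially no hard step here; the claim is really a bookkeeping consequence of Corollary~\ref{cor:unique-set-loss-vectors-defining-facets} and Claim~\ref{claim:pi-preserves-dim}. The only subtlety worth checking is that the ``facet'' dimension count is valid: we must confirm $\hyp(g_{L(\R)})$ is full-dimensional in $\reals^\Y\times\reals$ (not merely in some lower-dimensional affine subspace), which is exactly the argument reused from Lemma~\ref{lem:G-unique-facets-Hstar}'s proof.
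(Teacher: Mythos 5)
Your proof is correct and follows essentially the same route as the paper: it cites Corollary~\ref{cor:unique-set-loss-vectors-defining-facets}(ii) for the facet property and Claim~\ref{claim:pi-preserves-dim} for dimension preservation under $\pi$, then concludes $\dim(\pi(F_v))=|\Y|$. You merely make explicit the full-dimensionality of $\hyp(g_{L(\R)})$ (borrowed from Lemma~\ref{lem:G-unique-facets-Hstar}'s proof), which the paper uses implicitly when it asserts the facet is $(d-1)$-dimensional.
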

\begin{proof}
	By Corollary~\ref{cor:unique-set-loss-vectors-defining-facets}, $F_v$ is a facet of $\hyp(g_{L(\R)})$ in $\reals^d_+$, meaning it is $(d - 1)$-dimensional.
	Moreover, Claim~\ref{claim:pi-preserves-dim} states that the dimension of $F_v$ is preserved for each $v \in \V^*$.
	Thus, $d-1 = |\Y| = \dim(F_v) = \dim(\pi(F_v))$.
\end{proof}

Now we can observe a set of normals $\V'$ generates faces of $\hyp(g_{\V^*})$ whose projections cover $\reals^\Y_+$ if and only if the set contains $\V^*$.
This fact will translate to a set of reports being representative for a loss if and only if it contains a finite minimum representative set.

\begin{claim}\label{claim:Vprime-projected-faces-cover-iff-Vstar-subset-Vprime}
	Consider $L, \V^*$ satisfying Assumption~\ref{assum:V-star-exists-infinite}.
	For $\V' \subseteq L(\R)$, we have
	$\cup_{v\in\V'} \pi(F_v) = \reals^\Y_+ \iff \V^* \subseteq \V'$.
\end{claim}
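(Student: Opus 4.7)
The plan is to prove both directions of the biconditional separately, with the reverse direction being essentially a reindexing of prior results and the forward direction requiring a pointwise uniqueness argument.

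For the $(\Leftarrow)$ direction, if $\V^* \subseteq \V'$, then Corollary~\ref{cor:projected-Vstar-faces-cover-pos-orthant} gives $\cup_{v \in \V^*}\pi(F_v) = \reals^\Y_+$, so $\reals^\Y_+ = \cup_{v \in \V^*}\pi(F_v) \subseteq \cup_{v \in \V'}\pi(F_v)$, with the reverse inclusion following from $\pi(F_v) \subseteq \reals^\Y_+$ for every $v \in L(\R)$.

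For the more interesting $(\Rightarrow)$ direction, I would fix an arbitrary $v^* \in \V^*$ and produce a single point $x$ at which $v^*$ is the \emph{only} element of $L(\R)$ minimizing $\inprod{\cdot}{x}$; then any $v \in \V'$ with $x \in \pi(F_v)$ must equal $v^*$, giving $v^* \in \V'$. To locate this $x$, I would invoke Claim~\ref{claim:projected-Vstar-faces-full-dim} to conclude that $\pi(F_{v^*})$ is full-dimensional in $\reals^\Y$, hence has non-empty interior; pick such an interior point $x$ together with an open ball $N \subseteq \pi(F_{v^*}) \subseteq \reals^\Y_+$ around it. By Lemma~\ref{lem:projected-faces-iff-support-iff-argmin} applied to every point of $N$, we have $g_{L(\R)}(x') = \inprod{v^*}{x'}$ for all $x' \in N$.

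The final step is to show $v = v^*$ for any $v \in \V'$ with $x \in \pi(F_v)$. By Lemma~\ref{lem:projected-faces-iff-support-iff-argmin}, $\inprod{v}{x} = g_{L(\R)}(x) = \inprod{v^*}{x}$, while the lower-bound property of $g_{L(\R)}$ yields $\inprod{v}{x'} \geq g_{L(\R)}(x') = \inprod{v^*}{x'}$ throughout $N$. Thus $\inprod{v - v^*}{\cdot}$ is nonnegative on the open set $N$ and vanishes at the interior point $x$; by linearity, substituting $x' = x + t d$ for arbitrary $d \in \reals^\Y$ and small $t$ of both signs forces $\inprod{v - v^*}{d} = 0$ for every $d$, so $v = v^*$.

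The step I expect to be the main obstacle is the last one, since $\V'$ need not be finite (or even countable), so naive measure-theoretic or finite-subcover arguments are not available. The resolution is that we do not need to simultaneously control all of $\V'$; the pointwise lower bound $\inprod{v}{\cdot} \geq g_{L(\R)}$ valid for each individual $v \in L(\R)$, combined with the local linearity of $g_{L(\R)}$ on $N$, already pins down the covering $v$ at $x$ uniquely as $v^*$.
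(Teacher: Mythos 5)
Your proof is correct, but the forward direction takes a genuinely different route from the paper. The paper argues by contraposition: if $\V^* \not\subseteq \V'$, pick $v \in \V^* \setminus \V'$; since $\pi(F_v)$ is full-dimensional and the halfspace decomposition $\H^* = \H_\Y \cup \H_{\V^*}$ is irredundant, removing $H_v^+$ strictly enlarges the polyhedron, and hence $\cup_{v'\in\V'}\pi(F_{v'})$ cannot cover $\pi(F_v)$, so the union is not all of $\reals^\Y_+$. You instead argue directly: for each $v^* \in \V^*$ you locate a specific witness point $x$ in the interior of $\pi(F_{v^*})$ at which $v^*$ is the \emph{unique} minimizer of $\inprod{\cdot}{x}$ over all of $L(\R)$, so any $\V'$ whose projected faces cover $\reals^\Y_+$ must contain $v^*$. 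The key lemma of the paper you lean on, the full-dimensionality of the $\pi(F_{v^*})$ (Claim~\ref{claim:projected-Vstar-faces-full-dim}), is the same, but you exploit it in a more hands-on way: full dimension $\Rightarrow$ open ball $N$ $\Rightarrow$ local linearity of $g_{L(\R)}$ $\Rightarrow$ a linear function nonnegative on $N$ and vanishing at the interior point $x$ must be identically zero. Your approach buys a more explicit and self-contained verification — it makes the irredundance that the paper invokes somewhat tersely into a concrete pointwise uniqueness statement — while the paper's contrapositive is shorter by offloading the work onto the irredundance machinery already established in Theorem~\ref{thm:polyhedron-uniquely-gen-facets}. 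One small thing worth making explicit in your write-up: the conclusion from ``$\pi(F_{v^*})$ is $|\Y|$-dimensional'' to ``it has nonempty interior in $\reals^\Y$'' uses that $\pi(F_{v^*})$ is convex (it is a linear image of a face of a polyhedron), since a non-convex full-dimensional set need not have interior.
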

\begin{proof}

	($\implies$) 
	For contraposition, suppose $\V^* \not \subseteq \V'$.
	Then $\exists v \in \V^* \setminus \V'$.
	Observe that $\V^*$ is unique, $\H^*$ is irredundant by assumption, and $\pi(F_v)$ is full-dimensional in $\reals^\Y_+$ by Claim~\ref{claim:projected-Vstar-faces-full-dim}.
	Moreover, $\pi(F_v) \not \in \cup_{v' \in \V'} \pi(F_v)$, which implies $\cup_{v' \in \V'} \pi(F_v) \neq \cup_{v^* \in \V^*} \pi(F_{v^*}) = \reals^\Y_+$.

	($\impliedby$)
	Since $\V^* \subseteq \V'$, we immediately have $\cup_{v \in \V^*} \pi(F_v) \subseteq \cup_{v' \in \V'} \pi(F_{v'})$.
	Moreover, $\cup_{v \in \V^*} \pi(F_v) = \reals^\Y_+$ by Corollary~\ref{cor:projected-Vstar-faces-cover-pos-orthant}, so $\reals^\Y_+ \subseteq \cup_{v' \in \V'} \pi(F_{v'})$.
	As $g_{\V^*}$ is only finite on $\reals^\Y_+$ by construction, equality follows.
	
\end{proof}

We now claim that a set of projected faces $\{F_v\}_{v \in \V'}$ for some $\V'$ will cover $\reals^\Y_+$ if and only if $\V^* \subseteq \V'$.
Given a finite set $\V^* \subset \reals^\Y_+$, we denote $\Lambda_{S} := \{\pi(F_v) \mid v \in S\}$ as the set of projected facets generated by $S$.

\begin{claim}\label{claim:Vprime-projected-faces-cover-iffprojected-faces-subsets}
	Consider $L, \V^*$ satisfying Assumption~\ref{assum:V-star-exists-infinite}, and $\R' \subseteq \R$ with $\V' := L(\R')$. 
	We have $\cup_{v \in \V'} \pi(F_v) = \reals_+^\Y \iff \Lambda_{\V^*}  \subseteq \Lambda_{\V'}$.
\end{claim}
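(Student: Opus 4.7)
The plan is to prove the two directions separately, with the reverse implication being immediate and the forward implication requiring a short dimensional/measure argument on top of the finiteness of faces of $\hyp(g_{\V^*})$.

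For the reverse direction ($\Lambda_{\V^*} \subseteq \Lambda_{\V'} \Rightarrow \cup_{v\in\V'}\pi(F_v) = \reals^\Y_+$), the plan is to chain inclusions: by hypothesis every element of $\Lambda_{\V^*}$ appears in $\Lambda_{\V'}$, so $\cup \Lambda_{\V^*} \subseteq \cup \Lambda_{\V'}$. Corollary~\ref{cor:projected-Vstar-faces-cover-pos-orthant} identifies the left-hand side with $\reals^\Y_+$, and each $\pi(F_{v'})$ lies in $\reals^\Y_+$ because $\hyp(g_{\V^*})$ is finite only there. This gives equality.

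For the forward direction, fix an arbitrary $v^* \in \V^*$; the goal is to exhibit some $v' \in \V'$ with $\pi(F_{v'}) = \pi(F_{v^*})$. First I would observe that since $\hyp(g_{\V^*})$ is a polyhedron (hence has finitely many faces), the collection $\{\pi(F_v) : v \in L(\R)\}$ is finite. Next I would characterize full-dimensional projected faces: Claim~\ref{claim:projected-Vstar-faces-full-dim} shows each $\pi(F_{v^*})$ is full-dimensional, and conversely if $\pi(F_v)$ is full-dimensional then $F_v$ itself is a facet (by Claim~\ref{claim:pi-preserves-dim}); by Corollary~\ref{cor:unique-set-loss-vectors-defining-facets} every facet of $\hyp(g_{\V^*})$ is either $F_{\hat v}$ for some $\hat v \in \V^*$ or $H_y \cap \hyp(g_{\V^*})$, and the latter projects to $\{x \in \reals^\Y_+ : x_y = 0\}$ which is not full-dimensional. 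So $F_v = F_{\hat v}$ for some $\hat v \in \V^*$; since $H_v$ and $H_{\hat v}$ both support the polyhedron along this same facet and a full-dimensional polyhedron uniquely determines its facet-defining hyperplane, we force $v = \hat v \in \V^*$.

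To finish, pick $x$ in the relative interior of $\pi(F_{v^*})$ and choose $\epsilon > 0$ so small that $B(x,\epsilon) \subseteq \pi(F_{v^*})$ and $B(x,\epsilon)$ misses every other full-dimensional $\pi(F_{\hat v})$ for $\hat v \in \V^* \setminus \{v^*\}$ (possible because these are finitely many full-dimensional sets whose pairwise intersections are projections of proper faces, hence of dimension $< |\Y|$). By the hypothesis $\cup_{v \in \V'} \pi(F_v) = \reals^\Y_+$ and the finiteness observation, $B(x,\epsilon)$ is covered by finitely many $\pi(F_v)$ with $v \in \V'$. Since a finite union of sets of dimension strictly less than $|\Y|$ has Lebesgue measure zero, at least one such $\pi(F_{v'})$ must be full-dimensional, and by the characterization above $v' \in \V^*$ with $\pi(F_{v'}) \cap B(x,\epsilon) \neq \emptyset$; by our choice of $\epsilon$ this forces $v' = v^*$. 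Hence $\pi(F_{v^*}) = \pi(F_{v'}) \in \Lambda_{\V'}$, and since $v^* \in \V^*$ was arbitrary, $\Lambda_{\V^*} \subseteq \Lambda_{\V'}$. The main delicacy is the step identifying full-dimensional projected faces with elements of $\V^*$; this relies squarely on irredundancy of the halfspace representation of $\hyp(g_{\V^*})$ from Assumption~\ref{assum:V-star-exists-infinite}, and on the dimension-preservation property of $\pi$ from Claim~\ref{claim:pi-preserves-dim}.
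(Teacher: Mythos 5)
Your proof is correct, and the reverse direction is the same as the paper's (chain the inclusions through Corollary~\ref{cor:projected-Vstar-faces-cover-pos-orthant} and note $\pi(F_v)\subseteq\reals^\Y_+$). For the forward direction, however, you take a genuinely different and more self-contained route. The paper simply invokes the forward implication of Claim~\ref{claim:Vprime-projected-faces-cover-iff-Vstar-subset-Vprime} to get $\V^*\subseteq\V'$, from which $\Lambda_{\V^*}\subseteq\Lambda_{\V'}$ is immediate; the burden of the covering-implies-containment argument is thus carried by that earlier claim, whose own proof is quite terse (it asserts, essentially without justification, that $\pi(F_v)$ for $v\in\V^*\setminus\V'$ cannot be covered by the remaining projected faces). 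Your argument fills exactly that gap: you characterize the full-dimensional projected faces as precisely those coming from $\V^*$ (via dimension preservation, Theorem~\ref{thm:polyhedron-uniquely-gen-facets}, and uniqueness of facet-defining hyperplanes for the full-dimensional polyhedron $\hyp(g_{\V^*})$), and then use a finiteness-plus-measure argument on a small ball around a generic interior point of $\pi(F_{v^*})$ to force some $v'\in\V'$ to equal $v^*$. Two small points worth flagging if you write this up: Claim~\ref{claim:pi-preserves-dim} is stated in the paper in the context of $v\in\V^*$, so you should note (as is true) that its proof applies verbatim to any $v\in L(\R)$ with $F_v\neq\emptyset$; and you should pick the point $x$ to avoid the finitely many lower-dimensional intersections \emph{before} choosing $\epsilon$, rather than the other way around. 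Neither affects correctness. The trade-off is that your version is longer but rigorous where the paper leans on an under-justified earlier step; it also yields the stronger conclusion $\V^*\subseteq\V'$ as a byproduct, duplicating Claim~\ref{claim:Vprime-projected-faces-cover-iff-Vstar-subset-Vprime}.
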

\begin{proof}
	$\implies$
	The result follows if $\V^* \subseteq \V'$, which follows from the forward implication of Claim~\ref{claim:Vprime-projected-faces-cover-iff-Vstar-subset-Vprime}.
	Explicitly, for all $v \in \V^*$ we also have $v \in \V'$, so, $\pi(F_v) \in \Lambda_{\V^*} \cap \Lambda_{\V'} = \Lambda_{\V^*}$.
	
	$\impliedby$
	If $\Lambda_{\V^*} \subseteq \Lambda_{\V'}$, then $\cup_{v^* \in \V^*} \pi(F_{v^*}) \subseteq \cup_{v \in \V'} \pi(F_v)$.
	By Corollary~\ref{cor:projected-Vstar-faces-cover-pos-orthant}, we have $\cup_{v^* \in \V^*} \pi(F_{v^*}) = \reals_+^\Y$, so $\reals_+^\Y \subseteq \cup_{v \in \V'} \pi(F_v)$.
	The other direction of subset inequality following from $\hyp(g_{\V^*})$ being finite only on $\reals_+^\Y$.	
\end{proof}

Now, we can conclude that projected facets generated by $L(\R)$ contain all other projected faces of $\hyp(g_{\V^*})$.
\begin{corollary}\label{cor:exists-vstar-projected-face-subset}
	Consider $L, \V^*$ satisfying Assumption~\ref{assum:V-star-exists-infinite}.
	For any $v \in L(\R)$, there is a $v^* \in \V^*$ such that $\pi(F_v) \subseteq \pi(F_{v^*})$.
\end{corollary}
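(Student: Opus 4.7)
The plan is to combine the covering property in Corollary~\ref{cor:projected-Vstar-faces-cover-pos-orthant} with a relative-interior argument about a linear function. First, I would dispose of the trivial case: if $F_v = \emptyset$ then $\pi(F_v) = \emptyset$ and the inclusion holds for any $v^* \in \V^*$. So assume $F_v \neq \emptyset$. Since $F_v \subseteq \hyp(g_{\V^*}) \subseteq \reals^\Y_+ \times \reals$ (by Claim~\ref{claim:gV-equals-riskL} and the $\H_\Y$ halfspaces), we have that $\pi(F_v)$ is a nonempty convex subset of $\reals^\Y_+$ and therefore admits a nonempty relative interior.

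Pick any $x_0 \in \relint(\pi(F_v))$. By Corollary~\ref{cor:projected-Vstar-faces-cover-pos-orthant}, since $x_0 \in \reals^\Y_+$, there exists $v^* \in \V^*$ with $x_0 \in \pi(F_{v^*})$. By Lemma~\ref{lem:projected-faces-iff-support-iff-argmin} applied to both $v$ and $v^*$, we get $\inprod{v}{x_0} = g_{\V^*}(x_0) = \inprod{v^*}{x_0}$. I claim that in fact $\inprod{v^*}{x} = \inprod{v}{x}$ for all $x \in \pi(F_v)$, which by Lemma~\ref{lem:projected-faces-iff-support-iff-argmin} will give $\pi(F_v) \subseteq \pi(F_{v^*})$.

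To establish this claim, define the linear function $\phi(x) = \inprod{v^* - v}{x}$. For any $x \in \pi(F_v) \subseteq \reals^\Y_+$, Lemma~\ref{lem:projected-faces-iff-support-iff-argmin} gives $\inprod{v}{x} = g_{\V^*}(x)$, and since $v^* \in \V^*$ we always have $\inprod{v^*}{x} \geq g_{\V^*}(x)$; hence $\phi \geq 0$ on $\pi(F_v)$. Moreover $\phi(x_0) = 0$. The standard fact I will invoke is: a linear (and thus affine) function that is nonnegative on a convex set $C$ and vanishes at some point of $\relint(C)$ must vanish identically on $C$. The short argument is that if there were some $x_1 \in \pi(F_v)$ with $\phi(x_1) > 0$, then because $x_0 \in \relint(\pi(F_v))$, the ray from $x_1$ through $x_0$ can be extended slightly beyond $x_0$ to some $x_2 \in \pi(F_v)$, so $x_0 = \lambda x_1 + (1-\lambda)x_2$ with $\lambda \in (0,1)$, forcing $\lambda \phi(x_1) + (1-\lambda)\phi(x_2) = 0$, which contradicts $\phi(x_1) > 0$ and $\phi(x_2) \geq 0$.

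I do not anticipate a serious obstacle here; the only mildly delicate step is the relative-interior argument, which is standard convex analysis. One small sanity check is that $\pi(F_v)$ need not itself be a face of $\hyp(g_{\V^*})$ projected onto $\reals^\Y_+$ coordinates in a way that matches any single $F_{v^*}$ — it could be lower-dimensional than any such projected facet — so a purely face-lattice argument (``every face lies in a facet'') does not immediately isolate a single $v^* \in \V^*$ because some facets of $\hyp(g_{\V^*})$ come from $\H_\Y$ rather than $\H_{\V^*}$. The relative-interior trick sidesteps this cleanly, since it directly picks out the needed $v^*$ via the covering property.
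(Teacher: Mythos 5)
Your proof is correct, and it takes a genuinely different route from the paper's. The paper argues directly in the hypograph $\hyp(g_{\V^*})\subseteq\reals^{\Y}\times\reals$: it invokes that $F_v$ is a face and that every proper face of a polyhedron lies in a facet, then cites Theorem~\ref{thm:polyhedron-uniquely-gen-facets}(ii) to identify the facets as the sets $F_{v^*}$. Your sanity check at the end is astute, though: the irredundant halfspace representation of $\hyp(g_{\V^*})$ also contains the halfspaces $\H_\Y$, so ``$F_v$ lies in some facet'' does not by itself pick out a facet of the form $F_{v^*}$ with $v^*\in\V^*$ rather than one of the form $F_y$ with $y\in\Y$. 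The paper's one-line argument elides this; it can be repaired (for instance, pick a relative interior point of $F_v$, note it lies on the graph of $g_{\V^*}$ and hence in some $F_{v^*}$, and use that a face of a polyhedron containing a relative interior point of another face must contain that face), but as written the step is terser than it should be. Your approach sidesteps the face-lattice bookkeeping entirely: you work in the projected space $\reals^\Y_+$, use the covering property of Corollary~\ref{cor:projected-Vstar-faces-cover-pos-orthant} to select $v^*$ at a relative interior point of $\pi(F_v)$, and then use the standard fact that an affine function nonnegative on a convex set and vanishing at a relative interior point vanishes identically, together with Lemma~\ref{lem:projected-faces-iff-support-iff-argmin}, to conclude the inclusion. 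Both proofs rest on a relative-interior argument, but yours is more self-contained and avoids the $\H_\Y$-versus-$\H_{\V^*}$ distinction that the paper's proof glosses over, at the mild cost of invoking the covering corollary directly rather than deriving it from the face structure.
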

\begin{proof}
	First, observe that $F_v \subseteq F_{v^*}$ as $F_{v^*}$ is a facet by construction of $\V^*$ and Theorem~\ref{thm:polyhedron-uniquely-gen-facets} (ii).
	As each face of a polyhedron is contained in a facet, we have $F_v \subseteq F_{v^*}$.
	Moreover, $\pi(F_v) \subseteq \pi(F_{v^*})$ follows immediately as a corollary.
\end{proof}

\subsection{Translating to properties: projecting from $\reals^\Y_+$ to $\simplex$}\label{subsec:project-f}

\begin{definition}\label{def:f-polyhedral}
	For any polyhedral concave function $f_\V$ with domain on $\simplex$, we denote the function $f_\V(p) = \min_{v\in\V} \inprod{p}{v} - \delta(p \mid \simplex)$, where $\V \subset \reals^\Y_+$ is a finite set. 
\end{definition}

\noindent
We now make a few observations about the extensions of $1$-homogeneous polyhedral functions.

\begin{claim}\label{claim:extended-poly-risk-poly}
	For any minimizable function $L$ such that $\risk L$ is polyhedral, its extension $\risk L_+$ is also polyhedral.
\end{claim}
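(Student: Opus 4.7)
The plan is to exhibit $\risk L_+$ explicitly as the minimum of a finite family of linear functions on $\reals^\Y_+$ (with $-\infty$ outside), which is precisely the form required for $g_\V$ in Definition~\ref{def:g-finite} and shows $\risk L_+$ is polyhedral.

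First I would unpack what polyhedrality of $\risk L$ on $\simplex$ gives. Since $\risk L$ is a concave polyhedral function on the polytope $\simplex$, it can be written as $\risk L(p) = \min_{i\in[k]} (\inprod{a_i}{p} + b_i)$ for finitely many affine functions. The key algebraic trick is to observe that on $\simplex$ we have $\inprod{\ones}{p} = 1$, so each constant $b_i$ can be absorbed: letting $v_i = a_i + b_i \ones \in \reals^\Y$, we get $\risk L(p) = \min_i \inprod{v_i}{p}$ for all $p \in \simplex$, i.e., a representation using purely linear (zero-constant) functions.

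Next I would establish that $\risk L_+$ is positively $1$-homogeneous on $\reals^\Y_+$. This is immediate from the definition $\risk L_+(x) = \inf_{r \in \R} \inprod{x}{L(r)}$: for any $\lambda > 0$, scaling $x$ by $\lambda$ scales each $\inprod{x}{L(r)}$ by $\lambda$, and thus scales the infimum by $\lambda$. Consequently, for any nonzero $x \in \reals^\Y_+$, setting $p = x/\|x\|_1 \in \simplex$ gives $\risk L_+(x) = \|x\|_1 \, \risk L(p) = \|x\|_1 \min_i \inprod{v_i}{p} = \min_i \inprod{v_i}{x}$. The case $x = 0$ is trivial since both sides vanish. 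For $x \notin \reals^\Y_+$, the $\delta(\cdot \mid \reals^\Y_+)$ term sends both $\risk L_+$ and $g_{\V}$ to $-\infty$. Putting this together, $\risk L_+ = g_{\V}$ with $\V = \{v_1,\ldots,v_k\}$, so $\risk L_+$ is polyhedral in the sense of Definition~\ref{def:g-finite}.

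There is essentially no hard obstacle here; the only things to be careful about are (i) noting that the $v_i$ live in $\reals^\Y$ but need not lie in $L(\R)$ or $\reals^\Y_+$ (this is fine, since Definition~\ref{def:g-finite} is stated for $\V \subseteq \reals^\Y_+$ but polyhedrality as a concave function does not require this; alternatively one could truncate the representation to facet-supporting $v_i$'s, which by nonnegativity of $\risk L$ must lie in $\reals^\Y_+$), and (ii) verifying the boundary/limit cases ($x=0$ and $x$ outside the orthant) so that the global identity $\risk L_+(x) = \min_i \inprod{v_i}{x} - \delta(x \mid \reals^\Y_+)$ holds everywhere, not merely on $\inter(\reals^\Y_+)$.
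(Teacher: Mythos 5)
Your proof is correct and takes essentially the same route as the paper's: both absorb the affine constants via the identity $\inprod{\ones}{p}=1$ on $\simplex$ to rewrite $\risk L$ as a minimum of purely linear functions, then use positive $1$-homogeneity of $\risk L_+$ to extend the identity from $\simplex$ to all of $\reals^\Y_+$. You are in fact a bit more explicit than the paper, which merely asserts that the proposed extension ``is clearly $1$-homogeneous and agrees with $\risk L$ on $\simplex$'' without spelling out the scaling argument or the $x=0$ case; your version makes those steps explicit.
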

\begin{proof}
  We will think of $\risk L$ as defined $\risk L:\reals^\Y\to\reals_+\cup\{-\infty\}$  with $\dom(\risk L) = \simplex$.
For $p \in \simplex$, we know $\sum_i p_i = 1$, and can write any inner product $\inprod{p}{b} - \beta = \inprod{p}{b} - \inprod{p}{\beta \ones} = \inprod{p}{b - \beta \ones}$.
If $p \not \in \simplex$, then $\risk L(p) = -\infty$.
Moreover, since $\risk L$ is polyhedral, it is finitely generated \citep[Proposition 19.1.2]{rockafellar1997convex} and can be written
\newcommand{\B}{\mathcal{B}} 
\begin{align*}
\risk L(p) 
&= \min(\inprod{p}{b_1} - \beta_1, \ldots, \inprod{p}{b_k} - \beta_k) - \delta(p \mid \simplex)\\
&= \min(\inprod{p}{b_1 - \beta_1 \ones}, \ldots, \inprod{p}{b_k - \beta_k \ones}) - \delta(p \mid \simplex)\\
&= \min_{(b, \beta) \in \B} \inprod{p}{b - \ones \beta} - \delta(p \mid \simplex)~,
\end{align*}
where $\B = \{(b_i, \beta_i)\}_{i = 1}^k$ from the second line.
We claim that $\risk L_+(x) = \min_{(b, \beta) \in \B} \inprod{x}{b - \ones \beta} - \delta(x \mid \reals^\Y_+)$, as this form is clearly 1-homogenous and agrees with $\risk L$ on $\simplex$.
As $\B$ is a finite set, $\risk L_+$ is polyhedral.
\end{proof}

\begin{claim}\label{claim:gV-and-fV}
	Consider $L, \V^*$ satisfying Assumption~\ref{assum:V-star-exists-infinite}.
	Then $\risk L$ is polyhedral (on the simplex) and $f_{\V^*} = \risk L$.
	Moreover, $f_{\V^*}(p) = g_{\V^*}(p)$ for all $p \in \simplex$.
\end{claim}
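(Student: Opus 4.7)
The plan is to handle the three claims in the opposite order of the statement, as the last equality is essentially immediate and bootstraps the other two. First I would unpack the definitions: recalling $g_{\V^*}(x) = \min_{v\in\V^*}\inprod{x}{v} - \delta(x\mid\reals^\Y_+)$ and $f_{\V^*}(p) = \min_{v\in\V^*}\inprod{p}{v} - \delta(p\mid\simplex)$, and noting $\simplex \subseteq \reals^\Y_+$, for any $p\in\simplex$ both indicator terms vanish, so $g_{\V^*}(p) = \min_{v\in\V^*}\inprod{p}{v} = f_{\V^*}(p)$. This gives the third statement directly.

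Next I would establish $f_{\V^*} = \risk L$. By Corollary~\ref{cor:unique-set-loss-vectors-defining-facets}(iii) we have $g_{L(\R)} = g_{\V^*}$, so in particular they agree on $\simplex$. The key observation is that $\risk L_+$ was defined as the $1$-homogeneous extension of $\risk L$ and equals $g_{L(\R)}$ by Assumption~\ref{assum:V-star-exists-infinite}, and $\risk L_+$ agrees with $\risk L$ on $\simplex$ by construction (both reduce to $\inf_{r\in\R}\inprod{p}{L(r)}$ for $p\in\simplex$). Chaining these equalities for $p\in\simplex$ yields
\[
\risk L(p) \;=\; \risk L_+(p) \;=\; g_{L(\R)}(p) \;=\; g_{\V^*}(p) \;=\; f_{\V^*}(p),
\]
where the last step uses the third claim just shown. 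Off the simplex both sides are $-\infty$ by the indicator conventions, so $f_{\V^*} = \risk L$ everywhere.

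Finally, to conclude that $\risk L$ is polyhedral on the simplex, I would simply note that $f_{\V^*}$ is, by its very definition, the pointwise minimum of a finite collection of affine functions on the polyhedral set $\simplex$ (equivalently, $-f_{\V^*}$ is the pointwise maximum of finitely many affine functions plus the convex indicator of a polytope, hence has polyhedral epigraph). Since $\risk L = f_{\V^*}$, it inherits polyhedrality.

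I do not anticipate a serious obstacle: the entire argument is essentially a matter of carefully reconciling the three different ``versions'' of the Bayes risk ($\risk L$ on $\simplex$, its homogeneous extension $\risk L_+$ on $\reals^\Y_+$, and its representations $g_{L(\R)}$, $g_{\V^*}$, $f_{\V^*}$), and then invoking Corollary~\ref{cor:unique-set-loss-vectors-defining-facets}(iii) to collapse the list to a finite min. The only step that requires any care is making sure the indicator conventions match on and off the simplex, which is why I would spell out the equality chain at the one point $p\in\simplex$ rather than try to argue globally.
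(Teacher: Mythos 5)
Your argument is correct. Worth noting: the paper states Claim~\ref{claim:gV-and-fV} without a proof (it is treated as immediate from the preceding development), so there is no paper argument to compare against; your proof supplies exactly the details that the paper leaves implicit. The chain you set up — $\risk L(p) = \risk L_+(p) = g_{L(\R)}(p) = g_{\V^*}(p) = f_{\V^*}(p)$ on $\simplex$ — is the right one: the first equality is the definition of the $1$-homogeneous extension restricted to $\simplex$, the second and third are exactly what Assumption~\ref{assum:V-star-exists-infinite} grants (you cite Corollary~\ref{cor:unique-set-loss-vectors-defining-facets}(iii), but the Assumption already asserts $g_{L(\R)} = g_{\V^*}$ directly, so either is fine), and the last is your step (3) observing that the indicator terms both vanish on $\simplex\subseteq\reals^\Y_+$. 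Polyhedrality of $\risk L$ then follows since $f_{\V^*}$ is a finite pointwise minimum of linear functions restricted to the polytope $\simplex$, so $-f_{\V^*}$ has a polyhedral epigraph in the sense the paper uses. The one small point worth spelling out, which you do handle, is the domain convention: $\risk L$ is nominally defined only on $\simplex$, so ``$f_{\V^*}=\risk L$'' is an equality of functions on $\simplex$ (or equivalently of their $-\infty$-extensions), matching the convention the paper adopts inside the proof of Claim~\ref{claim:extended-poly-risk-poly}.
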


Now, we define the function $\theta(v) = \{p \in \simplex \mid \inprod{v}{p} = f_\V(p)\}$ as the level sets of the loss vector $v \in \V$. 
\begin{claim}\label{claim:theta-is-pi-cap-simplex}
  Consider $L, \V^*$ as in Assumption~\ref{assum:V-star-exists-infinite}.
  For all $v \in \V^*$, consider the face $F_v$ of $\hyp(g_{L(\R)})$.
  Then $\theta(v) = \pi(F_v) \cap \simplex$.
\end{claim}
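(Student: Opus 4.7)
The plan is to unpack both sides and apply Lemma~\ref{lem:projected-faces-iff-support-iff-argmin} together with Claim~\ref{claim:gV-and-fV}. Fix $v \in \V^*$. Since $\V^* \subseteq L(\R)$, the lemma applies to $v$, giving for any $x \in \reals^\Y_+$ the equivalence $x \in \pi(F_v) \iff \inprod{v}{x} = g_{\V^*}(x)$.

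First I would handle the forward inclusion. Take $p \in \theta(v)$, so $p \in \simplex$ and $\inprod{v}{p} = f_{\V^*}(p)$. By Claim~\ref{claim:gV-and-fV}, $f_{\V^*}(p) = g_{\V^*}(p)$ (and in particular $p \in \reals^\Y_+$ so the lemma applies). Hence $\inprod{v}{p} = g_{\V^*}(p)$, which by Lemma~\ref{lem:projected-faces-iff-support-iff-argmin} gives $p \in \pi(F_v)$. Combined with $p \in \simplex$, we get $p \in \pi(F_v) \cap \simplex$.

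For the reverse inclusion, take $p \in \pi(F_v) \cap \simplex$. Then $p \in \reals^\Y_+$, and by Lemma~\ref{lem:projected-faces-iff-support-iff-argmin} we have $\inprod{v}{p} = g_{\V^*}(p)$. Applying Claim~\ref{claim:gV-and-fV} again on the simplex, $g_{\V^*}(p) = f_{\V^*}(p)$, so $\inprod{v}{p} = f_{\V^*}(p)$ with $p \in \simplex$, i.e., $p \in \theta(v)$.

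There is no real obstacle here: the claim is essentially a bookkeeping statement that says the projected face $\pi(F_v)$, when intersected with $\simplex$, coincides with the level set of the distributions on which $v$ achieves the Bayes risk. The only thing to be careful about is that $f_{\V^*}$ is defined only on $\simplex$ while $g_{\V^*}$ is defined on all of $\reals^\Y_+$; Claim~\ref{claim:gV-and-fV} bridges these two, and restricting to $p \in \simplex$ on both sides makes the identification legitimate.
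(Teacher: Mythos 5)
Your proof is correct and takes essentially the same route as the paper: both reduce the claim to Lemma~\ref{lem:projected-faces-iff-support-iff-argmin} (specifically the equivalence between $x \in \pi(F_v)$ and $\inprod{v}{x} = g_{\V^*}(x)$) and use Claim~\ref{claim:gV-and-fV} to identify $f_{\V^*}$ with $g_{\V^*}$ on $\simplex$. The paper presents it as a single chain of $\iff$'s fixing $p \in \simplex$, whereas you split it into two inclusions, but the content is identical.
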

\begin{proof}
  Fix $p \in \simplex$.
  \begin{align*}
    p \in \theta(v)
    &\iff \inprod{v}{p} = f_{\V^*}(p) & \text{Definition of $\theta$}\\
    &\iff \inprod{v}{p} = \min_{v' \in {\V^*}}\inprod{v'}{p} & \text{$f_{\V^*}=g_{\V^*}$ on $\simplex$ (Claim~\ref{claim:gV-and-fV})}\\
    &\iff v \in \argmin_{v' \in {\V^*}}\inprod{v'}{p} &\\
    &\iff p \in \pi(F_v)~. & \text{Lemma~\ref{lem:projected-faces-iff-support-iff-argmin}}
  \end{align*}
\end{proof}

\subsection{Moving back from $\simplex$ to $\reals^\Y_+$}
Now that we have translated from $\reals^d_+$ to $\reals^\Y_+$ in \S~\ref{subsec:project-pi} and from $\reals^\Y_+$ to $\simplex$ in \S~\ref{subsec:project-f}, we take some final steps to prove Lemma~\ref{lem:X} by showing equivalences from $\simplex$ to $\reals^\Y_+$.
\begin{lemma}\label{lem:level-set-is-projected-face}
  Consider $L, \V^*$ satisfying Assumption~\ref{assum:V-star-exists-infinite}.
  For all $r \in \R$ with $v = L(r)$, $\Gamma_r = \theta(v) = \pi(F_{v}) \cap \simplex$.
\end{lemma}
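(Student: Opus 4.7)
The plan is to establish the two equalities separately as chains of definitional equivalences, using the results already available from \S~\ref{subsec:project-pi} and \S~\ref{subsec:project-f}. Let $v = L(r)$, noting $v \in L(\R)$ but not necessarily $v \in \V^*$.

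First I would establish $\theta(v) = \pi(F_v) \cap \simplex$ by generalizing Claim~\ref{claim:theta-is-pi-cap-simplex} from $v \in \V^*$ to arbitrary $v \in L(\R)$. For any $p \in \simplex$, I start from $p \in \theta(v) \iff \inprod{v}{p} = f_{\V^*}(p)$ by definition of $\theta$. By Claim~\ref{claim:gV-and-fV}, $f_{\V^*}(p) = g_{\V^*}(p) = g_{L(\R)}(p) = \min_{v' \in L(\R)} \inprod{v'}{p}$ on $\simplex$, so this is equivalent to $v \in \argmin_{v' \in L(\R)} \inprod{v'}{p}$. Now Lemma~\ref{lem:projected-faces-iff-support-iff-argmin}, applied at the point $p \in \reals^\Y_+$ and for the loss vector $v \in L(\R)$, gives that this in turn is equivalent to $p \in \pi(F_v)$. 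Combining, $p \in \theta(v) \iff p \in \pi(F_v) \cap \simplex$.

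Second I would show $\Gamma_r = \theta(v)$ by unwinding the definition of $\Gamma$. For $p \in \simplex$, $p \in \Gamma_r \iff r \in \Gamma(p) \iff r \in \argmin_{r' \in \R} \inprod{p}{L(r')}$, which says exactly that $\inprod{p}{L(r)} = \inf_{r' \in \R} \inprod{p}{L(r')} = \risk{L}(p)$. Substituting $v = L(r)$ and invoking the equality $\risk L = f_{\V^*}$ on $\simplex$ from Claim~\ref{claim:gV-and-fV}, this becomes $\inprod{v}{p} = f_{\V^*}(p)$, i.e. $p \in \theta(v)$.

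The proof is essentially bookkeeping, so the main conceptual point (and the only nontrivial step) is recognizing that Lemma~\ref{lem:projected-faces-iff-support-iff-argmin} applies to any $v \in L(\R)$, not merely to the distinguished vectors in $\V^*$, which is exactly why that earlier lemma was stated over $L(\R)$. Chaining the two equalities yields $\Gamma_r = \theta(v) = \pi(F_v) \cap \simplex$.
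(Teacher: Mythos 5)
Your proof is correct and follows essentially the same chain of definitional equivalences as the paper's: unwinding $\Gamma_r$ to $\theta(v)$ via $\risk{L} = f_{\V^*}$ on $\simplex$, and then invoking the projected-face characterization. The paper's own proof establishes $\Gamma_r = \theta(v)$ by the same chain of set-builder rewrites and then cites Claim~\ref{claim:theta-is-pi-cap-simplex} for $\theta(v) = \pi(F_v) \cap \simplex$; you instead go directly through Lemma~\ref{lem:projected-faces-iff-support-iff-argmin}. This is a small but genuine improvement in rigor: Claim~\ref{claim:theta-is-pi-cap-simplex} is stated (and its proof argues via $\argmin_{v' \in \V^*}$) only for $v \in \V^*$, whereas the lemma under consideration requires the identity for every $v = L(r)$ with $r \in \R$, i.e.\ for $v \in L(\R)$, where $F_v$ need not be a facet. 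You correctly observe that Lemma~\ref{lem:projected-faces-iff-support-iff-argmin} is already phrased over $L(\R)$, so routing the argument through its equivalence $(3) \iff (4)$ closes the gap without any additional work. The paper's citation of Claim~\ref{claim:theta-is-pi-cap-simplex} is defensible but elides exactly this point.
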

\begin{proof}
  Let us rewrite
  \begin{align*}
    \Gamma_r
    &= \{p \in \simplex \mid r \in \argmin_{r' \in \R} \inprod{L(r')}{p}\}\\
    &= \{p \in \simplex \mid v \in \argmin_{v' \in L(\R)} \inprod{v'}{p}\}\\
    &= \{p \in \simplex \mid \inprod{v}{p} = \min_{v' \in L(\R)}\inprod{v'}{p}\}\\
    &= \{p \in \simplex \mid \inprod{v}{p} = f_{\V^*}(p) \}\\
    &= \theta(v)~.
  \end{align*}
  The rest of the result follows from Claim~\ref{claim:theta-is-pi-cap-simplex}.
   
\end{proof}

\begin{lemma}\label{lem:g-1-homog}
	Consider $L, \V^*$ satisfying Assumption~\ref{assum:V-star-exists-infinite}.
	Then $g_{\V^*}(x) = \min_{v \in \V^*}\inprod{v}{x}$ is (positively) $1$-homogeneous.
\end{lemma}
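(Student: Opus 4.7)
The plan is to unfold the definition of $g_{\V^*}$ and verify positive $1$-homogeneity by a direct case split, since $\V^*$ is a finite set of vectors and each map $x \mapsto \inprod{v}{x}$ is linear, hence $1$-homogeneous. First I would recall that by Definition~\ref{def:g-finite} (with the finiteness of $\V^*$ established in Corollary~\ref{cor:unique-set-loss-vectors-defining-facets}) we may write
\[
g_{\V^*}(x) \;=\; \min_{v \in \V^*} \inprod{v}{x} \;-\; \delta(x \mid \reals^\Y_+),
\]
so the statement of the lemma is shorthand for this expression on $\reals^\Y_+$, with value $-\infty$ off the nonnegative orthant (using the convention that the negated indicator contributes $-\infty$ outside its domain).

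Next I would fix $\alpha > 0$ and $x \in \reals^\Y$ and split on whether $x \in \reals^\Y_+$. In the first case, positivity of $\alpha$ gives $\alpha x \in \reals^\Y_+$, so both indicator terms vanish and I may compute
\[
g_{\V^*}(\alpha x) \;=\; \min_{v \in \V^*} \inprod{v}{\alpha x} \;=\; \alpha \min_{v \in \V^*} \inprod{v}{x} \;=\; \alpha\, g_{\V^*}(x),
\]
where pulling $\alpha$ out of each inner product and then out of the minimum both use $\alpha > 0$ (and the nonemptiness/finiteness of $\V^*$). In the second case, $x \notin \reals^\Y_+$ implies $\alpha x \notin \reals^\Y_+$, so both sides equal $-\infty$, and with the standard convention $\alpha \cdot (-\infty) = -\infty$ for $\alpha > 0$ the identity $g_{\V^*}(\alpha x) = \alpha g_{\V^*}(x)$ again holds trivially.

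There is no real obstacle here; the only subtlety is being explicit about the convention used for $-\infty$ values and the fact that we claim \emph{positive} $1$-homogeneity rather than full homogeneity (since $\alpha < 0$ would move $x \in \reals^\Y_+$ out of the orthant in general, breaking the identity). The lemma is best viewed as a bookkeeping statement: it records that the polyhedral Bayes risk's $1$-homogeneous extension $g_{\V^*}$, which by Claim~\ref{claim:gV-and-fV} agrees with $\risk L = f_{\V^*}$ on $\simplex$, indeed behaves the way such an extension should under positive scalings, which is exactly what justifies projecting between the ``cone picture'' in $\reals^\Y_+$ used in \S\ref{subsec:project-pi} and the ``simplex picture'' used in \S\ref{subsec:project-f}.
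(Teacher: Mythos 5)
Your proof is correct and follows essentially the same route as the paper's: case-split on whether $x \in \reals^\Y_+$, pull the positive scalar out of the finite minimum of linear forms when it is, and observe both sides equal $-\infty$ when it is not. The only difference is expository (you spell out the $-\infty$ convention and the role of positivity, which the paper leaves implicit).
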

\begin{proof}
	If $x \not \in \reals^\Y_+$, then $c g(x) = -\infty = g(cx)$ for any $c > 0$.
	If $x \in \reals^\Y_+$, then we have $g(cx) = \min_{v \in \V^*}\inprod{v}{cx} = c \min_{v \in \V^*}\inprod{v}{x} = c g(x)$ for any $c > 0$ by linearity of the inner product.
\end{proof}

Every minimizable loss $L$ elicits a unique property $\Gamma := \prop{L}$, and we can define the \emph{extended level set} $\bar \Gamma_r := \{x \in \reals^\Y_+ \mid \inprod{L(r)}{x} = \risk L_+(x)\}$.

\begin{lemma}\label{lem:levelset-to-extended-levelset}
	Consider $L: \R \to \reals^\Y_+, \V^*$ satisfying Assumption~\ref{assum:V-star-exists-infinite}.
	For any $r \in \R$ and $c > 0$, if $p \in \Gamma_r$, then $cp \in \bar \Gamma_r$. 
\end{lemma}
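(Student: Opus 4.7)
The plan is to use the fact that the extended Bayes risk $\risk{L}_+$ and the linear function $x \mapsto \inprod{L(r)}{x}$ are both positively $1$-homogeneous, so the equality defining membership in $\Gamma_r$ on $\simplex$ lifts to an equality defining membership in $\bar{\Gamma}_r$ along any ray through the origin.

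Concretely, first I would rewrite the hypothesis $p \in \Gamma_r$ in the form supplied by Lemma~\ref{lem:level-set-is-projected-face} (or directly from the definition of $\prop{L}$): this gives $\inprod{L(r)}{p} = \min_{r' \in \R} \inprod{L(r')}{p} = \risk{L}(p)$. Since $p \in \simplex$, Claim~\ref{claim:gV-and-fV} identifies $\risk{L}(p)$ with $g_{\V^*}(p) = \risk{L}_+(p)$. Hence $\inprod{L(r)}{p} = \risk{L}_+(p)$, which is exactly the statement $p \in \bar{\Gamma}_r$.

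Next, I would scale by $c > 0$. On the left hand side, linearity of the inner product yields $\inprod{L(r)}{cp} = c \inprod{L(r)}{p}$. On the right hand side, Lemma~\ref{lem:g-1-homog} states that $g_{\V^*}$ is positively $1$-homogeneous, and since $g_{\V^*} = \risk{L}_+$ by Assumption~\ref{assum:V-star-exists-infinite}, this gives $\risk{L}_+(cp) = c\, \risk{L}_+(p)$. Combining these two identities with the equality from the previous step produces $\inprod{L(r)}{cp} = c \inprod{L(r)}{p} = c\, \risk{L}_+(p) = \risk{L}_+(cp)$. Finally, $cp \in \reals^\Y_+$ since $c > 0$ and $p \in \simplex \subseteq \reals^\Y_+$, so $cp$ lies in the domain on which $\risk{L}_+$ is finite, and the equality of values certifies $cp \in \bar{\Gamma}_r$.

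There is no real obstacle here: the only thing to be careful about is invoking the correct identification $\risk{L}_+ = g_{\V^*}$ (which is exactly Assumption~\ref{assum:V-star-exists-infinite}, justified in general by Proposition~\ref{prop:poly-risk-assumption-satisfied} whenever $\risk{L}_+$ is polyhedral) so that the $1$-homogeneity result from Lemma~\ref{lem:g-1-homog} applies to $\risk{L}_+$ and not only to $g_{\V^*}$. Everything else is just the fact that a linear equation through the origin is preserved under positive scaling.
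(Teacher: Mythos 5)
Your argument is correct and is essentially the same as the paper's: unwind the definition of $\Gamma_r$ using minimizability, identify the minimum with $g_{\V^*} = \risk{L}_+$ via Assumption~\ref{assum:V-star-exists-infinite} (equivalently Claim~\ref{claim:gV-and-fV}), then scale by $c$ using linearity of the inner product on the left and $1$-homogeneity of $g_{\V^*}$ (Lemma~\ref{lem:g-1-homog}) on the right. The paper carries this out as a chain of equalities of sets while you phrase it as a chain of equalities of values, but the content is the same.
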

\begin{proof}
	Fix $r \in \R$ and $c > 0$.
	We have
	\begin{align*}
	p \in \Gamma_r
	&= \{p' \in \simplex \mid r \in \argmin_{r' \in \R} \inprod{L(r')}{p'} \} & \text {Definition of level set} \\
	&= \{p' \in \simplex \mid v \in \argmin_{v' \in L(\R)} \inprod{v'}{p'} \} &  \\
	&= \{p' \in \simplex \mid \inprod{v}{p'} = \min_{v' \in L(\R)} \inprod{v'}{p'} \} & \text {$L$ minimizable} \\
	&= \{p' \in \simplex \mid \inprod{v}{p'} = g_{\V^*}(p') \} & \text {Assumption~\ref{assum:V-star-exists-infinite}} \\
	&= \{p' \in \simplex \mid c \inprod{v}{p'} = c g_{\V^*}(p') \} &  \\
	&= \{p' \in \simplex \mid  \inprod{v}{cp'} = g_{\V^*}(cp') \} & \text {Lemma~\ref{lem:g-1-homog}} \\
	\implies cp
	&\in \{x \in \reals^\Y_+ \mid \inprod{v}{x} = g_{\V^*}(x)\} = \bar \Gamma_r~.~
	\end{align*}
\end{proof}

\begin{lemma}\label{lem:extended-levelset-equals-projected-face}
	Consider $L : \R \to \reals^\Y_+, \V^*$ satisfying Assumption~\ref{assum:V-star-exists-infinite}.
	For any $r\in \R$ with $v = L(r)$, $\bar \Gamma_r = \pi(F_v)$.  
\end{lemma}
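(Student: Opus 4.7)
The plan is to verify this equality by unfolding both sides to the same concrete set and then invoking Lemma~\ref{lem:projected-faces-iff-support-iff-argmin}, which already captured the key correspondence between points in a projected face and supporting arguments of $g_{\V^*}$. The only mild care needed is to confirm the domain restriction to $\reals^\Y_+$ matches on both sides.

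First I would unfold the left-hand side. By the definition of the extended level set, for $v = L(r)$,
\[
\bar\Gamma_r = \{x \in \reals^\Y_+ \mid \inprod{L(r)}{x} = \risk{L}_+(x)\} = \{x \in \reals^\Y_+ \mid \inprod{v}{x} = \risk{L}_+(x)\}.
\]
By Assumption~\ref{assum:V-star-exists-infinite}, $\risk{L}_+ = g_{L(\R)} = g_{\V^*}$, so
\[
\bar\Gamma_r = \{x \in \reals^\Y_+ \mid \inprod{v}{x} = g_{\V^*}(x)\}.
\]

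Next I would unfold the right-hand side. Since $r \in \R$, we have $v = L(r) \in L(\R)$, so Lemma~\ref{lem:projected-faces-iff-support-iff-argmin} applies: for every $x \in \reals^\Y_+$, $x \in \pi(F_v)$ if and only if $\inprod{v}{x} = g_{\V^*}(x)$. It remains to confirm $\pi(F_v) \subseteq \reals^\Y_+$; this holds because $F_v \subseteq \hyp(g_{L(\R)})$ and $g_{L(\R)}(x) = -\infty$ for $x \notin \reals^\Y_+$ (by Definition~\ref{def:g-finite}), so any $(x,c) \in F_v$ must satisfy $x \in \reals^\Y_+$. Hence
\[
\pi(F_v) = \{x \in \reals^\Y_+ \mid \inprod{v}{x} = g_{\V^*}(x)\},
\]
and combining with the unfolding above yields $\bar\Gamma_r = \pi(F_v)$.

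There is essentially no obstacle here: the work has already been done in Lemma~\ref{lem:projected-faces-iff-support-iff-argmin} and in establishing $\risk{L}_+ = g_{\V^*}$. The only subtlety is making sure the $\reals^\Y_+$ domain is accounted for on the projected-face side, which follows from $g_{L(\R)}$ being $-\infty$ off the nonnegative orthant.
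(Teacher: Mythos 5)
Your proof is correct and follows essentially the same route as the paper's: both unfold $\bar\Gamma_r$ via the definition and $\risk{L}_+ = g_{L(\R)} = g_{\V^*}$, and then identify the resulting set with $\pi(F_v)$ using the characterization of the face $F_v$. The only cosmetic difference is that you route the final identification explicitly through Lemma~\ref{lem:projected-faces-iff-support-iff-argmin} and spell out why $\pi(F_v)\subseteq\reals^\Y_+$, whereas the paper restates the description of $F_v$ inline; the content is the same.
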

\begin{proof}
	\begin{align*}
	\bar \Gamma_r
	&= \{x \in \reals^\Y_+ \mid \inprod{L(r)}{x} = \risk L_+(x)\} & \text{Definition of $\bar \Gamma_r$}\\
	&= \{x \in \reals^\Y_+ \mid \inprod{L(r)}{x} = g_{L(\R)}(x)\} & \text{Assumption~\ref{assum:V-star-exists-infinite}}\\
	&= \{x \in \reals^\Y_+ \mid \inprod{v}{x} = g_{L(\R)}(x)\} & \text{$v = L(r)$}\\
	&= \pi(F_v) & \text{Since $F_v = \{(x,g_{L(\R)}(x)) \mid \inprod{v}{x} = g_{L(\R)}(x)\}$}\\
	\end{align*}
\end{proof}

\begin{claim}\label{claim:projected-faces-cover-RY-iff-representative}
	Consider $L : \R \to \reals^\Y_+, \V^*$ satisfying Assumption~\ref{assum:V-star-exists-infinite}.
	For any $v \in L(\R)$, denote the face $F_v := H_v \cap \hyp(g_{\V^*})$.
	A set $\R' \subseteq \R$ with $\V' := L(\R')$ is representative for $L$ if and only if $\cup_{v \in \V'} \pi(F_v) = \reals^\Y_+$.  
\end{claim}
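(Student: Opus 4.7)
The plan is to reduce the claim to the already-established correspondence between level sets of the property elicited by $L$ and projected faces of $\hyp(g_{\V^*})$, and then bridge between $\simplex$ and $\reals^\Y_+$ via the 1-homogeneity of $g_{\V^*}$. First, I unfold the definition of representative: $\R'$ is representative for $L$ if and only if $\cup_{r\in\R'} \Gamma_r = \simplex$. By Lemma~\ref{lem:level-set-is-projected-face}, for each $r\in\R'$ with $v=L(r)\in\V'$, we have $\Gamma_r = \pi(F_v)\cap\simplex$. Hence representativeness of $\R'$ is equivalent to $\simplex \subseteq \cup_{v\in\V'}\pi(F_v)$.

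It therefore suffices to show that $\simplex \subseteq \cup_{v\in\V'} \pi(F_v)$ if and only if $\reals^\Y_+ = \cup_{v\in\V'} \pi(F_v)$. The reverse direction is immediate since $\simplex\subseteq\reals^\Y_+$. For the forward direction, I will use Lemma~\ref{lem:projected-faces-iff-support-iff-argmin} to recast membership in $\pi(F_v)$ as the condition $\inprod{v}{x} = g_{\V^*}(x)$, and then exploit 1-homogeneity.

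Concretely, let $x \in \reals^\Y_+$. If $x = 0$, then $\inprod{v}{0} = 0 = g_{\V^*}(0)$ for every $v$, so $0 \in \pi(F_v)$ for any $v \in \V'$ (where $\V'\neq\emptyset$, which holds whenever $\R'$ is representative since $\simplex\neq\emptyset$). If $x \neq 0$, set $p := x/\|x\|_1 \in \simplex$. By the hypothesis, there exists $v \in \V'$ with $p \in \pi(F_v)$, so by Lemma~\ref{lem:projected-faces-iff-support-iff-argmin} we have $\inprod{v}{p} = g_{\V^*}(p)$. By Lemma~\ref{lem:g-1-homog} and linearity of the inner product,
\[
\inprod{v}{x} \;=\; \|x\|_1 \inprod{v}{p} \;=\; \|x\|_1\, g_{\V^*}(p) \;=\; g_{\V^*}(\|x\|_1 \, p) \;=\; g_{\V^*}(x)~.
\]
Invoking Lemma~\ref{lem:projected-faces-iff-support-iff-argmin} again, this gives $x \in \pi(F_v) \subseteq \cup_{v\in\V'}\pi(F_v)$, as needed. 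Combined with the reduction above, this yields the claimed equivalence.

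I don't expect any real obstacle here: the core content is purely bookkeeping of definitions and a single application of positive homogeneity. The only minor point to handle carefully is the origin, which is trivially in every $\pi(F_v)$ since $g_{\V^*}(0)=0$. Edge cases like $\R' = \emptyset$ cause both sides of the equivalence to fail trivially and so do not affect the iff.
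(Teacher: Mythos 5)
Your proof is correct and follows essentially the same route as the paper's: reduce representativeness to covering $\simplex$ via Lemma~\ref{lem:level-set-is-projected-face}, then use the $1$-homogeneity of $g_{\V^*}$ to lift coverage from $\simplex$ to all of $\reals^\Y_+$. The only difference is packaging — the paper factors the scaling step through Lemmas~\ref{lem:levelset-to-extended-levelset} and~\ref{lem:extended-levelset-equals-projected-face} (the ``extended level set'' $\bar\Gamma_r$), whereas you inline it with Lemma~\ref{lem:projected-faces-iff-support-iff-argmin}; you also explicitly treat the origin, which the paper leaves implicit.
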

\begin{proof}
	$\implies$
	This proof follows from three lemmas: first, we observe that $g_{\V^*}$ is $1$-homogeneous via Lemma~\ref{lem:g-1-homog}.
	Then we extend the notion of a level set $\Gamma_r$ to the nonnegative orthant $\bar \Gamma_r$, and show that any scalar transformation of a distribution in the level set is contained in the same (extended) level set via Lemma~\ref{lem:levelset-to-extended-levelset}.
	Finally, we show the extended level set is exactly the projection $\pi(F_v)$ in Lemma~\ref{lem:extended-levelset-equals-projected-face}.
	As a corollary, we chain the results to observe $\cup_{r \in \R'} \Gamma_r = \simplex \implies \cup_{r \in \R'} \bar \Gamma_r = \reals_+^\Y \implies \cup_{v \in L(\R')}\pi(F_v) = \reals^\Y_+$, yielding the forward implication.

	$\impliedby$
	Fix $p \in \simplex \subseteq \reals^\Y_+$.
	By the assumption, there is a $v \in \V'$ such that $p \in \pi(F_v)$.
	By Lemma~\ref{lem:level-set-is-projected-face}, we have $p \in \pi(F_v) \cap \simplex = \Gamma_r$ for the $r \in \R'$ such that $v = L(r)$.
	As this is true for all $p \in \simplex$, we have $\R'$ representative.

\end{proof}

\subsection{Proving Lemma~\ref{lem:X}}\label{subsec:lem-X-proof}
We now proceed with a few final lemmas that ultimately yield the proof of Lemma~\ref{lem:X}.

\begin{lemma}\label{lem:lemX1-rep-iff-subset-vectors}
	Consider $L : \R \to \reals^\Y_+, \V^*$ satisfying Assumption~\ref{assum:V-star-exists-infinite}.
	A finite set $\R' \subseteq \R$ with $\V' = L(\R')$ is representative if and only if $\V^* \subseteq \V'$.
\end{lemma}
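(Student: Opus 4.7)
The statement is almost immediate by chaining two of the results already established in \S\ref{subsec:project-pi} and \S\ref{subsec:project-f}. My plan is to recognize that representativeness of $\R'$ has already been characterized geometrically (in terms of a covering of the nonnegative orthant by projected faces of $\hyp(g_{\V^*})$), and that this covering condition has in turn been characterized in terms of containment of $\V^*$.

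More concretely: first I would invoke Claim~\ref{claim:projected-faces-cover-RY-iff-representative}, which gives $\R'$ is representative for $L$ if and only if $\cup_{v \in \V'} \pi(F_v) = \reals^\Y_+$. This step is ``free'' once one has the extended level-set machinery; the $1$-homogeneity of $g_{\V^*}$ (Lemma~\ref{lem:g-1-homog}) is what makes representativeness on $\simplex$ lift to covering of the full orthant. Next, I would invoke Claim~\ref{claim:Vprime-projected-faces-cover-iff-Vstar-subset-Vprime}, which gives $\cup_{v \in \V'} \pi(F_v) = \reals^\Y_+$ if and only if $\V^* \subseteq \V'$. Composing the two equivalences yields the lemma.

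The only minor subtlety worth spelling out is that $\V'$ is defined as $L(\R')$, which is a subset of $L(\R)$, so that Claim~\ref{claim:Vprime-projected-faces-cover-iff-Vstar-subset-Vprime} applies directly (its hypothesis is exactly $\V' \subseteq L(\R)$). There is no real obstacle here; the content of the lemma lies entirely in the two cited claims, whose proofs rely on the uniqueness and irredundancy of $\V^*$ from Assumption~\ref{assum:V-star-exists-infinite} together with the full-dimensionality of the projected facets $\pi(F_v)$ for $v \in \V^*$ (Claim~\ref{claim:projected-Vstar-faces-full-dim}). So the proof will be a two-line chain of ``iff''s citing these two claims.
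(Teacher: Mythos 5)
Your proposal is correct and is essentially identical to the paper's own proof, which also simply chains Claim~\ref{claim:projected-faces-cover-RY-iff-representative} and Claim~\ref{claim:Vprime-projected-faces-cover-iff-Vstar-subset-Vprime}. The additional remarks you give about $1$-homogeneity, irredundancy of $\V^*$, and full-dimensionality of the projected facets correctly identify where the real content lives, but they are part of the proofs of the two cited claims rather than of this lemma.
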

\begin{proof}
	Chain Claim~\ref{claim:projected-faces-cover-RY-iff-representative} and Claim~\ref{claim:Vprime-projected-faces-cover-iff-Vstar-subset-Vprime} to yield the result.
\end{proof}

Define $\Theta_{S} := \{\theta(v) \mid v \in S\}$; it follows that $\Theta_{\V^*}$ is exactly the set of level sets of the property elicited by $L$.

\begin{lemma}\label{lem:lemX-3-rep-iff-FDLS-subsets}
	Consider $L : \R \to \reals^\Y_+, \V^*$ satisfying Assumption~\ref{assum:V-star-exists-infinite}.
	A finite set $\R' \subseteq \R$ with $\V' = L(\R')$ is representative if and only if $\Theta_{\V^*} \subseteq \Theta_{\V'}$.
\end{lemma}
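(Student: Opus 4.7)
The plan is to reduce both directions to machinery already developed in \S\ref{subsec:project-pi}--\S\ref{subsec:project-f}, where we have translated representativeness into a covering condition on projected faces $\pi(F_v) \subseteq \reals^\Y_+$, and identified the simplex slices $\pi(F_v) \cap \simplex$ with the level sets $\theta(v)$. The key bridge is Claim~\ref{claim:theta-is-pi-cap-simplex}, giving $\theta(v) = \pi(F_v) \cap \simplex$ for $v \in \V^*$, together with Lemma~\ref{lem:level-set-is-projected-face}, giving $\Gamma_r = \theta(L(r))$ for $r\in\R$.

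For the forward direction, assume $\R'$ is representative. By Claim~\ref{claim:projected-faces-cover-RY-iff-representative}, we have $\cup_{v' \in \V'} \pi(F_{v'}) = \reals^\Y_+$, and hence by Claim~\ref{claim:Vprime-projected-faces-cover-iffprojected-faces-subsets}, $\Lambda_{\V^*} \subseteq \Lambda_{\V'}$. Pick any $v^* \in \V^*$; there exists $v' \in \V'$ with $\pi(F_{v^*}) = \pi(F_{v'})$. Intersecting both sides with $\simplex$ and applying Claim~\ref{claim:theta-is-pi-cap-simplex} yields $\theta(v^*) = \theta(v')$, so $\theta(v^*) \in \Theta_{\V'}$. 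Taking the union over $v^* \in \V^*$ gives $\Theta_{\V^*} \subseteq \Theta_{\V'}$.

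For the reverse direction, assume $\Theta_{\V^*} \subseteq \Theta_{\V'}$. First, I would establish $\cup_{v^* \in \V^*} \theta(v^*) = \simplex$: by Corollary~\ref{cor:projected-Vstar-faces-cover-pos-orthant} $\cup_{v^* \in \V^*} \pi(F_{v^*}) = \reals^\Y_+$, and intersecting with $\simplex$ using Claim~\ref{claim:theta-is-pi-cap-simplex} gives the identity. Then by the assumed containment, every $\theta(v^*)$ equals some $\theta(v')$ with $v' \in \V'$, so
\[
\simplex \;=\; \bigcup_{v^* \in \V^*} \theta(v^*) \;\subseteq\; \bigcup_{v' \in \V'} \theta(v') \;\subseteq\; \simplex~.
\]
Finally, Lemma~\ref{lem:level-set-is-projected-face} gives $\theta(L(r')) = \Gamma_{r'}$ for each $r' \in \R'$, so $\cup_{r' \in \R'} \Gamma_{r'} = \simplex$. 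Thus for every $p \in \simplex$ there is some $r' \in \R'$ with $p \in \Gamma_{r'}$, i.e.\ $r' \in \Gamma(p) \cap \R'$, establishing that $\R'$ is representative.

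There is no real obstacle here: once the dictionary $\theta(v) = \pi(F_v) \cap \simplex$ is in place, both directions are essentially a restriction of the covering statement in $\reals^\Y_+$ (Claim~\ref{claim:projected-faces-cover-RY-iff-representative}) to the affine slice $\simplex$. The only point to double-check is that intersecting with $\simplex$ does not lose information, which it does not because every $\pi(F_{v^*})$ is full-dimensional in $\reals^\Y_+$ (Claim~\ref{claim:projected-Vstar-faces-full-dim}) and hence meets $\relint(\simplex)$, ensuring that distinct faces that project to the same set in $\reals^\Y_+$ are detected by their simplex slices.
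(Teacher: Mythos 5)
Your proof is correct and rests on the same machinery the paper invokes: the paper's proof is a terse one-liner that chains Claim~\ref{claim:projected-faces-cover-RY-iff-representative} and Claim~\ref{claim:Vprime-projected-faces-cover-iffprojected-faces-subsets}, which together yield ``$\R'$ representative $\iff \Lambda_{\V^*}\subseteq\Lambda_{\V'}$'' and then silently identifies $\Lambda$-containment with $\Theta$-containment. You make that last step explicit, which is worth doing: the paper's chain terminates in a statement about $\Lambda_{\V'}$ (projected facets in $\reals^\Y_+$), whereas the lemma is stated in terms of $\Theta_{\V'}$ (level sets in $\simplex$), and passing back and forth requires the fact that each $\pi(F_v)$ is a cone (by $1$-homogeneity of $g_{\V^*}$) so that intersecting with $\simplex$ loses no information. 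Your reverse direction sidesteps this cone argument entirely by working only inside $\simplex$ --- apply Corollary~\ref{cor:projected-Vstar-faces-cover-pos-orthant} sliced to $\simplex$, use $\Theta_{\V^*}\subseteq\Theta_{\V'}$, then Lemma~\ref{lem:level-set-is-projected-face} to conclude $\cup_{r'\in\R'}\Gamma_{r'}=\simplex$ --- which is a modest simplification of the route the paper implicitly takes. One small citation fix: in the forward direction, when you intersect $\pi(F_{v^*})=\pi(F_{v'})$ with $\simplex$, the identity $\theta(v')=\pi(F_{v'})\cap\simplex$ for $v'\in\V'$ (which need not lie in $\V^*$) is Lemma~\ref{lem:level-set-is-projected-face}, not Claim~\ref{claim:theta-is-pi-cap-simplex}; the latter is stated only for $v\in\V^*$.
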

\begin{proof}
	Chain Claim~\ref{claim:projected-faces-cover-RY-iff-representative} and Claim~\ref{claim:Vprime-projected-faces-cover-iffprojected-faces-subsets} to yield the result.
\end{proof}

With $L, \V^*$ satisfying Assumption~\ref{assum:V-star-exists-infinite}, we additionally let $\R^*$ be a set such that $\V^* := L(\R^*)$.
Such a set exists as $\V^* \subseteq L(\R)$ in Assumption~\ref{assum:V-star-exists-infinite}, though it is not necessarily unique.

\begin{corollary}
	Consider $L, \V^*$ satisfying Assumption~\ref{assum:V-star-exists-infinite}, and $\R^*$ such that $\V^* = L(\R^*)$. 
	Moreover, suppose $L$ elicits $\Gamma$.
	$\Theta_{\V^*} = \{\Gamma_r \mid r \in \R^*\}$.
\end{corollary}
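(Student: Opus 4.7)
The plan is to invoke Lemma~\ref{lem:level-set-is-projected-face} pointwise on $\R^*$ and then push the identity through the defining equation $L(\R^*) = \V^*$. Lemma~\ref{lem:level-set-is-projected-face} already tells us that for every $r \in \R$, if we set $v = L(r)$, then $\Gamma_r = \theta(v)$. So all the real content has been done upstream, and what remains is just bookkeeping about the image of $\R^*$ under $L$.

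Concretely, I would proceed in two short steps. First, for the inclusion $\{\Gamma_r \mid r \in \R^*\} \subseteq \Theta_{\V^*}$, take an arbitrary $r \in \R^*$ and set $v := L(r)$. Since $r \in \R^*$ and $L(\R^*) = \V^*$, we have $v \in \V^*$, so $\theta(v) \in \Theta_{\V^*}$. By Lemma~\ref{lem:level-set-is-projected-face}, $\Gamma_r = \theta(v)$, giving $\Gamma_r \in \Theta_{\V^*}$. Second, for the reverse inclusion $\Theta_{\V^*} \subseteq \{\Gamma_r \mid r \in \R^*\}$, take $\theta(v) \in \Theta_{\V^*}$ with $v \in \V^*$. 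Using $L(\R^*) = \V^*$ again, choose some $r \in \R^*$ with $L(r) = v$ (such an $r$ exists by surjectivity of $L|_{\R^*}$ onto $\V^*$). Applying Lemma~\ref{lem:level-set-is-projected-face} once more gives $\Gamma_r = \theta(v)$, so $\theta(v) \in \{\Gamma_r \mid r \in \R^*\}$.

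There is essentially no obstacle here, because Lemma~\ref{lem:level-set-is-projected-face} does all the heavy lifting of identifying the simplex-level-set $\Gamma_r$ with the value $\theta(L(r))$. The only thing worth being careful about is that $\R^*$ is not claimed to be unique, but that is irrelevant: the statement only requires the existence of \emph{some} $\R^*$ with $L(\R^*) = \V^*$, and once such an $\R^*$ is fixed, the double inclusion above holds verbatim. Thus the corollary follows with at most a few lines of argument, and no additional machinery beyond the two facts $\Gamma_r = \theta(L(r))$ (Lemma~\ref{lem:level-set-is-projected-face}) and $L(\R^*) = \V^*$ (hypothesis).
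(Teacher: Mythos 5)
Your proof is correct and takes exactly the route the paper implicitly intends: the corollary is just the pointwise application of Lemma~\ref{lem:level-set-is-projected-face} ($\Gamma_r = \theta(L(r))$) pushed through the set equality $L(\R^*) = \V^*$, which is why the paper leaves it without an explicit proof. Both inclusions are handled cleanly, and your remark about non-uniqueness of $\R^*$ correctly dismisses the only potential point of confusion.
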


\begin{lemma}\label{lem:fdls-exactly-theta}
	Consider $L : \R \to \reals^\Y_+, \V^*$ satisfying Assumption~\ref{assum:V-star-exists-infinite}.
	Moreover, let $\Gamma := \prop{L}$.
	$\Theta_{\V^*} = \{\Gamma_r \mid r\in\R, \dim(\Gamma_r) = |\Y|-1\}$.
\end{lemma}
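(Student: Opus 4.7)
The plan is to prove both inclusions by passing between level sets $\Gamma_r \subseteq \simplex$ and projected faces $\pi(F_{L(r)}) \subseteq \reals^\Y_+$, using the dictionary built up earlier in the appendix. The bridge between dimensions on the two sides is the observation that $\pi(F_v)$ is always a convex cone (since $\hyp(g_{L(\R)})$ is a cone by $1$-homogeneity of $g_{L(\R)} = g_{\V^*}$, cf.\ Lemma~\ref{lem:g-1-homog}, and $H_v$ passes through the origin), which makes slicing by the simplex reduce dimension by exactly~$1$.

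For $\Theta_{\V^*} \subseteq \{\Gamma_r \mid r\in\R,\;\dim(\Gamma_r) = |\Y|-1\}$, take $v \in \V^*$. Since $\V^* \subseteq L(\R)$ by Assumption~\ref{assum:V-star-exists-infinite}, pick any $r \in \R$ with $L(r) = v$; Lemma~\ref{lem:level-set-is-projected-face} gives $\Gamma_r = \theta(v) = \pi(F_v) \cap \simplex$. Claim~\ref{claim:projected-Vstar-faces-full-dim} yields $\dim \pi(F_v) = |\Y|$, and the cone-slicing step (pick $|\Y|$ linearly independent points in $\pi(F_v) \cap \mathrm{int}(\reals^\Y_+)$ and normalize them onto $\simplex$) produces $|\Y|$ affinely independent points in $\Gamma_r$, so $\dim \Gamma_r = |\Y|-1$.

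For the reverse inclusion, take $r \in \R$ with $\dim \Gamma_r = |\Y|-1$ and set $v = L(r)$. Running the cone-slicing step in reverse---affine independence in $\simplex$ upgrades to linear independence in $\reals^\Y$ by summing coordinates---gives $\dim \pi(F_v) = |\Y|$, hence $\dim F_v = |\Y|$ by Claim~\ref{claim:pi-preserves-dim}. Corollary~\ref{cor:exists-vstar-projected-face-subset}, whose proof in fact establishes the face-level containment $F_v \subseteq F_{v^*}$, furnishes some $v^* \in \V^*$ with this property; and $F_{v^*}$ is a facet of the full-dimensional polyhedron $\hyp(g_{L(\R)}) \subseteq \reals^{|\Y|+1}$ (Lemma~\ref{lem:G-unique-facets-Hstar} and Corollary~\ref{cor:unique-set-loss-vectors-defining-facets}(ii)), so $\dim F_{v^*} = |\Y|$ as well.

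The main obstacle is passing from $F_v \subseteq F_{v^*}$ with equal dimensions to $F_v = F_{v^*}$, since this implication can fail for generic convex sets. The saving grace is that both are faces of the same polyhedron: any face of $\hyp(g_{L(\R)})$ contained in another face is itself a face of the containing face (because the supporting hyperplane cutting out the smaller face restricts to a supporting hyperplane of the larger), and a proper face strictly drops dimension, forcing $F_v = F_{v^*}$. Projecting and intersecting with the simplex then gives $\Gamma_r = \pi(F_v) \cap \simplex = \pi(F_{v^*}) \cap \simplex = \theta(v^*) \in \Theta_{\V^*}$, completing the proof. The other steps are essentially bookkeeping over the machinery already in place.
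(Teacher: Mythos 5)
Your proof is correct, and it fills in a gap the paper's proof glosses over. The paper's argument (essentially just: ``From Claim~\ref{claim:projected-Vstar-faces-full-dim}, $\Lambda_{\V^*}$ is exactly the set of full-dimensional level sets'') only cites a lemma that establishes \emph{one} direction---that every $\pi(F_v)$ with $v \in \V^*$ is full-dimensional---and leaves the reader to see that no $v \in L(\R)\setminus\V^*$ can also produce a full-dimensional projected face. You handle that reverse inclusion explicitly via the chain $F_v \subseteq F_{v^*}$ from Corollary~\ref{cor:exists-vstar-projected-face-subset}, the dimension-preserving projection (Claim~\ref{claim:pi-preserves-dim}), and the standard polyhedral fact that a face contained in a face of the same dimension must equal it, concluding $\Gamma_r = \theta(v^*) \in \Theta_{\V^*}$. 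The paper's implicit route would instead invoke the uniqueness of the irredundant halfspace representation (Theorem~\ref{thm:polyhedron-uniquely-gen-facets} and Corollary~\ref{cor:unique-set-loss-vectors-defining-facets}) to argue that facets of $\hyp(g_{L(\R)})$ arise only from $\V^*$ and $\H_\Y$. Both are sound; yours buys a more self-contained argument not relying on the reader to unwind the facet-uniqueness machinery, at the modest cost of the extra face-of-a-face lemma. Your cone-slicing bookkeeping (affine independence in $\simplex$ upgrading to linear independence in $\reals^\Y$, and the converse via the origin being in the cone) is also correct and worth having spelled out, since the paper uses the phrase ``full-dimensional relative to the simplex'' without proof.
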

\begin{proof}
	From Claim~\ref{claim:projected-Vstar-faces-full-dim}, we know $\Lambda_{\V^*}$ is exactly the set of full-dimensional level sets in $\reals^\Y_+$.
	Each element of $\Lambda_{\V^*}$ is $\pi(F_v)$ for some $v \in \V^*$.
	Take $r \in \R^*$ so that $v = L(r)$.
	By Lemma~\ref{lem:level-set-is-projected-face}, we have $\theta(v) = \Gamma_r = \pi(F_v) \cap \simplex$ is full-dimensional relative to the simplex.
	The result follows.
\end{proof}

\begin{lemma}\label{lem:any-levelset-contained-in-minlevelset}
  Consider $L : \R \to \reals^\Y_+, \V^*$ satisfying Assumption~\ref{assum:V-star-exists-infinite}, and $\R^* \subseteq \R$ the set such that $\V^* = L(\R^*)$.
  Moreover, consider $\Gamma := \prop{L}$.
  For any $r \in \R$, there exists a $v^* \in \V^*$ such that $\Gamma_r \subseteq \theta(v^*)$.
\end{lemma}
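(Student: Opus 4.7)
The plan is to reduce the statement to the geometric containment already established in Corollary~\ref{cor:exists-vstar-projected-face-subset}, via the bridge between level sets on $\simplex$ and projected faces of $\hyp(g_{L(\R)})$ provided by Lemma~\ref{lem:level-set-is-projected-face}.

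First, given an arbitrary $r \in \R$, I would set $v := L(r)$ and note that $v \in L(\R)$, so the face $F_v = H_v \cap \hyp(g_{L(\R)})$ is well-defined (though not necessarily a facet). By Corollary~\ref{cor:exists-vstar-projected-face-subset}, there exists $v^* \in \V^*$ such that $\pi(F_v) \subseteq \pi(F_{v^*})$. This is the geometric core of the argument: every projected face of the extended hypograph is contained in the projection of some facet generated by $\V^*$.

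Second, I would translate this containment back to level sets on $\simplex$. Applying Lemma~\ref{lem:level-set-is-projected-face} to $r$ gives $\Gamma_r = \pi(F_v) \cap \simplex$. For $v^* \in \V^* \subseteq L(\R)$, pick any $r^* \in \R$ with $L(r^*) = v^*$ (which exists since $\V^* \subseteq L(\R)$ by Assumption~\ref{assum:V-star-exists-infinite}); Lemma~\ref{lem:level-set-is-projected-face} then gives $\theta(v^*) = \pi(F_{v^*}) \cap \simplex$ as well. Intersecting both sides of $\pi(F_v) \subseteq \pi(F_{v^*})$ with $\simplex$ yields
\[
\Gamma_r \;=\; \pi(F_v) \cap \simplex \;\subseteq\; \pi(F_{v^*}) \cap \simplex \;=\; \theta(v^*),
\]
which is exactly the desired conclusion.

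There is essentially no obstacle here beyond citing the right ingredients in the right order; the heavy lifting was done in Corollary~\ref{cor:exists-vstar-projected-face-subset} (every face of $\hyp(g_{L(\R)})$ sits inside a facet, which is necessarily generated by some $v^* \in \V^*$ by the uniqueness/irredundance in Lemma~\ref{lem:G-unique-facets-Hstar} and Corollary~\ref{cor:unique-set-loss-vectors-defining-facets}) and in Lemma~\ref{lem:level-set-is-projected-face} (level sets on $\simplex$ are projected faces intersected with $\simplex$). The only minor care needed is to make sure the element $v^*$ corresponds via the level-set function $\theta$ (rather than via some specific $r^*$), which is immediate since $\theta(v^*)$ is defined purely in terms of $v^*$ and $f_{\V^*}$ and matches $\pi(F_{v^*}) \cap \simplex$ by Claim~\ref{claim:theta-is-pi-cap-simplex}.
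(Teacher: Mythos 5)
Your proof is correct and follows essentially the same route as the paper: invoke Corollary~\ref{cor:exists-vstar-projected-face-subset} to get $\pi(F_v) \subseteq \pi(F_{v^*})$, then intersect with $\simplex$ and apply Lemma~\ref{lem:level-set-is-projected-face} (equivalently Claim~\ref{claim:theta-is-pi-cap-simplex}) to identify $\Gamma_r$ and $\theta(v^*)$ with the projected faces restricted to the simplex. Your added care in noting that $\theta(v^*)$ can be obtained either by choosing an $r^*$ with $L(r^*)=v^*$ or directly from Claim~\ref{claim:theta-is-pi-cap-simplex} is a small clarification of a step the paper leaves implicit, but the argument is the same.
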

\begin{proof}
  Take $v = L(r)$.
  By Corollary~\ref{cor:exists-vstar-projected-face-subset}, there is a $v^* \in \V^* \subseteq L(\R)$ such that $\pi(F_v) \subseteq \pi(F_{v^*})$.
  Therefore, $\pi(F_v) \cap \simplex \subseteq \pi(F_{v^*})\cap \simplex$.  
  We know $\theta(v) = \pi(F_v) \cap \simplex$ and similarly for $\theta(v^*)$ by Lemma~\ref{lem:level-set-is-projected-face}.
  The result follows.
\end{proof}

We are now ready to apply the above to prove Lemma~\ref{lem:X}.
In particular, any loss $L$ satisfying the assumptions of Lemma~\ref{lem:X} has some $g_{L(\R)} = \risk{L}_+$ as in this section.

\lemmaX*
\begin{proof}
Since $\risk L$ is polyhedral, so is $\risk L_+$ by Claim~\ref{claim:extended-poly-risk-poly}.
Therefore, we have satisfied the requirements of Proposition~\ref{prop:poly-risk-assumption-satisfied}, and can conclude there is a finite set $\V^*\subseteq L(\R)$ such that $\risk L_+ = g_{\V^*}$ satisfying Assumption~\ref{assum:V-star-exists-infinite}.
Hence, there is a finite set $\R^*$ such that $\V^* = L(\R^*)$. 
For all $x\in\reals^\Y_+$, there exists $v \in \V^*$ such that $\inprod{v}{x} = g_{\V^*}(x)$, and thus some $r\in\R^*$ such that $\inprod{L(r)}{p} = g_{\V^*}(x)$.
Thus, for all $p \in \simplex$, there exists $r \in \R^*$ such that $\risk L(p) = \risk L_+(p) = g_{\V^*}(p) = \inprod{L(r)}{p}$ and therefore $r \in \prop{L}(p)$.
As this is true for all $p \in \simplex$, $\R^*$ is representative for $L$.

Now consider the itemized statements.
Lemma~\ref{lem:lemX1-rep-iff-subset-vectors} is exactly statement \eqref{item:X-rep-V}.
This immediately implies statement~\eqref{item:X-min-V}.
Moreover, Lemma~\ref{lem:lemX-3-rep-iff-FDLS-subsets} is exactly statement~\eqref{item:X-rep-Theta}, and again statement~\eqref{item:X-min-Theta} immediately follows.
Statement~\eqref{item:X-rep-contain-min} is a corollary of the existence of a finite representative set, which follows since $\V^* \subseteq L(\R)$.
Statement~\eqref{item:X-full-dim} is exactly Lemma~\ref{lem:fdls-exactly-theta}.
Statement~\eqref{item:X-redundant} is exactly Lemma~\ref{lem:any-levelset-contained-in-minlevelset}.
Finally, Statement~\eqref{item:X-tight-embed} follows as a corollary of statement~\eqref{item:X-min-V} and Corollary~\ref{cor:tight-embed-min-rep}.
\end{proof}

\end{document}